\newtheorem{definition}{Definition}
\newtheorem{theorem}{Theorem}
\newtheorem{lemma}{Lemma}
\newtheorem{corollary}{Corollary}
\newtheorem{prop}{Proposition}
\newenvironment{manualthm}[1]{%
  \manthmin
}{\endmanthmin}
\newenvironment{manualprop}[1]{%
  \manpropin
}{\endmanpropin}
\newenvironment{manuallemma}[1]{%
  \manlemmin
}{\endmanlemmin}
\newenvironment{manualcor}[1]{%
  \mancorin
}{\endmancorin}
\newenvironment{manualdefinition}[1]{%
  \mandefin
}{\endmandefin}
\DeclareMathOperator{\Tr}{Tr}
\newcommand{\al}{\alpha}
\newcommand{\At}{\mathcal{A}_t}
\newcommand{\be}{\beta}
\newcommand{\dd}{\mathrm{d}}
\newcommand{\E}{\mathbb{E}}
\newcommand{\f}{\varphi}
\newcommand{\Gt}{\mathcal{G}_t}
\newcommand{\Hh}{\mathcal{H}}
\newcommand{\R}{\mathbb{R}}
\newcommand{\sbq}{\sigma_b^2}
\newcommand{\Sd}{\mathbb{S}^{d-1}}
\newcommand{\Span}{\mathrm{Span}}
\newcommand{\swq}{\sigma_w^2}
\newcommand*{\addFileDependency}[1]{
  \typeout{(#1)}
  \@addtofilelist{#1}
  \IfFileExists{#1}{}{\typeout{No file #1.}}
}
\newcommand*{\myexternaldocument}[1]{%
    \externaldocument{#1}%
    \addFileDependency{#1.tex}%
    \addFileDependency{#1.aux}%
}
\renewcommand*{\@fnsymbol}[1]{\ifcase#1\or*\else\@arabic{#1}\fi}
\begin{document}

\twocolumn[

\aistatstitle{Stable ResNet}

\aistatsauthor{ Soufiane Hayou*$^{1}$ \And Eugenio Clerico*$^{1}$ \And  Bobby He*$^{1}$ \And George Deligiannidis$^{1}$}
\aistatsauthor{}
\aistatsauthor{Arnaud Doucet$^{1}$ \And Judith Rousseau$^{1}$}
\aistatsauthor{} ]

\begin{abstract}
Deep ResNet architectures have achieved state of the art performance on many tasks. While they solve the problem of gradient vanishing, they might suffer from gradient exploding as the depth becomes large. Moreover, recent results have shown that ResNet might lose expressivity as the depth goes to infinity \citep{yang2017meanfield, hayou}. To resolve these issues, we introduce a new class of ResNet architectures, called Stable ResNet, that have the property of stabilizing the gradient while ensuring expressivity in the infinite depth limit. 
\end{abstract}

\section{INTRODUCTION}
\vspace{-0.5em}
The limit of infinite width has been the focus of many theoretical studies on Neural Networks (NNs) \citep{neal,poole, samuel,yang2017meanfield, hayou, lee_wide_nn_ntk}. Although unachievable in practice, it features many interesting properties which can help grasp the complex behaviour of large networks. \\
Infinitely wide 1-layer random NNs behave like Gaussian Processes (GPs) at initialization \citep{neal}. This was recently extended to multilayer NNs, where each layer can be associated to its own GP \citep{matthews, lee_gaussian_process, yang2019scaling}. From a theoretical point of view, GPs have the advantage that their behaviour is fully captured by the mean function and the covariance kernel. Moreover, when dealing with GPs that are equivalent to infinite width NNs, these processes are usually centered, and hence fully determined by their covariance kernel. For multilayer networks, these kernels can be computed recursively, layer by layer \citep{lee_gaussian_process}. Interestingly, in apparent contradiction with the naive idea ``the deeper, the more expressive'', it was shown in \citep{samuel} that the GP becomes trivial as the number of layers goes to infinity, that is the output completely forgets about the input and hence lacks expressive power. This loss of input information during the forward propagation through the network might be exponential in depth and could lead to trainability issues for extremely deep nets \citep{samuel,hayou}.\\
One natural way to prevent this last issue is the introduction of skip connections, commonly known as the ResNet architecture. However, in the regime of large width and depth, the output of standard ResNets becomes inexpressive and the network may suffer from gradient exploding \citep{yang2017meanfield}.\\
In the present work, we propose a new class of residual neural networks, the Stable ResNet, which, in the limit of infinite width and depth, is shown to stabilize the gradient (no gradient vanishing or exploding) and to preserve expressivity in the limit of large depth. The main idea is the introduction of layer/depth dependent scaling factors to the ResNet blocks. \\
For ReLU networks, we provide a comprehensive analysis of two different scalings: a uniform one, where the scaling factor is the same for all the layers, and a decreasing one, where the scaling factor decreases as we go deeper inside the network. 
We also show that Stable ResNet solve the problem of Neural Tangent kernel (NTK) degeneracy in the limit of large depth \citep{hayou_ntk}; indeed, with our scalings, the NTK is universal in the limit of infinite depth, which ensures that any continuous function can be approximated to an arbitrary precision by the features of the infinite depth NTK on a compact set.

All theoretical results are substantiated with numerical experiments in Section \ref{section:experiments}, where we demonstrate the benefits of Stable ResNet scalings both for the corresponding infinite width GP kernels as well as trained ResNets, over a range of moderate and large-scale image classification tasks: MNIST, CIFAR-10, CIFAR-100 and TinyImageNet.

\section{RESNET}
\subsection{Setup and Notations}
Consider a standard ResNet architecture with $L+1$ layers, labelled with $l\in[0:L]$\footnote{Notation: $[m:n]=\{m,m+1\dots n\}$ for integers $n\geq m$.}, of dimensions $\{N_l\}_{l\in[0:L]}$.
\begin{equation}\label{equation:Resnet_dynamics}
\begin{aligned}
y_0(x) &= W_0\,x + B_0\,; \\
y_l(x) &= y_{l-1}(x) + \mathcal{F}((W_l,B_l), y_{l-1}(x)) \quad \mbox{for } l \in [1:L]\,,
\end{aligned}
\end{equation}
where $x \in \mathbb{R}^d$ is an input, $y_l(x)=\{y_l^i(x)\}_{i\in[1:N_l]}$ is the vector of pre-activations, $W_l$ and $B_l$ are respectively the weights and bias of the $l^{th}$ layer, and $\mathcal{F}$ is a mapping that defines the nature of the layer. In general, the mapping $\mathcal{F}$ consists of successive applications of simple linear maps (including convolutional layers), normalization layers \citep{ioffe2015batch} and activation functions. In this work, for the sake of simplicity, we consider Fully Connected blocks with ReLU activation function:
\begin{align*}
  \mathcal{F}((W,B), x) = W\phi(x) + B\,,
\end{align*}
where $\phi$ is the activation function. The weights and bias are initialized with $W_l\overset{\text{iid}}{\sim}\mathcal{N}(0,\sigma_w^2/N_{l-1})$, and $B_l\overset{\text{iid}}{\sim}\mathcal{N}(0,\sigma_b^2)$, where $\sigma_w>0$, $\sigma_b\geq0$, $N_{-1}=d$, and $\mathcal{N}(\mu, \sigma^{2})$ is the normal law of mean $\mu$ and variance $\sigma^{2}$.

Recent results by \citep{hayou_pruning} suggest that scaling the residual blocks with $L^{-1/2}$ might have some beneficial properties on model pruning at initialization. This results from the stabilization effect on the gradient due to the scaling.

More generally, we introduce the residual architecture:
\begin{align}\label{equation:scaled_Resnet}
\begin{split}
&y_0(x) = W_0\,x+B_0\,; \\
&y_l(x) = y_{l-1}(x) + \lambda_{l,L}\, \mathcal{F}((W_l,B_l), y_{l-1})\,, \quad l \in [1:L]\,,
\end{split}
\end{align}
where $\{\lambda_{l,L}\}_{l\in[1:L]}$ is a sequence of scaling factors. We assume hereafter that there exists $\lambda_{\max} \in (0,\infty)$ such that $\lambda_{l,L} \in (0,\lambda_{\max}]$ for all $L\geq 1$ and $l \in [1:L]$.\\
In the next proposition, we give a necessary and sufficient condition for the gradient to remain bounded as the depth $L$ goes to infinity. 

\begin{prop}[Stable Gradient]\label{proposition:stable_gradient}
Consider a ResNet of type \eqref{equation:scaled_Resnet}, and let $\mathcal{L}_y(x):= \ell(y_L^1(x), y)$
for some $(x,y) \in \mathbb{R}^d \times \mathbb{R}$, where $\ell : (z,y) \mapsto \ell(z,y)$ is a loss function satisfying $\sup_{K_1\times K_2}\left|\frac{\partial \ell(z,y)}{\partial z}\right| < \infty$, for all compacts $K_1, K_2 \subset \mathbb{R}$. Then, in the limit  of infinite width, for any compacts $K \subset \mathbb{R}^d$, $K' \subset \mathbb{R}$, there exists a constant $C>0$ such that for all $(x,y)\in K\times K'$
$$\sup\limits_{l \in [0: L]}\mathbb{E}\!\left[\left|\tfrac{\partial \mathcal{L}_y(x)}{\partial W_l^{11}}\right|^2\right] \leq C \exp{\left(\tfrac{\sigma_w^2}{2}\textstyle \sum_{l=1}^L \lambda_{l,L}^2\right)}\,.$$
Moreover, if there exists $\lambda_{\min}>0$ such that for all $L\geq 1$ and $l\in [1:L]$ we have $\lambda_{l,L}\geq\lambda_{\min}$, then, for all $(x,y) \in (\mathbb{R}^d\setminus\{0\}) \times \mathbb{R}$ such that $\left|\frac{\partial \ell(z,y)}{\partial z}\right| \neq 0$, there exists $\kappa>0$ such that for all $l \in [1:L]$
\begin{align*}
\mathbb{E}\left[\left|\tfrac{\partial \mathcal{L}_y(x)}{\partial W_l^{11}}\right|^2\right] \geq \kappa \left(1 + \tfrac{\lambda^2_{\min} \sigma_w^2}{2}\right)^L\,. 
\end{align*}
\end{prop}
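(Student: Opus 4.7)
The plan is to reduce both bounds to a single scalar recursion for the second moment of the backpropagated gradient in the infinite width limit. By the chain rule, $\partial_{W_l^{11}}\mathcal{L}_y(x) = \partial_z\ell(y_L^1(x),y)\cdot\partial_{W_l^{11}}y_L^1(x)$. I would invoke the gradient-independence property developed in \cite{yang2017meanfield,yang2019scaling}, which in the infinite width regime makes $y_L^1(x)$ and $\partial_{W_l^{11}}y_L^1(x)$ asymptotically independent, so that the second moment of the product factorizes into a forward factor $\E[(\partial_z\ell(y_L^1,y))^2]$ and a backward factor $\tilde q_L := \E[(\partial_{W_l^{11}}y_L^1)^2]$.

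For the backward factor I would differentiate the ResNet recursion \eqref{equation:scaled_Resnet} to obtain, for $k>l$, $\partial_{W_l^{11}}y_k^1 = \partial_{W_l^{11}}y_{k-1}^1 + \lambda_{k,L}\sum_j W_k^{1j}\phi'(y_{k-1}^j)\partial_{W_l^{11}}y_{k-1}^j$, then square and take expectation. The cross-term vanishes since $W_k$ is centered and independent of all other factors; the squared sum concentrates as $N_{k-1}\to\infty$ and, by gradient independence, factorizes into $\sigma_w^2\,\E[\phi'(y_{k-1}^1)^2]\,\tilde q_{k-1}=(\sigma_w^2/2)\,\tilde q_{k-1}$ for $\phi=\mathrm{ReLU}$. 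This gives the linear recursion $\tilde q_k=\tilde q_{k-1}(1+\lambda_{k,L}^2\sigma_w^2/2)$ with initial value $\tilde q_l=\lambda_{l,L}^2 q_{l-1}/2$, where $q_l:=\E[(y_l^1)^2]$ solves the analogous forward recursion $q_l=q_{l-1}(1+\lambda_{l,L}^2\sigma_w^2/2)+\lambda_{l,L}^2\sigma_b^2$ with $q_0=\sigma_w^2\|x\|^2/d+\sigma_b^2$. Unrolling yields the closed form $\tilde q_L=\lambda_{l,L}^2(q_{l-1}/2)\prod_{k=l+1}^L(1+\lambda_{k,L}^2\sigma_w^2/2)$.

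Both bounds then follow from this closed form. For the upper bound I would apply $1+x\le e^x$ to every factor, bound $q_{l-1}$ uniformly in $l$ by its own exponential upper bound (obtained from the same inequality applied to the forward recursion), and absorb $\E[(\partial_z\ell(y_L^1,y))^2]$ into the constant $C$ using the local boundedness of $\partial_z\ell$ for $(x,y)\in K\times K'$ together with Gaussian tail control on $y_L^1$. For the lower bound I would use $\lambda_{l,L}\ge\lambda_{\min}$ to bound each factor $1+\lambda_{k,L}^2\sigma_w^2/2$ from below by $1+\lambda_{\min}^2\sigma_w^2/2$ and iterate the forward recursion to obtain $q_{l-1}\ge q_0(1+\lambda_{\min}^2\sigma_w^2/2)^{l-1}$ (strictly positive when $x\ne 0$), combined with a positive lower bound on $\E[(\partial_z\ell(y_L^1,y))^2]$ that follows from continuity of $\partial_z\ell$ and the non-vanishing hypothesis. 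The hardest step will be to rigorously justify the gradient-independence factorization and the width-driven concentration used to collapse the vector backprop into the scalar recursion for $\tilde q_k$ in the presence of the skip connections; a secondary subtlety is controlling $\E[(\partial_z\ell(y_L^1,y))^2]$ when $y_L^1$ has depth-dependent variance, which I would handle via truncation on a compact together with Gaussian tail bounds on $y_L^1(x)$.
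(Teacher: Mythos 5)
Your overall blueprint (reduce to a scalar per-layer recursion, unroll, then bound above/below) matches the paper's spirit, and your initial condition, forward-variance recursion for $q_l$, and both bounding strategies agree with Appendix A2. However, the central recursion you propose is computed in the wrong direction, and this produces a genuine gap.

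The paper writes $\tfrac{\partial\mathcal{L}_y(x)}{\partial W_l^{11}}=\lambda_{l,L}\,\tfrac{\partial\mathcal{L}_y(x)}{\partial y_l^1}\,\phi(y_{l-1}^1)$ and propagates $\bar q^l:=\E\bigl[\bigl(\tfrac{\partial\mathcal{L}_y(x)}{\partial y_l^1}\bigr)^2\bigr]$ \emph{backward} from the output; the resulting scalar recursion $\bar q^l=(1+\lambda_{l+1,L}^2\sigma_w^2/2)\,\bar q^{l+1}$ holds because it is the backpropagated error signal $\tfrac{\partial\mathcal{L}}{\partial y_{l+1}^j}$ that is averaged over $j$, and these are treated as exchangeable in the mean-field/Tensor Programs formalism. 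You instead propagate $\tilde q_k:=\E[(\partial_{W_l^{11}}y_k^1)^2]$ \emph{forward} from $k=l$, but there the per-neuron derivatives $u_k^j:=\partial_{W_l^{11}}y_k^j$ are \emph{not} exchangeable: at $k=l$ one has $u_l^j=\lambda_{l,L}\delta_{1j}\phi(y_{l-1}^1)$, so the derivative is supported entirely on $j=1$, and at subsequent layers the components $j\neq1$ are only $O(1/\sqrt{N})$. Consequently the step in your proposal that replaces
\begin{align*}
\frac{\sigma_w^2}{N_{k-1}}\sum_{j}\E\bigl[\phi'(y_{k-1}^j)^2\,(u_{k-1}^j)^2\bigr]
\end{align*}
by $\frac{\sigma_w^2}{2}\,\tilde q_{k-1}$ is not valid: the $j=1$ term contributes $O(1/N)$ and the $j\neq1$ terms, while numerous, are each individually $O(1/N)$ relative to $\tilde q_{k-1}$, so the growth factor per layer that your unrolled formula relies on is suppressed by $1/N$ and vanishes in the infinite-width limit. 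What does accumulate the $(1+\lambda_{k,L}^2\sigma_w^2/2)$ factors is the \emph{summed} quantity $\sum_j\E[(u_k^j)^2]$, not the single-coordinate $\tilde q_k^{(1)}$; your recursion conflates the two.

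A secondary concern: the factorization $\E[(\partial_z\ell)^2\,(\partial_{W_l^{11}}y_L^1)^2]\approx\E[(\partial_z\ell)^2]\,\E[(\partial_{W_l^{11}}y_L^1)^2]$ that you invoke at the outset is not what the Gradient Independence Assumption (Lemma A3 / \cite{yang_tensor3_2020}) gives you. GIA makes the \emph{backward-pass} quantity $\tfrac{\partial\mathcal{L}}{\partial y_l^1}$ independent of forward activations; but $\partial_{W_l^{11}}y_L^1$ expanded through the forward recursion (and $\partial_z\ell(y_L^1,y)$ itself) are both forward-pass quantities sharing the same weights, and GIA does not decouple two such quantities. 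The paper's decomposition via $\tfrac{\partial\mathcal{L}}{\partial y_l^1}$ is exactly what makes GIA applicable: it factorizes a genuinely backward object against a genuinely forward object $\phi(y_{l-1}^1)$. I would redo the argument with the backward recursion for $\bar q^l$ (as in Lemma A5) and then use $\E[(\partial_{W_l^{11}}\mathcal{L})^2]=\lambda_{l,L}^2\,\bar q^l\,\E[\phi(y_{l-1}^1)^2]$, after which your upper- and lower-bound manipulations (which are fine and match the paper) go through unchanged.
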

Proposition \ref{proposition:stable_gradient} shows that in order to stabilize the gradient, we have to scale the blocks of the ResNet with scalars $\{\lambda_{l,L}\}_{l\in[1:L]}$ such that $\sum_{l=1}^L \lambda_{l,L}^2$ remains bounded as the depth $L$ goes to infinity. Taking $\lambda_{\min}=1$, Proposition \ref{proposition:stable_gradient} shows that the standard ResNet architecture  \eqref{equation:Resnet_dynamics} suffers from gradient exploding at initialization,\footnote{In \citep{yang2017meanfield}, authors show a similar result with a slightly different ResNet architecture.} which may cause instability during the first step of gradient based optimization algorithms such as Stochastic Gradient Descent (SGD). This motivates the following definition of Stable ResNet.
\begin{definition}[Stable ResNet]\label{def:stable_resnet}
A ResNet of type \eqref{equation:scaled_Resnet} is called a Stable ResNet if and only if $\lim\limits_{L \rightarrow \infty}\sum\limits_{l=1}^L \lambda_{l,L}^2 < \infty$.
\end{definition}
The condition on the scaling factors is satisfied by a wide range of sequences $\{\lambda_{l,L}\}_{l\in[1:L], L\geq1}$. However, it is natural to consider the two categories:\\
\textbf{Uniform scaling.} The scaling factors have similar magnitude and tend to zero at the same time. A simple example is the uniform scaling $\lambda_{l,L} = 1/\sqrt{L}$.\\
\textbf{Decreasing scaling.} The sequence is decreasing and tends to zero. To be clearer, we consider a general sequence $\{\lambda_{l}\}_{l\in[1:L]}$
such that $\sum_{l\geq 1} \lambda_{l}^2 < \infty$, and let $\lambda_{l,L} = \lambda_l$ for all $L\geq 1$, all $l\in[1:L]$.

Note that our theoretical analyses will hold for any decreasing scaling $\{\lambda_l\}_{l\geq 1}$ that is square summable, but for simplicity in all empirical results we consider the decreasing scaling: 
$$\lambda_l^{-1} = l^{1/2} \times \text{log}(l+1)\,.$$
We study theoretical properties of both ResNets with uniform and decreasing scaling. We show that, in addition to stabilizing the gradient, both scalings ensure that the ResNet is expressive in the infinite depth limit. For this purpose, we use a tool known as Neural Network Gaussian Process (NNGP) \citep{lee_gaussian_process} which is the equivalent Gaussian Process of a Neural Network in limit of infinite width. 
\subsection{On Gaussian Process approximation of Neural Networks}
Consider a ResNet of type \eqref{equation:scaled_Resnet}. Neurons $\{y_0^i(x)\}_{i\in[1:N_1]}$ are iid since the weights with which they are connected to the inputs are iid. Using the Central Limit Theorem, as $N_{0} \rightarrow \infty$, $y^i_{1}(x)$ is a Gaussian variable for any input $x$ and index $i \in [1:N_1]$. Moreover, the variables $\{y^i_1(x)\}_{i \in [1:N_1]}$ are iid. Therefore, the processes $y^i_{1} (.) $ can be seen as independent (across $i$) centred Gaussian processes with covariance kernel $Q_1$. This is an idealized version of the true process corresponding to letting width $N_{0}\to \infty$. Doing this recursively over $l$ leads to similar approximations for $y_l^i(.)$ where $l \in [1:L]$, and we write accordingly $y_l^i \stackrel{ind}{\sim} \mathcal{GP}(0, Q_{l})$. The approximation of $y_l^i(.)$ by a Gaussian process was first proposed by \citep{neal} in the single layer case and was extended to  multiple feedforward layers by \citep{lee_wide_nn_ntk} and \citep{matthews}. More recently, a powerful framework, known as Tensor Programs, was proposed by \citep{yang2019tensor_i}, confirming the large-width NNGP association for nearly all NN architectures. 

For any input $x \in \mathbb R^d$,  we have  $\mathbb E[y^i_l(x)] = 0$, so that the covariance $Q_l(x,x') = \mathbb{E}[y_l^1(x)y_l^1(x')]$ satisfies for all $x,x'\in \mathbb R^d$ (see Appendix \ref{app:section:ResNets_and_GPs})
\begin{align*}
Q_l(x,x') &= Q_{l-1}(x,x') + \lambda_{l,L}^2 \Psi_{l-1}(x,x')\,,
\end{align*}
where $\Psi_{l-1}(x,x') = \sigma^2_b + \sigma^2_w \mathbb{E}[\phi(y_{l-1}^1(x))\phi(y_{l-1}^1(x'))]$.

For the ReLU activation function $\phi:x\mapsto\max(0,x)$, the recurrence relation can be written more explicitly as in \citep{daniely2016deeper}.
Let $C_l$ be the correlation kernel, defined as 
\begin{align}\label{defc}
C_l(x,x') = \tfrac{Q_l(x,x')}{\sqrt{Q_l(x,x)Q_l(x',x')}}
\end{align}
and let $f:[-1,1]\to\mathbb{R}$ be given by 
\begin{align}\label{deff}
f:\gamma\mapsto \tfrac{1}{\pi}(\sqrt{1-\gamma^2}-\gamma\arccos \gamma)\,.
\end{align}
The recurrence relation reads (see Appendix \ref{app:section:ResNets_and_GPs})
\begin{align}\label{rec}
\begin{split}
&Q_{l} = Q_{l-1} + \lambda_{l,L}^2\left[{\sigma_b}^2 +\tfrac{{\sigma_w}^2}{2}\left(1+\tfrac{f(C_{l-1})}{C_{l-1}}\right)Q_{l-1}\right]\,,\\
&Q_0(x,x') =  \sbq+\swq\,\tfrac{x\cdot x'}{d}\,.
\end{split}
\end{align}
This recursion leads to divergent diagonal terms $Q_L(x,x)$. This was proven in \citep{yang2017meanfield} for a slightly different ResNet architecture. In the next Lemma, we extend this result to the ResNet defined by \eqref{equation:Resnet_dynamics}.
\begin{lemma}[Exploding kernel with standard ResNet]\label{lemma:exploding}
Consider a ResNet of type \eqref{equation:Resnet_dynamics}. Then, for all $x\in \mathbb{R}^d$, 
$$Q_L(x,x) \geq \left(1+\tfrac{\swq}{2}\right)^L \left(\sigma_b^2 \left(1 + \tfrac{2}{\swq}\right) + \tfrac{\swq}{d} \|x\|^2\right).$$
\end{lemma}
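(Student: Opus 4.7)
The plan is to specialize the recursion \eqref{rec} to the diagonal $x'=x$ with the standard ResNet scaling $\lambda_{l,L}=1$, and then solve the resulting first-order linear recurrence explicitly.

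First, by definition of the correlation kernel \eqref{defc}, one has $C_l(x,x)=1$ for every $l$. Next I would observe that the function $f$ defined in \eqref{deff} satisfies $f(1)=\tfrac{1}{\pi}(\sqrt{1-1}-1\cdot\arccos 1)=0$, so the factor $f(C_{l-1})/C_{l-1}$ appearing in \eqref{rec} vanishes on the diagonal. Substituting $\lambda_{l,L}=1$, the recursion \eqref{rec} collapses to the scalar linear recurrence
\begin{equation*}
Q_l(x,x)=\bigl(1+\tfrac{\swq}{2}\bigr)\,Q_{l-1}(x,x)+\sbq\,.
\end{equation*}

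The second step is to solve this recurrence. Setting $a:=1+\tfrac{\swq}{2}>1$, iterating gives
\begin{equation*}
Q_L(x,x)=a^L\,Q_0(x,x)+\sbq\,\tfrac{a^L-1}{a-1}=a^L\bigl(Q_0(x,x)+\tfrac{2\sbq}{\swq}\bigr)-\tfrac{2\sbq}{\swq}\,.
\end{equation*}
Plugging in the initial condition $Q_0(x,x)=\sbq+\swq\|x\|^2/d$ from \eqref{rec} yields
\begin{equation*}
Q_L(x,x)=a^L\Bigl(\sbq\bigl(1+\tfrac{2}{\swq}\bigr)+\tfrac{\swq}{d}\|x\|^2\Bigr)-\tfrac{2\sbq}{\swq}\,,
\end{equation*}
from which the stated lower bound follows (discarding the non-positive correction $-2\sbq/\swq$, and noting equality when $\sbq=0$).

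There is no real obstacle: the argument is a one-line observation ($f(1)=0$ kills the nonlinear coupling on the diagonal) followed by the closed form for a geometric recursion. The only point worth emphasising is that the identity $f(1)=0$ is exactly what makes the diagonal dynamics autonomous from the off-diagonal ones, so the bound can be established without invoking the full kernel recursion or any control on $C_l$ for $x'\neq x$.
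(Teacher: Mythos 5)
Your proof is correct and follows essentially the same route as the paper: the paper first establishes the closed form for the diagonal entries $Q_l(x,x)$ for a general scaling (Appendix Lemma on diagonal elements, using $\hat f(1)=1$, i.e.\ $f(1)=0$), and then specializes to $\lambda_{l,L}=1$ to obtain the bound as a one-line corollary. You simply specialize to $\lambda_{l,L}=1$ from the outset and iterate the same scalar linear recurrence, dropping the non-negative remainder $2\sbq/\swq$ at the end.
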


\begin{figure}[ht]
  \centering
  \vspace{-1.em}
      \includegraphics[width=0.8\linewidth]{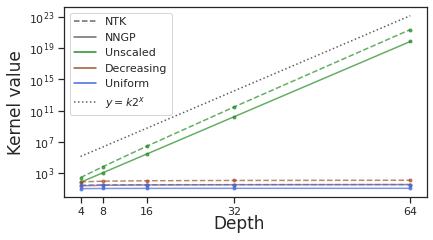}
      \vspace{-1.em}
  \caption{NNGP/NTK for unscaled ResNets explode exponentially (with base 2 if $\swq=2$) in depth, unlike (both uniform and decreasing scaled) Stable ResNets.}\label{fig: explosion}
\end{figure}
Figure \ref{fig: explosion} plots the diagonal NNGP and NTK (introduced in Section \ref{sec:ntk}) values for a point on the sphere, highlighting the exploding kernel problem for standard ResNets. Stable ResNets do not suffer from this problem. 

We now introduce further notation and definitions. Hereafter, unless specified otherwise, $K$ will denote a compact set in $\R^d$ $(d\geq 1)$ and $x,x'$ denote two arbitrary elements of $K$. \\
Let us start with a formal definition of a kernel.\footnote{Our definition is not the standard definition of a kernel, which is more general and does not require the continuity,  \citep{paulsen2016RKHS}.}
\begin{definition}[Kernel]\label{def:kernel}
A kernel $Q$ on $K$ is a symmetric continuous function $K^2\to\mathbb R$ such that, for all $n\in\mathbb N$, for any finite subset $\{x_1\dots x_n\}\subset K$, the matrix $\{Q(x_i,x_j)\}_{i,j}$ is non-negative definite. 
\end{definition}
The symmetry in the above definition has to be understood as $Q(x,x') = Q(x',x)$ for all $x,x'\in K$.

Kernels induce non-negative integral operators \citep{paulsen2016RKHS}.
\begin{lemma}\label{lemma:T(Q)}
Given a continuous and symmetric function $Q:K^2\to\R$, we can define the induced integral operator $T(Q)$ on $L^2(K)$ via its action
$T(Q)\f(x) = \int_K Q(x,y)\f(y)\,\dd y$, for $\f\in L^2(K)$.\footnote{Naturally, we should write $L^2(K,\mu)$, specifying a measure $\mu$ on $K$. In the present work, unless otherwise specified, the notation $L^2(K)$ will imply the choice of any arbitrary finite Borel measure on $K$ (cf Appendix \ref{app:setup_notations}).} Moreover, $T(Q)$ is a bounded, compact, non-negative definite self-adjoint operator. 
\end{lemma}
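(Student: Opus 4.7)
The strategy is to establish the four asserted properties of $T(Q)$ in turn: boundedness, compactness, self-adjointness, and non-negative definiteness. The first three require only the continuity and symmetry of $Q$; the last leverages the kernel hypothesis from Definition \ref{def:kernel}, which I take to be implicit in the statement in view of the preamble ``Kernels induce non-negative integral operators''.

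Since $K^2$ is compact and $Q$ is continuous, $Q$ is bounded on $K^2$; combined with the finiteness of the reference Borel measure $\mu$ on $K$, this yields $Q\in L^2(K^2,\mu\otimes\mu)$. Hence $T(Q)$ is a Hilbert--Schmidt operator on $L^2(K)$, and therefore bounded and compact at once. Self-adjointness follows from the symmetry $Q(x,y)=Q(y,x)$ together with Fubini--Tonelli (applicable since $Q$ is bounded and $\mu$ is finite): for $\f,\psi\in L^2(K)$,
\[
\langle T(Q)\f,\psi\rangle = \int_K\!\!\int_K Q(x,y)\f(y)\psi(x)\,\dd y\,\dd x = \langle \f,T(Q)\psi\rangle.
\]

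The main step is non-negative definiteness, i.e.\ $\int_K\!\int_K Q(x,y)\f(x)\f(y)\,\dd x\,\dd y\geq 0$ for every $\f\in L^2(K,\mu)$. By density of $C(K)$ in $L^2(K,\mu)$ and continuity of the bilinear form $(\f,\psi)\mapsto\langle\f,T(Q)\psi\rangle$ (which follows from boundedness of $T(Q)$), it suffices to establish the inequality for continuous $\f$. I would then approximate by a Riemann-type sum: partition $K$ into Borel pieces $A_1^{(n)},\dots,A_{k_n}^{(n)}$ of vanishing diameter, pick representatives $x_i^{(n)}\in A_i^{(n)}$, and set
\[
S_n := \sum_{i,j} Q(x_i^{(n)},x_j^{(n)})\,\f(x_i^{(n)})\,\f(x_j^{(n)})\,\mu(A_i^{(n)})\,\mu(A_j^{(n)}).
\]
Writing $c_n\in\mathbb{R}^{k_n}$ with $c_n^i=\f(x_i^{(n)})\,\mu(A_i^{(n)})$ and $M_n=\{Q(x_i^{(n)},x_j^{(n)})\}_{i,j}$, one has $S_n=c_n^\top M_n c_n\geq 0$ by the positive semi-definiteness in Definition \ref{def:kernel}. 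Uniform continuity of $Q$ on the compact set $K^2$ and of $\f$ on $K$ ensures $S_n\to\int_K\!\int_K Q(x,y)\f(x)\f(y)\,\dd x\,\dd y$, so the limit is non-negative as well.

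The main obstacle is the last approximation step; everything else is a direct bookkeeping consequence of the Hilbert--Schmidt observation. Once uniform continuity of $Q$ is invoked to control the discretisation error, the transfer of non-negativity from the finite PSD matrices $M_n$ to the integral operator becomes routine.
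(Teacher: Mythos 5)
Your argument is correct, and you rightly identify (via the surrounding text) that non-negative definiteness needs the positive-semi-definiteness condition of Definition~\ref{def:kernel}, not just continuity and symmetry. For boundedness, compactness and self-adjointness your explicit Hilbert--Schmidt and Fubini arguments are fine; the paper simply cites a standard reference here. Where you diverge from the paper is the non-negative definiteness step. The paper's proof (Appendix, Lemma~A1) invokes Mercer's theorem in its RKHS form: since $Q$ is a continuous kernel on the compact $K$, one may write $Q(x,x') = \sum_k Y_k(x)Y_k(x')$ with continuous $Y_k$ and uniform convergence on $K^2$; uniform convergence plus finiteness of $\mu$ gives $T_\mu(Q)=\sum_k T_\mu(Y_k\otimes Y_k)$ in operator norm, exhibiting $T_\mu(Q)$ as a limit of rank-one non-negative operators. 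Your Riemann-sum argument instead transfers the finite-dimensional PSD property directly to the bilinear form by discretising $K$ into small Borel cells and passing to the limit using uniform continuity of $Q$ and $\f$ on compacts, after first reducing to continuous $\f$ by density of $C(K)$ in $L^2(K,\mu)$. Both routes are sound; yours is more elementary and self-contained (no Mercer), while the paper's route is shorter given the Mercer machinery and dovetails with its later use of the same spectral decomposition (e.g.\ in the proof of Lemma~\ref{univ=>expr}). One small thing worth making explicit in your write-up: the partition into Borel pieces of vanishing diameter exists because $K$ is a compact subset of $\R^d$ (intersect with a refining dyadic grid), and the representatives $x_i^{(n)}$ must be chosen inside $K$ so that the PSD hypothesis applies.
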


Each kernel induces a centred Gaussian Process on $K$ \citep{dudley_2002}, that is a random function $F$ on $K$ such that, for any finite $\hat K\subset K$, $\{F(x)\}_{x\in\hat K}$ is a centred Gaussian vector. We recall that the law of a centred GP is fully determined by its covariance function $(x,x')\mapsto \E[F(x)F(x')]$, defined on $K^2$.
\begin{definition}[Induced GP]
Given a kernel $Q$ on $K$, the Gaussian Process induced by $Q$ is a centred GP on $K$ whose covariance function is $Q$.
\end{definition}
We will sometimes use the notation $\mathcal{GP}(0,Q)$ for the law of the GP induced by a kernel $Q$. With our definition of a kernel, the samples from the induced GP lies in $L^2(K)$ with probability $1$ \citep{steinwart2017convergence}. 

From now on we will assume that $0\notin K$ if $\sigma_b=0$.\footnote{We exclude $0$ since for $\sigma_b=0$ $C_0$ is discontinuous in $0$ and can't be a kernel on $K$ as in Definition \ref{def:kernel}, if $0\in K$.} For all ResNets, it is straightforward to check that $Q_L$ is a kernel, in the sense of Definition \ref{def:kernel} (see Appendix \ref{app:section:ResNets_and_GPs} or \citep{daniely2016deeper}). The induced Gaussian Process is what we refer to as NNGP.

We denote by $\mathcal{H}_Q(K)$ the Reproducing Kernel Hilbert Space (RKHS)\footnote{\label{foot_RKHS}See Appendix \ref{app:setup_notations} for a definition.} induced by the kernel $Q$ on the set $K$. The following hierarchical result holds.
\begin{prop}\label{prop:rkhs_hierarchy}
For all $L\geq 1$, $l\in [0,L-1]$, 
$\mathcal{H}_{Q_l}(K) \subseteq \mathcal{H}_{Q_{l+1}}(K)$.
\end{prop}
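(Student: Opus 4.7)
The plan is to leverage the recursion $Q_{l+1}=Q_l+\lambda_{l+1,L}^2\,\Psi_l$ derived earlier, together with the standard RKHS sum theorem (Aronszajn): if a kernel $Q$ decomposes as a sum $Q=Q^{(1)}+Q^{(2)}$ of two kernels, then $\mathcal{H}_{Q^{(1)}}(K)\subseteq \mathcal{H}_Q(K)$ with norm contraction $\|f\|_Q\le\|f\|_{Q^{(1)}}$ for every $f\in\mathcal{H}_{Q^{(1)}}(K)$. Thus it suffices to exhibit $\Psi_l$ (more precisely $\lambda_{l+1,L}^2\Psi_l$) as a kernel on $K$, i.e.\ a symmetric continuous positive semi-definite function in the sense of Definition \ref{def:kernel}.

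First I would verify continuity and positive semi-definiteness of $\Psi_l$. Symmetry is immediate. Continuity follows because $\phi$ is continuous and, by dominated convergence applied to the finite-dimensional Gaussian law of $(y_l^1(x),y_l^1(x'))$ (whose covariance entries $Q_l(x,x),Q_l(x,x'),Q_l(x',x')$ depend continuously on $(x,x')$ by an inductive continuity argument on $l$), the expectation $\mathbb{E}[\phi(y_l^1(x))\phi(y_l^1(x'))]$ is jointly continuous in $(x,x')$. For positive semi-definiteness, given $\{x_1,\dots,x_n\}\subset K$ and $a\in\R^n$, I would write
\[
\sum_{i,j}a_ia_j\,\mathbb{E}[\phi(y_l^1(x_i))\phi(y_l^1(x_j))]=\mathbb{E}\!\left[\Big(\textstyle\sum_i a_i\phi(y_l^1(x_i))\Big)^{\!2}\right]\ge 0,
\]
and note that the constant function $(x,x')\mapsto\sigma_b^2$ is trivially a kernel, so $\Psi_l=\sigma_b^2+\sigma_w^2\,\mathbb{E}[\phi\,\phi]$ is a kernel. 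Multiplying by the nonnegative scalar $\lambda_{l+1,L}^2$ preserves this property.

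Next, I would invoke the sum-of-kernels theorem for RKHS (see e.g.\ Aronszajn, or \cite{paulsen2016RKHS}): for any two kernels $Q^{(1)},Q^{(2)}$ on $K$, the RKHS of $Q^{(1)}+Q^{(2)}$ equals the set of functions $f=f_1+f_2$ with $f_i\in\mathcal{H}_{Q^{(i)}}(K)$, endowed with norm $\|f\|^2=\inf\{\|f_1\|_{Q^{(1)}}^2+\|f_2\|_{Q^{(2)}}^2:f=f_1+f_2\}$. In particular $\mathcal{H}_{Q^{(1)}}(K)\subseteq \mathcal{H}_{Q^{(1)}+Q^{(2)}}(K)$. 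Applying this with $Q^{(1)}=Q_l$ and $Q^{(2)}=\lambda_{l+1,L}^2\Psi_l$ gives exactly $\mathcal{H}_{Q_l}(K)\subseteq\mathcal{H}_{Q_{l+1}}(K)$.

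The only mildly delicate step is the positive semi-definiteness of $\Psi_l$, which requires being careful that the random variable $\sum_i a_i\phi(y_l^1(x_i))$ is genuinely in $L^2$ so the expectation makes sense; this is immediate since the Gaussian vector $(y_l^1(x_i))_i$ has finite second moments and $\phi$ is $1$-Lipschitz. Everything else is a direct appeal to the Aronszajn sum theorem, so I do not anticipate any real obstacle.
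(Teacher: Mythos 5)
Your proof is correct, but it takes a genuinely different (and more economical) route than the paper's. Both arguments end with the same RKHS sum/hierarchy theorem applied to the decomposition $Q_{l+1}=Q_l+\lambda_{l+1,L}^2\Psi_l$; the difference is how you certify that the increment $\lambda_{l+1,L}^2\Psi_l$ is itself a kernel. The paper proves this (in Lemma \ref{lemma:Ql_is_kernel}) by rewriting $\Psi_l$ as $\sigma_b^2+\frac{\swq}{2}\hat f(C_l)\,R_l$ with $R_l(x,x')=\sqrt{Q_l(x,x)Q_l(x',x')}$, invoking the power-series expansion of $\hat f$ on $[-1,1]$ with non-negative coefficients (Lemma \ref{lemma:analytic_decomp_f_hat}) together with the Schur-product argument (Lemma \ref{schur}) to propagate positive semi-definiteness. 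You bypass that analytic machinery entirely: observing that $\Psi_l-\sigma_b^2$ is $\swq$ times a second moment of the Gaussian process at layer $l$, you get positive semi-definiteness for free from $\sum_{i,j}a_ia_j\,\E[\phi(y_l^1(x_i))\phi(y_l^1(x_j))]=\E\!\left[\bigl(\sum_i a_i\phi(y_l^1(x_i))\bigr)^2\right]\geq 0$. This is shorter, self-contained, and avoids any appeal to the analyticity of $\hat f$. Interestingly, the paper itself notes just before Lemma \ref{schur} that the non-negativity could be read off from $Q_l$ being a GP covariance but opts for the more elaborate route because the Schur-product lemma is reused in the universality proofs (Theorem \ref{thm:expr} and Proposition \ref{prop:universality_on_a_compact}). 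So your proof wins on economy for this isolated proposition, while the paper's setup pays dividends downstream. One small housekeeping note: your continuity argument for $\Psi_l$ tacitly assumes $Q_l$ is already continuous so the joint Gaussian law of $(y_l^1(x),y_l^1(x'))$ varies continuously, and your positive semi-definiteness argument tacitly assumes $Q_l$ is already a kernel so that law is well-defined; both are fine as part of the induction on $l$ that you implicitly set up, but it is worth stating the induction explicitly since the base case $Q_0$ is where the compactness assumption ($0\notin K$ if $\sigma_b=0$) enters.
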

Proposition \ref{prop:rkhs_hierarchy} shows that, as we go deeper, the RKHS cannot become poorer. However, increasing $L$ might introduce stability issues as illustrated in Proposition~\ref{proposition:stable_gradient}. We show in Sections~\ref{section:uniform_scaling} and \ref{section:decreasing_scaling} that Stable ResNets resolve this problem.

By Lemma \ref{lemma:T(Q)}, $T(Q_L)$ is a bounded, compact, self-adjoint operator and hence can be written as the sum of the projections on its eigenspaces \citep{lang2012real}. By Mercer's Theorem \citep{paulsen2016RKHS}, all the eigenfunctions of $T(Q_L)$ are continuous. 
Finally, it is possible to link the eigen-decomposition of $T(Q_L)$ with the distribution of the GP induced by $Q_L$. Denoting respectively by $\mu_k$ and $\psi_k$ the eigenvalues and eigenfunctions of the operator $T(Q_L)$, we have the equivalence in law:
\begin{align}\label{gproc}
    y^1_L \sim \sum_{k\in\mathbb{N}}\sqrt{\mu_k} Z_k \psi_k\sim \mathcal{GP}(0,Q_L)\,,
\end{align}
where $\{Z_k\}_{k\geq 0}$ are i.i.d.\ standard Gaussian random variables \citep{grenander1950}. The expressivity, that is the capacity to approximate a large class of function, of the network at initialization is then closely linked to the eigendecomposition of $Q_L$ \citep{yang2019finegrained}. 

\subsection{Universal kernels and expressive GPs}
In this section, we provide a comprehensive study of the kernel $Q_L$. We start with a formal definition of universality ($c$-universality in \citep{JMLR:v12:sriperumbudur11a}). Again, unless otherwise stated, let $K$ be a compact in $\R^d$.
\begin{definition}[Universal Kernel]\label{univ_kern}
Let $Q$ be a kernel on $K$, and $\mathcal{H}_Q(K)$ its RKHS \footnote{\label{foot_RKHS}See Appendix \ref{app:setup_notations}.}. We say that $Q$ is universal on $K$ if for any $\varepsilon>0$ and any continuous function $g$ on $K$, there exists $h \in \mathcal{H}_Q(K)$ such that $\|h-g\|_{\infty} <\varepsilon$.
\end{definition}
The universality of a kernel $Q$ on a compact set implies that the kernel is strictly positive definite, i.e. for all non-zero $\f \in L^2(K), \langle T(Q) \f, \f \rangle > 0$ \citep{JMLR:v12:sriperumbudur11a}. Moreover, universality also implies the full expressivity of the induced GP, as expressed in the following.
\begin{definition}[Expressive GP]\label{def:exprGP}
A Gaussian Process on $K$ is said to be expressive on $L^2(K)$ if, denoting by $\psi$ a random realisation  $\psi$ of the process, for all $\f\in L^2(K)$, for all $\varepsilon>0$,
\begin{align*}
     \mathbb{P}(\|\psi-\f\|_2\leq \varepsilon)>0\,.
 \end{align*}
\end{definition}
\begin{lemma}\label{univ=>expr}
A universal kernel $Q$ on $K$ induces an expressive GP on $L^2(K)$.
\end{lemma}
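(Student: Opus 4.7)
The plan is to reduce the claim to the classical fact that the topological support of a centred Gaussian measure on a separable Hilbert space is the closure of its Cameron–Martin space, which in our setting is exactly (the closure of) the RKHS $\mathcal{H}_Q(K)$. The key intermediate step is to show that universality in the sup-norm sense upgrades to density in $L^2(K)$.

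First I would show that $\mathcal{H}_Q(K)\subset L^2(K)$ is dense. Fix $\f\in L^2(K)$ and $\varepsilon>0$. Since $K$ is compact and $L^2(K)$ is defined with respect to a finite Borel measure $\mu$ (see the footnote to Lemma \ref{lemma:T(Q)}), continuous functions are dense in $L^2(K)$, so there is $g\in C(K)$ with $\|g-\f\|_2<\varepsilon/2$. By universality of $Q$, there exists $h\in\mathcal{H}_Q(K)$ with $\|h-g\|_\infty<\varepsilon/(2\sqrt{\mu(K)+1})$, and consequently
\begin{equation*}
\|h-g\|_2\leq\sqrt{\mu(K)}\,\|h-g\|_\infty<\varepsilon/2\,.
\end{equation*}
The triangle inequality gives $\|h-\f\|_2<\varepsilon$, establishing density.

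Next I would identify the GP with a Gaussian measure on $L^2(K)$. Because $Q$ is continuous and $K$ compact, $\sup_{x\in K}Q(x,x)<\infty$, so samples of $\mathcal{GP}(0,Q)$ are continuous and therefore elements of $L^2(K)$; the resulting law is a centred Gaussian measure on the separable Hilbert space $L^2(K)$ whose covariance operator is exactly $T(Q)$. By the standard identification (e.g.\ via Mercer's theorem or the Karhunen–Loève expansion already recalled in \eqref{gproc}), the Cameron–Martin space of this measure coincides, as a set continuously embedded in $L^2(K)$, with $\mathcal{H}_Q(K)$.

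Then I would invoke the classical support theorem for Gaussian measures on a separable Banach space: the topological support of a centred Gaussian measure equals the closure of its Cameron–Martin space. Combined with the density step, this yields $\mathrm{supp}(\mathcal{GP}(0,Q))=L^2(K)$. Hence, for any $\f\in L^2(K)$ and any $\varepsilon>0$, the open ball $\{\psi:\|\psi-\f\|_2<\varepsilon\}$ is a neighbourhood of a point in the support, so it has strictly positive probability; a fortiori so does the closed ball $\{\psi:\|\psi-\f\|_2\leq\varepsilon\}$, giving Definition \ref{def:exprGP}.

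The main obstacle is the identification of the Cameron–Martin space with the RKHS and the invocation of the support theorem; these are standard but require checking that $T(Q)$, as the covariance operator of the $L^2(K)$-valued Gaussian vector, is trace class (which follows from $\sum_k\mu_k=\int_K Q(x,x)\,\dd\mu(x)<\infty$) and that the RKHS-to-Cameron–Martin isometry is compatible with the $L^2$ embedding. Alternatively, if one prefers to avoid the abstract support theorem, the same conclusion can be obtained directly from \eqref{gproc}: universality forces $T(Q)$ to be strictly positive definite, hence $\{\psi_k\}$ is a complete orthonormal basis of $L^2(K)$, and a truncation-plus-Gaussian-anti-concentration argument on the independent coordinates $\sqrt{\mu_k}Z_k$ yields $\mathbb{P}(\|\psi-\f\|_2\leq\varepsilon)>0$ by matching finitely many coordinates to within $\varepsilon/2$ and controlling the tail using $\sum_{k>N}\mu_k\to 0$.
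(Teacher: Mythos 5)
Your main argument is correct and follows a genuinely different route from the paper's. The paper works directly with the Karhunen--Lo\`eve expansion \eqref{gproc}: it first handles $\f$ that is a finite linear combination of eigenfunctions by choosing small intervals $I_k$ for the first $N+1$ coordinates $Z_k$ (Gaussian anti-concentration) and controlling the tail $\sum_{k>N}\mu_k Z_k^2$ via Markov's inequality and the trace-class property, then passes to general $\f$ by truncation using that universality makes $\{\psi_k\}$ a complete orthonormal basis. Your main route instead reduces the claim to two abstract facts: (i) universality plus compactness of $K$ gives density of $\mathcal{H}_Q(K)$ in $L^2(K)$ (via $\|\cdot\|_2\leq\sqrt{\mu(K)}\,\|\cdot\|_\infty$ and density of $C(K)$ in $L^2(K,\mu)$), and (ii) the support of a centred Gaussian measure on a separable Hilbert space is the closure of its Cameron--Martin space, identified with the RKHS. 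Both approaches are sound: yours is shorter and conceptually cleaner but requires importing Cameron--Martin/support theory and checking the RKHS-to-Cameron--Martin identification compatible with the $L^2$ embedding, while the paper's is longer but self-contained, using only spectral theory, Markov's inequality, and elementary Gaussian estimates. Your sketched alternative at the end is essentially the paper's proof.
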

By definition, universal kernels are characterized by the property that their associated RKHS is dense (w.r.t the uniform norm $\|.\|_{\infty}$) in the space of continuous functions on $K$. This is crucial for Kernel regression and Gaussian Process inference \citep{kanagawa2018gaussian}.\footnote{The closure of the set of functions described by the mean function of the posterior of a GP regression is exactly the RKHS of the kernel of the GP prior.} By Proposition \ref{prop:rkhs_hierarchy}, it suffices to prove that $Q_{L_0}$ is universal for some $L_0$ in order to conclude for all $L\geq L_0$. It turns out this is true for $L_0 =2$.
\begin{prop}\label{prop:universality_on_a_compact}
If $\sigma_b > 0$, then $Q_2$ is universal on $K$. From Proposition \ref{prop:rkhs_hierarchy}, $Q_L$ is universal for all $L\geq 2$.
\end{prop}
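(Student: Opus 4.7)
The plan is to reduce the claim to universality of the first-layer kernel $\Psi_0$ and then invoke the characteristic-kernel criterion for universality. By Proposition \ref{prop:rkhs_hierarchy}, $\mathcal{H}_{Q_1}(K)\subseteq\mathcal{H}_{Q_2}(K)$, and by the standard sum-of-kernels result for RKHSs (Aronszajn), $\mathcal{H}_{Q_1}(K)=\mathcal{H}_{Q_0}(K)+\mathcal{H}_{\lambda_{1,L}^2\Psi_0}(K)\supseteq\mathcal{H}_{\Psi_0}(K)$, because a positive rescaling of a kernel leaves the associated function space unchanged. Hence if $\Psi_0$ is universal on $K$, so is $Q_2$.

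To prove universality of $\Psi_0$ I would use the characterisation from \cite{JMLR:v12:sriperumbudur11a}: a continuous kernel $\Psi$ on a compact set $K$ is universal iff every finite signed Borel measure $\mu$ with $\iint_{K^2}\Psi\,\dd\mu\,\dd\mu=0$ is the zero measure. Writing $y_0^1(x)=W\cdot x+B$ with $W\sim\mathcal{N}(0,(\sigma_w^2/d)I_d)$ and $B\sim\mathcal{N}(0,\sigma_b^2)$ independent, Fubini gives
\begin{equation*}
\iint_{K^2}\Psi_0(x,x')\,\dd\mu(x)\dd\mu(x')=\sigma_b^2\,\mu(K)^2+\sigma_w^2\,\mathbb{E}\!\left[G(W,B)^2\right],
\end{equation*}
where $G(w,b):=\int_K\phi(w\cdot x+b)\,\dd\mu(x)$. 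Non-negativity of the two summands forces $\mu(K)=0$ and $G(w,b)=0$ for $(W,B)$-almost every $(w,b)$. Because $\sigma_b>0$, the joint density of $(W,B)$ is strictly positive on all of $\mathbb{R}^{d+1}$, and $G$ is continuous by dominated convergence, so in fact $G\equiv 0$ on $\mathbb{R}^{d+1}$.

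The final step deduces $\mu=0$ from $G\equiv 0$, and is where I would invoke the universal approximation property of single-hidden-layer ReLU networks on compacts: the linear span of $\{x\mapsto\phi(w\cdot x+b):(w,b)\in\mathbb{R}^{d+1}\}$ is $\|\cdot\|_\infty$-dense in $C(K)$. Combining $\int\phi(w\cdot x+b)\,\dd\mu(x)=0$ for every $(w,b)$ with linearity gives $\int f\,\dd\mu=0$ on a $\|\cdot\|_\infty$-dense subset of $C(K)$; since $\mu$ has bounded total variation this extends to every $f\in C(K)$, and Riesz representation yields $\mu=0$. The main obstacle I expect is precisely the passage from ``almost every $(w,b)$'' to ``every $(w,b)$'', which is where the hypothesis $\sigma_b>0$ is essential: without a non-zero bias variance the Gaussian density no longer has full support on $\mathbb{R}^{d+1}$, and one would have to restrict to $K$ away from the origin and rework the approximation step (for instance, on a sphere).
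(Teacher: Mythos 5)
Your proof is correct, and it takes a genuinely different route from the paper's. The paper's argument is kernel-theoretic and self-contained: it first establishes (Lemma~\ref{lemma:analytic_decomp_f_hat}) that the Taylor coefficients of $\tilde f$ are non-negative with a strictly positive even subsequence, then uses this in Proposition~\ref{prop:univ_K} to write $\tilde f(c_0)$ as an explicit infinite feature map $\sum_{n,J}\al_n\be_{J,n}A_{J,n}(x)A_{J,n}(x')$, and invokes Stone--Weierstrass to show the algebra generated by the features $\{A_{J,n}\}$ is dense in $C(K)$; universality of $Q_1$ (and hence $Q_L$, $L\geq 2$) then follows by propagating through the RKHS hierarchy. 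Your argument instead reduces, via the same Aronszajn sum-of-kernels and RKHS-hierarchy steps, to universality of the first-layer block kernel $\Psi_0$, and attacks that via the signed-measure criterion from \cite{JMLR:v12:sriperumbudur11a}: Fubini peels off $\sigma_b^2\mu(K)^2 + \sigma_w^2\,\mathbb{E}[G(W,B)^2]$, positivity forces $G\equiv 0$ on $\mathbb{R}^{d+1}$ (here the full-support density of $(W,B)$, available only when $\sigma_b>0$, and continuity of $G$ do the work), and the Leshno--Lin--Pinkus--Schocken universal approximation theorem for one-hidden-layer ReLU networks then gives $\mu=0$. The trade-off: your route ties the universality of the NNGP kernel directly to the classical universal approximation property of the single-layer network that generates it, which is conceptually very clean and, notably, extends verbatim to any non-polynomial activation without needing the paper's delicate Taylor-coefficient computation for $\hat f$; the paper's route is more elementary in the sense that it builds the density argument from Stone--Weierstrass alone rather than importing a neural-network approximation theorem. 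You are also right that your approach naturally explains why the sphere case requires separate handling when $\sigma_b=0$: without bias, $(W,B)$ loses full support and the almost-everywhere-to-everywhere step breaks, matching the paper's separate treatment in Proposition~\ref{prop:universality_on_sphere}.
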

Note that the presence of biases is essential to achieve universality in the case of a general $K$, since the output of a ReLU ResNet with no bias is always a positive homogeneous function of its input, i.e., a map $F$ such that $F(\al x) =\al F(x)$ for all $\al\geq 0$. However, in the particular case of $K=\Sd$, the unit sphere in $\R^d$, the kernel $Q_L$ is universal (for $L\geq 2$), even when $\sigma_b=0$.
\begin{prop}\label{prop:universality_on_sphere}
Assume $\sigma_b = 0$. Then for all $L \geq 2$, $Q_L$ is universal on $\Sd$ for $d\geq 2$.
\end{prop}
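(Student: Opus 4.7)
The plan is to reduce to the case $L=2$ via Proposition \ref{prop:rkhs_hierarchy} and then analyse $Q_2$ as a rotationally invariant (dot-product) kernel on the sphere. Since $\sigma_b=0$ and $x,x'\in\Sd$, an induction on \eqref{rec} shows that $q_l:=Q_l(x,x)$ is constant in $x\in\Sd$ and that $Q_l(x,x')$ depends only on $t:=x\cdot x'$; write $Q_l(x,x')=q_l\,G_l(t)$. Substituting into \eqref{rec} and using $q_l=q_{l-1}(1+\tfrac12\lambda_{l,L}^2\swq)$ yields the scalar recursion
\begin{equation*}
G_l(t)=G_{l-1}(t)+\alpha_l\,f\bigl(G_{l-1}(t)\bigr)\,,\qquad G_0(t)=t\,,
\end{equation*}
with $\alpha_l:=\tfrac{\lambda_{l,L}^2\swq/2}{1+\lambda_{l,L}^2\swq/2}\in(0,1)$.

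The analytic input I would establish is the Taylor expansion of the function $f$ defined in \eqref{deff}. Since $f'(\gamma)=-\arccos(\gamma)/\pi$ and $\arcsin(\gamma)=\sum_{k\geq 0}\tfrac{(2k)!}{4^k(k!)^2(2k+1)}\gamma^{2k+1}$ has strictly positive coefficients, one gets $f(\gamma)=\tfrac{1}{\pi}-\tfrac{\gamma}{2}+\sum_{m\geq 1}a_{2m}\gamma^{2m}$ with $a_{2m}>0$ for every $m\geq 1$, while all odd coefficients of order $\geq 3$ vanish. Consequently $G_1(t)=t+\alpha_1 f(t)$ has strictly positive constant term $\alpha_1/\pi$, strictly positive linear coefficient $1-\alpha_1/2>\tfrac12$, strictly positive even coefficients $\alpha_1 a_{2m}$ for $m\geq 1$, and zero coefficient at every odd order $n\geq 3$.

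The heart of the argument is to show that $G_2$ has strictly positive Taylor coefficients at every order. Substituting $y=G_1(t)$ into the expansion of $f$ and regrouping yields $G_2(t)=\tfrac{\alpha_2}{\pi}+(1-\tfrac{\alpha_2}{2})G_1(t)+\sum_{m\geq 1}\alpha_2 a_{2m}G_1(t)^{2m}$. Let $c_n^{(1)}$ denote the order-$n$ Taylor coefficient of $G_1$. For $n\in\{0,1\}\cup\{2m:m\geq 1\}$ we have $c_n^{(1)}>0$, so the term $(1-\alpha_2/2)G_1$ already contributes a strictly positive $t^n$-coefficient (and $\alpha_2/\pi$ adds an extra positive contribution at $n=0$). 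For an odd $n\geq 3$, $c_n^{(1)}=0$, and positivity must come from $\sum_{m\geq 1}\alpha_2 a_{2m} G_1(t)^{2m}$. Since $G_1$ has strictly positive coefficients at $n=0$ and $n=1$, the product $G_1(t)^2$ has a strictly positive $t^n$-coefficient for every odd $n\geq 3$ (e.g.\ from the cross term $2c_1^{(1)}c_{n-1}^{(1)}>0$). With $a_2>0$, the single term $\alpha_2 a_2 G_1(t)^2$ then forces every odd $t^n$-coefficient of $G_2$ to be strictly positive, and no other contribution can cancel this since $a_{2m}\geq 0$ and every $G_1(t)^{2m}$ has nonnegative Taylor coefficients.

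To conclude, a rotationally invariant kernel $g(x\cdot x')$ on $\Sd$ whose Taylor expansion $g(t)=\sum_n c_n t^n$ satisfies $c_n>0$ for every $n\geq 0$ is universal: by the Funk-Hecke theorem and the fact that every monomial $t^n$ has a Gegenbauer expansion with nonnegative coefficients, the integral operator induced by $g$ has strictly positive eigenvalue on every spherical-harmonic eigenspace, so by Mercer's theorem the induced RKHS contains (a scaled version of) every spherical harmonic and is therefore dense in $C(\Sd)$. Applying this to $g=G_2$ yields universality of $Q_2$ on $\Sd$, and Proposition \ref{prop:rkhs_hierarchy} propagates the conclusion to all $L\geq 2$. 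The main technical obstacle is the Taylor-coefficient bookkeeping in the $L=2$ step: one has to exploit both the nonzero value $f(0)=1/\pi$ (which gives $G_1$ a strictly positive constant term) and the strictly positive even-degree coefficients of $f$ in order to reinstate, through the composition $f(G_1)$, the odd powers of $t$ that are missing from $G_1$ itself.
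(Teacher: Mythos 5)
Your proof is correct and follows essentially the same route as the paper's (Appendix A8, Lemma \ref{lemma:analytic_decom_relu_3layers} through Proposition \ref{prop:universality_on_sphere_appendix}): establish that on $\Sd$ the correlation $C_2$ is a zonal kernel with strictly positive power-series coefficients in $x\cdot x'$, deduce strictly positive Funk--Hecke eigenvalues on every spherical-harmonic eigenspace hence strict positive definiteness, and conclude universality via the density of spherical harmonics and the RKHS hierarchy. The only organizational differences are that you reduce to $L=2$ up front via Proposition \ref{prop:rkhs_hierarchy} rather than proving coefficient positivity for all $L\geq2$ by induction as the paper does, and you justify the spectral positivity through nonnegativity of the Gegenbauer expansion of $t^n$ whereas the paper runs an equivalent sum-of-squares argument; both of these are sound and your bookkeeping of the $L=2$ composition step is in fact somewhat more explicit than the paper's.
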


Another interesting fact of the case $K = \Sd$ is that the eigendecomposition of the kernel $Q_L$ has a simple structure. Indeed, on $\Sd$, $Q_L(x,x')$ depends only on the scalar product $x\cdot x'$. These kernels (zonal kernel) admit Spherical Harmonics as an eigenbasis \citep{yang2019finegrained}.
\begin{prop}[Spectral decomposition on $\Sd$]\label{proposition:spectral_decomposition_on_sphere}
Let $Q$ be a zonal kernel on $\Sd$, that is $Q(x,x') = p(x\cdot x')$ for a continuous function $p:[-1,1]\to\mathbb{R}$. Then, there is a sequence $\{\mu_k\geq 0\}_{k\in\mathbb N}$ such that for all $x,x'\in\Sd$
\begin{align*}
Q(x,x') = \sum_{k\geq 0 } \mu_{k} \sum_{j=1}^{N(d,k)} Y_{k,j}(x) Y_{k,j}(x')\,,
\end{align*}
where $\{Y_{k,j}\}_{k\geq0, j\in [1:N(d,k)]}$ are spherical harmonics of  $\mathbb{S}^{d-1}$ and $N(d,k)$ is the number of harmonics of order $k$. With respect to the standard spherical measure, the spherical harmonics form an orthonormal basis of $L^2(\Sd)$ and $T(Q)$ is diagonal on this basis.
\end{prop}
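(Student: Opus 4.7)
The plan is to combine Mercer's theorem with a representation-theoretic symmetry argument, exploiting the fact that a zonal kernel commutes with the natural $SO(d)$-action on $\Sd$.

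First, I would invoke Lemma \ref{lemma:T(Q)} applied to $Q$ with respect to the standard spherical measure on $\Sd$: $T(Q)$ is a compact, non-negative, self-adjoint operator on $L^2(\Sd)$. By the spectral theorem, it admits an orthonormal basis of eigenfunctions with non-negative eigenvalues. Continuity of $Q$ and Mercer's theorem then yield a representation $Q(x,x')=\sum_n \nu_n \chi_n(x)\chi_n(x')$ with absolute and uniform convergence on $\Sd\times\Sd$, where $\{\chi_n\}$ is an orthonormal basis of continuous eigenfunctions. The task reduces to identifying the $\chi_n$ with (a reordering of) the spherical harmonics.

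Second, I would exploit the zonal structure. For $R\in SO(d)$ define the unitary operator $U_R$ on $L^2(\Sd)$ by $(U_R\f)(x) = \f(R^{-1}x)$. Since $p(Rx\cdot Rx')=p(x\cdot x')$, a change of variables using the invariance of the spherical measure under $SO(d)$ gives $T(Q)\, U_R = U_R\,T(Q)$ for every $R\in SO(d)$. Next, I would recall the classical Peter--Weyl-type decomposition $L^2(\Sd)=\bigoplus_{k\geq 0}\mathcal{Y}_k$, where $\mathcal{Y}_k$ is the $N(d,k)$-dimensional space of spherical harmonics of degree $k$ and each $\mathcal{Y}_k$ carries an irreducible representation of $SO(d)$. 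Because $T(Q)$ is $SO(d)$-equivariant, it preserves each $\mathcal{Y}_k$; by Schur's lemma its restriction to $\mathcal{Y}_k$ is a scalar $\mu_k\in\mathbb{R}$, and $\mu_k\geq 0$ by non-negativity of $T(Q)$.

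Third, I would conclude: since every element of any orthonormal basis $\{Y_{k,j}\}_{j=1}^{N(d,k)}$ of $\mathcal{Y}_k$ is an eigenfunction of $T(Q)$ with eigenvalue $\mu_k$, the family $\{Y_{k,j}\}_{k\geq 0,\,j\in[1:N(d,k)]}$ is an orthonormal eigenbasis of $L^2(\Sd)$ for $T(Q)$, and $T(Q)$ is diagonal in this basis. Substituting this eigenbasis into the Mercer expansion obtained in the first step yields
\[
Q(x,x') \;=\; \sum_{k\geq 0}\mu_k\sum_{j=1}^{N(d,k)} Y_{k,j}(x)\,Y_{k,j}(x')\,,
\]
with convergence absolute and uniform on $\Sd\times\Sd$.

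The main conceptual step is the Schur-lemma argument that turns rotation invariance into a multiplicity-$N(d,k)$ eigenvalue $\mu_k$ on each $\mathcal{Y}_k$; everything else is bookkeeping, together with the standard (but non-trivial) facts about Mercer's theorem, the irreducibility of $SO(d)$ on $\mathcal{Y}_k$, and the orthogonal decomposition of $L^2(\Sd)$ into harmonic subspaces, all of which can be cited rather than reproved.
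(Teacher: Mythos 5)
Your proof is correct and takes a genuinely different route from the paper's. The paper's argument is computational: it invokes the Hecke--Funk formula to show directly that each spherical harmonic $Y_{k,j}$ is an eigenfunction of $T(Q)$, obtaining an explicit formula $\mu_k = \tfrac{\Omega_{d-1}}{\Omega_d}\int_{-1}^1 p(t)\,P^d_k(t)\,(1-t^2)^{(d-3)/2}\,\dd t$ for the eigenvalues, and then deduces $\mu_k \geq 0$ from non-negativity of $T(Q)$. Your argument is representation-theoretic: you observe that $T(Q)$ commutes with the $SO(d)$-action, restrict to each irreducible harmonic subspace $\mathcal{Y}_k$, and invoke Schur's lemma to conclude that $T(Q)$ acts as a scalar there. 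Each approach has its merits: the paper's gives quantitative information (the explicit integral for $\mu_k$, used implicitly elsewhere in the appendix), while yours explains structurally \emph{why} the decomposition must have this form and generalizes cleanly to any $G$-equivariant kernel on a homogeneous space $G/H$.

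One small caveat on the Schur step: over $\mathbb{R}$, the commutant of an irreducible representation need not be $\mathbb{R}$. For $d\geq 3$ the spaces $\mathcal{Y}_k$ are absolutely irreducible (real type), so Schur's lemma indeed gives a real scalar directly. For $d=2$, however, the two-dimensional spaces spanned by $\cos(k\theta),\sin(k\theta)$ are of complex type and the commutant is isomorphic to $\mathbb{C}$; there you additionally need self-adjointness of $T(Q)$ to rule out the skew-symmetric part and force a real scalar. Since the paper does use $\Sd$ with $d\geq 2$ in places, it is worth flagging that the self-adjointness is doing real work in the low-dimensional case, not merely showing that the scalars are real.
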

Although the kernel is universal for fixed depth $L$, it is not guaranteed that as $L \rightarrow \infty$, $Q_L$ remains universal. Indeed, for the standard ResNet architecture, the variance $Q_{L}(x,x)$ grows exponentially with $L$ \citep{yang2017meanfield}, and therefore, the kernel diverges. In order to analyse the expressivity of the kernel of a standard ResNet in the limit of large depth, we can study the correlation kernel $C_L$, defined in \eqref{defc}, instead. We show in the following Lemma that, as $L$ goes to infinity, the kernel $C_L$ converges to a constant (which has a 1D RKHS).
\begin{lemma}
\label{lemma:infnite_depth_standard_resnet}
Consider a standard ResNet of type \eqref{equation:Resnet_dynamics} and let $K \subset \mathbb{R}^d \setminus \{0\}$ be a compact set. We have that
$$\lim_{L \rightarrow \infty}\sup_{x,x' \in K}\left| 1 - C_L(x,x')\right| = 0\,.$$
Moreover, if $\sigma_b=0$, then,
$$\sup_{x,x' \in K}\left| 1 - C_L(x,x')\right| = \mathcal{O}(L^{-2})\,.$$
Therefore, $\mathcal{H}_{C_\infty}(K)$ is the space of constant functions.
\end{lemma}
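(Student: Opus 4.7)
The plan is to work with the rescaled kernel $\tilde Q_l := Q_l/\alpha^l$, where $\alpha = 1+\swq/2$, so as to absorb the exponential growth of the diagonal proved in Lemma~\ref{lemma:exploding}. Rewriting \eqref{rec} as $Q_l(x,x') = \alpha Q_{l-1}(x,x') + \sbq + (\swq/2)\, f(C_{l-1})\, \sqrt{Q_{l-1}(x,x) Q_{l-1}(x',x')}$ (using $Q_{l-1}/C_{l-1} = \sqrt{Q_{l-1}(x,x)Q_{l-1}(x',x')}$) and dividing by $\alpha^l$ yields
\[
\tilde Q_l(x,x') = \tilde Q_{l-1}(x,x') + \sbq/\alpha^l + a\, f(C_{l-1}(x,x'))\, \tilde R_{l-1}(x,x'),
\]
with $a = \swq/(2\alpha) \in (0,1)$ and $\tilde R_l := \sqrt{\tilde Q_l(x,x)\tilde Q_l(x',x')}$. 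Taking $x=x'$ (so $f(C_{l-1}(x,x))=f(1)=0$), the recursion is linear and solves to $\tilde Q_l(x,x) = Q_0(x,x) + \sbq \sum_{k=1}^l \alpha^{-k}$, which converges geometrically and uniformly on $K$ to $\hat Q(x) := Q_0(x,x) + 2\sbq/\swq$. Since $K \subset \R^d\setminus\{0\}$ is compact, $\hat Q$ is continuous and strictly positive, so $\tilde R_l \to \sqrt{\hat Q(x)\hat Q(x')}$ uniformly on $K^2$ and stays uniformly bounded away from $0$.

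For the off-diagonal, every increment in the displayed recursion is non-negative ($f\ge 0$ on $[-1,1]$, with $f(\gamma)=0$ only at $\gamma=1$), so $l\mapsto \tilde Q_l(x,x')$ is non-decreasing; Cauchy--Schwarz gives the upper bound $\tilde Q_l(x,x')\le \tilde R_l$, so it converges pointwise to some $\tilde Q_\infty(x,x')$. Passing to the limit forces $a\, f(C_\infty(x,x'))\, \sqrt{\hat Q(x)\hat Q(x')} = 0$, and since the last factor is positive, $C_\infty(x,x') = 1$. Hence $\tilde Q_\infty(x,x') = \sqrt{\hat Q(x)\hat Q(x')}$ is continuous, and by Dini's theorem the monotone pointwise convergence of the continuous functions $\tilde Q_l$ to this continuous limit on the compact $K^2$ is uniform. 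Combined with the uniform convergence of $\tilde R_l$ bounded below, this gives $\sup_{x,x'\in K}|1 - C_l(x,x')| \to 0$.

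To obtain the $\mathcal{O}(L^{-2})$ rate when $\sbq = 0$, note that the diagonal is then constant and $\tilde R_l = \tilde R_0$, so the recursion collapses to the scalar iteration $C_l = C_{l-1} + a f(C_{l-1})$. The map $\Phi:C\mapsto C + a f(C)$ has derivative $1 - a\arccos(C)/\pi \in [1-a,1]$, hence is strictly increasing on $[-1,1]$; therefore $C_l$ is a monotone function of the initial value $C_0 \in [-1,1]$, and the worst case $C_0 = -1$ provides a bound uniform in $(x,x')$. Writing $g_l = 1 - C_l$, the key ingredient is the Taylor expansion of $f$ near $1$: using $\arccos(1-g) = \sqrt{2g}(1 + g/12 + \mathcal O(g^2))$ and $\sqrt{1-(1-g)^2} = \sqrt{2g}(1 - g/4 + \mathcal O(g^2))$, one obtains $f(1-g) = (2\sqrt{2}/(3\pi))\, g^{3/2}(1+\mathcal O(g))$ as $g\downarrow 0$. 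A standard Stolz--Cesàro argument applied to $1/\sqrt{g_l}$ then gives $1/\sqrt{g_l} - 1/\sqrt{g_{l-1}} \to a\sqrt{2}/(3\pi)$, whence $g_l \sim 9\pi^2/(2 a^2 l^2)$, uniformly in $(x,x')$.

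Finally, $\mathcal H_{C_\infty}(K)$ is the RKHS of the constant-$1$ kernel and so coincides with the one-dimensional space of constant functions on $K$. The main technical obstacle is the expansion of $f$ at $1$: the leading $\sqrt{g}$ contributions of $\arccos(1-g)$ and $\sqrt{1-(1-g)^2}$ must cancel to produce the $g^{3/2}$ behaviour, and it is precisely this cancellation that yields the quadratic (rather than any slower) decay $L^{-2}$.
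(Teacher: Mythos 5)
Your proof is correct, and it is actually more careful than the paper's in two places. For the qualitative uniform convergence, you monotonize the right object: passing to $\tilde Q_l=Q_l/\alpha^l$ with $\alpha=1+\swq/2$ makes the off-diagonal genuinely non-decreasing in $l$, after which Dini's theorem on the compact $K^2$ delivers uniformity. The paper instead asserts that $C_L$ itself is non-decreasing in $L$ ``since $\hat f(x)\ge x$''; this fails in general when $\sigma_b>0$ (take $d=1$, $\sigma_b^2=\sigma_w^2=1$, $x=1$, $x'=2$: then $C_0=3/\sqrt{10}\approx 0.9487$ but a direct computation gives $C_1\approx 0.9442<C_0$), and the paper gives no argument for why the convergence is uniform. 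Your rescaled-kernel route sidesteps both issues. For the $\mathcal O(L^{-2})$ rate at $\sigma_b=0$, both proofs rest on the same expansion $f(1-g)=\tfrac{2\sqrt2}{3\pi}g^{3/2}+\mathcal O(g^{5/2})$ and the telescoping of $g_l^{-1/2}$, and you arrive at the same constant ($g_L\sim 9\pi^2/(2a^2L^2)$, which is the paper's $\kappa=4(1+\alpha)^2/(s^2\alpha^2)$ with $s=2\sqrt2/(3\pi)$ and $a=\alpha/(1+\alpha)$ in the paper's notation); but you again supply the missing uniformity argument, by observing that $\Phi(C)=C+af(C)$ has $\Phi'(C)=1-a\arccos(C)/\pi\in[1-a,1]\subset(0,1]$, so every trajectory is dominated by the single worst-case trajectory started at $C_0=-1$. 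The conclusion about $\mathcal H_{C_\infty}$ is the same in both. In short, this is a genuinely different route (rescale-then-Dini, plus a worst-case-trajectory comparison for the rate), and it closes small gaps in the paper's version.
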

Lemma \ref{lemma:infnite_depth_standard_resnet} shows that in the limit of infinite depth $L$, the RKHS of the correlation kernel is trivial, meaning that the NNGP cannot be expressive. On the contrary, we will show in the next sections that Stable ResNets achieve a universal kernel for infinite depth $L$.
\section{UNIFORM SCALING}\label{section:uniform_scaling}
Consider a Stable ResNet with layers $[0:L]$. Under uniform scaling, the recurrence relation in \eqref{rec} reads:
\begin{align}\label{recunif}
Q_{l} &= Q_{l-1} + \tfrac{1}{L}\left({\sigma_b}^2 +\tfrac{{\sigma_w}^2}{2}\left(1+\tfrac{f(C_{l-1})}{C_{l-1}}\right)Q_{l-1}\right).
\end{align}
In the limit as $L\to\infty$, \eqref{recunif} converges uniformly to a continuous ODE. Studying the solution of the corresponding Cauchy problem, we show that the covariance kernel remains universal in the limit of infinite depth.
\subsection{Continuous formulation}
The layer index $l$ in \eqref{recunif} can be rescaled as $l\mapsto t(l)= l/L$. Clearly $t(0) = 0$ and $t(L)=1$, so the image of $t$ is contained in $[0,1]$. In the limit $L\to\infty$ it is natural to consider $t$ as a continuous variable spanning the interval $[0,1]$. With this in mind, it makes sense to look at the continuous version of \eqref{recunif}.\\
Let $K\subset\R^d$ be a compact set and $x,x'\in K$. If $\sigma_b=0$ assume that $0\notin K$.
\begin{align}
\label{recc}
\begin{split}
&\dot q_t(x,x') = {\sigma_b}^2 + \tfrac{{\sigma_w}^2}{2}\left(1+\tfrac{f(c_t(x,x'))}{c_t(x,x')}\right)q_t(x,x')\,,\\
&q_0(x,x') = \sbq + \swq\,\tfrac{x\cdot x'}{d}\,,\\
&c_t(x,x')=\tfrac{q_t(x,x')}{\sqrt{q_t(x,x)q_t(x',x')}}\,.
\end{split}
\end{align}
As discussed in Section \ref{app:section:uniform_scaling} of the Appendix, for any $x,x'$, the solution of the above Cauchy problem exists and is unique. Moreover, the solutions $q_t$ and $c_t$ are kernels on $K$, in the sense of Definition \ref{def:kernel}.

Clearly, for finite $L$, the continuous ODE \eqref{recc} is an approximation. However, the following result holds.

\begin{lemma}[Convergence to the continuous limit]\label{convcont}
Let $Q_{l|L}$ be the covariance kernel of the layer $l$ in a net of $L+1$ layers $[0:L]$, and $q_t$ be the solution of \eqref{recc}, then
\begin{align*}
    \lim_{L\to\infty}\sup_{l\in[0:L]}\sup_{(x,x')\in K^2}|Q_{l|L}(x,x')-q_{t=l/L}(x,x')| = 0\,.
\end{align*}
\end{lemma}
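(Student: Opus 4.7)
The plan is to recognise the discrete recursion \eqref{recunif} as the forward Euler scheme with step $1/L$ for the Cauchy problem \eqref{recc} and to run the standard convergence analysis, paying careful attention to uniformity in $(x,x')\in K^2$. A useful simplification is that, for each fixed pair $(x,x')$, both the discrete and the continuous dynamics only couple the three scalar quantities $u_\bullet=q_\bullet(x,x)$, $v_\bullet=q_\bullet(x',x')$, $w_\bullet=q_\bullet(x,x')$: the right-hand side can be written as $F(u,v,w)=\sbq+\tfrac{\swq}{2}\bigl(w+f\bigl(w/\sqrt{uv}\bigr)\sqrt{uv}\bigr)$, so the problem becomes a finite-dimensional ODE pointwise, and the supremum over $K^2$ in the statement follows as soon as all constants we produce are $(x,x')$-independent.

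The first step is to establish two-sided bounds on the diagonal, uniform in $L$, $l$ and $x\in K$. Since $f(1)=0$, both the discrete and continuous diagonal dynamics collapse to the linear equation $\dot u=\sbq+\tfrac{\swq}{2}u$, which integrates explicitly. This yields the upper bound $u_t\le (u_0+2\sbq/\swq)e^{\swq t/2}\le M$ for $t\in[0,1]$, and analogously for the Euler iteration via $(1+\swq/(2L))^l\le e^{\swq l/(2L)}$; the lower bound $u_\bullet\ge u_0\ge c_0>0$ is immediate because $u_\bullet$ is non-decreasing and $u_0=\sbq+\swq\|x\|^2/d$ is uniformly positive on $K$ (using $\sigma_b>0$, or $0\notin K$ when $\sigma_b=0$). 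The same explicit formulas also give the diagonal convergence $\sup_{l,x}|U_l-u_{l/L}|=O(1/L)$ directly, by comparing $(1+a/L)^l$ to $e^{al/L}$ for $a=\swq/2$ and $l\le L$.

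The second step is a Lipschitz estimate for $F$ on the region $\{c_0\le u,v\le M,\ |w|\le M\}$. A short calculation shows that $f$ is actually $C^1$ on $[-1,1]$ with $f'(c)=-\arccos(c)/\pi$, so $|f|\le 1$ and $|f'|\le 1$ on $[-1,1]$; combined with the fact that $(u,v,w)\mapsto w/\sqrt{uv}$ has bounded partial derivatives on our region (with constants depending only on $c_0$ and $M$), this yields a uniform Lipschitz constant $\Lambda$ for $F$, independent of $L$ and of $(x,x')\in K^2$. Note that the smoothness of $f$ up to the endpoints $\pm 1$ is what rules out the apparent singularity when the correlation is extremal.

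The argument is then closed by a standard discrete Gronwall estimate. Setting $e_l(x,x')=Q_{l|L}(x,x')-q_{l/L}(x,x')$ and $s=(l-1)/L$, decompose
\begin{align*}
e_l=e_{l-1}+\tfrac{1}{L}\bigl[F(U_{l-1},V_{l-1},W_{l-1})-F(u_s,v_s,w_s)\bigr]+\rho_l\,,
\end{align*}
where $\rho_l=q_{l/L}-q_s-\tfrac{1}{L}F(u_s,v_s,w_s)=\int_s^{l/L}[F(q_\tau)-F(q_s)]\,\dd\tau$ is the local truncation error; the bounds and Lipschitz property above give $|\rho_l|=O(1/L^2)$ uniformly. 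Using the $\Lambda$-Lipschitz bound on $F$ together with the $O(1/L)$ diagonal estimate to absorb the $U_{l-1}-u_s,V_{l-1}-v_s$ contributions, discrete Gronwall yields $\sup_{l\in[0:L]}|e_l(x,x')|\le C_K L^{-1}e^{\Lambda}$, uniformly in $(x,x')\in K^2$, which proves the claim. The only delicate point throughout is ensuring that $c_0$, $M$, $\Lambda$ and the constant in the $\rho_l$ bound can all be chosen uniformly in $(x,x')\in K^2$; once this uniformity is secured, the conclusion follows from the usual Euler/Gronwall machinery.
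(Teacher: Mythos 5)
Your proposal is correct and uses essentially the same strategy as the paper: interpret the discrete recursion as a forward Euler scheme for \eqref{recc}, obtain a uniform Lipschitz bound on the right-hand side over the relevant compact region of $(u,v,w)$-space using the $C^1$ regularity of $f$ on $[-1,1]$, and close via a discrete Gr\"onwall argument, tracking that all constants can be chosen uniformly over $(x,x')\in K^2$. The paper runs this argument on the correlation kernel $C$ (with a discrete update $\tilde H_L$ that differs from the continuous vector field by $O(1/L)$, and a qualitative modulus $\tau(h)\to 0$) whereas you work directly with $Q$ via the $(u,v,w)$ reduction, for which the discrete recursion \emph{is} the exact Euler step; your variant also yields the explicit rate $O(1/L)$, which the paper's proof of this lemma does not claim.
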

\subsection{Universality of the covariance kernel}
When $\sigma_b>0$, the kernel $q_t$ is universal for $t>0$.
\begin{theorem}[Universality of $q_t$]\label{thm:expr}
Let $K \subset \R^d$ be compact and assume $\sigma_b>0$. For any $t\in (0,1]$, the solution  $q_t$ of \eqref{recc} is a universal kernel on $K$.
\end{theorem}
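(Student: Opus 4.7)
The plan is to reduce the theorem to Proposition~\ref{prop:universality_on_a_compact}, which gives universality at depth $2$ for any two-layer scaled ResNet with bias. The strategy combines a continuous RKHS monotonicity lemma with a comparison of $q_t$ against the layer-$2$ kernel of a well-chosen discrete scaled ResNet.

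First, I would integrate the ODE~\eqref{recc} to write $q_t = q_s + \int_s^t \Psi_u \, du$ for $0 \le s \le t \le 1$, where $\Psi_u(x,x') := \sigma_b^2 + \tfrac{\sigma_w^2}{2}(1 + f(c_u(x,x'))/c_u(x,x'))\, q_u(x,x')$ is (up to the bias term) the one-hidden-layer ReLU NNGP kernel attached to an input Gaussian process of covariance $q_u$. In particular, $\Psi_u$ is a continuous positive-definite kernel on $K$, and as a Bochner-type integral of such a continuous family, $q_t - q_s$ is again positive definite. Aronszajn's inclusion theorem then gives $\mathcal{H}_{q_s}(K) \subseteq \mathcal{H}_{q_t}(K)$, a continuous analogue of Proposition~\ref{prop:rkhs_hierarchy}. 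Consequently, to prove universality of $q_t$ for every $t \in (0,1]$ it suffices to establish it for arbitrarily small $s > 0$.

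For small $s$, the plan is to compare $q_s$ with the layer-$2$ kernel $\tilde Q_2^{(s)}$ of the discrete scaled ResNet of depth $L = 2$ with $\lambda_{1,2}^2 = \lambda_{2,2}^2 = s/4$. By Proposition~\ref{prop:universality_on_a_compact}, $\tilde Q_2^{(s)}$ is universal on $K$ whenever $\sigma_b > 0$. Using Lipschitz continuity of the map $q \mapsto \Psi$ and of the ODE flow, a direct expansion in $s$ yields, uniformly on $K^2$, the identity $q_s - \tilde Q_2^{(s)} = (s/2)\,\Psi_0 + O(s^2)$, where $\Psi_0$ is the one-hidden-layer NNGP kernel evaluated at the initial GP $q_0$. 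Once this Taylor dominance is promoted to a psd inequality $q_s \succeq \tilde Q_2^{(s)}$, Aronszajn gives $\mathcal{H}_{\tilde Q_2^{(s)}} \subseteq \mathcal{H}_{q_s}$, so $q_s$ inherits universality from $\tilde Q_2^{(s)}$.

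The main obstacle is exactly this last step: upgrading the leading-order pointwise dominance $q_s - \tilde Q_2^{(s)} = (s/2)\Psi_0 + O(s^2)$ to a genuine positive-semidefinite inequality. A uniform $O(s^2)$ bound on the remainder does not automatically get absorbed by the $(s/2)\Psi_0$ term at the operator level, because $T(\Psi_0)$ is compact with spectrum accumulating at $0$, so there is no uniform spectral gap to dominate possible negative eigenvalues of the remainder. I anticipate the fix is a refined Picard-iteration expansion of \eqref{recc} that keeps each correction term itself a kernel, or alternatively an approximation argument combining Lemma~\ref{convcont} with Proposition~\ref{prop:rkhs_hierarchy}, approximating $q_s$ from below by a sequence of higher-depth discrete scaled ResNet kernels whose layer-$2$ sub-kernels are uniformly universal and whose RKHSs embed into $\mathcal{H}_{q_s}$.
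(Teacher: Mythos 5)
Your first step---establishing $\mathcal{H}_{q_s}(K) \subseteq \mathcal{H}_{q_t}(K)$ for $s \le t$ by showing $q_t - q_s$ is positive definite---is sound and parallels the paper's Proposition~\ref{prop:dotq>=0}, which proves $T_\mu(\dot q_t)$ is non-negative definite for every finite Borel measure $\mu$ (via the power-series expansion of $\tilde f(\gamma)=\gamma/2+f(\gamma)$ with non-negative coefficients and a Schur-product argument, Lemma~\ref{schur}).

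The second step is where your route diverges and where the gap lies. You want to upgrade the pointwise expansion $q_s - \tilde Q_2^{(s)} = (s/2)\Psi_0 + O(s^2)$ to a positive-semidefinite inequality $q_s \succeq \tilde Q_2^{(s)}$, so that Aronszajn's inclusion theorem transfers universality from $\tilde Q_2^{(s)}$ (Proposition~\ref{prop:universality_on_a_compact}) to $q_s$. As you note, a uniform $O(s^2)$ remainder cannot be absorbed by $(s/2)\Psi_0$ at the operator level, since $T(\Psi_0)$ is compact with spectrum accumulating at $0$. Neither proposed fix resolves this: a Picard-type expansion of the correction terms does not obviously keep each correction a kernel (successive differences involve $\Psi(q_u)-\Psi(q'_u)$ for two different kernels, which need not be positive definite), and approximating $q_s$ from below by discrete-ResNet kernels meets the obstruction that uniform convergence of kernels preserves neither RKHS inclusion nor universality---e.g.\ $Q^{(n)}=Q+n^{-1}Q_{\mathrm{univ}}$ is universal for each $n$ while the limit $Q$ need not be.

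The paper avoids the difficulty by never attempting a uniform psd domination. It works directly with the characterization of universality in Lemma~\ref{univ=Lp_spd} (from \cite{JMLR:v12:sriperumbudur11a}): $Q$ is universal iff $T_\mu(Q)$ is strictly positive definite on $L^2(K,\mu)$ for every finite Borel $\mu$. Hence it suffices to show, for each fixed non-zero $\f\in L^2(K,\mu)$, that $\langle T_\mu(q_t)\f,\f\rangle>0$. The small-time step (Proposition~\ref{prop:t_phi}) expands
$\langle T_\mu(q_t)\f,\f\rangle \geq t\,\langle T_\mu(\tilde f(c_0))\psi,\psi\rangle+o(t)$,
where $\psi$ is a pointwise reweighting of $\f$, and the leading coefficient is a strictly positive number depending on $\f$ (by Proposition~\ref{prop:univ_K}, which proves universality of $\tilde f(c_0)$ via a Stone--Weierstrass/feature-map argument). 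Positivity thus holds for $t<t_\f$ with a $\f$-dependent threshold, which is harmless: universality only requires strict positivity pointwise in $\f$, and the monotonicity from Proposition~\ref{prop:dotq>=0} propagates it to all $t\in(0,1]$. This is the observation your proposal is missing: you do not need the kernel inequality $q_s\succeq\tilde Q_2^{(s)}$; you only need $\langle T_\mu(q_t)\f,\f\rangle>0$ for each $\f$ individually, so the $o(t)$ remainder has to be dominated per-$\f$, not uniformly in operator norm.
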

The proof of the above statement is detailed in Appendix \ref{app:section:uniform_scaling}. The main idea is to show that the integral operator $T(q_t)$ is strictly positive definite and then use a characterization of universal kernels, due to \citep{JMLR:v12:sriperumbudur11a}, which connects the universality of Definition \ref{univ_kern} with the strict positivity of the induced integral operator.\footnote{The details are more involved as we need to show that the kernel induces a strictly positive definite operator on $L^2(K,\mu)$ for any finite Borel measure $\mu$ on $K$.}

As mentioned previously, the presence of the bias is essential to achieve full expressivity on a generic compact $K\subset \R^d$. However, we can still have universality when no bias is present, limiting ourselves to the case of the unit sphere $K=\Sd$.
\begin{prop}[Universality on $\Sd$]\label{prop:uniform_Sd}
For any $t\in (0,1]$, the covariance kernel $q_t$, solution of \eqref{recc} with $\sigma_b=0$, is universal on $\Sd$, with $d\geq 2$.
\end{prop}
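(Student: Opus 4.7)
The plan is to mirror the strategy used for Theorem \ref{thm:expr}: establish that the integral operator $T(q_t)$ is strictly positive definite on $L^2(\Sd,\mu)$ for every finite Borel $\mu$, and then invoke the characterization of universal kernels from \cite{JMLR:v12:sriperumbudur11a}. In the absence of the bias, the zonal structure of $q_t$ on $\Sd$ replaces the role previously played by $\sigma_b$. By rotational invariance of both the initial condition $q_0(x,x')=\swq(x\cdot x')/d$ and of the ODE \eqref{recc}, the solution is zonal: $q_t(x,x')=p_t(x\cdot x')$ for some continuous $p_t:[-1,1]\to\R$. Evaluating at $x=x'$ and using $f(1)=0$ gives $p_t(1)=(\swq/d)e^{\swq t/2}$, constant over the sphere, and the ODE reduces to the scalar equation
\begin{align*}
\dot p_t(\gamma) = \swq\, p_t(1)\, g\bigl(p_t(\gamma)/p_t(1)\bigr),
\end{align*}
where $g(\gamma):=(\gamma+f(\gamma))/2$ is the ReLU arc-cosine kernel.

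Next, I would establish two power-series expansions. Computing $g''(\gamma)=1/(2\pi\sqrt{1-\gamma^2})$ and expanding via the binomial series gives a Taylor expansion $g(\gamma)=\sum_{k\geq 0}b_k\gamma^k$ convergent on $(-1,1)$ with $b_k>0$ for every $k$ (the integration constants $g(0)=1/(2\pi)$ and $g'(0)=1/4$ are both strictly positive). Then, by induction, $p_t$ is real-analytic on $(-1,1)$ with expansion $p_t(\gamma)=\sum_{k\geq 0} a_k(t)\gamma^k$ and $a_k(t)>0$ for every $k\geq 0$ and $t>0$. At $t=0$ we have $a_1(0)=\swq/d>0$ and $a_k(0)=0$ for $k\neq 1$; since $r_0(\gamma)=\gamma$, the ODE yields $\dot a_k(0)=\swq\, p_0(1)\, b_k>0$ for every $k$, so each coefficient is activated instantly. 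Positivity then persists because, whenever all $a_j(s)\geq 0$, the $\gamma^k$-coefficient of $g(r_s)=\sum_j b_j p_s(1)^{-j}p_s^j$ is a non-negative combination of products of the $a_j(s)$'s weighted by $b_j>0$, so $\dot a_k(s)\geq 0$. Uniform convergence of the series on $[-1,1]$ follows from Abel's theorem applied at $\gamma=1$, where $\sum_k a_k(t)=p_t(1)<\infty$.

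I would then translate positivity of the Taylor coefficients into positivity of the spherical harmonic eigenvalues $\mu_k(t)>0$ of $q_t$. By Proposition \ref{proposition:spectral_decomposition_on_sphere}, $q_t(x,x')=\sum_k \mu_k(t)\sum_j Y_{k,j}(x)Y_{k,j}(x')$. Each monomial $(x\cdot x')^k$ expands with strictly positive Gegenbauer (Funk-Hecke) coefficients onto spherical harmonics of orders $k,k-2,\ldots$, so summing $a_k(t)(x\cdot x')^k$ with strictly positive weights yields $\mu_j(t)>0$ for every $j\geq 0$. Hence $T(q_t)$ is strictly positive definite on $L^2(\Sd,\mu)$ for any finite Borel $\mu$, and universality follows as in the final step of the proof of Theorem \ref{thm:expr}.

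The main obstacle is the inductive positivity step for the $a_k(t)$: the ODE is nonlinear in $p_t$ through the composition $g(p_t/p_t(1))$, and one must carefully ensure that neither the division by $p_t(1)$ nor the subsequent power-series composition introduces cancellation. The crucial ingredient is that every Taylor coefficient $b_k$ of $g$ is strictly positive, which guarantees each $a_k(t)$ is activated immediately from the single seed $a_1(0)>0$ and then propagates monotonically; without this all-positive seeding, the absence of bias could leave entire harmonic orders with zero eigenvalue.
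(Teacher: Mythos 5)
There is a genuine error in the key computation, and it is precisely the point the paper's proof is at pains to handle.

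You claim that $g=\tfrac12(\gamma+f(\gamma))=\tfrac12\hat f(\gamma)$ has a Taylor expansion with \emph{all} coefficients $b_k>0$. That is false. You correctly compute $g''(\gamma)=\tfrac{1}{2\pi}(1-\gamma^2)^{-1/2}$, but $(1-\gamma^2)^{-1/2}$ has only even powers of $\gamma$; integrating twice therefore produces only even powers together with the affine part $b_0+b_1\gamma$ from the two integration constants. So $b_0,b_1>0$ and $b_{2i}>0$ for $i\geq1$, but $b_{2i+1}=0$ for all $i\geq1$ --- exactly as stated in the paper's Lemma \ref{lemma:analytic_decomp_f_hat}. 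Consequently, since $q_0(x,x')=\swq(x\cdot x')/d$ seeds only the $\gamma^1$ coefficient, your claim that ``$\dot a_k(0)=\swq\,p_0(1)\,b_k>0$ for every $k$, so each coefficient is activated instantly'' fails for all odd $k\geq3$: there $\dot a_k(0)=0$. Your subsequent induction then only yields $\dot a_k(s)\geq0$, hence $a_k(t)\geq0$, never the strict positivity you need for the eigenvalues $\mu_j(t)$.

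This is not a cosmetic slip --- it is the exact obstruction that makes the $\sigma_b=0$ case on the sphere delicate, and it is why the paper's proof of Proposition \ref{prop:uniform_Sd} does not go through $\dot c_0$ alone. The paper's proof of Proposition \ref{prop:t_phi_Sd} expands $c_t$ to \emph{second} order in $t$, uses $\ddot c_0 = (ff')(c_0)$ and shows (Lemma \ref{propf}) that $ff'$ has strictly positive \emph{odd} coefficients, and then splits $L^2(\Sd,\nu)$ into the orthogonal subspaces $\overline{V'}$ (even-degree) and $\overline{V''}$ (odd-degree) so that the first-order term controls the even part and the second-order term controls the odd part. Your coefficient-level route could in principle be repaired in a parallel fashion: for any $s>0$ all $a_1(s),a_{2i}(s)>0$, and then, e.g., the $\gamma^{2i+1}$ coefficient of $b_2\,r_s^2$ contains the strictly positive cross term $2b_2\,\tilde a_1(s)\tilde a_{2i}(s)$, which drives $\dot a_{2i+1}(s)>0$ for $s>0$. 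But that two-stage mechanism (even coefficients first, then odd via products) is precisely what is missing from your write-up, and your stated ``crucial ingredient'' --- that every $b_k>0$ --- is the wrong premise.
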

\section{DECREASING SCALING}\label{section:decreasing_scaling}
Consider a Stable ResNet with decreasing scaling, that is a sequence of scaling factors $(\lambda_k)_{k\geq1}$ such that $\sum_{k\geq 1} \lambda_k^2 < \infty$. In this setting, each additional layer can be seen as a correction to the network output with decreasing magnitude. As for the uniform scaling, we show in the next proposition that the kernel $Q_L$ converges to a limiting kernel $Q_{\infty}$, and the convergence is uniform over any compact set of $\mathbb{R}^d$. The notation $g(x) = \Theta(m(x))$ means there exist two constants $A,B > 0$ such that $ A m(x) \leq g(x) \leq B m(x)$.
\begin{prop}[Uniform Convergence of the Kernel]\label{prop:uniform_convergence_decreasing_scaling}
Consider a Stable ResNet with a decreasing scaling, i.e. the sequence $\{\lambda_l\}_{l\geq1}$ is such that $\sum_l \lambda_l^2 < \infty$. Then for all $(\sigma_b,\sigma_w)\in \mathbb{R}^+ \times (\mathbb{R}^+)^*$, there exists a kernel $Q_{\infty}$ on $\mathbb{R}^d$ such that for any compact set $K \subset \mathbb{R}^d$, 
\begin{align*}
    \sup_{x,x' \in K} |Q_{L}(x,x') - Q_{\infty}(x,x')| = \Theta\big(\textstyle\sum_{k \geq L} \lambda_k^2\big)\,.
\end{align*}
\end{prop}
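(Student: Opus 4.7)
The plan is to show the sequence $\{Q_L\}_L$ is Cauchy in $C(K^2,\R)$ under the sup norm, with Cauchy tail of order $\sum_{k>L}\lambda_k^2$, and then verify that the uniform limit $Q_\infty$ inherits the kernel structure. A clean starting point is the recursion $Q_l = Q_{l-1} + \lambda_l^2\,\Psi_{l-1}$, where $\Psi_{l-1}(x,x')=\sbq + \swq\,\E[\phi(y_{l-1}^1(x))\phi(y_{l-1}^1(x'))]$; this recursion is consistent across depths precisely because in the decreasing scaling $\lambda_{l,L}=\lambda_l$ is $L$-independent, so the process up to layer $l$ is the same in every network of depth $\geq l$. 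Telescoping yields $Q_M - Q_L = \sum_{l=L+1}^M \lambda_l^2\,\Psi_{l-1}$, and the whole argument reduces to a uniform bound $\sup_{K^2}|\Psi_{l-1}|\leq C_\Psi$ independent of $l$.

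The crucial step is a uniform diagonal bound. On the diagonal $C_l(x,x)=1$ and $f(1)=0$, so the recursion \eqref{rec} collapses to the linear one-step map
\begin{align*}
Q_l(x,x) = \bigl(1+\tfrac{\swq\,\lambda_l^2}{2}\bigr)\,Q_{l-1}(x,x) + \lambda_l^2\,\sbq\,,
\end{align*}
which can be solved explicitly. Square-summability $\sum_l\lambda_l^2<\infty$ guarantees that the infinite product $P_\infty:=\prod_{l\geq 1}(1+\tfrac{\swq\,\lambda_l^2}{2})$ converges, hence $Q_L(x,x) \leq P_\infty\,Q_0(x,x) + \sbq\,P_\infty\sum_{l\geq 1}\lambda_l^2$, bounded uniformly in $L$ by some constant $C_K$ on the compact $K$. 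Combining Cauchy--Schwarz with the identity $\E[\phi(y)^2]=\tfrac12\E[y^2]$ for centred Gaussian $y$ then gives $|\Psi_{l-1}(x,x')| \leq \sbq + \tfrac{\swq}{2}\sqrt{Q_{l-1}(x,x)Q_{l-1}(x',x')}\leq C_\Psi$ uniformly on $K^2$.

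Plugging this back into the telescoping identity yields $\sup_{K^2}|Q_M - Q_L|\leq C_\Psi\sum_{l=L+1}^M\lambda_l^2$, so $\{Q_L\}$ is uniformly Cauchy on $K^2$ and converges to a continuous symmetric $Q_\infty$. Non-negative definiteness passes to the uniform limit (the quadratic forms $v^\top(Q_\infty(x_i,x_j))v$ are pointwise limits of non-negative quantities), so $Q_\infty$ is a kernel in the sense of Definition \ref{def:kernel}; a nested exhaustion of $\R^d$ by compacts defines $Q_\infty$ globally. Letting $M\to\infty$ gives the upper half of the $\Theta$ bound. For the lower half, fix any $x_0\in K$ with $Q_0(x_0,x_0)>0$, which is possible under the standing convention that $0\notin K$ when $\sbq=0$. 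Monotonicity $Q_l(x_0,x_0)\geq Q_0(x_0,x_0)$ (from $\Psi\geq 0$, since $\phi\geq 0$) gives the uniform positive lower bound $\Psi_{l-1}(x_0,x_0)\geq \sbq + \tfrac{\swq}{2}Q_0(x_0,x_0) \geq c_0 > 0$, and hence $|Q_\infty(x_0,x_0)-Q_L(x_0,x_0)| = \sum_{l>L}\lambda_l^2\,\Psi_{l-1}(x_0,x_0) \geq c_0\sum_{l>L}\lambda_l^2$, which lower bounds the supremum. The main obstacle is precisely the uniform diagonal bound: it hinges both on $f(1)=0$ (decoupling the diagonal recursion from the correlation kernel) and on $\sum\lambda_l^2<\infty$ forcing convergence of $P_\infty$; without either, $Q_L(x,x)$ could explode and control on $\Psi$ would be lost.
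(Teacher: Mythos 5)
Your proof is correct, and it takes a genuinely different (and arguably cleaner) route than the paper's. The paper first isolates the correlation kernel $C_l$, shows it is non-decreasing in $l$ (since $\hat f(\gamma)\geq\gamma$) and hence convergent, bounds the rate $\sup|C_l-C_\infty|$ separately, then treats the diagonal $Q_l(x,x)$ via its explicit product formula, and finally recombines — along the way invoking a ``without loss of generality $\sigma_b=0$'' reduction that requires some care, since shifting $Q_l$ by $2\sigma_b^2/\sigma_w^2$ does not simply convert the correlation recursion to the bias-free one. You bypass the correlation analysis entirely: telescoping gives $Q_M-Q_L=\sum_{l=L+1}^M\lambda_l^2\Psi_{l-1}$, and the whole argument reduces to the uniform bound $\sup_{K^2}|\Psi_{l-1}|\leq C_\Psi$, which you obtain from Cauchy--Schwarz plus the explicit diagonal product bound $Q_L(x,x)\leq P_\infty Q_0(x,x)+\sigma_b^2 P_\infty\sum_l\lambda_l^2$. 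Your lower bound via a fixed nondegenerate point $x_0$ and monotonicity of the diagonal is also clean and correct. Both proofs ultimately rest on the same two pillars you correctly highlight — the decoupling of the diagonal recursion (from $\hat f(1)=1$) and square-summability forcing convergence of the infinite product — but your route needs no case split on $\sigma_b$ and no tracking of the correlation kernel, so it is more economical. One shared cosmetic point: both your argument and the paper's really give the rate $\sum_{k>L}\lambda_k^2$ rather than $\sum_{k\geq L}\lambda_k^2$ as written in the statement; these agree up to constants for the decreasing scalings the paper actually considers (e.g.\ $\lambda_l^{-1}=\sqrt{l}\log(l+1)$), so this is an off-by-one in the index convention rather than a gap.
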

\vspace{-1.em}
The convergence of the kernel $Q_L$ to the limiting kernel $Q_\infty$ is governed by the convergence rate of the series of scaling factors. Moreover, leveraging the RKHS hierarchy from Proposition \ref{prop:rkhs_hierarchy}, we find that $Q_\infty$ is universal.
\begin{corollary}[Universality of $Q_\infty$]\label{cor:universality_decreasing_scaling}
The following statements hold\\
$\bullet$~Let $K$ be a compact set of $\mathbb{R}^d$ and assume $\sigma_b>0$. Then, $Q_\infty$ is universal on $K$.\\
$\bullet$~Assume $\sigma_b=0$. Then $Q_\infty$ is universal on $\Sd$.
\end{corollary}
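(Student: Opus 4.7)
The plan is to reduce the universality of $Q_\infty$ to that of $Q_2$, already established in Proposition \ref{prop:universality_on_a_compact} (for $\sigma_b>0$ on a general compact $K$) and Proposition \ref{prop:universality_on_sphere} (for $\sigma_b=0$ on $\Sd$), via a standard RKHS-inclusion argument based on the fact that $Q_\infty - Q_L$ is itself a kernel.

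\textbf{Step 1: the difference kernels.} From the recurrence \eqref{rec}, for any $l\geq 1$ one has $Q_l - Q_{l-1} = \lambda_{l}^2\, \Psi_{l-1}$ where $\Psi_{l-1}(x,x') = \sigma_b^2 + \sigma_w^2\,\E[\phi(y_{l-1}^1(x))\phi(y_{l-1}^1(x'))]$ is a covariance function, hence continuous and positive semi-definite on $K^2$. Telescoping, for every $M>L$ the partial sum $Q_M - Q_L = \sum_{l=L+1}^{M}\lambda_l^2\,\Psi_{l-1}$ is a kernel (sum of kernels). Letting $M\to\infty$ and using Proposition \ref{prop:uniform_convergence_decreasing_scaling}, the convergence $Q_M\to Q_\infty$ is uniform on $K^2$, so the difference $Q_\infty - Q_L$ is continuous, symmetric, and (as a uniform limit of positive semi-definite functions) still positive semi-definite. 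Thus $Q_\infty - Q_L$ is a kernel in the sense of Definition \ref{def:kernel}.

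\textbf{Step 2: RKHS inclusion.} I would invoke the classical Aronszajn sum/inclusion theorem: if $Q$ and $Q'$ are kernels on $K$ such that $Q - Q'$ is positive semi-definite, then $\mathcal{H}_{Q'}(K) \subseteq \mathcal{H}_{Q}(K)$, with $\|h\|_{Q}\leq \|h\|_{Q'}$ for every $h\in \mathcal{H}_{Q'}(K)$. Applied to $Q = Q_\infty$ and $Q' = Q_L$, this yields
$$\mathcal{H}_{Q_L}(K)\subseteq \mathcal{H}_{Q_\infty}(K)\qquad\text{for every }L\geq 1.$$

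\textbf{Step 3: transferring universality.} In the case $\sigma_b>0$, Proposition \ref{prop:universality_on_a_compact} gives that $Q_2$ is universal on $K$, i.e.\ $\mathcal{H}_{Q_2}(K)$ is $\|\cdot\|_\infty$-dense in $C(K)$. By Step 2, $\mathcal{H}_{Q_2}(K)\subseteq \mathcal{H}_{Q_\infty}(K)$, hence $\mathcal{H}_{Q_\infty}(K)$ is also dense in $C(K)$ under the uniform norm, which is exactly universality of $Q_\infty$. The case $\sigma_b=0$ on $K=\Sd$ is handled identically, replacing Proposition \ref{prop:universality_on_a_compact} with Proposition \ref{prop:universality_on_sphere}.

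The only subtlety I expect is verifying that $Q_\infty - Q_L$ is genuinely positive semi-definite (not just pointwise nonnegative) so that the Aronszajn inclusion applies; this is exactly why the telescoping identity plus the uniform convergence from Proposition \ref{prop:uniform_convergence_decreasing_scaling} is needed, since PSDness is preserved under pointwise (hence uniform) limits of finite sums of PSD functions. Once this is in hand, the whole argument is an immediate combination of Propositions \ref{prop:universality_on_a_compact}, \ref{prop:universality_on_sphere}, \ref{prop:rkhs_hierarchy}, and \ref{prop:uniform_convergence_decreasing_scaling}.
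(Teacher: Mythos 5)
Your proof is correct and follows essentially the same strategy as the paper: reduce to the finite-depth universality of $Q_2$ via the RKHS inclusion $\mathcal{H}_{Q_L}(K)\subseteq\mathcal{H}_{Q_\infty}(K)$. You are somewhat more explicit than the paper in verifying that $Q_\infty-Q_L$ is positive semi-definite (via the telescoping sum and uniform convergence from Proposition \ref{prop:uniform_convergence_decreasing_scaling}), a step the paper's proof leaves implicit since Proposition \ref{prop:rkhs_hierarchy} is only stated for finite depths.
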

As in the uniform scaling case, the limiting kernel exists and is universal unlike the standard ResNet architecture that yields a divergent kernel $Q_L$ as $L\to\infty$. 

To validate our universality and expressivity results, Figure \ref{fig: toy_expressivity} plots the leading eigenvalues of the NNGP (\& NTK, introduced in Section \ref{sec:ntk}) kernels on a set of 1000 points sampled uniformly at random from the circle, normalized so that the largest eigenvalue is 1. We use the recursion formulas for NNGP correlation (Lemma \ref{lemma:corr_formula}) and normalized NTK (Lemma \ref{lemma:stable_ntk_recursion}) to avoid the exploding variance/gradient problem. We see that the unscaled ResNet NNGP becomes inexpressive with depth because all non-leading eigenvalues converge to 0, whereas our Stable ResNets (decreasing and uniform scaling) are expressive even in the large depth limit. 

\begin{figure}[h]
  \centering
      \includegraphics[width=0.9\linewidth]{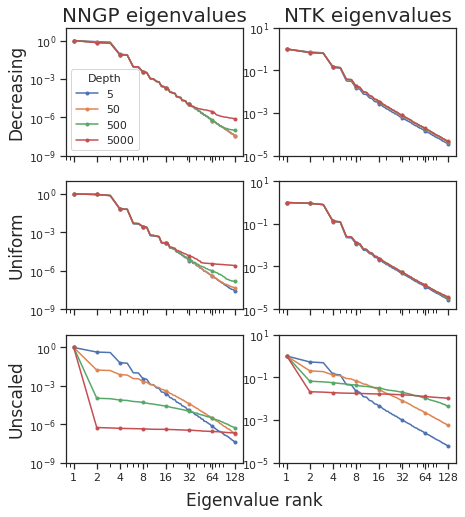}
      \vspace{-1em}
  \caption{(Normalized) NNGP \& NTK matrix  eigenvalues of Stable (decreasing \& uniform) \&
 unscaled (i.e. standard) ResNets.}\label{fig: toy_expressivity}
 \vspace{-1.5em}
\end{figure}

\section{NEURAL TANGENT KERNEL}\label{sec:ntk}
In the so-called lazy training regime \citep{chizat}, the training dynamics of an infinitely wide network can be described via the Neural Tangent Kernel (NTK) \citep{lee_wide_nn_ntk}, introduced in \citep{jacot} and defined as $$\tilde\Theta^{ij}_L(x,x') = \nabla_{\text{par}}\, y^i_L(x)\cdot\nabla_{\text{par}}\, y^j_L(x')\,,$$
with $\nabla_{\text{par}}$ the gradient wrt the parameters of the NN.\footnote{All network considered in this section are assumed to have NTK parametrization, cf Appendix \ref{app:section:NTK} for details.}\\
To simplify our presentation we will assume that the output dimension of the network is 1.\footnote{This does not affect our final conclusion of universality for the NTK, which is diagonal in the output space, that is $\tilde\Theta^{ij}=\Theta\delta^{ij}$, \citep{jacot, hayou_ntk}.}\\
Let $F_\tau$ be the output function of the ResNet at training time $\tau$. In the NTK regime (infinite width), the gradient flow is equivalent to a simple linear model \citep{lee_wide_nn_ntk}, that gives 
$$
F_\tau(x) - F_0(x) = \Theta_L(x, \mathcal{X}) \hat{\Theta}_L^{-1}(I - e^{-\eta \hat{\Theta}_L \tau}) (\mathcal{Y}- F_0(\mathcal{X}))\,,
$$
where $\mathcal{X}$ and $\mathcal{Y}$ are respectively the input and output datasets, $\Theta_L(x, \mathcal{X}) = \{\Theta_L(x, x')\}_{x' \in \mathcal{X}}$ and $\hat\Theta_L$ is the matrix $\{\Theta_l(x,x')\}_{x,x'\in\mathcal X}$. The universality of the NTK is crucial for the ResNet to learn beyond initialization, since the residual $F_\tau - F_0$ lies in the RKHS generated by $\Theta_L$. For unscaled ResNet, \citep{hayou_ntk} showed that the limiting NTK is trivial in the sense of Lemma \ref{lemma:infnite_depth_standard_resnet}. However, this is not the case for Stable ResNet.

Consider a ResNet of type \eqref{equation:scaled_Resnet}. We have \footnote{This is true under the technical assumption that the parameters appearing in the back-propagation can be considered independent from the ones of the forward pass (Gradient Independent Assumption) \citep{yang2019scaling}}  
\begin{align}\label{recurrence_NTK}
\Theta_0=Q_0,\hspace{0.15cm}\Theta_{l+1} = \Theta_l + \lambda_{l,L}^2\left(\Psi_l + \Psi'_l\,\Theta_l\right)\,,
\end{align}
where $\Psi_l(x,x') = {\sbq} +  {\swq} \E[\phi(y^l_1(x))\phi(y^l_1(x'))]$ and $\Psi'_l(x,x') =\swq \E[\phi'(y^l_1(x))\phi'(y^l_1(x'))]$ (see Appendix \ref{app:section:NTK}).
\begin{prop}\label{prop:decreasing_ntk_universal}
Fix a compact $K\subset\R^d$ ($0\notin K$ if $\sigma_b=0$) and consider a Stable ResNet with decreasing scaling. Then $\Theta_L$ converges uniformly over $K^2$ to a kernel $\Theta_\infty$. Moreover $\Theta_\infty$ is universal on $K$ if $\sigma_b>0$. If $K=\Sd$, then the universality holds for $\sigma_b=0$.
\end{prop}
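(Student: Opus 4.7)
The plan has three stages: establish uniform convergence $\Theta_L\to\Theta_\infty$ on $K^2$, verify that the limit is a kernel, and obtain universality by an RKHS comparison with $Q_\infty$, whose universality has already been established in Corollary \ref{cor:universality_decreasing_scaling}. For the convergence, I exploit that under decreasing scaling $\lambda_{l,L}=\lambda_l$ does not depend on $L$, so \eqref{recurrence_NTK} describes a genuine sequence $\Theta_l=(1+\lambda_l^2\Psi'_{l-1})\Theta_{l-1}+\lambda_l^2\Psi_{l-1}$. By Proposition \ref{prop:uniform_convergence_decreasing_scaling}, $Q_l\to Q_\infty$ uniformly on $K^2$, hence the $Q_l$ are uniformly bounded, and Cauchy--Schwarz gives a uniform-in-$l$ bound on $\Psi_l$; for ReLU one has $\Psi'_l(x,x')=\swq\,\mathbb P(y_1^l(x)>0,\,y_1^l(x')>0)\in[0,\swq]$, so $\Psi'_l$ is uniformly bounded as well. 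A discrete Gr\"onwall estimate applied to the recursion then yields
\[\sup_L\sup_{(x,x')\in K^2}\Theta_L(x,x')\,\le\,\bigl(\|\Theta_0\|_\infty+M\textstyle\sum_l\lambda_l^2\bigr)\exp\bigl(M\textstyle\sum_l\lambda_l^2\bigr),\]
with $M$ depending only on $K,\sigma_b,\sigma_w$, and a telescoping argument gives $\|\Theta_{L'}-\Theta_L\|_{\infty,K^2}\le C\sum_{l>L}\lambda_l^2\to 0$. Hence $(\Theta_L)$ is uniformly Cauchy on $K^2$ and converges uniformly to a continuous $\Theta_\infty$ that inherits symmetry and positive semi-definiteness from the $\Theta_L$, so $\Theta_\infty$ is a kernel in the sense of Definition \ref{def:kernel}.

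For universality, I argue by induction that $R_l:=\Theta_l-Q_l$ is itself a p.s.d.\ kernel on $K$. The base case $R_0=0$ is trivial, and subtracting the NNGP recursion from the NTK one gives
\[R_l\,=\,R_{l-1}+\lambda_l^2\,\Psi'_{l-1}\odot\Theta_{l-1},\]
where $\odot$ denotes the Hadamard (pointwise) product. Now $\Psi'_{l-1}$ is a p.s.d.\ Gram kernel, being the covariance of $\phi'(y_1^{l-1}(\cdot))$, and $\Theta_{l-1}\succeq 0$ by induction; the Schur product theorem then closes the inductive step. Passing to the uniform limit yields $\Theta_\infty-Q_\infty\succeq 0$, and the classical Aronszajn domination theorem gives the RKHS inclusion $\mathcal H_{Q_\infty}(K)\subseteq\mathcal H_{\Theta_\infty}(K)$. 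By Corollary \ref{cor:universality_decreasing_scaling}, $\mathcal H_{Q_\infty}(K)$ is $\|\cdot\|_\infty$-dense in $C(K)$ (for $\sigma_b>0$ on any compact $K$, and for $\sigma_b=0$ when $K=\Sd$), so the same density passes to $\mathcal H_{\Theta_\infty}(K)$, which is precisely universality of $\Theta_\infty$ in the sense of Definition \ref{univ_kern}.

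The technically most delicate point is the uniform-in-$l$ control of $\Psi'_l$ that drives the Gr\"onwall/Cauchy estimate: one must check that $Q_l(x,x)$ is bounded away from $0$ uniformly in $x\in K$ and $l$, so that the correlation kernel $C_l$, and hence $\Psi'_l$ as a smooth function of $C_l$, is well-defined and continuous uniformly in $l$. This is where the dichotomy $\sigma_b>0$ vs.\ $\sigma_b=0,\,K=\Sd$ enters: in the first case $Q_0(x,x)\ge\sbq>0$ on $K$, while in the second $Q_0(x,x)=\swq>0$ on $\Sd$; in either case the positivity propagates through the recursion since every increment $\lambda_l^2\Psi_{l-1}$ is non-negative. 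Once this uniform non-degeneracy is secured, everything else is routine bookkeeping with Gr\"onwall, the Schur product theorem, and Aronszajn's domination principle.
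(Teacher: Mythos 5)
Your proof is correct and follows essentially the same route as the paper: bound $\Psi'_l$ uniformly (you use the probability identity $\Psi'_l=\swq\,\mathbb P(y^l_1(x)>0,\,y^l_1(x')>0)\in[0,\swq]$, the paper uses the equivalent bound $|f'|\le 1$), deduce uniform convergence from square-summability of the $\lambda_l$'s via a telescoping/Gr\"onwall estimate, show that $\Theta_l-Q_l$ is a kernel by the Schur product theorem, pass to the limit to get $T(\Theta_\infty)\succeq T(Q_\infty)$, and conclude by the RKHS inclusion $\mathcal H_{Q_\infty}(K)\subseteq\mathcal H_{\Theta_\infty}(K)$ together with Corollary~\ref{cor:universality_decreasing_scaling}. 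One small misattribution in your closing remark: the dichotomy ``$\sigma_b>0$'' versus ``$\sigma_b=0,\ K=\Sd$'' is needed only to invoke universality of $Q_\infty$; the non-degeneracy $Q_l(x,x)>0$ that keeps $C_l$ well-defined follows, for $\sigma_b=0$, from the standing hypothesis $0\notin K$ alone and does not require $K=\Sd$, so the uniform convergence part of your argument holds on any such compact $K$.
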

An analogous result can be stated for the uniform scaling, after noticing that a continuous formulation ($\Theta_l\mapsto \theta_{t(l)}$) can be obtained in analogy with what has been done for the covariance kernel (cf Appendix \ref{app:section:NTK}). 
\begin{prop}\label{prop:uniform_ntk_universal}
Let $K\subset\R^d$ and fix $t\in(0,1]$. If $\sigma_b>0$, then $\theta_t$ is universal on $K$. The same holds true if $\sigma_b=0$ and $K=\Sd$.
\end{prop}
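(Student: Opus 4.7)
The plan is to reduce the universality of $\theta_t$ to that of $q_t$, which has already been established in Theorem \ref{thm:expr} (for $\sigma_b>0$ on a general compact $K\subset\R^d$) and Proposition \ref{prop:uniform_Sd} (for $\sigma_b=0$ on $\Sd$). The key observation is that $\theta_t-q_t$ is itself a non-negative definite kernel, so by the same RKHS-inclusion mechanism used in Proposition \ref{prop:rkhs_hierarchy} one has $\Hh_{q_t}(K)\subseteq \Hh_{\theta_t}(K)$, and universality transfers automatically.

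First I would write down the continuous-limit ODE for the NTK under uniform scaling $\lambda_{l,L}^2=1/L$. Rescaling $l\mapsto t=l/L$ in \eqref{recurrence_NTK} and letting $L\to\infty$ formally gives
\begin{align*}
\dot\theta_t(x,x') = \Psi_t(x,x') + \Psi'_t(x,x')\,\theta_t(x,x')\,,\qquad \theta_0=q_0\,,
\end{align*}
where $\Psi_t$ is the function appearing in \eqref{recc} (so $\dot q_t=\Psi_t$) and $\Psi'_t(x,x')=\swq\,\E[\phi'(y^1_t(x))\phi'(y^1_t(x'))]$ with $y^1_t\sim\mathcal{GP}(0,q_t)$. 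As indicated in the proposition and established in Appendix \ref{app:section:NTK}, arguments analogous to those used for \eqref{recc} yield existence, uniqueness and joint continuity of $\theta_t$ on $[0,1]\times K^2$, together with the uniform convergence $\Theta_L\to\theta_t$ on $K^2$; for ReLU both $\Psi_t$ and $\Psi'_t$ can be written explicitly in terms of $c_t$ via $\arccos$ and $\arcsin$, so continuity in $(x,x')$ follows from that of $c_t$.

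Next, subtracting the ODE for $q_t$ from the one for $\theta_t$ and using $\theta_0-q_0=0$, integration gives
\begin{align*}
\theta_t(x,x')-q_t(x,x') = \int_0^t \Psi'_s(x,x')\,\theta_s(x,x')\,\dd s\,.
\end{align*}
For each $s$, $\Psi'_s$ is (up to $\swq$) the covariance function of the random process $x\mapsto\phi'(y^1_s(x))$, hence a non-negative definite kernel on $K$, and $\theta_s$ is a kernel by construction. By the Schur product theorem, the pointwise product $\Psi'_s\theta_s$ is again a non-negative definite kernel, and since it is jointly continuous in $(s,x,x')$ the integral above---being a uniform limit of Riemann sums of continuous non-negative definite kernels---is itself a continuous non-negative definite kernel on $K$. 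Thus $\theta_t=q_t+(\theta_t-q_t)$ is a sum of two kernels.

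Consequently $\Hh_{q_t}(K)\subseteq \Hh_{\theta_t}(K)$, exactly as in the proof of Proposition \ref{prop:rkhs_hierarchy}, so any function approximable in $\|\cdot\|_\infty$ by elements of $\Hh_{q_t}(K)$ is also approximable by elements of $\Hh_{\theta_t}(K)$. Under the hypotheses, $q_t$ is universal on $K$ by Theorem \ref{thm:expr} (if $\sigma_b>0$) or by Proposition \ref{prop:uniform_Sd} (if $\sigma_b=0$ and $K=\Sd$), and universality is inherited by $\theta_t$. The main obstacle is not this final RKHS-inclusion step but the rigorous setup of the continuous limit: establishing the uniform convergence $\Theta_L\to\theta_t$ on $K^2$ and the joint continuity of $\Psi'_s$ needed to push the Schur product theorem through the integral, which for $\sigma_b=0$, $K=\Sd$ requires additional care to ensure that $c_s$ stays in the domain of continuity of the relevant $\arcsin$/$\arccos$ expressions uniformly in $s\in[0,t]$.
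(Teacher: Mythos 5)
Your proposal is correct and follows essentially the same route as the paper: write $\theta_t = q_t + r_t$ with $r_t = \int_0^t \Psi'_s\,\theta_s\,\dd s$, show $r_t$ is a (continuous, symmetric, non-negative definite) kernel, and then transfer universality from $q_t$ via the RKHS hierarchy. The one small variant is how you argue that $\Psi'_s$ is non-negative definite: you observe directly that it is (up to $\swq$) the second-moment function $\E[\phi'(y^1_s(x))\phi'(y^1_s(x'))]$ of a random process, hence n.n.d., while the paper uses the Taylor expansion of $1+f'$ with non-negative coefficients together with Lemma \ref{schur}; both are correct, and your probabilistic argument is arguably more immediate, whereas the paper's fits into the Taylor-coefficient framework it sets up in Lemma \ref{propf}. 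One minor misconception worth flagging: the extra care you anticipate for $\sigma_b=0$, $K=\Sd$ regarding $\arcsin/\arccos$ is not actually needed, since $f'(\gamma)=-\tfrac{1}{\pi}\arccos\gamma$ is continuous on all of $[-1,1]$, so $c_s\in[-1,1]$ causes no domain issues.
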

Figure \ref{fig: toy_expressivity} shows that the non-leading NTK eigenvalues do not decay to 0 with depth for Stable ResNets, unlike for unscaled ResNets. This is in line with findings of Propositions \ref{prop:decreasing_ntk_universal} and \ref{prop:uniform_ntk_universal}.


\section{A PAC-BAYES RESULT}
Consider a dataset $S$ with $N$ iid training examples $(x_i,y_i)_{1\leq i \leq N} \in X\times Y$, and a hypothesis space $\mathcal{P}$ from which we want to learn an optimal hypothesis according to some bounded loss function $\ell : Y\times Y \mapsto [0,1]$. The empirical/generalization loss of a hypothesis $h \in \mathcal{U}$ are
$$
r_S(h) = \tfrac{1}{N} \textstyle\sum_{i=1}^N \ell(h(x_i), y_i), \quad r(h) = \mathbb{E}_{\nu}[\ell(h(x),y)]\,,
$$
where $\nu$ is a probability distribution on $X \times Y$. For some randomized learning algorithm $\mathcal{A}$, the empirical and generalization loss are given by:
$$
r_S(\mathcal{A}) = \mathbb{E}_{h \sim \mathcal{A}}[r_S(h)], \qquad r(\mathcal{A}) = \mathbb{E}_{h \sim \mathcal{A}}[r(h)]\,.
$$
The PAC-Bayes theorem gives a probabilistic upper bound on the generalization loss $r(\mathcal{A})$ of a randomized learning algorithm $\mathcal{A}$ in terms of the empirical loss $r_S(\mathcal{A})$. Fix a prior distribution $\mathcal{P}$ on the hypothesis set $\mathcal{U}$. The Kullback-Leibler divergence between $\mathcal{A}$ and $\mathcal{P}$ is defined as $\text{KL}(\mathcal{A} \| \mathcal{P}) = \int \mathcal{A}(h) \log \frac{\mathcal{A}(h)}{\mathcal{P}(h)} \textrm{d}h \in [0, \infty]$. The Bernoulli KL-divergence is given by $\text{kl}(a||p) = a \log \frac{a}{p} + (1-a) \log \frac{1-a}{1-p} $  for $a,p \in [0,1]$. We define the inverse Bernoulli KL-divergence $\text{kl}^{-1}$ by 
$$
\text{kl}^{-1}(a,\varepsilon) = \sup\{ p \in [0,1] : \text{kl}(a||p) \leq \varepsilon \}\,.
$$
\begin{theorem}[PAC-Bayes bound Theorem \citep{seeger_pacbayes}]\label{thm:pac-bayes_theorem}
For any loss function $\ell$ that is $[0,1]$ valued, any distribution $\nu$,  any $N \in \mathbb{N}$, any prior $\mathcal{P}$, and any $\delta \in (0,1]$, with probability at least $1-\delta$ over the sample $S$, we have 
\begin{align*}
\forall \mathcal{A}, \hspace{0.25cm} r(\mathcal{A}) \leq  \textup{kl}^{-1}\left(r_S(\mathcal{A}), \tfrac{\textup{KL}(\mathcal{A}\|\mathcal{P}) + \log (2 \sqrt{N}/\delta)}{N}\right)\,.
\end{align*}
\end{theorem}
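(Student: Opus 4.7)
The plan is to follow the classical change-of-measure strategy used for Seeger-style PAC-Bayes bounds. The key object is, for each fixed hypothesis $h$, the random variable $\mathrm{kl}(r_S(h)\|r(h))$, which measures how far the empirical loss is from the true loss on the Bernoulli-KL scale. The overall structure is: (i) control the moment generating function of $N\,\mathrm{kl}(r_S(h)\|r(h))$ for every fixed $h$; (ii) push this to a statement uniform over randomized algorithms $\mathcal A$ by a change of measure from the data-independent prior $\mathcal P$ to $\mathcal A$; (iii) apply Markov and invert through $\mathrm{kl}^{-1}$.

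More concretely, first I would prove the ``Maurer lemma'' type estimate
\begin{equation*}
\mathbb{E}_{S}\!\left[e^{N\,\mathrm{kl}(r_S(h)\|r(h))}\right]\;\leq\;2\sqrt{N}\qquad\text{for every fixed }h\in\mathcal U.
\end{equation*}
Because $\ell$ is $[0,1]$-valued, $N r_S(h)$ is stochastically dominated by a $\mathrm{Binomial}(N,r(h))$ variable, and the quantity $e^{N\,\mathrm{kl}(k/N\|p)}$ can be compared to the reciprocal of the binomial probability mass function at $k$; summing over $k\in\{0,\dots,N\}$ yields the bound $2\sqrt{N}$ after a standard Stirling-type estimate. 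Once this ingredient is in place, integrating against the prior gives
\begin{equation*}
\mathbb{E}_{h\sim\mathcal P}\,\mathbb{E}_{S}\!\left[e^{N\,\mathrm{kl}(r_S(h)\|r(h))}\right]\;\leq\;2\sqrt{N},
\end{equation*}
and Markov's inequality (together with Fubini to swap the two expectations) ensures that, with probability at least $1-\delta$ over $S$,
\begin{equation*}
\log\mathbb{E}_{h\sim\mathcal P}\!\left[e^{N\,\mathrm{kl}(r_S(h)\|r(h))}\right]\;\leq\;\log\!\left(\tfrac{2\sqrt{N}}{\delta}\right).
\end{equation*}

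Next I would invoke the Donsker--Varadhan variational identity, which for any measurable $\phi:\mathcal U\to\mathbb R$ gives
\begin{equation*}
\mathbb{E}_{h\sim\mathcal A}[\phi(h)]\;\leq\;\mathrm{KL}(\mathcal A\|\mathcal P)+\log\mathbb{E}_{h\sim\mathcal P}\!\left[e^{\phi(h)}\right].
\end{equation*}
Applying this with $\phi(h)=N\,\mathrm{kl}(r_S(h)\|r(h))$ and combining with the previous display yields, on the same high-probability event, that for every randomized $\mathcal A$
\begin{equation*}
N\,\mathbb{E}_{h\sim\mathcal A}\!\left[\mathrm{kl}(r_S(h)\|r(h))\right]\;\leq\;\mathrm{KL}(\mathcal A\|\mathcal P)+\log\!\left(\tfrac{2\sqrt{N}}{\delta}\right).
\end{equation*}
Jensen's inequality applied to the jointly convex function $(a,p)\mapsto\mathrm{kl}(a\|p)$ then gives $\mathrm{kl}(r_S(\mathcal A)\|r(\mathcal A))\leq\mathbb{E}_{h\sim\mathcal A}[\mathrm{kl}(r_S(h)\|r(h))]$. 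Finally, since $p\mapsto\mathrm{kl}(a\|p)$ is increasing on $[a,1]$, inverting through the definition of $\mathrm{kl}^{-1}$ produces exactly the stated bound on $r(\mathcal A)$.

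The main technical obstacle is the moment generating function estimate in step (i): the inequality $\mathbb{E}_S[e^{N\,\mathrm{kl}(r_S(h)\|r(h))}]\leq 2\sqrt{N}$ is tight up to constants but requires the careful binomial comparison and a Stirling bound of the form $\binom{N}{k}p^k(1-p)^{N-k}\geq (2\sqrt{N})^{-1}e^{-N\,\mathrm{kl}(k/N\|p)}$; everything after that is a clean sequence of variational and convexity arguments. An alternative route that avoids the explicit binomial computation is to invoke Maurer's lemma directly as a black-box result, in which case the remainder of the argument (change of measure, Markov, Jensen, inversion) is essentially routine.
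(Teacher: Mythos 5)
The paper does not give its own proof of this theorem: it is stated as a cited result (Seeger's PAC-Bayes-$\mathrm{kl}$ bound, sharpened via Maurer's lemma), so there is no in-paper argument to compare against. Your proposal reconstructs the standard proof in the literature — Maurer's moment bound on $e^{N\,\mathrm{kl}(r_S(h)\|r(h))}$, change of measure via Donsker--Varadhan, Markov over the sample, Jensen via joint convexity of $\mathrm{kl}$, and the $\mathrm{kl}^{-1}$ inversion — and all the structural steps are in the right order and correct. One phrasing is imprecise: you say that for $[0,1]$-valued $\ell$, $N\,r_S(h)$ is ``stochastically dominated'' by a $\mathrm{Binomial}(N,r(h))$. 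First-order stochastic dominance is not what holds here; the Bernoulli case is extremal in the sense of being a mean-preserving spread, and the reduction uses that $e^{N\,\mathrm{kl}(\cdot\|p)}$ is convex so that replacing each $\ell(h(x_i),y_i)$ by a Bernoulli with the same conditional mean can only increase the expectation (this is Maurer's convexity reduction). With that correction, and noting that once you reach the Bernoulli case the integrand $\binom{N}{k}p^k(1-p)^{N-k}e^{N\,\mathrm{kl}(k/N\|p)}=\binom{N}{k}(k/N)^k(1-k/N)^{N-k}$ no longer depends on $p$ (which is what makes the $2\sqrt{N}$ bound uniform in $h$), your argument is sound.
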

\vspace{-1.em}
The KL-divergence term $\text{KL}(\mathcal{A}\|\mathcal{P})$ plays a major role as it controls the generalization gap, i.e. the difference (in terms of Bernoulli KL-divergence) between the empirical loss and the generalization loss. In our setting, we consider an ordinary GP regression with prior $\mathcal{P}(f) = \mathcal{GP}(f | 0, Q(x,x'))$. Under the standard assumption that the outputs $y_N = (y_i)_{i \in [1:N]}$ are noisy versions of $f_N = (f(x_i))_{ i \in [1:N]}$ with $y_N | f_N \sim \mathcal{N}(y_N | f_N, \sigma^2 I)$, the Bayesian posterior $\mathcal{A}$ is also a GP and is given by 
\begin{equation}\label{eqn:posterior}
\begin{split}
\mathcal{A}(f) = \mathcal{GP}&(f | Q_N(x)(Q_{NN} + \sigma^2 I)^{-1} y_N, Q(x,x')\\
& - Q_N(x)(Q_{NN} + \sigma^2 I)^{-1})Q_N(x')^T)\,.
\end{split}
\end{equation}
$Q_N(x) = (Q(x,x_i))_{i \in [1:N]}$,  $Q_{NN} = (Q(x_i, x_j))_{1\leq i,j \leq N}$. In this setting, we have the following result
\begin{prop}[Curse of Depth]\label{prop:pac-bayes_bound}
Let $Q_L$ be the kernel of a ResNet. Let $P_L$ be a GP with kernel $Q_L$ and $\mathcal{A}_L$ be the corresponding Bayesian posterior for some fixed noise level $\sigma^2>0$. Then, in a fixed setting (fixed sample size N), the following results hold:\\
$\bullet$~With a standard ResNet, $\textup{KL}(\mathcal{A}_L \| P_L) \gtrsim L$.\\
$\bullet$~With a Stable ResNet, $\textup{KL}(\mathcal{A}_L \| P_L) = \mathcal{O}_L(1)$.
\end{prop}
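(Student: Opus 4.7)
The plan is to reduce the KL divergence between the two Gaussian processes to a finite-dimensional KL between their marginals at the training inputs. Since the Gaussian likelihood $y_N\,|\,f\sim\mathcal{N}(f_N,\sigma^2 I)$ depends on $f$ only through $f_N:=(f(x_i))_{i\in[1:N]}$, the conditional distribution of $f$ given $f_N$ is identical under prior and posterior; the chain rule for the KL then gives $\textup{KL}(\mathcal{A}_L\|P_L)=\textup{KL}\bigl(\mathcal{A}_L(f_N)\,\|\,P_L(f_N)\bigr)$. Under the prior $f_N\sim\mathcal{N}(0,Q_{NN})$, and under the posterior from \eqref{eqn:posterior}, $f_N\sim\mathcal{N}(m_L,\Sigma_L)$ with $m_L=Q_{NN}(Q_{NN}+\sigma^2 I)^{-1}y_N$ and $\Sigma_L=\sigma^2 Q_{NN}(Q_{NN}+\sigma^2 I)^{-1}$. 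Letting $\{\mu_i\}_{i=1}^N$ denote the eigenvalues of $Q_{NN}$, substituting into the closed-form Gaussian KL yields
\begin{equation*}
\textup{KL}(\mathcal{A}_L\|P_L)=\tfrac12\sum_{i=1}^N\Bigl[\log\bigl(1+\tfrac{\mu_i}{\sigma^2}\bigr)-\tfrac{\mu_i}{\mu_i+\sigma^2}\Bigr]+\tfrac12\,y_N^T(Q_{NN}+\sigma^2 I)^{-1}Q_{NN}(Q_{NN}+\sigma^2 I)^{-1}y_N,
\end{equation*}
in which each summand of the first sum is non-negative (since $\log(1+x)\geq x/(1+x)$ for $x\geq 0$), as is the quadratic form.

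For the standard ResNet, Lemma \ref{lemma:exploding} gives $Q_L(x_i,x_i)\geq c_i(1+\swq/2)^L$ for positive constants $c_i$, so $\lambda_{\max}(Q_{NN})\geq\max_i Q_L(x_i,x_i)\gtrsim(1+\swq/2)^L$. Keeping only the largest-eigenvalue summand of the first sum and using $\log(1+x)-x/(1+x)\geq\log(1+x)-1$ yields
\begin{equation*}
\textup{KL}(\mathcal{A}_L\|P_L)\geq\tfrac12\bigl[\log\bigl(1+\lambda_{\max}(Q_{NN})/\sigma^2\bigr)-1\bigr]\gtrsim L.
\end{equation*}

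For the Stable ResNet, Lemma \ref{convcont} (uniform scaling) and Proposition \ref{prop:uniform_convergence_decreasing_scaling} (decreasing scaling) imply $Q_L\to Q_\infty$ uniformly on any compact containing the training inputs, so each entry of $Q_{NN}$, and in particular $\text{tr}(Q_{NN})$, is bounded uniformly in $L$. Using $\log(1+x)\leq x$ gives $\log(1+\mu_i/\sigma^2)-\mu_i/(\mu_i+\sigma^2)\leq \mu_i/\sigma^2$, so the first sum is bounded by $\text{tr}(Q_{NN})/(2\sigma^2)=\mathcal{O}_L(1)$. For the quadratic form, the eigenvalues of $(Q_{NN}+\sigma^2 I)^{-1}Q_{NN}(Q_{NN}+\sigma^2 I)^{-1}$ are $\mu_i/(\mu_i+\sigma^2)^2\leq 1/(4\sigma^2)$, so the form is bounded by $\|y_N\|^2/(4\sigma^2)=\mathcal{O}_L(1)$. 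Combining gives $\textup{KL}(\mathcal{A}_L\|P_L)=\mathcal{O}_L(1)$.

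The main obstacle is the infinite-to-finite dimensional reduction at the outset: it is standard for GP regression but hinges on the fact that the Radon--Nikodym derivative $d\mathcal{A}_L/dP_L$ depends on $f$ only through $f_N$, which in turn follows from the Gaussian noise model depending on $f$ only through its values at the training inputs. Once the closed-form KL expression is in hand, the two bounds are immediate consequences of the kernel asymptotics established earlier in the paper.
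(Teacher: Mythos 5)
Your proof is correct and follows the same overall plan as the paper's: reduce $\textup{KL}(\mathcal{A}_L\|P_L)$ to the finite-dimensional KL between the marginals at the training points via the chain rule (using $P_L(f\mid f_N)=\mathcal{A}_L(f\mid f_N)$), write out the closed-form Gaussian KL, then bound the pieces using the kernel asymptotics. Your eigenvalue-form expression is the same quantity the paper writes in the form $\tfrac12\log\det(Q_{L,NN}+\sigma^2 I) - \tfrac{N}{2}\log\sigma^2 - \tfrac12\Tr(Q_{L,NN}(Q_{L,NN}+\sigma^2 I)^{-1}) + \tfrac12 y_N^T(Q_{L,NN}+\sigma^2 I)^{-1}Q_{L,NN}(Q_{L,NN}+\sigma^2 I)^{-1}y_N$.

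The one genuine difference is the lower bound for the unscaled ResNet, where your argument is cleaner than the paper's. You observe that $\lambda_{\max}(Q_{NN})\geq \max_i Q_L(x_i,x_i)\gtrsim(1+\swq/2)^L$ directly from Lemma~\ref{lemma:exploding}, keep only that eigenvalue's summand (the others being non-negative), and get $\gtrsim L$ from the exponential growth inside the log. The paper instead factors $Q_L\approx q_L C_L$ on the sphere and tracks the eigenvalues $\hat\mu_{L,k}$ of the correlation matrix via Lemma~\ref{lemma:infnite_depth_standard_resnet}, which requires the extra step of relating the pointwise inequality $Q_L\geq q_L C_L$ to an eigenvalue inequality and invoking the correlation-eigenvalue convergence. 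Your route uses strictly less machinery and avoids the restriction to inputs on the sphere that the paper (informally) imposes for Case 1. The upper bounds are interchangeable: you bound each summand by $\mu_i/\sigma^2$ and sum to $\Tr(Q_{NN})/\sigma^2$, while the paper bounds the $\log\det$ term directly and the trace/quadratic-form terms by $N$ and $\sigma^{-2}\|y_N\|^2$; both rely on the uniform convergence of $Q_L$ to $Q_\infty$ for Stable ResNets.
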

The KL-divergence bound diverges for a standard ResNet while it remains bounded for Stable ResNet. Although PAC-Bayes bounds only give an upper bound on the generalization error, Proposition \ref{prop:pac-bayes_bound} shows that Stable ResNet does not suffer from the ``curse of depth'', i.e. the KL-divergence does not explode as the depth becomes large.

\section{EXPERIMENTS}\label{section:experiments}
In line with our theory, we now present results demonstrating empirical advantages of Stable ResNets (both uniform and decreasing scaling) compared to their unscaled counterparts on a toy regression task and standard image classification tasks, both for infinite-width NNGP kernels as well as trained finite-width NNs in the latter case. In the interests of space, all experimental details not described in this section can be found in Appendix \ref{app: experimental details}.
All error bars in this section correspond to 3 independent runs.
\paragraph{Stable NNGP regression experiment} We first present a toy regression posterior regression experiment with NNGP kernel. We compare across different depths and scalings, with target test function $y=x\text{sin}(x)$ and a small amount of observation noise $\sigma=0.1$ ($\sigma$ as defined in Eq. \ref{eqn:posterior}).
\begin{figure*}\centering
      \includegraphics[width=0.8\linewidth]{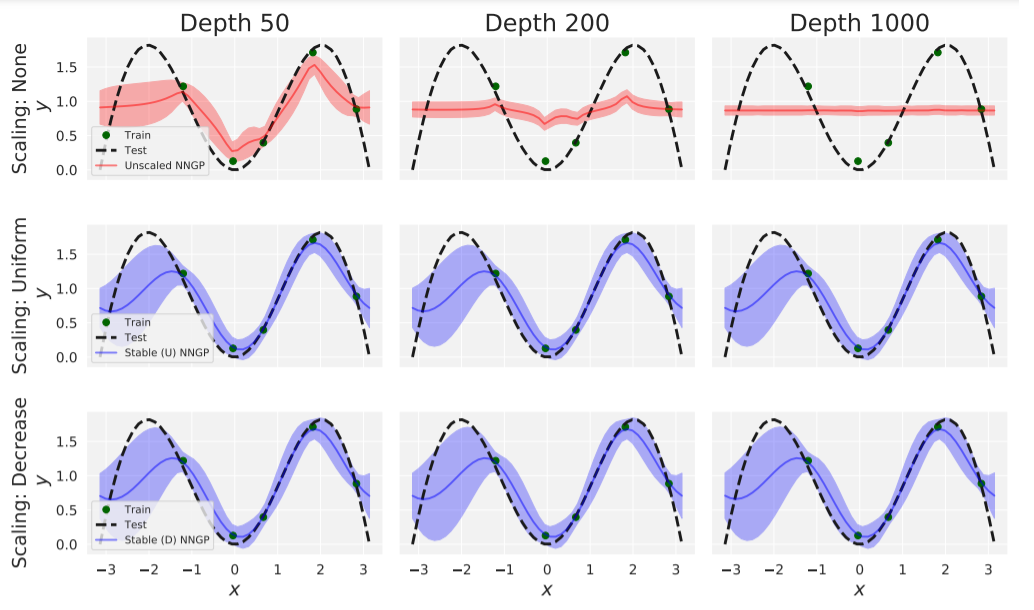}
      \vspace{-1.em}
  \caption{NNGP toy regression experiment.}\label{fig: reg}
\end{figure*}
 We use 5 training points (dark green dots).

We map our 1D inputs $x$ onto the circle $(\text{cos}(x), \text{sin}(x))$ before performing GP regression. This is so that all inputs have unit norm and we can use the NNGP correlation kernel (Eq. \ref{defc}) for the vanilla ResNet (ResNet with fully connected blocks), in order to avoid the exploding variance problem.\\ As expected from our theory, in Figure \ref{fig: reg}, for depth 1000 the NNGP correlation kernel without stable scaling (top row, red) is unable to learn anything beyond a constant function due to inexpressivity, whereas our Stable ResNets (bottom two rows, blue) are still expressive in the large depth limit. We plot mean and 95\% posterior predictive credible interval for NNGP posteriors.

\paragraph{Stable NNGP classification results}

 We first compare the performance of Stable and standard ResNets of varying depths through their infinite-width NNGP kernels, on MNIST \& CIFAR-10. For each considered NNGP kernel $Q$ and training set $(x_i, y_i)_{i\in[1:N]}$, we report test accuracy using the mean of the posterior predictive (Eq. \ref{eqn:posterior}):      $Q_N(\cdot)(Q_{NN}+\sigma^2 I)^{-1}y_N$, which is also the kernel ridge regression predictor \citep{kanagawa2018gaussian}. We treat classification labels $y$ as one-hot regression targets, similar to recent works \citep{arora2019exact, lee_wide_nn_ntk, shankar2020neural}, and tune the noise $\sigma^2$ using prediction accuracy on a held-out validation set.
 
\begin{table}[ht]
\centering
\caption{CIFAR-10 test accuracies (\%) using posterior predictive mean of NNGP kernels for deep Wide-ResNets \citep{zagoruyko2016wide} with different training set sizes $N$. Scaled (D) \& Scaled (U) refer to decreasing and uniform scaling respectively.}\label{table: wrn_krr}
\setlength\tabcolsep{4.5pt}
\begin{tabular}{llccc}
\toprule
$N$ & Depth &   Scaled (D) & Scaled (U) &  Unscaled \\
\midrule

1K  & 112 &  ${36.84}_{\pm 0.53}$ &  ${36.43}_{\pm 0.49}$ &  ${37.71}_{\pm 0.50}$ \\
      & 202 &  ${36.89}_{\pm 0.55}$ &  ${36.47}_{\pm 0.49}$ &  \textemdash \\
10K & 112 &  ${53.81}_{\pm 0.11}$ &  ${53.55}_{\pm 0.41}$ &  ${53.34}_{\pm 0.07}$ \\
      & 202 &  ${53.80}_{\pm 0.10}$ &  ${53.57}_{\pm 0.40}$ &  \textemdash \\
\bottomrule

\end{tabular}

\end{table}
\begin{table*}[h]
\centering
\caption{MNIST and CIFAR-10 test accuracies (\%) using posterior predictive mean of NNGP kernels for deep vanilla ResNets (ResNet with fully connected blocks) with different size training sets $N$.}\label{tab:vrn_krr}
\begin{tabular}{llllllll}
\toprule
      &      Dataset    & \multicolumn{3}{l}{MNIST} & \multicolumn{3}{l}{CIFAR-10} \\
      \cmidrule(lr){2-2} \cmidrule(lr){3-5} \cmidrule(lr){6-8}
      $N$ &    Depth      &            Scaled (D) &            Scaled (U) &              Unscaled &            Scaled (D) &            Scaled (U) &              Unscaled \\

\midrule
1K  & 50 &  ${92.88}_{\pm 0.35}$ &  ${92.39}_{\pm 0.33}$ &  ${92.44}_{\pm 0.21}$ &  ${35.83}_{\pm 0.14}$ &  ${34.73}_{\pm 0.14}$ &  $\bm{37.16}_{\pm 0.25}$ \\
      & 200 &  ${92.91}_{\pm 0.35}$ &  ${92.39}_{\pm 0.32}$ &  ${89.56}_{\pm 0.56}$ &  $\bm{35.86}_{\pm 0.14}$ &  ${34.76}_{\pm 0.11}$ &  ${34.85}_{\pm 0.17}$ \\
      & 1000 &  ${92.92}_{\pm 0.34}$ &  ${92.39}_{\pm 0.32}$ &  ${55.13}_{\pm 5.31}$ &  $\bm{35.89}_{\pm 0.14}$ &  ${34.76}_{\pm 0.11}$ &  ${12.43}_{\pm 3.97}$ \\
      \midrule
10K & 50 &  ${97.57}_{\pm 0.12}$ &  ${97.55}_{\pm 0.12}$ &  ${97.06}_{\pm 0.10}$ &  ${48.71}_{\pm 0.31}$ &  ${48.12}_{\pm 0.27}$ &  $\bm{50.11}_{\pm 0.37}$ \\
      & 200 &  ${97.57}_{\pm 0.11}$ &  ${97.55}_{\pm 0.12}$ &  ${95.55}_{\pm 0.13}$ &  $\bm{48.77}_{\pm 0.30}$ &  ${47.15}_{\pm 0.18}$ &  ${47.00}_{\pm 0.30}$ \\
      & 1000 &  ${97.57}_{\pm 0.10}$ &  ${97.54}_{\pm 0.12}$ &  ${67.53}_{\pm 2.96}$ &  $\bm{48.76}_{\pm 0.30}$ &  ${47.16}_{\pm 0.17}$ &  ${17.86}_{\pm 2.32}$ \\
\bottomrule
\end{tabular}

\end{table*}
 First, in Table \ref{table: wrn_krr}, we demonstrate the exploding NNGP variance problem for unscaled Wide-ResNets (WRN) \citep{zagoruyko2016wide}. For an unscaled WRN of depth 202, the NNGP kernel values explode resulting in numerical errors, whereas Stable ResNets achieve 54\% test accuracy with 10K training points (out of full size 50K). Note that any numerical errors from exploding NNGP  also afflict the NTK, as the difference between the NTK and NNGP is positive semi-definite \citep{lee_wide_nn_ntk, he2020bayesian} (which is why the NTK lines always lie above their corresponding NNGP in Figure \ref{fig: explosion}).
 
 To isolate the disadvantages of inexpressivity in unscaled Resnets NNGPs compared to our Stable ResNets, we need to avoid the exploding variance problem and ensuing numerical errors. In order to do so, we use the NNGP correlation kernel $C$ instead of the NNGP covariance kernel $Q$, noting that these two kernels are equal up to multiplicative constant on the sphere, and that the posterior predictive mean is invariant to the scale of $Q$ (with $\sigma^2$ also tuned relative to the scale of $Q$). Moreover, the formula in Lemma \ref{lemma:corr_formula} for NNGP correlation recursion for vanilla ResNets without bias  can be recast as a ResNet with a modified scaling (see Appendix \ref{app:corr_as_nngp}), allowing us to use existing optimised libraries \citep{neuraltangents2020}. In order to use the vanilla ResNet correlation recursion, we standardise all MNIST \& CIFAR-10 images to lie on the 784 \& 3072-dimension sphere respectively.
 
Our expressivity results, as well as Proposition \ref{prop:pac-bayes_bound}, suggest that we expect Stable ResNets to outperform standard ResNets for large depths even when exploding variance numerical errors are alleviated for standard ResNets. In Table \ref{tab:vrn_krr}, we see that unscaled ResNets suffer from a degradation in test accuracy with depth, due to inexpressivity, whereas our Stable ResNets (both decreasing and uniform) do not suffer from a drop in performance. For example, the posterior predictive mean using the NNGP of an unscaled vanilla ResNet with depth 1000 attains only 17.86\% accuracy on CIFAR-10 with 10K training points, compared to 48.76\% for Stable ResNet (decreasing scale).   
 
We focus on the NNGP rather than the NTK as recent works \citep{lee2020finite, shankar2020neural} have demonstrated that there is no advantage to the state-of-the-art NTK over the NNGP as infinite-width kernel predictors. Moreover, we do not aim for near state-of-the-art kernel results due to computational resources, and instead aim to empirically validate the theoretical advantages of Stable ResNets.
\paragraph{Trained Stable ResNet results}
Finally, we consider the benefits of trained Stable ResNets on the large-scale CIFAR-10, CIFAR-100 and TinyImageNet\footnote{Available at \url{http://cs231n.stanford.edu/tiny-imagenet-200.zip}} datasets. We compare trained convolutional ResNets \citep{he} of depths 32, 50 \& 104 in terms of test accuracy. In the main text we present results for ResNets trained with Batch Normalization \citep{ioffe2015batch} (BatchNorm), while results for trained ResNets without BatchNorm can be found in Appendix \ref{app: experimental details}. Stable ResNet scalings are applied to the residual connection after all convolution, ReLU and BatchNorm layers.

We use initial learning rate $0.1$ which is decayed by $0.1$ at $50\%$ and $75\%$ of the way through training. This learning rate schedule has been used previously \citep{he} for unscaled ResNets and we found it to work well for all ResNets trained with BatchNorm. We train for 160 epochs on CIFAR-10/100 and 250 epochs on TinyImageNet. Test accuracy results are displayed in Table \ref{table: NN results}. As we can see, Stable ResNets consistently outperform standard ResNets across datasets and depths. Moreover, the performance gap is larger for larger depths: for example on CIFAR-100 our Stable ResNet (decreasing) outperforms its standard counterpart by 1.05\% (75.06 vs 74.01) on average for depth 32 whereas for depth 104 the test accuracy gap is 2.36\% (77.44 vs 75.08) on average. A similar trend can also be observed for the more challenging TinyImageNet dataset. Interestingly, we see that among the Stable ResNets, decreasing scaling also consistently outperforms uniform scaling.

\begin{table}[ht]
\caption{Test accurracies (\%) of trained deep ResNets of various scalings and depths on CIFAR-10 (C-10), CIFAR-100 (C-100) \& TinyImageNet (Tiny-I).}\label{table: NN results}
\setlength\tabcolsep{3.5pt}
\begin{tabular}{lllll}
\toprule
Dataset & Depth &               Scaled (D) &              Scaled (U) &              Unscaled \\
\midrule
C-10 & 32  &     ${94.84}_{\pm 0.08}$ &     ${94.78}_{\pm 0.17}$ &  ${94.66}_{\pm 0.07}$ \\
& 50  &  $\bm{95.07}_{\pm 0.06}$ &     ${94.99}_{\pm 0.03}$ &  ${94.85}_{\pm 0.06}$ \\
& 104 &     ${95.14}_{\pm 0.19}$ &  $\bm{95.31}_{\pm 0.07}$ &  ${95.10}_{\pm 0.21}$ \\

\midrule

C-100 & 32  &  $\bm{75.06}_{\pm 0.05}$ &  ${74.79}_{\pm 0.28}$ &  ${74.01}_{\pm 0.14}$ \\
& 50  &  $\bm{76.20}_{\pm 0.22}$ &  ${75.81}_{\pm 0.20}$ &  ${74.66}_{\pm 0.33}$ \\
& 104 &  $\bm{77.44}_{\pm 0.09}$ &  ${76.88}_{\pm 0.39}$ &  ${75.08}_{\pm 0.42}$ \\
\midrule

Tiny-I & 32      &  ${63.01}_{\pm 0.22}$ &  $\bm{63.06}_{\pm 0.04}$ &  ${62.79}_{\pm 0.08}$ \\
& 50      &  ${64.78}_{\pm 0.24}$ &     ${64.74}_{\pm 0.10}$ &  ${63.96}_{\pm 0.39}$ \\
& 104     &  ${66.57}_{\pm 0.39}$ &     ${66.67}_{\pm 0.12}$ &  ${65.27}_{\pm 0.52}$ \\
\bottomrule
\end{tabular}

\end{table}

\section{CONCLUSION}
Stable ResNets have the benefit of stabilizing the gradient and ensuring expressivity in the limit of infinite depth. We have demonstrated theoretically and empirically that this type of scaling makes NNGP inference robust and  improves test accuracy with SGD on modern ResNet architectures. However, while Stable ResNets with both uniform and decreasing scalings outperform standard ResNet, the selection of an optimal scaling remains an open question; we leave this topic for future work.

\section*{ACKNOWLEDGMENTS}
 This material is based upon work supported in part by the U.S. Army Research Laboratory and the U. S. Army Research Office, and by the U.K. Ministry of Defence (MoD) and the U.K. Engineering and Physical Research Council (EPSRC) under grant number EP/R013616/1. AD is also partially supported by EPSRC EP/R034710/1. BH is supported by the EPSRC and MRC through the OxWaSP CDT programme (EP/L016710/1). The project leading to this work has received funding from the European Research Council
(ERC) under the European Union’s Horizon 2020 research and innovation programme (grant agreement
No 834175).






\bibliographystyle{plain}
\bibliography{sample}

\newpage
\onecolumn
\section*{Appendix}

\renewcommand\theequation{A\arabic{equation}}
\renewcommand{\thelemma}{A\arabic{lemma}}
\renewcommand{\theprop}{A\arabic{prop}}
\renewcommand{\thecorollary}{A\arabic{corollary}}
\renewcommand{\thesection}{A\arabic{section}}
\renewcommand{\thedefinition}{A\arabic{definition}}
\renewcommand{\thetheorem}{A\arabic{theorem}}

\setcounter{equation}{0}
\setcounter{lemma}{0}
\setcounter{prop}{0}
\setcounter{theorem}{0}
\setcounter{corollary}{0}
\setcounter{definition}{0}
\setcounter{section}{-1}

\section{Mathematical preliminaries}\label{app:setup_notations}
We will make use of functional analysis results on the theory of Hilbert space. We refer to \citep{lang2012real} for a comprehensive introduction to the topic. We precise here that, even when not explicitly stated, all Hilbert spaces considered in the present work are real, and all linear operator are bounded. \\
We will make use of the spectral theory for compact self-adjoint operators. We refer again to \citep{lang2012real} for a detailed discussion.

We will now introduce some concepts from the theory of kernels and RKHSs. \\
Consider a compact $K\subset\R^d$. A function $Q:K\to \R$ is said to be symmetric if for all $x,x'\in K$ we have $Q(x,x')=Q(x',x)$. Let us restate the definition of kernel.
\begin{manualdefinition}{\ref{def:kernel}}[Kernel]
A kernel $Q$ on $K$ is a symmetric continuous function $K^2\to\mathbb R$ such that, for all $n\in\mathbb N$, for any finite subset $\{x_1\dots x_n\}\subset K$, the matrix $\{Q(x_i,x_j)\}_{i,j}$ is non-negative definite.
\end{manualdefinition}
We state here a characterisation of kernels, which is an extension of Lemma
\ref{lemma:T(Q)}. Despite being a classical result (see the discussion about Mercer kernels in \citep{paulsen2016RKHS}), we will give a proof, for the sake of completeness.
\begin{lemma}\label{T_mu(Q)}[Extension of Lemma \ref{lemma:T(Q)}]
Let $Q:K^2\to\R$ be a continuous symmetric function. Then, given any finite Borel measure $\mu$ on $K$, we can define the integral operator $T_\mu(Q)$ on $L^2(K,\mu)$, via
\begin{align*}
    T_\mu(Q)\,\f(x) = \int_K T(x,x')\,\f(x')\,\dd\mu(x')\,,
\end{align*}
for any $\f\in L^2(K,\mu)$. The operator $T_\mu(Q)$ is a bounded compact self-adjoint definite operator.\\
Moreover, $Q$ is a kernel if and only if $T_\mu(Q)$ is non-negative definite for all finite Borel measures $\mu$ on $K$.
\end{lemma}
\begin{proof}
Let $Q:K^2\to\R$ be a continuous symmetric function. Then $T_\mu(Q)$ is a well defined bounded compact self-adjoint operator \citep{lang2012real}.\\
Let us assume that $Q$ is a kernel. By Mercer's theorem \citep{paulsen2016RKHS}, we can find continuous functions $\{Y_k\}_{k\in\mathbb N}$ such that for all $x,x'\in K$
\begin{align*}
    Q(x,x') = \sum_{k=0}^\infty Y_k(x)Y_k(x')
\end{align*}
and the convergence is uniform on $K^2$.\\
The continuity of the $Y_k$'s implies that they can be seen as elements of $L^2(K,\mu)$. Moreover, the uniform convergence, along with the fact that $\mu(K)<\infty$, implies the convergence of the sum wrt the $L^2(K,\mu)$ operator norm. In particular $T_\mu(K)$ is a limit of non-negative definite operators and hence non-negative definite.\\
Now, assume that, for all finite Borel $\mu$, $T_\mu(Q)$ is non-negative definite. Chosen a finite set $\{x_1\dots x_n\}\subset K$, in particular we have that $\mu =\sum_{i=1}^n\delta_{x_i}$ is a finite Borel measure (where $\delta_x$ is the Dirac measure on $x\in K$). Hence $T_\mu(Q)$ is the matrix $\{Q(x_i,x_j)\}_{i,j}$. We conclude that $Q$ is a kernel.
\end{proof}
We will now give a definition of the Reproducing Kernel Hilbert Space associated to a kernel. We refer to \citep{paulsen2016RKHS} for a general and comprehensive introduction to the topic.
\begin{definition}[RKHS]
Given a kernel $Q$ on $K$, we can associate to it a real Hilbert space $\mathcal H_Q$, with the following properties:
\begin{itemize}
\item The elements of $\Hh_Q$ are functions $K\to\mathbb R$.
\item Denoting as $\langle\cdot,\cdot\rangle_Q$ the inner product of $\Hh_Q$, for each $x\in K$, there exists a element $k_x\in \Hh_Q$ such that $h(x)=\langle h,k_x\rangle_Q$, for all $h\in\Hh_Q$.
\item For all $x,x'\in K$, $\langle k_x,k_{x'}\rangle_Q = Q(x,x')$.
\end{itemize}
Such a Hilbert space exists for each kernel $Q$ and it is unique up to isomorphism, \citep{paulsen2016RKHS}. $\Hh_Q$ is called the Reproducing Kernel Hilbert Space (RKHS) of $Q$.
\end{definition}
In general, it is not easy to give an explicit form for the RKHS associated to a kernel $Q$. However, we can say that it contains the linear span of $\{x\mapsto Q(x,x')\}_{x'\in K}$. Actually, this linear span is a dense subset of $\Hh_Q$, wrt the norm of $\Hh_Q$ \citep{paulsen2016RKHS}. 

A kernel on $K$ is said to be universal if its RKHS is dense in the space of continuous functions $C(K)$, wrt the uniform norm.
\begin{manualdefinition}{\ref{univ_kern}}[Universal Kernel]
Let $Q$ be a kernel on $K$, and $\mathcal{H}_Q(K)$ its RKHS. We say that $Q$ is universal on $K$ if for any $\varepsilon>0$ and any continuous function $g$ on $K$, there exists $h \in \mathcal{H}_Q(K)$ such that $\|h-g\|_{\infty} <\varepsilon$.
\end{manualdefinition}
We can now state a characterization of universal kernels, from \citep{JMLR:v12:sriperumbudur11a}.
\begin{lemma}\label{univ=Lp_spd}
Let $Q:K^2\to\mathbb\R$ be a kernel, where $K\subset\R^d$ is compact. $Q$ is a universal kernel if and only if $T_\mu(Q)$ is strictly positive definite for all finite Borel measures $\mu$ on $K$, i.e., $\langle T_\mu(Q)\,\f,\f\rangle > 0$ for all non-zero $\f\in L^2(K,\mu)$.
\end{lemma}

As a final note, hereafter we often omit the explicit reference to the measure $\mu$, that is we will speak of the operator $T(Q)$ on $L^2(K)$. Unless otherwise stated, this notation implies the choice of an arbitrary finite Borel measure $\mu$ on the compact $K$.
\section{Residual Neural Networks and Gaussian processes}\label{app:section:ResNets_and_GPs}
Consider a standard ResNet architecture with $L+1$ layers, labelled with $l\in[0:L]$, of dimensions $\{N_l\}_{l\in[0:L]}$.
\begin{equation}\tag{\ref{equation:Resnet_dynamics}}
\begin{aligned}
y_0(x) &= W_0\,x + B_0\,; \\
y_l(x) &= y_{l-1}(x) + \mathcal{F}((W_l,B_l), y_{l-1}) \quad \mbox{for } l \in [1:L]\,,
\end{aligned}
\end{equation}
where $x \in \mathbb{R}^d$ is an input, $y^l(x)$ is the vector of pre-activations, $W^l$ and $B^l$ are respectively the weights and bias of the $l^{th}$ layer, and $\mathcal{F}$ is a mapping that defines the nature of the layer. In general, the mapping $\mathcal{F}$ consists of successive applications of simple activation functions. In this work, for the sake of simplicity, we consider Fully Connected blocks with ReLU activation function $\phi:x\mapsto \max(0,x)$
$$
\mathcal{F}((W,B), x) = W\phi(x) + B\,.
$$
Hereafter, $N_l$ denotes the number of neurons in the $l^{th}$ layer, $\phi$ the activation function and $[m:n]:=\{m,m+1, ..., n\}$ for  $m\leq n$. The components of weights and bias are respectively initialized with $W_l^{ij}\overset{\text{iid}}{\sim}\mathcal{N}(0,\sigma_w^2/N_{l-1})$, and $B_l^i\overset{\text{iid}}{\sim}\mathcal{N}(0,\sigma_b^2)$ where $\mathcal{N}(\mu, \sigma^{2})$ denotes the normal distribution of mean $\mu$ and variance $\sigma^{2}$. 

In \citep{yang2017meanfield}, authors showed that wide deep ResNets might suffer from gradient exploding during backpropagation.

Recent results by \citep{hayou_pruning} suggest that scaling the residual blocks with $L^{-1/2}$ might have some beneficial properties on model pruning at initialization. This is a result of the stabilization effect of scaling on the gradient.

More generally, we introduce the residual architecture:
\begin{align}\tag{\ref{equation:scaled_Resnet}}
\begin{split}
&y_0(x) = W_0\,x+B_0\,; \\
&y_l(x) = y_{l-1}(x) + \lambda_{l,L}\, \mathcal{F}((W_l,B_l), y_{l-1})\,, \quad l \in [1:L]\,,
\end{split}
\end{align}
where $(\lambda_{k,L})_{k\in[1:L]}$ is a sequence of scaling factors. We assume hereafter that there exists $\lambda_{\max} \in (0,\infty)$ such that for all $L\geq 1$ and $k \in [1:L]$, we have that $\lambda_{k,L} \in (0,\lambda_{\max}]$.

\subsection{Recurrence for the covariance kernel}
Recall that in the limit of infinite width, each layer of a ResNet can be seen a centred Gaussian Process. For the layer $l$ we define the covariance kernel $Q_l$ as
$Q_l(x,x') = \E[y_l^1(x)y_l^1(x')]$
for $x,x'\in \mathbb R^d$.

By a standard approach, introduced by \citep{samuel} for feedforward neural networks, and easily generalizable for ResNets \citep{yang2019tensor_i, hayou_ntk}, it is possible to evaluate the covariance kernels layer by layer, recursively. More precisely, consider a ResNet of form \eqref{equation:scaled_Resnet}. Assume that $y_{l-1}^i$ is a Gaussian process for all $i$. Let $x,x' \in \mathbb{R}^d$. We have that
\begin{equation*}
\begin{aligned}
Q_l(x,x') &= \mathbb{E}[y_l^1(x)y_l^1(x')]\\
&= \mathbb{E}[y_{l-1}^1(x)y_{l-1}^1(x')] + \sum_{j=1}^{N_{l-1}} \mathbb{E}[(W_l^{1j})^2 \phi(y^j_{l-1}(x))\phi(y^j_{l-1}(x'))] + \mathbb{E}[(B_{l}^1)^2] +  \mathbb{E}[B_{l}^1 (y_{l-1}^1(x) + y_{l-1}^1(x'))]\\
&+ \mathbb{E}\left[\sum_{j=1}^{N_{l-1}} W_l^{1j} (y_{l-1}^1(x) \phi(y_{l-1}^1(x')) + y_{l-1}^1(x') \phi(y_{l-1}^1(x)))\right].
\end{aligned}
\end{equation*}
Some terms vanish because $\mathbb{E}[W_l^{1j}] = \mathbb{E}[B_l^{j}] = 0$. Let $Z_j = \frac{\sqrt{N_{l-1}}}{\sigma_w} W_l^{1j}$. The second term can be written as 
$$
\mathbb{E}\left[\frac{\sigma_w^2}{N_{l-1}} \sum_j (Z_j)^2 \phi(y^j_{l-1}(x))\phi(y^j_{l-1}(x'))\right] \rightarrow \sigma_w^2 \,\mathbb{E}[\phi(y^1_{l-1}(x))\phi(y^1_{l-1}(x'))]\,,
$$
where we have used the Central Limit Theorem. Therefore, we have 
\begin{align}\label{eq:recurrence_resnet}
Q_l(x,x') = Q_{l-1}(x,x') + \lambda_{l,L}^2 \Psi_{l-1}(x,x')\,,
\end{align}
where $\Psi_{l-1}(x,x') = \sigma^2_b + \sigma^2_w \mathbb{E}[\phi(y_{l-1}^1(x))\phi(y_{l-1}^1(x'))]$.

For the ReLU activation function $\phi(x) = \max(0,x)$, the recurrence relation can be written more explicitly, since we can give a simple expression for the expectation $\E[\phi(y_{l-1}^1(x))\phi(y_{l-1}^1(x'))]$, \citep{daniely2016deeper}.
Let $C_l$ be the correlation kernel, defined as 
\begin{align*}
C_l(x,x') = \frac{Q_l(x,x')}{\sqrt{Q_l(x,x)Q_l(x,x')}}
\end{align*}
and let $f:[-1,1]\to\mathbb{R}$ be given by 
\begin{align}\tag{\ref{deff}}
f:\gamma\mapsto \frac{1}{\pi}(\sqrt{1-\gamma^2}-\gamma\arccos \gamma)\,.
\end{align}
Then we have $\E[\phi(y_{l-1}^1(x))\phi(y_{l-1}^1(x'))]= \tfrac{1}{2}\left(1+\tfrac{f(C_{l-1})}{C_{l-1}}\right)Q_{l-1}$
and so we find the recurrence relation \eqref{rec}
\begin{align*}\tag{\ref{rec}}
\begin{split}
&Q_{l} = Q_{l-1} + \lambda_{l,L}^2\left[{\sigma_b}^2 +\tfrac{{\sigma_w}^2}{2}\left(1+\tfrac{f(C_{l-1})}{C_{l-1}}\right)Q_{l-1}\right]\,;\\
&Q_0(x,x') =  \sbq+\swq\,\tfrac{x\cdot x'}{d}\,.
\end{split}
\end{align*}
For the remainder of this appendix, we define the function 
\begin{equation}\label{equation:f_hat}
\hat{f}(\gamma) =\gamma +  f(\gamma) = \frac{1}{\pi} \left(\gamma \arcsin(\gamma) + \sqrt{1-\gamma^2}\right) + \frac{1}{2}\gamma\,.
\end{equation}
For all $l$, the diagonal terms of $Q_l$ have closed-form expressions. We show this in the next lemma.
\begin{lemma}[Diagonal elements of the covariance]\label{lemma:diagonal_elements}
Consider a ResNet of the form \eqref{equation:scaled_Resnet} and let $x \in \mathbb{R}^d$. We have that for all $l \in [1:L]$,
$$
Q_{l}(x,x) = - \frac{2 \sigma_b^2}{\sigma_w^2}+\prod_{k=1}^{l}\left(1 + \frac{\sigma_w^2 \lambda_{k,L}^2}{2}\right)\left(Q_0(x,x) + \frac{2 \sigma_b^2}{\sigma_w^2}\right) \,.
$$
\end{lemma}
\begin{proof}
We know that 
$$
Q_{l}(x,x) = Q_{l-1}(x,x) + \lambda_{l,L}^2 \left(\sigma_b^2 + \frac{\sigma_w^2}{2} \hat{f}(1)\right)\,,
$$
where $\hat{f}$ is given by \eqref{equation:f_hat}. It is straightforward that $\hat{f}(1) = 1$. This yields
$$
Q_{l}(x,x) + \frac{2 \sigma_b^2}{ \sigma_w^2} =  \left(1+ \lambda_{l,L}^2 \frac{\sigma_w^2}{2}\right)\left(Q_{l-1}(x,x) + \frac{2 \sigma_b^2}{ \sigma_w^2}\right)\,.
$$
we conclude by telescopic product.
\end{proof}
As a corollary of the previous result, it is easy to show that for a Standard ResNet the diagonal terms explode with depth, which is Lemma \ref{lemma:exploding} in the main paper.
\begin{manuallemma}{\ref{lemma:exploding}}[Exploding kernel with standard ResNet]
Consider a ResNet of type \eqref{equation:Resnet_dynamics}. Then, for all $x\in \mathbb{R}^d$, 
$$Q_L(x,x) \geq \left(1+\tfrac{\swq}{2}\right)^L \left(\sigma_b^2 \left(1 + \tfrac{2}{\swq}\right) + \tfrac{\swq}{d} \|x\|^2\right).$$
\end{manuallemma}
\begin{proof}
The statement trivially follows from Lemma \ref{lemma:diagonal_elements}, using that $Q_0(x,x) = \sbq +\tfrac{\swq}{d}\|x\|^2$ and the fact that for a Standard ResNet \eqref{equation:Resnet_dynamics}, all the coefficients $\lambda_{l,L}$'s are equal to $1$.
\end{proof}
In the case of a ResNet with no bias, the correlation kernel follows a simple recursive formula described in the next lemma.
\begin{lemma}[Correlation formula with zero bias]\label{lemma:corr_formula}
For a ResNet of the form \eqref{equation:scaled_Resnet} with $\sigma_b=0$, we have that for all $x,x' \in \mathbb{R}^d$ and $l\leq L$:
$$
C_l(x,x') = \frac{1}{1+\alpha_{l,L}} C_{l-1}(x,x') + \frac{\alpha_{l,L}}{1 + \alpha_{l,L}} \hat{f}(C_{l-1}(x,x') )\,,
$$
where $\alpha_{l,L} = \frac{\lambda_{l,L}^2 \sigma_w^2}{2}$.
\end{lemma}
\begin{proof}
This is direct result of the covariance recursion formula \eqref{rec}.
\end{proof}

\subsection{Proof of Proposition \ref{proposition:stable_gradient}}
We use the following result from \citep{yang_tensor3_2020} in order to derive closed form expressions for the second moment of the gradients.
\begin{lemma}[Corollary of Theorem D.1. in \citep{yang_tensor3_2020}]\label{lemma:gradient_independence}
Consider a ResNet of the form \eqref{equation:scaled_Resnet} with weights $W$. In the limit of infinite width, we can assume that $W^T$ used in back-propagation is independent from $W$ used for forward propagation, for the calculation of Gradient Covariance and NTK.
\end{lemma}

Next we re-state and prove Proposition \ref{proposition:stable_gradient}.
\begin{manualprop}{\ref{proposition:stable_gradient}}[Stable Gradient]
Consider a ResNet of type \eqref{equation:scaled_Resnet}, and let $\mathcal{L}_y(x):= \ell(y_L^1(x), y)$
for some $(x,y) \in \mathbb{R}^d \times \mathbb{R}$, where $\ell : (z,y) \mapsto \ell(z,y)$ is a loss function satisfying $\sup_{K_1\times K_2}\left|\frac{\partial \ell(z,y)}{\partial z}\right| < \infty$, for all compacts $K_1, K_2 \subset \mathbb{R}$. Then, in the limit  of infinite width, for any compacts $K \subset \mathbb{R}^d$, $K' \subset \mathbb{R}$, there exists a constant $C>0$ such that for all $(x,y)\in K\times K'$
$$\sup\limits_{l \in [0: L]}\mathbb{E}\!\left[\left|\tfrac{\partial \mathcal{L}_y(x)}{\partial W_l^{11}}\right|^2\right] \leq C \exp{\left(\tfrac{\sigma_w^2}{2}\textstyle \sum_{l=1}^L \lambda_{l,L}^2\right)}\,.$$
Moreover, if there exists $\lambda_{\min}>0$ such that for all $L\geq 1$ and $l\in [1:L]$ we have $\lambda_{l,L}\geq\lambda_{\min}$, then, for all $(x,y) \in (\mathbb{R}^d\setminus\{0\}) \times \mathbb{R}$ such that $\left|\frac{\partial \ell(z,y)}{\partial z}\right| \neq 0$, there exists $\kappa>0$ such that for all $l \in [1:L]$
\begin{align*}
\mathbb{E}\left[\left|\tfrac{\partial \mathcal{L}_y(x)}{\partial W_l^{11}}\right|^2\right] \geq \kappa \left(1 + \tfrac{\lambda^2_{\min} \sigma_w^2}{2}\right)^L\,. 
\end{align*}
\end{manualprop}

\begin{proof}
Let $(x,y) \in \mathbb{R}^d \times \mathbb{R}$ and $\Bar{q}^l(x,y) = \mathbb{E}\left[\left|\frac{\partial \mathcal{L}_y(x)}{\partial y_l^{1}}\right|^2\right]$. Using Lemma \ref{lemma:gradient_backprop}, we have that 
$$
\Bar{q}^l(x,y) = \left(1 + \frac{\sigma_w^2 \lambda_{l+1,L}^2}{2}\right) \Bar{q}^{l+1}(x,y)\,.
$$
This yields
$$
\Bar{q}^l(x,y) = \prod_{k=l+1}^L \left(1 + \frac{\sigma_w^2 \lambda_{k,L}^2}{2}\right)
 \Bar{q}^l(x,y)\,.$$  
 
Moreover, using Lemma \ref{lemma:gradient_independence}, we have that $\mathbb{E}\left[\left|\frac{\partial \mathcal{L}_y(x)}{\partial W_l^{11}}\right|^2\right] = \lambda_{l,L}^2 \Bar{q}^l(x,y) \mathbb{E}[\phi(y_{l-1}^1(x))^2]$. We have $\mathbb{E}[\phi(y_{l-1}^1(x))^2] = \frac{1}{2} Q_{l-1}(x,x)$. From Lemma~\ref{lemma:diagonal_elements} we know that 
\begin{equation*}
Q_{l-1}(x,x) =  - \frac{2 \sigma_b^2}{\sigma_w^2}+\prod_{k=1}^{l-1}\left(1 + \frac{\sigma_w^2 \lambda_{k,L}^2}{2}\right) \left(Q_0(x,x) + \frac{2 \sigma_b^2}{\sigma_w^2}\right)
\leq \prod_{k=1}^{l-1}\left(1 + \frac{\sigma_w^2 \lambda_{k,L}^2}{2}\right)\left(Q_0(x,x) + \frac{2 \sigma_b^2}{\sigma_w^2}\right)\,,
\end{equation*}
This yields
$$
\mathbb{E}\left[\left|\frac{\partial \mathcal{L}_y(x)}{\partial W^l_{11}}\right|^2\right] \leq \frac{2}{\sigma_w^2} \prod_{k=1}^L\left(1 + \frac{\sigma_w^2 \lambda_{k,L}^2}{2}\right) \left(\frac{1}{2}Q_0(x,x) + \frac{ \sigma_b^2}{\sigma_w^2}\right) \Bar{q}^l(x,y)\,.
$$
It is straightforward that $\prod_{k=1}^L\left(1 + \frac{\sigma_w^2 \lambda_{k,L}^2}{2}\right) \leq  \exp{\left(\frac{\sigma_w^2}{2} \sum_{k=1}^L \lambda_{k,L}^2\right)}$. 
Let $K \subset \mathbb{R}^d$, $K' \subset \mathbb{R}$ be two compact subsets. Using the condition on the loss function $\ell$, we have that
$$
\mathbb{E}\left[\left|\frac{\partial \mathcal{L}_y(x)}{\partial W^l_{11}}\right|^2\right] \leq C \exp{\left(\frac{\sigma_w^2}{2} \sum_{k=1}^L \lambda_{k,L}^2\right)},
$$
where $C = \frac{2}{\sigma_w^2} \left(\sup_{(x,y) \in K\times K'}\Bar{q}^l(x,y)\right) \left(\sup_{x\in K} Q_0(x,x) + \frac{2 \sigma_b^2}{\sigma_w^2}\right)$. We conclude by taking the supremum over $l$ and $x,y$.\\

Let $(x,y) \in (\mathbb{R}^d\setminus\{0\}) \times \mathbb{R}$ such that $\left|\frac{\partial \ell(z,y)}{\partial z}\right| \neq 0$. We have that 

\begin{equation*}
\begin{split}
\mathbb{E}\left[\left|\frac{\partial \mathcal{L}_y(x)}{\partial W^l_{11}}\right|^2\right] &\geq \frac{1}{2} \frac{\lambda_{l,L}^2}{1 + \frac{\sigma_w^2}{2}\lambda_{l,L}^2}   \prod_{k=2}^L\left(1 + \frac{\sigma_w^2 \lambda_{k,L}^2}{2}\right) Q_1(x,x) \Bar{q}^l(x,y)\\
& \geq \kappa \left(1 + \frac{\sigma_w^2 \lambda_{\min}^2}{2}\right)^L\,,
\end{split}
\end{equation*}
where $\kappa = \frac{1}{2} \frac{\lambda_{\min}^2}{\left(1 + \frac{\sigma_w^2}{2}\lambda_{\max}^2\right)\left(1 + \frac{\sigma_w^2}{2}\lambda_{\min}^2\right)} Q_1(x,x)\, \Bar{q}^l(x,y) > 0$.
\end{proof}

Using Lemma~\ref{lemma:gradient_independence}, we can derive simple recursive formulas for the second moment of the gradient as well as for the Neural Tangent Kernel (NTK). This was previously done in \citep{samuel} for feedforward neural networks, we prove a similar result for ResNet in the next lemma.
\begin{lemma}[Gradient Second moment]\label{lemma:gradient_backprop}
In the limit of infinite width, using the same notation as in proposition \ref{proposition:stable_gradient}, we have that 
$$
\Bar{q}^l(x,y) = \left(1 + \frac{\sigma_w^2 \lambda_{l+1,L}^2}{2}\right)\, \Bar{q}^{l+1}(x,y)\,.
$$
\end{lemma}
\begin{proof}
It is straighforward that 
$$
\frac{\partial \mathcal{L}_y(x)}{\partial y_l^i} = \frac{\partial \mathcal{L}_y(x)}{\partial y_{l+1}^i} + \lambda_{l+1,L} \sum_{j} \frac{\partial \mathcal{L}_y(x)}{\partial y_{l+1}^j} W_{l+1}^{ji} \phi'(y_l^i)\,.
$$
Using lemma \ref{lemma:gradient_independence} and the Central Limit Theorem, we have that 
$$
\Bar{q}^l(x,y) = \Bar{q}^{l+1}(x,y) + \lambda_{l+1,L}^2 \Bar{q}^{l+1}(x,y) \sigma_w^2 \mathbb{E}[\phi'(y_l^i(x))^2]\,.
$$
We conclude using $\mathbb{E}[\phi'(y^l_i(x))^2] = \mathbb{P}(\mathcal{N}(0,1) > 0) = \frac{1}{2}$.
\end{proof}

Before moving to the next proofs, recall the definition of Stable ResNet.

\begin{manualdefinition}{\ref{def:stable_resnet}}[Stable ResNet]
A ResNet of type \eqref{equation:scaled_Resnet} is called a Stable ResNet if and only if $\lim\limits_{L \rightarrow \infty}\sum\limits_{k=1}^L \lambda_{k,L}^2 < \infty$.
\end{manualdefinition}

\subsection{Some general results: $Q_l$ and $C_l$ are kernels}
Fix a compact $K\subset \R^d$. If $\sigma_b=0$, then assume that $0\notin K$. We will now show that, for all layers $l$, the covariance function $Q_l$ is a kernel in the sense of Definition \ref{def:kernel}.\\
The symmetric property of $Q_l$ is clear by definition as the covariance of a Gaussian Process. Let us now discuss the regularity of $Q_l$ as a function on $K^2$.

The next result shows that any function $ F(\phi):\gamma\mapsto\mathbb{E}[\phi(X)\phi(Y), (X,Y) \sim \mathcal{N}(0, \begin{psmallmatrix}
1 & \gamma \\
\gamma & 1
\end{psmallmatrix})]$ is analytic on the segment $[-1,1]$.
\begin{lemma}[O'Donnell (2014)]\label{odonnell}
Let $F(\phi)(\gamma) = \mathbb{E}[\phi(X)\phi(Y), (X,Y) \sim \mathcal{N}(0, \begin{psmallmatrix}
1 & \gamma \\
\gamma & 1
\end{psmallmatrix})]$. Then for all $\phi \in L^2(\mathcal{N}(0,1))$, there exists a non negative sequence $\{a_n\}_{n\in\mathbb N}$ such that $F(\phi)(\gamma) = \sum_{i\in\mathbb N} a_i \gamma^i$ for all $\gamma \in [-1,1]$. 
\end{lemma}
Leveraging the previous result, the function $f$ defined in \eqref{deff} is analytic. We clarify this in the next lemma.
\begin{lemma}[Analytic property of $f$]\label{lemma:analytic_decomp_f_hat}
The function $f:[-1,1]\to\mathbb{R}$, defined in \eqref{deff}, is an analytic function on $(-1,1)$, whose expansion $f(\gamma) = \sum_{n\in\mathbb{N}}\al_n\,\gamma^n$
converges absolutely on $[-1,1]$. Moreover, $\al_n>0$ for all even $n\in\mathbb{N}$, $\al_1=-1/2$ and $\al_n=0$ for all odd $n\geq 3$.
\end{lemma}
\begin{proof}
With the notations of Lemma \ref{odonnell}, when $\phi$ is the ReLU activation function we have that $F(\phi) = \hat f$, defined in \eqref{equation:f_hat}. Hence, by Lemma \ref{odonnell}, we know that $\hat f$ is analytic on $(-1,1)$ and its expansion around $0$ converges on $[-1,1]$. In particular this will be true for $f$ as well. \\
For $\gamma\in[-1,1]$, let us write $\hat f(\gamma) = \sum_{n\in\mathbb{N}}a_n\gamma^n$.
Recalling the explicit form of $\hat f$, that is
$$
\hat{f}(\gamma) = \frac{1}{\pi} \gamma \,\text{arcsin}(\gamma) + \frac{1}{\pi} \sqrt{1-\gamma^2} + \frac{1}{2}\gamma\,,
$$
we get $a_0 = \frac{1}{\pi}$. Moreover, we have that for all $\gamma \in (-1,1)$
$$
\hat{f}'(\gamma) = \frac{1}{\pi} \arcsin{\gamma} + \frac{1}{2}\,.
$$
This yields $a_1 = \hat f'(0) = \frac{1}{2}$. Then, noticing that
$$
\hat{f}^{(3)}(\gamma) = \frac{\gamma}{\pi (1 - \gamma^2)^{3/2}}
$$
is an odd function, we get that for all $i \geq 1, a_{2i+1} = 0$. Now let us prove that for all $k\geq 1$, there exist $b_{k,0},b_{k,1},..., b_{k,k-1} > 0$ such that, for all $\gamma \in (-1,1)$,
$$
\hat{f}^{(2k)}(\gamma) = \frac{1}{\pi} \sum_{m=0}^{k-1} b_{k,m} \gamma^{2m} (1 - \gamma^2)^{-k-m+1/2}\,.
$$
We prove this by induction. For $k=1$, we have that 
$$
\hat{f}^{(2)}(\gamma) = \frac{1}{\pi} (1 - \gamma^2)^{-1/2}\,,
$$
so that our claim holds. Assume now that it is true for some $k\geq 1$, let us prove it for $k+1$. It is easy to see that
\begin{equation}
    b_{k+1,m} =
    \begin{cases*}
      2(2k-1) b_{k,0} + 2b_{k,1} & if $m = 0$; \\
      2(4k^2-1) b_{k,0} + 5(2k+1) b_{k,1} + 12 b_{k,2} & if $m = 1$; \\
      2(m+1)(2m+1) b_{k,m+1} + (4m+1)(2k+2m-1) b_{k,m} \\
      \quad +\, (2k+2m-3)(2k+2m-1) b_{k,m-1} & if $m \in \{2,3,...,k-1\}$; \\
      (4k-3)(4k-1) b_{k,k-1} & if $m = k$. \\
    \end{cases*}
  \end{equation}

The induction is straightforward.
In particular, we have shown that $a_{2i} = \frac{\hat{f}^{(2i)}(0)}{(2i)!} = \frac{b_{i,0}}{(2i)!}> 0$.\\
The conclusion for the coefficients $\al$'s of the expansion of $f$ is then trivial.
\end{proof}

Using Lemma \ref{lemma:analytic_decomp_f_hat}, it will not be hard to show that $Q_l$ is continuous. The non-negativity of $T(Q_l)$ can be seen as a consequence of the definition of $Q_l$ as the covariance of a Gaussian Process. However, we will give a direct proof of it, so that we can state here a general result which we will need later on. 
\begin{lemma}\label{schur}
Let $C$ be a kernel on $K$, such that $|C(z)|\leq 1$ for all $z\in K$. Consider a non-negative real sequence $\{\al_n\}_{n\in\mathbb{N}}$, and assume that
\begin{align*}
g(\gamma) = \sum_{k=0}^\infty \al_k\, \gamma^k
\end{align*} 
converges uniformly on $[-1,1]$. Then, for all finite Borel measure $\mu$ on $K$, $T_\mu(g(C))$ is a non-negative definite compact operator, and in particular $g(C)$ is a kernel.
\end{lemma}
\begin{proof}
Fix a finite Borel measure $\mu$ on $K$ and notice that $g(C)$ is continuous and symmetric (as uniform limit of continuous and symmetric functions). Moreover, since the Taylor expansion of $g$ around $0$ converges uniformly on $[-1,1]$, and since $|C(z)|\leq 1$ for all $z\in K$, we have that $T_\mu(g(C)) = \sum_{k\in\mathbb N}\al_k\,T_\mu(C^k)$, the sum converging wrt the operator norm on $L^2(K,\mu)$.\\
As a consequence of the Schur product theorem\footnote{\label{footnote_schur}Given two matrices $M_1$ and $M_2$, define they're Schur product as the matrix $M = M_1\circ M_2$, whose elements are $M^{ij} = {M_1}^{ij}{M_2}^{ij}$. If $M_1$ and $M_2$ are non-negative definite, then $M$ is non-negative definite.}, the product of two kernels is still a kernel.\\
As a consequence, it is easy to prove by induction that $T_\mu(C^k)$ is non-negative definite for all $k$. Hence $T_\mu(g(C))$ is the converging limit of a sum of compact non-negative definite operator. We conclude by Lemma \ref{T_mu(Q)}.
\end{proof}
\begin{lemma}\label{lemma:Ql_is_kernel}
For both Standard and Stable ResNet architectures, for any layer $l$, the covariance function $Q_l$ and the correlation function $C_l$ are kernels on $K$, in the sense of Definition \ref{def:kernel}.
\end{lemma}
\begin{proof}
It is straightforward to prove that $Q_0$ is a kernel. Now let us show that if $Q_l$ is a kernel for some $l$, then $C_l$ is a kernel. Since $Q_l$ is symmetric and so $C_l$ is. Moreover, the diagonal elements of $Q_l$ are continuous by Lemma \ref{lemma:diagonal_elements} and do not vanish (since if $\sigma_b=0$ we are assuming that $0\notin K$). Hence $C_l$ is continuous. It is then trivial to show that the non-negative definiteness of $T(Q_l)$ implies that $T(C_l)$ is non-negative definite, and so $C_l$ is a kernel if $Q_l$ is.\\
Now we proceed by induction. Suppose that $Q_{l-1}$ and $C_{l-1}$ are kernels and recall the recursion \eqref{rec}, taking the coefficient $\lambda$ to be $1$ in the case of a Standard ResNet. Notice that it can be rewritten as 
\begin{align*}
&Q_{l} = Q_{l-1} + \lambda_{l}^2\left({\sigma_b}^2 +\tfrac{{\sigma_w}^2}{2}\hat f(C_{l-1})\,R_{l-1}\right)\,,
\end{align*}
where we have omitted the dependence on $L$ for $\lambda$, we have defined $R_{l-1}(x,x') = \sqrt{Q_{l-1}(x,x)Q_{l-1}(x',x')}$ and $\hat f$ is defined in \eqref{equation:f_hat}. Clearly $R_{l-1}$ is a kernel. By Lemma \ref{lemma:analytic_decomp_f_hat} and Lemma \ref{schur} we have that $\hat f(C_l)$ is a kernel. Using the property that sums and products of kernels are kernels (the sum is trivial, cf Footnote \ref{footnote_schur} for the product), we conclude that $Q_l$, and so $C_l$, is a kernel on $K$.
\end{proof}

\subsection{Proof of Proposition \ref{prop:rkhs_hierarchy}}
As always, consider an arbitrary compact set $K\in\R^d$. Assume that $0\notin K$ if $\sigma_b=0$. Recall from Appendix \ref{app:setup_notations} that with the notation $\mathcal H_Q(K)$ we refer to the RKHS generated by a kernel $Q$ on $K$. We will now prove Proposition \ref{prop:rkhs_hierarchy}.

\begin{manualprop}{\ref{prop:rkhs_hierarchy}}
$\mathcal{H}_{Q_l}(K) \subseteq \mathcal{H}_{Q_{l+1}}(K)$ for all $l \in [0:L-1]$.\\
\end{manualprop}
\begin{proof}

We have already shown that $T(Q_l)-T(Q_{l-1})$ is non-negative definite in the proof of Lemma \ref{lemma:Ql_is_kernel}. We conclude by using the RKHS hierarchy result (see for instance \citep{paulsen2016RKHS} or page 354 in \citep{aronszajin_rkhs_hierarchy}).
\end{proof}

\subsection{Proof of Lemma \ref{univ=>expr}}
We present here the proof of Lemma \ref{univ=>expr}. We have already recalled the Definition \ref{univ_kern} of universal kernel in Appendix \ref{app:setup_notations}. For convenience of the reader, we restate here the definition of expressive GP.\\
Let $K$ be a compact in $\R^d$. 
\begin{manualdefinition}{\ref{def:exprGP}}[Expressive GP]
A Gaussian Process on $K$ is said to be expressive on $L^2(K)$ if, denoted by $\psi$ a random realisation, for all $\f\in L^2(K)$, for all $\varepsilon>0$,
\begin{align*}
     \mathbb{P}(\|\psi-\f\|_2\leq \varepsilon)>0\,.
 \end{align*}
\end{manualdefinition}
\begin{manuallemma}{\ref{univ=>expr}}
A universal kernel $Q$ on $K$ induces an expressive GP on $L^2(K)$. 
\end{manuallemma}
\begin{proof}
First, notice that if $Q$ is universal then $T(Q)$ is strictly positive definite \citep{JMLR:v12:sriperumbudur11a} and so all its eigenvalues are strictly positive.\\
Recall the spectral theorem for compact self-adjoint operators: there is a orthonormal basis of $L^2(K)$ made of the eigenfunctions $\{\psi_n\}_{n\in\mathbb{N}}$ of $T(Q)$. Denoting by $\mu_n>0$ the eigenvalue of $T(Q)$ relatively to $\psi_n$, since $T(Q)$ is compact we have the equality (Karhunen - Lo\`eve decomposition \citep{grenander1950})
\begin{align*}
\psi = \sum_{k=0}^\infty Z_k\sqrt{\mu_k}\,\psi_k\sim\mathcal{GP}(0,Q)\,,
\end{align*}
where $\{Z_k\}_{k\in\mathbb{N}}$ is a family of iid normal random variables, and the series is convergent uniformly on $K$ and in $L^2$ for the stochastic part \citep{paulsen2016RKHS}, that is $\lim_{N\to\infty}\sup_{x\in K}\E[(\psi(x)-\sum_{k=0}^N Z_k\sqrt{\mu_k}\,\psi_k(x))^2]=0$ uniformly for $x\in K$. In particular, we get that $\lim_{N\to\infty}\E[\|\psi-\sum_{k=0}^NZ_k\sqrt{\mu_k}\psi_k\|_2^2]=0$. As consequence, for all $\f\in L^2(K)$, we have that $\|\sum_{k=0}^N Z_k\sqrt{\mu_k}\,\psi_k - \f\|_2^2$ converges in squared mean to $\|\psi - \f\|_2^2$, for $N\to \infty$.\\
Now, let $\f = \sum_{k=0}^N a_k\,\psi_k$ for some finite $N$ and some real coefficients $\{a_0\dots a_N\}$. We have (with convergence in squared mean)
\begin{align*}
\|\psi-\f\|_2^2 = \sum_{k=0}^N\left(Z_k\sqrt{\mu_k}-a_k\right)^2 + \sum_{k=N+1}^\infty\mu_k\,Z_k^2\,.
\end{align*}
For $k\in[0:N]$, we can define the interval $I_k = \left[\tfrac{a_k}{\sqrt{\mu_k}}-\tfrac{\varepsilon}{\sqrt{2(N+1)\mu_k}},\tfrac{a_k}{\sqrt{\mu_k}}+\tfrac{\varepsilon}{\sqrt{2(N+1)\mu_k}}\right]$, so that, for all $z\in I_k$ we have $(z\sqrt{\mu_k}-a_k)^2\leq\tfrac{\varepsilon^2}{2(N+1)}$. Since all these intervals are non empty, we get
\begin{align*}
\mathbb{P}\left(\sum_{k=0}^N \left(Z_k\sqrt{\mu_k}-a_k\right)^2\leq\frac{ \varepsilon^2}{2}\right)\geq \prod_{k=0}^N \mathbb{P}(Z_k\in I_k) > 0\,.
\end{align*}
On the other hand, we have that 
\begin{align*}
\delta_N = \mathbb{E}\left[\sum_{k=N+1}^\infty\mu_k\,Z_k^2\right] = \sum_{k=N+1}^\infty\mu_k\,.
\end{align*}
By Mercer's theorem \citep{paulsen2016RKHS}, $T(Q)$ is trace class and hence $\delta_N\to 0$ for diverging $N$. By Markov's inequality
\begin{align*}
\mathbb{P}\left(\sum_{k=N+1}^\infty \mu_kZ_k^2\geq \frac{\varepsilon^2}{2} \right)\leq \frac{2\delta_N}{\varepsilon^2}
\end{align*}
and we can conclude that $\mathbb{P}(\|\psi-\f\|_2\leq \varepsilon) > 0$ for $N$ large enough. \\

For a general $\f = \sum_{k=0}^\infty a_k\psi_k$, let $\f_N = \sum_{k=0}^N a_k\psi_k$. Since $\{\psi_k\}_{k\in\mathbb N}$ is a basis of $L^2(K)$, fixed $\varepsilon>0$, it is always possible to find a $N$ such that $\|\f-\f_N\|_2 \leq \varepsilon/2$ and $\mathbb{P}(\|\f_N-\psi\|_2\leq \varepsilon/2)>0$, and so we conclude.
\end{proof}

\subsection{Proof of Proposition \ref{prop:universality_on_a_compact}}
In order to prove Proposition \ref{prop:universality_on_a_compact} we first need a preliminary result, which will be at the core of the proof of Theorem \ref{thm:expr} as well.
\begin{prop}\label{prop:univ_K}
Let $K\subset\R^d$ be compact. Assume $\sigma_b>0$ and let $\tilde f:\gamma\mapsto \tfrac{\gamma}{2}+f(\gamma)$ be defined on $[-1,1]$. Then the kernel $\tilde f(c_0)$, defined point-wise as $\tilde f(c_0)(x,x') = \tilde f(c_0(x,x'))$, is universal on $K$.
\end{prop}
\begin{proof}
First notice that $c_0(x,x') = \tfrac{1+\zeta\,x\cdot x'}{\sqrt{(1+\zeta\,\|x\|^2)(1+\zeta\,\|x'\|^2)}}$, where $\zeta = \swq/\sbq$. For $n\in\mathbb N$, define $p_n: (x,x')\mapsto c_0(x,x')^{2n}$, with the convention that $p_0\equiv 1$. It is easy to verify that $c_0$ is kernel. As a consequence, $p_n$ is a kernel for all $n$, since it is a product of kernels.\footnote{See footnote \ref{footnote_schur}.} From Lemma \ref{lemma:analytic_decomp_f_hat}, we can write 
\begin{align*}
\tilde f(c_0) = \sum_{n\in\mathbb N}\al_n\,p_n\,,
\end{align*}
the sum converging uniformly on $K^2$, with $\al_n>0$ for all $n\in\mathbb{N}$. By Lemma \ref{schur}, $\tilde f(c_0)$ is a kernel.\\
Now, for each $n$, we have
\begin{align*}
p_n(x,x') = \frac{1}{(1+\zeta\,\|x\|^2)^n(1+\zeta\,\|x'\|^2)^n}\sum_{k=0}^{2n}\omega_{k,n}\,(x\cdot x')^k \,,
\end{align*}
where the coefficients $\omega_{k,n}$'s are all strictly positive, explicitly $\omega_{k,n} = \zeta^k\binom{n}{k}$. \\
Expanding the inner product $x\cdot x'$, we can express $p_n$ in the form 
\begin{align*}
p_n(x,x') = \sum_{J\in\mathcal J_n}\be_{J,n}\,A_{J,n}(x)A_{J,n}(x')\,,
\end{align*}
where $\mathcal J_n =\{(j_1\dots j_d)\in\mathbb N^d: \sum_{i=1}^dj_i \in[0:2n]\}$, all the coefficients $\be_{J,n}$'s are strictly positive and the $A_{J,n}$'s are defined as
\begin{align*}
A_{J,n}(x) = \frac{{x_1}^{j_1}\dots {x_d}^{j_d}}{(1+\zeta\,\|x\|^2)^n}\,.
\end{align*}
Hence we can write $\tilde f(c_0)$ as
\begin{align}\label{feature_map}
\tilde f(c_0)(x,x') = \sum_{n\in\mathbb N}\sum_{J\in\mathcal J_n} \al_n\be_{J,n} A_{J,n}(x) A_{J,n}(x')\,.
\end{align}
For any $n,n'\in\mathbb N$, $J\in\mathcal J_n$, $J'\in\mathcal J_{n'}$, it is clear that $A_{J,n}A_{J',n'} = A_{J'',n+n'}$, where $J''$ is some element in $\mathcal J_{n+n'}$. As a consequence, the linear span of the family $\{A_{J,n}\}_{n\in\mathbb N, J\in \mathcal J_n}$ is an algebra $\mathcal A$ (which is actually a subalgebra of $C(K)$ since all the $A_{J,n}$'s are continuous). Moreover $A_{(0\dots0),0}\equiv 1$, so that $\mathcal A$ contains a constant, and it is straightforward to check that $\mathcal A$ separates points, that is for all distinct $x,x'\in K$ there exists $a\in\mathcal A$ such that $a(x)\neq a(x')$. Then, from Stone-Weierstrass theorem \citep{lang2012real}, $\mathcal A$ is dense in $C(K)$ wrt the uniform norm. \\
For all $n\in\mathbb N$, $J\in\mathcal{J}_n$, let $\theta_{n,J} = \sqrt{\al_n\beta_{n,J}}$. Define a bijection $\iota:\mathbb{N}\to\{(n,J):n\in\mathbb N, J\in\mathcal J_n\}$ and let $\Phi_n = \theta_{\iota(n)}A_{\iota(n)}$. For all $x\in K$, we have that $\Phi(x) = \{\Phi_n(x)\}_{n\in\mathbb{N}}\in \ell^2$, since $p_n(x,x)<\infty$. We conclude that $\Phi$ is a feature map for $\tilde f(c_0)$, and the density of the linear span of $\{\Phi_n\}_{n\in\mathbb N}$ allows to claim that the kernel is universal on $K$, in the sense of Definition \ref{univ_kern} (cf Theorem 7 in \citep{micchelli}).
\end{proof}
Let $K\subset\R^d$ be an arbitrary compact set. We are now ready to prove Proposition \ref{prop:universality_on_a_compact}.
\begin{manualprop}{\ref{prop:universality_on_a_compact}}
If $\sigma_b > 0$, then $Q_2$ is universal on $K$. From Proposition \ref{prop:rkhs_hierarchy}, $Q_L$ is universal for all $L\geq 2$.
\end{manualprop}
\begin{proof}
Assume $\sigma_b>0$ and let $K \subset \mathbb{R}^d$ be a compact set. With the notation of Proposition \ref{prop:univ_K}, we have that $$ Q_1 = Q_0 + \lambda_{1,L}^2 \left(\sigma_b^2 + \frac{\sigma_w^2}{2} \left(\frac{1}{2} + \frac{\Tilde{f}(C_0)}{C_0}\right) Q_0 \right). $$
By proposition \ref{prop:univ_K}, we know that the kernel $\Tilde{f}(C_0)$ given by $\Tilde{f}(C_0)(x,x') = \Tilde{f}(C_0(x,x'))$ is universal on $K$. Let us prove that $\frac{\Tilde{f}(C_0)}{C_0} Q_0$ is universal. Let $\varepsilon>0$ and $\f \in C(K)$, the space of continuous functions on $K$. Define $\frac{\f}{Q_0}(x) = \frac{\f(x)}{Q_0(x,x)}$. By the universality of $\Tilde{f}(C_0)$, there exists $g \in \mathcal{H}_{\Tilde{f}(C_0)}(K)$ such that 
$$
\left\|g - \frac{\f}{\sqrt{Q_0}}\right\|_\infty \leq \epsilon\,.
$$
with $g$ can be written as a finite linear combination of the functions $\{\hat{f}(C_0)(x,.)\}_{x \in K}$. This yields
$$
\left\|g \sqrt{Q_0} - \f\right\|_\infty \leq \epsilon \kappa\,,
$$
where $g \sqrt{Q_0} (x) = g(x) \sqrt{Q_0(x,x)}$ and $\kappa = \sup_{x \in K} \sqrt{Q_0(x,x)}$. It is straightforward that $g \sqrt{Q_0} \in \mathcal{H}_{\frac{\Tilde{f}(C_0)}{C_0} Q_0}(K)$,\footnote{This is trivial for a function $g$ that can be written as a finite sum of functions of the form $\alpha_i \tilde{f}(C_0)(x_i, .)$, and this would be enough since these functions are dense in $C(K)$ as shown in the proof of Proposition \ref{prop:univ_K}. More generally, given two kernels $Q$ and $Q'$, if $h\in \Hh_Q$ and $h'\in\Hh_{Q'}$, then $hh'\in \Hh_{QQ'}$, cf Theorem 5.16 in \citep{paulsen2016RKHS}.} Therefore, $\frac{\Tilde{f}(C_0)}{C_0} Q_0$ is universal. Since $Q_0$ is non-negative, we have that $Q_1$ is universal by an RKHS hierarchy argument similar to Proposition \ref{prop:rkhs_hierarchy}. Using Proposition \ref{prop:rkhs_hierarchy}, we conclude that $Q_L$ is universal on $K$.
\end{proof}

\subsection{Proof of Proposition \ref{prop:universality_on_sphere}}
\begin{manualprop}{\ref{prop:universality_on_sphere}}
Assume $\sigma_b = 0$. Then for all $L \geq 2$, $Q_L$ is universal on $\Sd$ for $d\geq 2$.
\end{manualprop}
\begin{proof}
See the proof of Proposition \ref{prop:universality_on_sphere_appendix} in Appendix \ref{app:sphere}.
\end{proof}

\subsection{Proof of Proposition \ref{proposition:spectral_decomposition_on_sphere}}
Proposition \ref{proposition:spectral_decomposition_on_sphere} is a well known classical result (see for instance Appendix H in \citep{yang2019finegrained} and the references therein. For completeness we give a proof in Appendix \ref{app:sphere}.
\begin{manualprop}{\ref{proposition:spectral_decomposition_on_sphere}}[Spectral decomposition on $\Sd$]
Let $Q$ be a zonal kernel on $\Sd$, that is $Q(x,x') = p(x\cdot x')$ for a continuous function $p:[-1,1]\to\mathbb{R}$. Then, there is a sequence $\{\mu_k\geq 0\}_{k\in\mathbb N}$ such that for all $x,x'\in\Sd$
\begin{align*}
Q(x,x') = \sum_{k\geq 0 } \mu_{k} \sum_{j=1}^{N(d,k)} Y_{k,j}(x) Y_{k,j}(x')\,,
\end{align*}
where $\{Y_{k,j}\}_{k\geq0, j\in [1:N(d,k)]}$ are spherical harmonics of  $\mathbb{S}^{d-1}$ and $N(d,k)$ is the number of harmonics of order $k$. With respect to the standard spherical measure, the spherical harmonics form an orthonormal basis of $L^2(\Sd)$ and $T(Q)$ is diagonal on this basis.
\end{manualprop}
\begin{proof}
See the proof of Lemma \ref{lemma:spherical_decomposition} in Appendix \ref{app:sphere}.
\end{proof}
\subsection{Proof of Lemma \ref{lemma:infnite_depth_standard_resnet}}
\begin{manuallemma}{\ref{lemma:infnite_depth_standard_resnet}}
Consider a standard ResNet of type \eqref{equation:Resnet_dynamics} and let $K \subset \mathbb{R}^d \setminus \{0\}$ be a compact set. We have that
$$\lim_{L \rightarrow \infty}\sup_{x,x' \in K}\left| 1 - C_L(x,x')\right| = 0\,.$$
Moreover, if $\sigma_b=0$, then,
$$\sup_{x,x' \in K}\left| 1 - C_L(x,x')\right| = \mathcal{O}(L^{-2})\,.$$
Therefore, $\mathcal{H}_{C_\infty}(K)$ is the space of constant functions.
\end{manuallemma}
\begin{proof}
This result was proven in \citep{hayou} in the case of no bias. It was also proven for a slightly different ResNet architecture in \citep{yang2017meanfield}.\\
Consider a ResNet of type \eqref{equation:Resnet_dynamics} and let $K \subset \mathbb{R}^d \setminus \{0\}$ be a compact set. We have that for all $x,x' \in K$
$$
Q_L(x,x') = Q_{L-1}(x,x') + \sigma_b^2 + \frac{\sigma_w^2}{2} \hat{f}(C_{L-1}(x,x')) \sqrt{Q_{L-1}(x,x) Q_{L-1}(x',x')}\,.
$$
Since $\hat{f}(x) \geq x$, $C_L$ is non-decreasing wrt $L$ and converges to the unique fixed point of $\hat{f}$ which is $1$. This convergence is uniform in $x,x'$, i.e. $\lim_{L \rightarrow \infty}\sup_{x,x' \in K} 1 - C_L(x,x') = 0 $.\\
Re-writing the recursion yields
$$
C_L(x,x') = \delta_L \frac{1}{1+\alpha} C_{L-1}(x,x') + \zeta_L + \delta_L \frac{\alpha}{1 + \alpha} \hat{f}(C_{L-1}(x,x'))\,,
$$
where $\alpha = \frac{\sigma_w^2}{2}$, $\delta_l = \left(1 + \frac{\sigma_b^2}{(1 + \alpha)Q_{L-1}(x,x)} \right)^{-1/2} \left(1 + \frac{\sigma_b^2}{(1 + \alpha)Q_{L-1}(x,x)} \right)^{-1/2}$ and $\zeta_L = \sigma_b^2 (Q_L(x,x) Q_L(x',x'))^{-1/2}$. \\
Using Lemma~ \ref{lemma:diagonal_elements}, and the boundedness of $C_L$, a simple Taylor expansion yields
\begin{align*}
C_L(x,x') &= \frac{1}{1+\alpha} C_{L-1}(x,x') + \frac{\alpha}{1 + \alpha} \hat{f}(C_{L-1}(x,x')) + g_L(x,x')\\
&= C_{L-1}(x,x') + \frac{\alpha}{1 + \alpha} f(C_{L-1}(x,x')) + g_L(x,x')\,,
\end{align*}
 where the expansion is uniform on $x,x' \in K$, and $f(x) = \hat{f}(x) - x$, and $g_L = \mathcal{O}(e^{-\beta L})$ for some $\beta>0$.\\
The previous dynamical system can be decomposed in two parts, a first part without the term $\mathcal{O}(e^{-\beta L})$ which is the homogeneous system, i.e.\! the system without bias, and the term $\mathcal{O}(e^{-\beta L})$ which is the contribution of the bias in the dynamical system. \\
Assume $\sigma_b=0$, then the term $g_L$ vanishes. Moreover, a Taylor expansion of $\hat{f}$ near 1 yields
$$
f(x) = s (1- x)^{3/2} + \mathcal{O}((1-x)^{5/2})\,.
$$
Therefore, uniformly in $x,x' \in K$, we have that
$$
C_L(x,x') = C_{L-1}(x,x') + \frac{s \alpha}{1 + \alpha} (1- C_{L-1}(x,x'))^{3/2} + \mathcal{O}((1-C_{L-1}(x,x'))^{5/2})\,.
$$
Letting $\gamma_L = 1 - C_L$, a simple Taylor expansion leads to 
$$
\gamma_L^{-1/2} = \gamma_{L-1}^{-1/2} + \frac{s \alpha}{2(1 + \alpha) } + \mathcal{O}(\gamma_{L-1})\,.
$$
Therefore, $\gamma_L \sim \kappa L^{-2}$ where $\kappa = \frac{4(1+\alpha)^2}{s^2 \alpha^2}$. This equivalence is uniform in $x,x' \in K$.

It is likely that the rate $\mathcal{O}(L^{-2})$ holds without assuming $\sigma_b=0$. However, the analysis in this requires unnecessarily complicated details. 
\end{proof}
\section{Stable ResNet with uniform scaling}\label{app:section:uniform_scaling}
In this section we detail the proofs for the uniform scaling of a Scaled ResNet, that is $\lambda_{l,L}=1/\sqrt{L}$. \\
When not otherwise specified, $K$ is a generic compact of $\R^d$. We assume that $0\notin K$ if $\sigma_b = 0$.

\subsection{Continuous formulation}
We provide the results of existence, uniqueness and regularity of the solution of \eqref{recc} in Lemma \ref{ex}. Corollary \ref{corex} shows that the differential problem can be restated in the operator space. Eventually we give a proof of Lemma \ref{convcont}, assuring uniform convergence to the continuous limit.

We recall that by continuous formulation we mean a rescaling of the layer index $l$, which becomes a continuous index $t$, spanning the interval $[0,1]$, as the depth diverges, that is $L\to\infty$.\\
More precisely, for all $L\geq 1$ and all $l\in[0:L]$, we can define $t(l,L) = l/L$.\\
Consider a sequence $\{l_n,L_n\}_{n\in\mathbb N}$ (where, for all $n$, $L_n\geq 1$ and $l\in[0:L_n]$), such that $L_n$ diverges but $l_n/L_n$ converges to a finite $t=\lim_{n\to\infty}t(l_n,L_n)$. We will show in this section (Lemma \ref{convcont}) that the kernels $Q_{l_n|L_n}$ (covariance kernel of the layer $l_n$ in a net with $L_n+1$ layers) converge uniformly to a kernel, $q_t$, on $K$. \\
Moreover we can define a differential problem for the mapping $t\mapsto q_t$, with $q\in[0,1]$, that is
\begin{align*}
\tag{\ref{recc}}
\begin{split}
&\dot q_t(x,x') = {\sigma_b}^2 + \tfrac{{\sigma_w}^2}{2}\left(1+\tfrac{f(c_t(x,x'))}{c_t(x,x')}\right)q_t(x,x')\,,\\
&q_0(x,x') = \sbq + \swq\,\tfrac{x\cdot x'}{d}\,,\\
&c_t(x,x')=\tfrac{q_t(x,x')}{\sqrt{q_t(x,x)q_t(x',x')}}\,.
\end{split}
\end{align*}

\begin{lemma}[Existence and uniqueness]\label{ex}
For any $x,x'$ in $K$, the solution of \eqref{recc} is unique and well defined for all $t\in[0,1]$. The maps $(x,x')\mapsto q_t(x,x')$ and $(x,x')\mapsto c_t(x,x')$ are Lipschitz continuous on $K^2$ and $c_t$ takes values in $[-1,1]$. Moreover, both $q_t$ and $c_t$ are kernels in the sense of Definition \ref{def:kernel}.
\end{lemma}
\begin{proof}
First notice that from \eqref{recc} we can find, with few algebraic manipulations, an explicit recurrence relation for the correlation $C_l$, defined in \eqref{defc}. For any $x,x'\in K$ we have
\begin{align}\label{recurrence_C}
    \begin{split}
    &C_{l+1}(x,x') =  A_{l+1}(x,x')\,C_l(x,x')+\tfrac{\swq}{2L}\left(1+\tfrac{\swq}{2L}\right)^{-1}A_{l+1}(x,x')\,f(c_l(x,x'))+\tfrac{1}{L}\tfrac{\sbq}{\sqrt{Q_l(x,x)Q_l(x',x')}}\,;\\
    &A_l(x,x') = \sqrt{\left(1-\tfrac{1}{L}\tfrac{\sbq}{Q_l(x,x)}\right)\left(1-\tfrac{1}{L}\tfrac{\sbq}{Q_l(x',x')}\right)}\,.
    \end{split}
\end{align}
We can find a Cauchy problem for the correlation directly from \eqref{recc} or by noting that $A_l(x,x')=1-\tfrac{\sbq}{2L}\left(\tfrac{1}{Q_l(x,x)}+\tfrac{1}{Q_l(x',x')}\right) + o(1/L)$, for $L\to\infty$. With both approaches, we have
\begin{align}\label{eqc}
\begin{split}
&\dot c_t(x,x')= {\sigma_b}^2\left(\Gt(x,x') - \At(x,x')\,c_t(x,x')\right) + \frac{\swq}{2}\,f(c_t(x,x'))\,,\\
&c_0(x,x') = \frac{\sbq + \swq\,x\cdot x'}{\sqrt{(\sbq + \swq\,\|x\|^2)(\sbq + \swq\,\|x'\|^2)}}\,,
\end{split}
\end{align}
where $f$ is defined in $\eqref{deff}$ and
\begin{align*}
\At(x,x') = \frac{1}{2}\left(\frac{1}{q_t(x,x)}+\frac{1}{q_t(x',x')}\right)\,;\qquad\Gt(x,x') = \sqrt{\frac{1}{q_t(x,x)\,q_t(x',x')}}\,.
\end{align*}
Note that for the diagonal terms $q_t(x,x)$, \eqref{recc} reduces to $\dot q_t ={\sigma_b}^2 + \frac{{\sigma_w}^2}{2}\,q_t$, whose solution is 
\begin{align*}
q_t(x,x) = e^{\frac{{\sigma_w}^2}{2}\,t}\,q_0(x,x)+\frac{2{\sigma_b}^2}{{\sigma_w}^2}\left(e^{\frac{{\sigma_w}^2}{2}\,t}-1\right)=e^{\frac{{\sigma_w}^2}{2}\,t}\,(\sbq + \swq\,\|x\|^2)+\frac{2{\sigma_b}^2}{{\sigma_w}^2}\left(e^{\frac{{\sigma_w}^2}{2}\,t}-1\right)\,.
\end{align*}
Now, fix $z=(x,x')\in K^2$ and let $\gamma_0 = c_0(z)\in[-1,1]$. Consider $\bar f:\mathbb{R}\to \mathbb{R}$, an arbitrary Lipschitz extension of $f$ to the whole $\mathbb{R}$ and define $H:[0,\infty)\times \mathbb{R}\to \mathbb{R}$ as
\begin{align*}
H(t,\gamma) = \sbq(\Gt(z) - \At(z)\,\gamma) + \frac{\swq}{2}\,\bar f(\gamma)\,. 
\end{align*}
$H$ is Lipschitz continuous in $\gamma$ and $C^\infty$ in $t$, so there exists $\tau>0$ such that the Cauchy problem
\begin{align*}
&\dot \gamma (t) = H(t,\gamma(t))\,;\\
&\gamma(0) = \gamma_0
\end{align*}
has a unique $C^1$ solution defined for  $t\in[0,\tau)$. \\
Noticing that 
\begin{align*}
\Gt(x,x') -\At(x,x') = -\frac{1}{2}\left(\frac{1}{q_t(x,x)}-\frac{1}{q_t(x',x')}\right)^2\leq 0\,,
\end{align*}
we get that for all $t_1$ such that $\gamma(t_1)=1$ we have $\dot\gamma(t_1)\leq 0$, since $f(1)=0$, and for all $t_{-1}$ such that $\gamma(t_{-1})=-1$ we have $\dot\gamma(t_{-1}) = \sbq(\Gt(x,x')+\At(x,x'))+\tfrac{\swq}{2}>0$. As a consequence $\gamma(t)\in[-1,1]$ for all $t\in[0,\tau)$ and we can take $\tau=\infty$.\\
In particular we get that \eqref{eqc} has a unique solution $t\mapsto c_t(z)$, defined for $t\in[0,1]$ and bounded in $[-1,1]$. \\
As a consequence, \eqref{recc} has a unique and well defined solution for all $t\geq 0$. \\

Now notice that $z\mapsto c_0(z)$ is Lipschitz on $K^2$. let us denote as $L_0$ a Lipschitz constant for $c_0$.\\
Since both $\Gt$ and $\At$ are $C^1$, we can find real constants $L_G$, $L_A$ and $M_A$ such that for all $z,z'$ elements of  $K^2$
\begin{align*}
&|\Gt(z)-\Gt(z')|\leq L_G\,\|z-z'\|\,;\\
&|\At(z)-\At(z')|\leq L_A\,\|z-z'\|\,;\\
&|\At(z)|\leq M_A\,.
\end{align*}
Let $L_f$ be a Lipschitz constant for $f$. Using the fact that $|c_t|\leq 1$, we can write
\begin{align*}
|\dot c_t(z) - \dot c_t(z')| \leq L_1\,\|z-z'\|+L_2\,|c_t(z)-c_t(z')|\,,
\end{align*}
where $L_1 = \sbq(L_G+L_A)$ and $L_2 = \sbq M_A+\frac{\swq}{2}\,L_f$.\\
Now fix $z$ and $z'$ and consider $\Delta(t) = c_t(z)-c_t(z')$. We have 
\begin{align*}
&|\dot\Delta(t)| \leq L_1\,\|z-z'\| + L_2\,|\Delta(t)|\,;\\
&|\Delta(0)| \leq L_0\,\|z-z'\| \,.
\end{align*} 
So $|\Delta(t)|\leq \left(\frac{L_1}{L_2}\,\left(e^{L_2\,t}-1\right)+L_0\,e^{L_2\,t}\right)\|z-z'\|$, meaning that $c_t$ (and so $q_t$) is Lipschitz on $L^2$.\\

Since the mapping $(x,x')\mapsto q_t(x,x')$ is continuous, it defines a compact integral operator $T(q_t)$ on $L^2(K)$ \citep{lang2012real}. Since $q_t$ is real and symmetric under the swap of $x$ and $x'$, the operator is self-adjoint. The same holds true for $c_t$. 

The fact that  $T(q_t)$ is a non-negative operator can be seen as a corollary of Lemma \ref{convcont}. Indeed all $T(Q_{l|L})$ is a non-negative definite operator, since it is induced by a kernel. Hence, for each $t\in[0,1]$ it is enough to find a sequence $\{l_n,L_n\}_{n\in\mathbb N}$ (where $L_n\geq 1$ is an integer and $l_n\in [0:L_n]$) such that $L_n\to\infty$ and $l_n/L_n\to t$. By Lemma \ref{convcont}, $T(Q_{l_n|L_n})\to T(q_t)$ in the $L^\infty$ norm, and hence in $L^2$, as we are on a compact set. By Lemma \ref{lemma:Ql_is_kernel}, for all $n\in\mathbb N$ we have that $T(Q_{l_n|L_n})$ is non-negative definite. Since the subspace of non-negative definite operators in $L^2$  is closed wrt the $L^2$ operator norm, we conclude.\\
Once we have established that $T(q_t)$ is non-negative definite, it follows immediately that $T(c_t)$ is non-negative as well. Since these results hold for any arbitrary finite Borel measure $\mu$ on $K$, we can thus conclude by Lemma \ref{T_mu(Q)} that both $q_t$ and $c_t$ are kernels, in the sense of Definition \ref{def:kernel}.
\end{proof}
\begin{corollary}\label{corex}
The maps $t\mapsto T(q_t)$ and $t\mapsto T(c_t)$, defined on $[0,1]$, are continuous and twice differentiable with respect to the operator norm in $L^2(K)$. Moreover, $\tfrac{\dd}{\dd t}T(q_t) = T(\dot q_t)$, $\tfrac{\dd}{\dd t}T(c_t) = T(\dot c_t)$, $\tfrac{\dd^2}{\dd t^2}T(q_t) = T(\ddot q_t)$ and $\tfrac{\dd^2}{\dd t^2}T(c_t) = T(\ddot c_t)$.
\end{corollary}
\begin{proof}
Consider the map $(t,z)\mapsto q_t(z)$, defined on $[0,1]\times K^2$, which is continuous wrt $z$ and $C^2$ wrt $t$, as it can be easily checked. Since $K^2$ and $[0,1]$ are compact sets, it follows that for any $t$
\begin{align*}
\lim_{s\to t} \sup_{z\in I^2}\left|\frac{q_{s}(z)-q_t(z)}{s-t} -\dot q_t (z)\right|= \sup_{z\in I^2}\lim_{s\to t}\left|\frac{q_{s}(z)-q_t(z)}{s-t}-\dot q_t(z)\right| = 0\,.
\end{align*}
Hence $\lim_{s\to t}\frac{q_s-q_t}{t-s}=\dot q_t$ uniformly on $K^2$, and hence $\lim_{s\to t}\frac{T(q_s)-T(q_t)}{t-s}=T(\dot q_t)$ in the $L^2(K,\mu)$ norm for operators, since $K$ is compact. \\
The proof for the second derivative works in the same way, using the fact that $(t,z)\mapsto q_t(z)$ is continuous in $z$ and $C^1$ in $t$.\\
As a consequence of the above results, $t\mapsto T(q_t)$ is continuous and twice differentiable, with $\tfrac{\dd}{\dd t}T(q_t) = T(\dot q_t)$ and $\tfrac{\dd^2}{\dd t^2}T(q_t) = T(\ddot q_t)$.\\ The proof for $T(c_t)$ is analogous.
\end{proof}
\begin{manuallemma}{\ref{convcont}}[Convergence to the continuous limit]
Let $Q_{l|L}$ be the covariance kernel of the layer $l$ in a net of $L+1$ layers $[0:L]$, and $q_t$ be the solution of \eqref{recc}, then
\begin{align*}
    \lim_{L\to\infty}\sup_{l\in[0:L]}\sup_{(x,x')\in K^2}|Q_{l|L}(x,x')-q_{t=l/L}(x,x')| = 0\,.
\end{align*}
\end{manuallemma}
\begin{proof}
We will show that the relation holds for $c_t$, and hence for $q_t$.\\
Let $H$, defined on $[0,1]\times K^2$, be such that $\dot c_t(z) = H(z,t,c_t(z))$. Explicitly, with the same notations as in \eqref{eqc}, we have 
\begin{align*}
    H(z,t,\gamma) = \sbq(\Gt(z) - \At(z)\,\gamma) + \frac{\swq}{2}\, f(\gamma)\,.
\end{align*}
Define
\begin{align*}
\tau(h) = \sup_{t,z}\left| \frac{c_{t+h}(z)-c_t(z)}{h}-H(z,t,c_t(z))\right|\,.
\end{align*}
Since $t$ and $z$ takes values on compact sets, by uniform continuity, fixed $h$ we can write, for $h\to 0$
\begin{align*}
\sup_t\sup_{s\in [t,t+h]} \left| H(z,s,c_s(z)) - H(z,t,c_t(z))\right|=o(h)\,.
\end{align*}
Hence, since $\tau$ can be rewritten as $\tau(h) = \frac{1}{h}\sup_{t,z}\left| \int_t^{t+h}(H(z,s,c_s(z)) - H(z,t,c_t(z))\,\dd s\right|$, it is clear that $\tau(h)\to 0$ for $h\to 0$. \\
Now, for any integer $L\geq 1$, let $\tilde H_L:K^2\times[0:L-1]\times[-1,1]$ be given by
\begin{align*}
\tilde {H}_L(z,l,\gamma) = (A_{l+1|L}(x,x')-1)\,L\,\gamma+\tfrac{\swq}{2}\left(1+\tfrac{\swq}{2L}\right)^{-1}A_{l+1|L}(x,x')\,f(c_l(x,x'))+\tfrac{\sbq}{\sqrt{Q_{l|L}(x,x)Q_{l|L}(x',x')}}\,,
\end{align*}
where,
\begin{align*}
    A_{l|L}(x,x') = \sqrt{\left(1-\tfrac{1}{L}\tfrac{\sbq}{Q_{l|L}(x,x)}\right)\left(1-\tfrac{1}{L}\tfrac{\sbq}{Q_{l|L}(x',x')}\right)}\,.
\end{align*}
It is clear from \eqref{recurrence_C} that $\tilde H_L$ has been defined so that  $C_{l+1|L}(z)-C_{l|L}(z)=\tfrac{1}{L}\tilde H_L(z,l,\gamma)$, for all $L\in[0:L-1]$ and all $z\in K^2$. Using the explicit form of the diagonal terms of $Q$ and $q$, it can be easily shown that, for $L\to\infty$, \begin{align*}
    &\sup_{(x,x')\in K^2}\sup_{l\in[0:L-1]}A_{l+1|L}(x,x')=1+\tfrac{\sbq}{L}\mathcal{A}_{t=l/L}(x,x')+O(1/L^2)\,;\\
    &\sup_{(x,x')\in K^2}\sup_{l\in[0:L]}\tfrac{\sbq}{\sqrt{Q_{l|L}(x,x)Q_{l|L}(x',x')}} = \mathcal{G}_{t=l/L}(x,x')+O(1/L^2)\,,
\end{align*}
where $\mathcal{A}_t$ and $\mathcal{G}_t$ are defined as in \eqref{eqc}. As a consequence, we can find a constant $M_1>0$ and an integer $L_\star>0$ such that, for all $\gamma\in[-1,1]$, for all $z\in K^2$, for all $L\geq L^\star$
\begin{align}\label{eq1:proof_conv}
    |\tilde H_L(z,l,\gamma)-H(z,l/L,\gamma)|\leq \frac{M_1}{L}\,.
\end{align}
Moreover, there exists a constant $M_2>0$ such that for all $z\in K^2$, all $t\in[0,1]$ and all pairs $(\gamma,\gamma')\in[-1,1]^2$
\begin{align}\label{eq2:proof_conv}
|H(z,t,\gamma)-H(z,t,\gamma')|\leq M_2\|\gamma-\gamma'\|\,.
\end{align}
Thanks to the two above uniform inequalities, we will now show that, for $L\geq L_\star$,
\begin{align}\label{goal}
\sup_{l\in[0:L]}\sup_{z\in K^2}|C_{l|L}(x,x')-c_{t(l,L)}(x,x')| \leq \tilde\tau(1/L) \frac{e^{M_2}-1}{M_2}\,,
\end{align}
where $\tilde\tau:h\mapsto \tau(h) + M_1h$.\\
To do so, fix $L\geq L_\star$ and define $\Delta_{l}=\sup_{z\in K^2}|C_{l|L}(x,x')-c_{t(l,L)}(x,x')|$. Using the definition of $\tau$, \eqref{eq1:proof_conv} and \eqref{eq2:proof_conv} we get
\begin{align*}
    |\Delta_{l+1}|\leq\left(1+\tfrac{M_2}{L}\right)|\Delta_l|+\tfrac{1}{L}\tau(1/L) + \tfrac{M_1}{L} = \left(1+\tfrac{M_2}{L}\right)|\Delta_l|+\tfrac{1}{L}\tilde\tau(1/L)\,.
\end{align*}
At this point, using the fact that $\Delta_0=0$, it is easy to show by induction that 
\begin{align*}
    \Delta_l\leq \tilde\tau(1/L)\,\frac{\left(1+\tfrac{M_2}{L}\right)^l-1}{M_2}\,,
\end{align*}
and so \eqref{goal} follows.\\
Finally, the uniform convergence of $C$ to $c$ implies the one of $Q$ to $q$ and so we conclude.
\end{proof}
\subsection{Universality of the covariance kernel}
We will now prove the results of universality of Theorem \ref{thm:expr} and Proposition \ref{prop:uniform_Sd}.
\subsection*{Proof of Theorem \ref{thm:expr}}
The idea is to prove that for any finite Borel measure $\mu$ on $K$, the operator $T_\mu(q_t)$ is strictly positive definite if $t>0$, and then use the characterization of universal kernels given in Lemma \ref{univ=Lp_spd}.\\
To prove the strict positive definiteness, we will proceed in two steps. First we show in Proposition \ref{prop:t_phi} that for all non-zero $\f\in L^2(K,\mu)$, $\langle T_\mu(q_t)\,\f,\f\rangle>0$ for $t$ small enough. Then we use Proposition \ref{prop:dotq>=0}, which shows that $\tfrac{\dd}{\dd t}T_\mu(q_t)$ is non-negative definite.
\begin{prop}\label{prop:t_phi}
Fix any finite Borel measure $\mu$ on $K$, and assume that $\sigma_b>0$. Given any non-zero $\f\in L^2(K,\mu)$, there exists a $t_\f \in (0,1]$ such that $\langle T_\mu(q_t)\, \f,\f\rangle > 0$, for all $t\in (0,t_\f)$.
\end{prop}
\begin{proof}
From Corollary \ref{corex}, we can expand $T_\mu(q_t)$ around $t=0$ as 
\begin{align*}
    T_\mu(q_t) = T_\mu(q_0) +  t\, T_\mu(\dot q_0) + o(t) = t \,T_\mu\left(\sbq + \frac{\swq}{2} q_0\right) + T_\mu((c_0 + t f(c_0))R_0) + o(t) \,,
\end{align*}
the $o(t)$ being wrt the operator norm, where we have defined the kernel $R_0$ via $R_0(x,x') =\tfrac{\swq}{2}\sqrt{(1+\zeta \|x\|^2)(1+\zeta \|x'\|^2)}$.\\
Since $T_\mu(q_0)$ is non-negative, for any $\f\in L^2(I)$, we have 
\begin{align*}
    \langle T_\mu(q_t)\,\f,\f\rangle \geq \langle T_\mu((c_0 + t f(c_0))R_0)\,\f,\f\rangle + o(t) =\left(1-\frac{t}{2}\right)\langle T_\mu(c_0)\,\psi,\psi\rangle + t\,\langle T_\mu(f(c_0)))\,\psi,\psi\rangle + o(t)\,,
\end{align*}
where $\psi(x) = \sigma_w\sqrt{(1+\zeta \|x\|^2)/2}\,\f(x)$. We conclude by the strict positivity of $\tilde f(c_0)$ on $L^2(K,\mu)$, thanks to Proposition \ref{prop:univ_K} and Lemma \ref{univ=Lp_spd}. 
\end{proof}
\begin{prop}\label{prop:dotq>=0}
For any finite Borel measure $\mu$ on $K$, for any $t\in[0,1]$, the operator $T_\mu(\dot q_t)$ on $L^2(K,\mu)$ is non-negative definite. In particular, for all $\f\in L^2(K,\mu)$ we have
\begin{align*}
    \tfrac{\dd}{\dd t}\langle T_\mu(q_t)\,\f,\f\rangle \geq 0\,.
\end{align*}
\end{prop}
\begin{proof}

Fix $\mu$ and $\f\in L^2(K,\mu)$. From \eqref{recc} we can write
\begin{align*}
T_\mu(\dot q_t) = T_\mu\left(\sbq +\frac{\swq}{2}\,q_t +\frac{\swq}{2}\,\frac{f(c_t)}{c_t} q_t\right).
\end{align*}

By Lemma \ref{ex}, $T_\mu(q_t)$ is non-negative definite, so we can write
\begin{align*}
\langle T_\mu(\dot q_t)\,\f,\f\rangle &= \sbq |\langle 1,\f\rangle|^2 + \frac{\swq}{2}\left\langle T_\mu\left(\frac{c_t + f(c_t)}{c_t} \,q_t\right)\f,\f\right\rangle\\
&\geq\frac{\swq}{2}\left\langle T_\mu\left(\tilde f(c_t)\,\frac{q_t}{c_t}\right)\f,\f\right\rangle\\
&=\frac{\sbq}{2}\langle T_\mu(\tilde f(c_t))\,\psi,\psi\rangle\,,
\end{align*}
where $\tilde f:\gamma\mapsto \tfrac{\gamma}{2}+f(\gamma)$, for $\gamma\in [-1,1]$, and $\psi(x) = \sqrt{q_t(x,x)}\,\f(x)$. By Lemma \ref{lemma:analytic_decomp_f_hat}, the Taylor expansion of $\tilde f$ around $0$ converges uniformly on $[-1,1]$, and all its coefficients are non-negative. We conclude by Lemma \ref{schur} that $T_\mu(\dot q_t)$ is non-negative definite.\\
Finally, to prove the inequality, it is enough to recall that $\tfrac{\dd}{\dd t}T_\mu(q_t) = T_\mu(\dot q_t)$ by Corollary \ref{corex}, the derivative $\tfrac{\dd}{\dd t}$ being wrt the operator norm on $L^2(K,\mu)$. 
\end{proof}
\begin{manualthm}{\ref{thm:expr}}
[Universality of $q_t$]
Let $K \subset \R^d$ be compact and assume $\sigma_b>0$. For any $t\in (0,1]$, the solution  $q_t$ of \eqref{recc} is a universal kernel on $K$.
\end{manualthm}
\begin{proof}
By Lemma \ref{univ=Lp_spd}, it suffices to show that for any finite Borel measure $\mu$ on $K$, $T_\mu(q_t)$ is strictly positive definite for all $t\in(0,1]$. Fix any nonzero $\f\in L^2(K,\mu)$, define the map $F$ on $[0,1]$ by $F(t) = \langle T_\mu(q_t)\,\f,\f\rangle$. For any fixed $t\in(0,1]$, by Proposition \ref{prop:t_phi} we can find $s\in(0,t)$ such that $F(s)>0$. Since $F$ is non decreasing by Proposition \ref{prop:dotq>=0}, we get that $F_t>0$. Hence $T_\mu(q_t)$ is strictly positive definite.
\end{proof}
\subsection*{Proof of Proposition \ref{prop:uniform_Sd}}
The proof of Proposition \ref{prop:uniform_Sd} is quite similar to the one of Theorem \ref{thm:expr}. \\
Using Lemma \ref{lemma:Sd_spd=>univ} instead of Lemma \ref{univ=Lp_spd}, we will not need to consider a generic finite Borel measure $\mu$ on $\Sd$, but it will be enough to show that $T_{\nu}(q_t)$ is a striclty positive operator on $L^2(\Sd,\nu)$, where $\nu$ is the standard unifrom spherical measure on $\Sd$.\\
Since $\sigma_b=0$, we will not be able to use Proposition \ref{prop:univ_K}. We will hence state some preliminary results. 
\begin{lemma}\label{lem10}
Let $\{A_n\}_{n\in\mathbb{N}}$ be a family of compact non-negative operators on a separable Hilbert space $\mathcal{H}$. Let $R_n$ be the range of $A_n$ and assume that $V = \Span(\bigcup_{n\in\mathbb N} R_n)$ is dense in $\mathcal{H}$. Let $\{\al_n\}_{n\in\mathbb{N}}$ be a strictly positive sequence such that the sum
\begin{align*}
A = \sum_{n\in\mathbb{N}} \alpha_n\,A_n
\end{align*}
converges in the operator norm. Then $A$ is a compact strictly positive definite operator. 
\end{lemma}
\begin{proof}
$A$ is the convergent limit of a sum of compact self-adjoint operators and hence it is compact and self-adjoint. Now, fix an arbitrary nonzero $h\in\mathcal{H}$. To show that $A$ is strictly positive it is enough to prove that $\langle A\,h,h\rangle>0$.\\
Denote by $V_N$ the linear span of $\bigcup_{n\in[0:N]}R_n$. Since $V_{N}\subseteq V_{N+1}$ for all $N$, and  
$\bigcup_{N\in\mathbb{N}}V_N= V$ is dense in $H$, there exists a sequence $\{h_N\}_{N\in\mathbb N}$ converging to $h$ and such that $h_N\in V_N$ for all $N$. \\
Now let us show that there must exist $n^\star\in\mathbb N$ such that $A_{n^\star}\,h\neq 0$. Since $\lim_{N\to\infty}\langle h,h_N\rangle = \langle h,h\rangle>0$, there must be a $N^\star$ such that $\langle h,h_{N^\star}\rangle>0$ and so there exists $n^\star\in[0:N^\star]$ and $h_{n^\star}\in V_{n^\star}$ such that $\langle h, h_{n^\star}\rangle\neq 0$. In particular, $h$ is not orthogonal to $R_{n^\star}$ and can not lie in the nullspace of $A_{n^\star}$, using the fact that $A_{n^\star}$ is compact and self-adjoint and so its range and its nullspace are orthogonal \citep{lang2012real}. \\
Using the spectral decomposition of non-negative compact operators, it is straightforward that $A_{n^\star}\,h \neq 0$ implies that $\langle A_{n^\star}\,h,h\rangle > 0$. Now, since $A_n$ is non-negative and $\al_n>0$ for all $n$, we have
\begin{align*}
    \langle A\,h,h\rangle = \sum_{n\in\mathbb N}\al_n\langle A_n\,h,h\rangle \geq \al_{n^\star} \langle A_{n^\star}\,h,h\rangle > 0\,,
\end{align*}
and so we conclude.
\end{proof}
\begin{lemma}\label{density_sphere}
For all $n\in\mathbb{N}$, consider the kernel $p_n$ on $\Sd$, defined by $p_n(x,x') = (x\cdot x')^n$, and let $T_{\nu}(p_n)$ be the induced integral operator on $L^2(\Sd,\nu)$. Denoting as $R_n$ the range of $T_\nu(p_n)$, the subspace $V = \Span\left(\bigcup_{n\in\mathbb N}R_n\right)$ is dense in $L^2(\Sd,{\nu})$.\\
Moreover, letting $V' = \Span\left(\bigcup_{n\in\mathbb N}R_{2n}\right)$ and $V'' = \Span\left(\bigcup_{n\in\mathbb N}R_{2n+1}\right)$, we have $L^2(\Sd,{\nu}) = \overline{V'}\oplus\overline{V''}$, the overline denoting the closure in $L^2(\Sd,{\nu})$. 
\end{lemma}
\begin{proof}
To prove that $V$ is dense, first notice that for each spherical harmonic $Y$, we can find an operator in the form $T_\nu(P(x\cdot x'))$, for a polynomial $P$, which has $Y$ in its range. Since the range of such an operator is trivially contained in $V$, it follows that $V$ contains all the spherical harmonics, and so it is dense in $L^2(\Sd,\nu)$. \\
Now, note that for any even $n$ and odd $n'$ we have 
\begin{align*}
    \int_{\Sd}(x\cdot z)^n(z\cdot x')^{n'}\dd \nu(z) = 0\,,
\end{align*}
by an elementary symmetry argument, since it is the integral on the sphere of a homogeneous polynomial of odd degree $n+n'$ in the components $z_i$'s of $z$.\\
It follows that $V'$ and $V''$ are orthogonal. Since their union $V$ is dense, we conclude that $L^2(\Sd,\nu)=\overline{V'}\oplus\overline{V''}$.
\end{proof}
\begin{corollary}\label{op_dec}
With the notations of Lemma \ref{density_sphere}, assume that a sequence $\{\al_{n\in\mathbb N}\}$ is such that 
$A = \sum_{n\in\mathbb{N}}\al_n\,T_\nu(p_n)$
converges wrt the operator norm on $L^2(\Sd,\nu)$. Then $A = A'+A''$, where $A':\overline{V'}\to\overline{V'}$ and $A'':\overline{V''}\to\overline{V''}$. Such a decomposition is unique and 
\begin{align*}
    &A' = \sum_{n\in\mathbb{N}}\al_{2n}\,T_\nu(p_{2n})\,;
    &A'' = \sum_{n\in\mathbb N}\al_{2n+1}\,T_\nu(p_{2n+1})\,,
\end{align*}
both sums converging wrt the operator norm.
\end{corollary}
\begin{proof}
It is clear that $A=A'+A''$, when both $A'$ and $A''$ are defind on the whole $L^2(\Sd,\nu)$.\\
Consider any $\f\in L^2(\Sd,\nu)$. We have $A'\f \in \overline{V'}$, since $T_\nu(p_{2n})\f\in\overline{V'}$ for all $n$. Analogously, we can show that $A''\f\in\overline{V''}$. To conclude that we can consider the restrictions of $A'$ and $A''$ to $\overline{V'}$ and $\overline{V''}$ respectively, it is enough to recall that for compact self adjoint operators the nullspace is the orthogonal of the closure of the range \citep{lang2012real}, so that the nullspace of $A'$ contains $\overline{V''}$ and the nullspace of $A''$ contains $\overline{V'}$. 
\end{proof}
\begin{lemma}\label{propf}
The function $f:[-1,1]\to\mathbb{R}$, defined in \eqref{deff}, is an analytic function on $(-1,1)$, whose expansion $f(\gamma) = \sum_{n\in\mathbb{N}}\al_n\,\gamma^n$
converges absolutely on $[-1,1]$. Moreover, $\al_n>0$ for all even $n\in\mathbb{N}$, $\al_1=-1/2$ and $\al_n=0$ for all odd $n\geq 3$.\\
Let $g:[-1,1]\to\mathbb{R}$ be defined as $g(\gamma) = f(\gamma)f'(\gamma)$. $g$ is analytic on $(-1,1)$ and its expansion $g(\gamma) = \sum_{n\in\mathbb{N}}\be_n\,\gamma^n$ converges absolutely on $[-1,1]$. Moreover, for all odd $n\in\mathbb N$ the coefficient $\be_n$ is strictly positive.
\end{lemma}
\begin{proof}
The claims for $f$ have been already proven in Lemma \ref{lemma:analytic_decomp_f_hat}. As for $g$, the analyticity of $f$ implies the one of $f'$, and it is easy to check the convergence on $[-1,1]$. Moreover, all the odd Taylor coefficients of $f'$ are striclty positive, as the even coefficients of $f$ are. It follows that $\beta_n>0$ for all odd $n$.
\end{proof}
\begin{prop}\label{prop:t_phi_Sd}
Given any non-zero $\f\in L^2(\Sd,\nu)$, there exists a $t_\f \in (0,1]$ such that $\langle T_\nu(q_t)\, \f,\f\rangle > 0$, for all $t\in (0,t_\f)$. 
\end{prop}
\begin{proof}
The case $\sigma_b>0$ has been already established in Proposition \ref{prop:t_phi}, hence suppose that $\sigma_b=0$.\\
First recall \eqref{eqc}
\begin{align}\label{cdt}
     \dot c_t = \frac{\swq}{2}\,f(c_t)\,.
\end{align}
Deriving once more we have
\begin{align}\label{cddt}
    \ddot c_t = g(c_t)\,,
\end{align}
where $g=ff'$ as in Lemma \ref{propf}.\\
Define the kernels $p_n$'s, and the subspaces $V'$ and $V''$ of $L^2(\Sd,\nu)$, as in Lemma \ref{density_sphere}. 
By \eqref{cdt} and \eqref{cddt} we can write
\begin{align*}
    c_t = c_0 + t\,\dot c_0 + \tfrac{t^2}{2}\,\ddot c_0 + o(t^2) = c_0 + t\,f(c_0) + \tfrac{t^2}{2}\,g(c_0) + o(t^2)\,.
\end{align*}
Since $\sigma_b = 0$, we have that $c_0(x,x') = x\cdot x'$, so that $c_0=p_1$. \\
From Lemma \ref{propf}, $T_\nu(\dot c_0)=\sum_{n\in\mathbb N}\al_n\,T_\nu(p_n)$ and $T_\nu(\ddot c_0)=\sum_{n\in\mathbb N}\be_n\,T_\nu(p_n)$, both sums converging in the operator norm. Moreover, $\al_n>0$ for all even $n$ and $\al_n=0$ for all odd $n\geq 3$, whilst $\be_n>0$ for all odd $n$.\\
In particular, by Corollary \ref{op_dec} and Lemma \ref{lem10}, we deduce that the restriction of $T_\nu(\dot c_0)|_{\overline{V'}}:\overline{V'}\to\overline{V'}$ is well defined and strictly positive, and the same holds true for the restriction $T_\nu(\ddot c_0)|_{\overline{V''}}:\overline{V''}\to\overline{V''}$. \\
Now fix a non-zero $\f\in L^2(\Sd,\nu)$. By Lemma \ref{density_sphere}, we can write $\f = \f'+\f''$, with $\f'\in \overline{V'}$, $\f''\in \overline{V''}$ uniquely determined. \\
First, suppose that $\f'\neq 0$. Using Corollary \ref{corex} and recalling that $c_0=p_1$, we get
\begin{align*}
    \langle T_\nu(c_t)\,\f,\f\rangle = t\langle T_\nu(\dot c_0)|_{\overline{V'}}\,\f',\f'\rangle+
    \langle (1+t\,\al_1)T_\nu(p_1)\,\f'',\f''\rangle+o(t)>0\,,
\end{align*}
for $t$ small enough.\\
On the other hand, for $\f'=0$, we have $\f=\f''$ and so
\begin{align*}
    \langle T_\nu(c_t)\,\f,\f\rangle = \langle (1+t\,\al_1)T_\nu(p_1)\,\f'',\f''\rangle + \tfrac{t^2}{2}\,\langle T_\nu(\ddot c_0)|_{\overline{V''}}\,\f'',\f''\rangle+o(t^2)>0
\end{align*}
for $t$ small enough.\\
So there is a $t_\f$ such that, for $t\in(0,t_\f)$, $\langle T_\nu(c_t)\,\f,\f\rangle>0$. It follows immediately that the same property is true for $T_\nu(q_t)$.
\end{proof}
\begin{lemma}\label{lemma:Sd_spd=>univ}
Let $Q$ be a kernel on $\Sd$. Then $Q$ is universal on $\Sd$ if and only if $T_\nu(Q)$ is strictly positive definite on $L^2(\Sd,\nu)$.
\end{lemma}
\begin{proof}
If $Q$ is universal, $T_\nu(Q)$ is strictly positive definite by Lemma \ref{univ=Lp_spd}. On the other hand, if $T_\nu(Q)$ is strictly positive definite, by Proposition \ref{proposition:spectral_decomposition_on_sphere} its range contains all the spherical harmonics. Since the RKHS generated by $Q$ contains the range of $T_\nu(Q)$ (Proposition 11.17 in \citep{paulsen2016RKHS}), it contains the linear span of the spherical harmonics, which is dense in $C(\Sd)$ \citep{kounchev2001multivariate}. Hence $Q$ is universal.
\end{proof}
\begin{manualprop}{\ref{prop:uniform_Sd}}
[Universality on $\Sd$]
For any $t\in (0,1]$, the covariance kernel $q_t$, solution of \eqref{recc} with $\sigma_b=0$, is universal on $\Sd$, with $d\geq 2$.
\end{manualprop}
\begin{proof}
Proceeding as in the proof of Theorem \ref{thm:expr}, using Proposition \ref{prop:dotq>=0} and Proposition \ref{prop:t_phi_Sd} we can show that $T_\nu(q_t)$ is strictly posititive definite on $L^2(\Sd,\nu)$ for all $t\in(0,1]$. We conclude by Lemma \ref{lemma:Sd_spd=>univ} that $q_t$ is universal on $\Sd$.
\end{proof}

\section{Stable ResNet with decreasing scaling}\label{app:section:decreasing_scaling}
\subsection{Proof of Proposition \ref{prop:uniform_convergence_decreasing_scaling}}
\begin{manualprop}{\ref{prop:uniform_convergence_decreasing_scaling}}[Uniform Convergence of the Kernel]
Consider a Stable ResNet with a decreasing scaling, i.e. the sequence $\{\lambda_l\}_{l\geq1}$ is such that $\sum_l \lambda_l^2 < \infty$. Then for all $(\sigma_b,\sigma_w)\in \mathbb{R}^+ \times (\mathbb{R}^+)^*$, there exists a kernel $Q_{\infty}$ on $\mathbb{R}^d$ such that for any compact set $K \subset \mathbb{R}^d$, 
\begin{align*}
    \sup_{x,x' \in K} |Q_{L}(x,x') - Q_{\infty}(x,x')| = \Theta\big(\textstyle\sum_{k \geq L} \lambda_k^2\big)\,.
\end{align*}
\end{manualprop}

\begin{proof}
Let $x,x' \in \mathbb{R}^d$. The kernel $Q_l$ is given recursively by the formula
\begin{equation*}
Q_l(x,x') =  Q_{l-1}(x,x') + \lambda_l^2 \sigma_b^2 + \frac{\sigma_w^2 \lambda_l^2}{2} \hat{f}(C_{l-1}(x,x')) \sqrt{Q_{l-1}(x,x)}\sqrt{Q_{l-1}(x',x')}\,,
\end{equation*}
where $\hat{f}(t) = 2 \mathbb{E}[\phi'(Z_1)\phi'(t Z_1 + \sqrt{1-t^2} Z_2)] = t + f(t)$ and $Z_1,Z_2$ are iid standard Gaussian variables. In particular, we have
$$
Q_l(x,x) = \lambda_l^2 \sigma_b^2 + (1 + \frac{\sigma_w^2 \lambda_l^2}{2}) Q_{l-1}(x,x)\,.
$$
which brings
$$Q_l(x,x) + \frac{2\sigma_b^2}{\sigma_w^2} = (1 + \frac{\sigma_w^2 \lambda_l^2}{2}) (Q_{l-1}(x,x) + \frac{2\sigma_b^2}{\sigma_w^2})\,,$$
Therefore, we can assume without loss of generality that $\sigma_b=0$. This yields
$$
C_l(x,x') = \frac{1}{1+\frac{\lambda_l \sigma_w^2}{2}}C_{l-1}(x,x') + \frac{\frac{\sigma_w^2 \lambda_l^2}{2}}{1+\frac{\lambda_l \sigma_w^2}{2}} \hat{f}(C^{l-1}(x,x'))\,.
$$
Letting $\alpha_l = \frac{\sigma_w^2 \lambda_l^2}{2}$ and $C_l:=C_l(x,x')$, we have that 
$$
C_l = \frac{1}{1+\alpha_l} C_{l-1} + \frac{\alpha_l}{1+\alpha_l} \hat{f}(C_{l-1})\,.
$$
Since $\hat{f}$ is non decreasing, $C^l$ is non-decreasing and has a limit $C_{\infty}(x,x')\leq1$.

Now let us prove that the convergence of $C_l$ to $C_{\infty}$ happens uniformly with a rate $\sum_{k\geq l} \lambda_l^2$. Using the recursive formula of $C_l$, and knowing that we have that 
$$
C_{\infty} - C_l = \frac{1}{1 + \alpha_l} (C_{\infty} - C_{l-1}) + \frac{\alpha_l}{1 + \alpha_l} (C_{\infty} - f(C_{l-1}))\,.
$$
Letting $\delta_l = C_{\infty} - C_l$, it is easy to see that, uniformly in $x,x' \in \mathbb{R}^d$, we have that
$$
\delta_l = \delta_{l-1} + \alpha_l + o(\alpha_l)\,.
$$
Therefore, using the fact that $C_l \leq C_\infty$, we have
$$
\sup_{(x,x') \in \mathbb{R}^d} |C_l(x,x') - C_{\infty}(x,x')| = \mathcal{O}\left(\sum_{k\geq l} \alpha_k\,\right).
$$
Moreover, we know that 
$$
Q_l(x,x) = Q_0(x,x) \prod_{k=1}^l (1 + \alpha_k)\,,
$$
so that for any compact set $K \subset \mathbb{R}^d$ 
$$
\sup_{x \in K} |Q_{l}(x,x)-Q_{\infty}(x,x)| \sim \sum_{k\geq l} \alpha_k\,.
$$
Moreover, since $C_\infty(x,x') \geq C_l(x,x')$ and $Q_\infty(x,x) \geq Q_l(x,x)$ for all $x \in \mathbb{R}^d$, we can use the fact that 
\begin{equation*}
\begin{split}
    Q_{\infty}(x,x') - Q_{l}(x,x') &= \sqrt{Q_\infty(x,x)Q_\infty(x',x')}( C_\infty(x,x')-C_l(x,x'))\\
    &+ C_l(x,x') (\sqrt{Q_\infty(x,x)Q_\infty(x',x')} - \sqrt{Q_l(x,x)Q_l(x',x')})
\end{split}
\end{equation*}
and hence conclude.
\end{proof}

\subsection{Proof of Corollary \ref{cor:universality_decreasing_scaling}}

\begin{manualcor}{\ref{cor:universality_decreasing_scaling}}
The following statements hold\\
$\bullet$~Let $K$ be a compact set of $\mathbb{R}^d$ and assume $\sigma_b>0$. Then, $Q_\infty$ is universal on $K$.\\
$\bullet$~Assume $\sigma_b=0$. Then $Q_\infty$ is universal on $\Sd$.
\end{manualcor}

\begin{proof}
Corollary \ref{cor:universality_decreasing_scaling} is a direct result of Propositions \ref{prop:universality_on_a_compact}, \ref{prop:universality_on_sphere} and \ref{prop:rkhs_hierarchy}. Indeed, for any compact $K \subset \mathbb{R}^d$, $\mathcal{H}_{Q_L}(K) \subset \mathcal{H}_{Q_\infty}(K)$ for all $L \geq 0$. Therefore, the universality of $Q_L$ for some finite $L$ is sufficient to conclude that $Q_\infty$ is universal.
\end{proof}

\section{Neural Tangent Kernel}\label{app:section:NTK}
Throughout this section, we will consider ResNets with NTK parameterization \citep{jacot}. This simply means that all the components of the biases and the weights will be initialized as iid standard normal random variables. In order to compensate this change of parameterization, the propagation through the network needs to be slightly modified. Hence \eqref{equation:scaled_Resnet} will be replaced by
\begin{align}\label{eq:NTK_param}
\begin{split}
    &y_0(x) = \tfrac{\sigma_w}{\sqrt{d}}\,W_0\,x+\sigma_b\,B_0\,;\\
    &y_l(x) = y_{l-1}(x)+\lambda_{l,L}\,\tfrac{\sigma_w}{\sqrt{N_{l-1}}}\,W_l+\sigma_b\,B_l\,.
\end{split}
\end{align}
However, it is strightforward to verify that the recurrence \eqref{rec} for the covariance kernels keeps unchanged.\\
Clearly, the dynamics of a standard ResNet with NTK parameterization can be recovered from \eqref{eq:NTK_param} by setting $\lambda_{l+1,L}=1$ for all $l,L$.

The Neural Tangent Kernel, introduced by \citep{jacot}, is defined as
\begin{align*}
    \tilde{\Theta}^{ij}_L(x,x') = \nabla_{\text{par}}\, y^i_L(x)\cdot\nabla_{\text{par}}\, y^j_L(x')\,,
\end{align*}
where $\nabla_{\text{par}}$ denotes the gradient wrt the parameters of the network. \\
The NTK of a Stable ResNet can be evaluated recursively.
We will now prove the recurrence formula \eqref{recurrence_NTK}. The following result was proven in Lemma 3 in \citep{hayou_ntk} for the case of a standard ResNet without bias. We extend it to ResNet with bias.
\begin{lemma}[Recurrence relatino for the NTK]
For a Stable ResNet, the NTK can be evaluated recursively, layer by layer, as
\begin{align}\tag{\ref{recurrence_NTK}}
\Theta_0=Q_0\,;\qquad\Theta_{l+1} = \Theta_l + \lambda_{l+1,L}^2\left(\Psi_l + \Psi'_l\,\Theta_l\right)\,,
\end{align}
where $\Psi_l(x,x') = {\sbq} +  {\swq} \E[\phi(y^l_1(x))\phi(y^l_1(x'))]$ and $\Psi'_l(x,x') =\swq \E[\phi'(y^l_1(x))\phi'(y^l_1(x'))]$.
\end{lemma}
\begin{proof}
The first result is the same as in the FFNN case \citep{jacot}, since we assume there is no residual connections between the first layer and the input. Let $x,x' \in \mathbb{R}^d$. We have 
    $$
    \Theta_0(x,x') = \sum_{j=0}^d \frac{\partial y^1_0(x)}{\partial w^{1j}_0}\frac{\partial y^1_0(x)}{\partial w^{1j}_0}   +  \frac{\partial y^1_0(x)}{\partial b^1_0}\frac{\partial y^1_0(x)}{\partial b^1_0} = \frac{\sigma_w^2}{d} x\cdot x' + \sigma_b^2\,.
    $$
We prove the second result by induction. The proof is similar to the one of ResNet in \citep{hayou_ntk}. Let $\theta_k = (W_k,B_k)$. For $l\geq 1$ and $i \in [1:N_{l+1}]$
\begin{align*}
    \partial_{\theta_{0:l}} y_{l+1}^i(x) &= \partial_{\theta_{0:l}} y_{l}^i(x)  + \lambda_{l+1,L} \frac{\sigma_w}{\sqrt{N_{l}}} \sum_{j=1}^{N_{l}} W_{l+1}^{ij} \phi'(y_{l}^{j}(x)) \partial_{\theta_{1:l}} y_{l}^j(x)\,.
\end{align*}
Therefore, we obtain
\begin{align*}
    (\partial_{\theta_{0:l}} y_{l+1}^i(x))(\partial_{\theta_{0:l}} y_{l+1}^i(x'))^t &= (\partial_{\theta_{0:l}} y_l^i(x))(\partial_{\theta_{0:l}} y_l^i(x'))^t  \\
    &+ \lambda_{l+1,L}^2\frac{\sigma_w^2}{N_l} \sum_{j, j'}^{N_{l}} W_{l+1}^{ij} W_{l+1}^{ij'}\phi'(y_l^{j}(x)) \phi'(y_l^{j'}(x'))\partial_{\theta_{0:l}} y_l^j(x) (\partial_{\theta_{0:l}} y_l^{j'}(x'))^t +I\,,
\end{align*}
where 
$$I = \lambda_{l+1,L}\frac{\sigma_w}{\sqrt{N_l}} \sum_{j=1}^{N_l} W_{l+1}^{ij} (\phi'(y_l^{j}(x)) \partial_{\theta_{0:l}} y_l^i(x) (\partial_{\theta_{0:l}} y_l^j(x'))^t +\phi'(y_l^j(x')) \partial_{\theta_{0:l}} y_l^j(x) (\partial_{\theta_{0:l}} y_l^i(x'))^t )\,.$$
We prove the result by induction. Assume the result is true for layers $1,2, ..., l$ and let us prove it for $l+1$. Using the induction hypothesis, as $N_1, N_2, ..., N_{l-1} \rightarrow \infty$ recursively, we have that 
\begin{align*}
    & (\partial_{\theta_{0:l}} y_{l+1}^i(x))(\partial_{\theta_{0:l}} y_{l+1}^i(x'))^t  + \lambda_{l+1,L}^2 \frac{\sigma_w^2}{N_l} \sum_{j, j'}^{N_{l}} W_{l+1}^{ij} W_{l+1}^{ij'}\phi'(y_l^{j}(x)) \phi'(y_l^{j'}(x'))\partial_{\theta_{0:l}} y_l^j(x) (\partial_{\theta_{0:l}} y_l^{j'}(x'))^t
    +I \\
    &\rightarrow \Theta_l(x,x')  + \lambda_{l+1,L}^2 \frac{\sigma_w^2}{N_l} \sum_{j}^{N_{l}} (W_{l+1}^{ij})^2 \phi'(y_l^{j}(x))\phi'(y_l^j(x')) \Theta_l(x,x') 
    +I'\,,
\end{align*}
where $I' = \frac{\sigma_w^2}{N_l} W_{l+1}^{ii} (\phi'(y_l^{i}(x)) + \phi'(y_l^i(x')))\, \Theta_l(x,x').$\\

 As $N_l \rightarrow \infty$,  we have that $I' \rightarrow 0$. Using the law of large numbers, as $N_l \rightarrow \infty$ 
 $$
\frac{\sigma_w^2}{N_l} \sum_{j}^{N_{l}} (W_{l+1}^{ij})^2 \phi'(y_l^{j}(x)) \phi'(y_l^j(x')) \Theta_l(x,x') \rightarrow \Psi'_l \Theta_l(x,x').
 $$
Moreover, we have that 
\begin{align*}
(&\partial_{W_{l+1}} y_{l+1}^i(x))(\partial_{W_{l+1}} y_{l+1}^i(x'))^t +  (\partial_{B_{l+1}} y_{l+1}^i(x))(\partial_{B_{l+1}} y_{l+1}^i(x'))^t\\ &= \frac{\sigma_w^2}{N_l} \sum_{j} \phi(y_l^j(x)) \phi(y_l^j(x')) + \sigma_b^2
\underset{N_l \rightarrow \infty}{\rightarrow} \sigma_w^2 \mathbb{E}[\phi(y_{l}^1(x))\phi(y_{l}^1(x'))] + \sigma_b^2 = \Psi_l\,,
\end{align*}
and so we conclude.
\end{proof}
As a corollary of the above result, using the results in \citep{daniely2016deeper} for the ReLU activation function, we can express the recursion more explicitly. We have
\begin{align*}
    \Psi_l = \sbq + \tfrac{\swq}{2}\left(1+\tfrac{f(C_l)}{C_l}\right)Q_l\,;\qquad\Psi'_l = \tfrac{\swq}{2}(1+f'(C_l))\,,
\end{align*}
where $f$ is defined in \eqref{deff} and $f':\gamma\mapsto -\tfrac{1}{\pi}\arccos\gamma$ is the first derivative of $f$. So we can write
\begin{align}\label{eq:recurrence_relu_ntk}
    \Theta_{l+1} = \Theta_l + \lambda_{l+1,L}^2\left(\sbq + \tfrac{\swq}{2}\left(1+\tfrac{f(C_l)}{C_l}\right)Q_l+\tfrac{\swq}{2}(1+f'(C_l))\,\Theta_l\right).
\end{align}
We can now easily check that the NTK is a kernel in the sense of Definition \ref{def:kernel}.
\begin{lemma}[$\Theta_l$ is a kernel]\label{NTK_is_kernel}
For all layer $l$, $\Theta_L$ is a kernel in the sense of definition \eqref{def:kernel}.
\end{lemma}
\begin{proof}
It's clear that $\Theta_0=Q_0$ is a kernel. Now fix any layer $l$. We have already proved in Lemma \ref{lemma:Ql_is_kernel} that $\left(1+\tfrac{f(C_l)}{C_l}\right)Q_l$ is a kernel. With a similar argument, noting that $1+f'$ can be expressed as a power series with only non negative coefficients on $[-1,1]$, we conclude by Lemma \ref{schur} that $1+f'(C_l)$ is a kernel. Using the usual argument that sums and product of kernels are kernels, we conclude by induction that $\Theta_l$ is a kernel. 
\end{proof}
As a final remark, note that from \eqref{eq:recurrence_resnet}, we have that $\lambda_{l,L}^2\Psi_l = Q_{l+1}-Q_l$. Hence we can rewrite \eqref{eq:recurrence_relu_ntk} as 
\begin{align}\label{deltathetadeltaq}
    \Theta_{l+1} - \Theta_l  = Q_{l+1}-Q_l+ \lambda_{l+1,L}^2\tfrac{\swq}{2}(1+f'(C_l))\,\Theta_l\,.
\end{align}
Since $1+f'$ is non negative on $[-1,1]$, it is easy to show by induction that $\Theta_l\geq Q_l$, point-wise, for all $l$. This is done explicitly in the next Lemma, which is a Corollary of Lemma \ref{lemma:exploding} and show the divergence of the NTK for a Standard ResNet.
\begin{lemma}[Exploding NTK]
Consider a ResNet of form \eqref{equation:Resnet_dynamics}. For all $x \in \mathbb{R}^d$,
\begin{align}
\Theta_{L}(x,x) \geq \left(1+\tfrac{\sigma_w^2}{2}\right)^{L} \left(Q_0(x,x)+\tfrac{2\sbq}{\swq}\right)\,.
\end{align}
\end{lemma}
\begin{proof}
By Lemma \ref{lemma:exploding}, it suffices to show that $\Theta_L(x,x)\geq Q_L(x,x)$. \\
Recall \eqref{eq:recurrence_relu_ntk}, noticing that $1+f'\geq 0$ on $[-1,1]$, $\left(1+\tfrac{f(C_l)}{C_l}\right)Q_l\geq 0$ and that $\Theta_0 = Q_0\geq 0$, by an easy induction we have that $\Theta_l\geq 0$ for all $l$. As a consequence, from \eqref{deltathetadeltaq}, we get that $\Theta_{l+1}-\Theta_l\geq Q_{l+1}-Q_l$. Hence, again with a straightforward induction we have that $\Theta_l\geq Q_l$ for all $l$ and the the whole $K^2$. In particular $\Theta_L(x,x)\geq Q_L(x,x)$ for all $x\in K$. 
\end{proof}

\begin{lemma}[Normalized NTK recursion]\label{lemma:stable_ntk_recursion}
Consider a ResNet of type \eqref{equation:Resnet_dynamics} without bias, and let $\alpha = \frac{\sigma_w^2}{2}$. The NTK recursion formula can be written in terms of normalized NTK $\kappa^l(x,x') = \Theta_l(x, x') / (1 + \alpha)^{l-1}$
$$
\kappa_l(x,x') = \left(\frac{1 + \alpha \hat{f}'(C_{l-1}(x,x'))}{1 + \alpha}\right) \kappa_{l-1}(x,x')  + \alpha \hat{f}(C_{l-1}(x,x')) \sqrt{Q_0(x,x)Q_0(x',x')}\,,
$$
where $\hat{f}$ is given by \eqref{equation:f_hat}, $\hat{f}(t) = \frac{1}{\pi} (t\,\arcsin{t} +\sqrt{1-t^2}) + \frac{1}{2}t$.
\end{lemma}

\begin{proof}
Let $x,x' \in \mathbb{R}^d$. For a ResNet of type \eqref{equation:Resnet_dynamics}, we have that 
$$
\Theta_{l} = \Theta_{l-1} + \left(\Psi_{l-1} + \Psi'_{l-1}\,\Theta_{l-1}\right)\,,
$$
where $\Psi_{l-1} = \alpha Q_{l-1}(x,x')$ and $\Psi_{l-1}' = \alpha \hat{f}'(C_{l-1})$. Using the recursive formula for the diagonal elements, we have that $\Psi_{l-1} = \alpha (1+\alpha)^{l-1} \hat{f}(C_{l-1}(x,x')) \sqrt{Q_0(x,x)Q_0(x',x')}$. We conclude by dividing both sides by $(1+\alpha)^{l-1}$.
\end{proof}

\subsection{Proof of Proposition \ref{prop:decreasing_ntk_universal}}

\begin{manualprop}{\ref{prop:decreasing_ntk_universal}}
Fix a compact $K\subset\R^d$ ($0\notin K$ if $\sigma_b=0$) and consider a Stable ResNet with decreasing scaling. Then $\Theta_L$ converges uniformly over $K^2$ to a kernel $\Theta_\infty$. Moreover $\Theta_\infty$ is universal on $K$ if $\sigma_b>0$. If $K=\Sd$, then the universality holds for $\sigma_b=0$.
\end{manualprop}

\begin{proof}
Let $K\subset\R^d$ ($0\notin K$ if $\sigma_b=0$) be a compact. From \eqref{eq:recurrence_relu_ntk}, with a decreasing scaling, we have that
\begin{equation*}
\begin{aligned}
\Theta_{l} &= \Theta_{l-1} + \lambda_{l}^2\left(\Psi_{l-1} + \Psi'_l\,\Theta_{l-1}\right)\\
&= \left(1 + \lambda_l^2\frac{\sigma_w^2}{2} f'(C_{l-1})\right)\Theta_{l-1} + \lambda_{l}^2\Psi_{l-1}\,.
\end{aligned}
\end{equation*}
Therefore, the NTK can be expressed exclusively in terms of the covariance kernels $(Q_k)_{ k\in [0:l-1]}$, more precisely we have that 
$$
\Theta_l = \prod_{k=1}^l \left(1 + \lambda_k^2 \frac{\sigma_w^2}{2} f'(C_{k-1})\right)  Q_0 + \sum_{k=1}^l \lambda_l^2 \prod_{j=k}^l \left(1 + \lambda_j^2 \frac{\sigma_w^2}{2} f'(C_{j-1})\right) \Psi_{k-1}\,.
$$
It is straightforward that $\Theta_l$ converges pointwise to a limiting kernel $\Theta_\infty$. Let us prove that the convergence is uniform over $K$. By observing that $|f'| \leq 1$, we have that for all $x,x' \in K$
\begin{equation*}
\begin{aligned}
\left| \Theta_\infty(x,x') - \Theta_l(x,x') \right| &\leq \prod_{k=1}^l \left(1 + \lambda_k^2 \frac{\sigma_w^2}{2}\right) \left| \prod_{k=l+1}^\infty \left(1 + \lambda_k^2 \frac{\sigma_w^2}{2}\right) - 1\right| Q_0(x,x') \\
&+ \sum_{k=l+1}^\infty \lambda_k^2 \prod_{j=k}^l \left(1 + \lambda_j^2 \frac{\sigma_w^2}{2}\right) \Psi_{k-1}(x,x')\\
&\leq \kappa \sum_{k=l+1}^\infty \lambda_k^2\,.
\end{aligned}
\end{equation*}
where $\kappa$ is a constant that depends on the compact $K$. This proves the uniform convergence with a rate of $\mathcal{O}\left(\sum_{k=l+1}^\infty \lambda_k^2\right)$. As a consequence, being a uniform limit of kernels, $\Theta_\infty$ is a kernel.

Proceeding as in the proof of Lemma \ref{NTK_is_kernel}, it's easy to prove by induction that for all $l$, $\Theta_l-Q_l$ is a kernel. In particular,
$$
T(\Theta_l) \succeq T(Q_l)\,,
$$
where $\succeq$ is in the operator sense, that is $T(\Theta_l)-T(Q_l)$ is non-negative definite. This yields 
$$
T(\Theta_\infty) \succeq T(Q_\infty)\,.
$$
Therefore $\Theta_\infty$ inherits the universality of $Q_\infty$ naturally by the RKHS hierarchy \citep{paulsen2016RKHS}. We conclude that $\Theta_\infty$ is universal (for both cases).
\end{proof}

For the rest of this section, let $K\subset\R^d$ by a compact set. If $\sigma_b=0$, assume that $0\notin K$.

With the uniform scaling, for arbitrary $x,x'\in K$, the continuous version of \eqref{recurrence_NTK} reads
\begin{align}\label{eq:ntk_cont}
\begin{split}
&\dot \theta_t(x,x') = \dot q_t(x,x') + \tfrac{\swq}{2} (1+f'(c_t(x,x')))\,\theta_t(x,x')\,;\\
&\theta_0 = q_0\,,
\end{split}
\end{align}
where $f':\gamma\mapsto -\tfrac{1}{\pi}\arccos\gamma$ is the first derivative of $f$, defined in \eqref{deff}.
\begin{lemma}\label{ex_NTK}
For any $x,x'$ in $K$, the solution $t\mapsto\Theta_t$ of \eqref{eq:ntk_cont} is unique and well defined for all $t\in[0,1]$. Moreover, the map $(x,x')\mapsto \Theta_t(x,x')$ is a kernel in the sense of Definition \ref{def:kernel} for all $t\in[0,1]$.\\
We have the $L^2(K)$ convergence of the discrete model to the continuous one:
\begin{align*}
    \lim_{L\to\infty}\sup_{l\in[0:L]}\left\|T(\Theta_{l|L})-T(\theta_{t=l/L})\right\|_2=0\,.
\end{align*}
\end{lemma}
\begin{proof}
The existence and the uniqueness are clear, since it is a homogeneous first order Cauchy problem, with continuous coefficients. We can write explicitly the solution as 
\begin{align}\label{eq:theta_implicit}
    \theta_t = e^{G_t}\left(q_0+\int_0^t\dot q_s\,e^{-G_s}\,\dd s\right)\,,
\end{align}
where $G_t(z) = \tfrac{\swq}{2}\,\int_0^t(1+f'(c_s(z)))\,\dd s$ for $z\in K^2$. It becomes then clear that $z\mapsto\Theta_t(z)$ is a continuous and symmetric function on $K^2$.\\
it is easy to check that the uniform convergence of $C$ and $Q$ to $c$ and $q$ implies that for all $z\in K$, $\lim_{L\to\infty}\sup_{l\in[0:L]}|\Theta_{l|L}(z)-\theta_{l/L}(z)|=0$. As consequence, by dominated convergence, 
\begin{align*}
    \lim_{L\to\infty}\sup_{l\in[0:L]}\left\|T(\Theta_{l|L})-T(\theta_{t=l/L})\right\|_2=0\,.
\end{align*}
Hence, $T(\theta_t)$ is the limit of a sequence of non-negative definite operators and hence it is non-negative definite, so that $\theta_t$ is a kernel on $K$ for all $t\in [0,1]$.
\end{proof}

\begin{manualprop}{\ref{prop:uniform_ntk_universal}}
Let $K\subset\R^d$ and fix $t\in(0,1]$. If $\sigma_b>0$, then $\theta_t$ is universal on $K$. The same holds true if $\sigma_b=0$ and $K=\Sd$.
\end{manualprop}
\begin{proof}
Fix $t\in(0,1]$. The solution of \eqref{eq:ntk_cont} can be written as $\theta_t = q_t + r_t$, where
\begin{align*}
    r_t = \tfrac{\swq}{2}\,\int_0^t (1+f'(c_s))\,\theta_s\,\dd s\,.
\end{align*}
Now, let us show that $r_t$. First, by Lemma \ref{propf} it is easy to check that $1+f'$ is analytic on $(-1,1)$ and its Taylor expansion around $0$ converges on $[-1,1]$. Moreover all the Taylor coefficients are non negative. Hence, Lemma \ref{schur} shows that $(1+f'(c_s))$ is a kernel for all $s\in[0,s)$. It follows that $(1+f'(c_s))\,\theta_s$ is a kernel.\footnote{See footnote \ref{footnote_schur}.} Now, $(1+f'(c_s)\,\theta_s$ is continuous and symmetric on $Z^2$, and it is easy to check from \eqref{eq:theta_implicit} that it is uniformly bounded for $s\in[0,t)$. It follows that $r_t$ is continuous and symmetric. Now, fix an arbitrary finite Borel measure $\mu$ on $K$. We have to show that $T_\mu(r_t)$ is non-negative definite, so that we can conclude by Lemma \ref{T_mu(Q)}. Fixed $\f\in L^2(K,\mu)$, by simple standard arguments we have
\begin{align*}
    \langle T_\mu(r_t)\,\f,\f\rangle = \int_0^t\langle T_\mu((1+f'(c_s))\,\theta_s)\,\f,\f\rangle\,\dd s\geq 0
\end{align*}
and so $r_t$ is a kernel.\\
Now, given two kernels $Q$ and $R$, it is a classical result that $Q+R$ is a kernel and its RKHS contains the RKHS of $Q$ and $R$, \citep{paulsen2016RKHS}. We conclude that the RKHS of $\theta_t$ contains the RKHS of $q_t$. Since $q_t$ is universal, $\theta_t$ is universal.
\end{proof}

\section{A PAC-Bayes Generalization result}\label{app:section_PAC-Bayes}
In this section, we study the PAC-Bayes upper bound of a GP with kernel $Q_L$. We consider a dataset $S$ with $N$ iid training examples $\{(x_i,y_i) \in X\times Y, i\in[1:N]\}$, and a hypothesis space $\mathcal{H}$ from which we want to learn an optimal hypothesis according to some bounded loss function $\ell : Y\times Y \rightarrow [0,1]$. The empirical loss of a hypothesis $h \in \mathcal{H}$ is given by
$$
r_S(h) = \frac{1}{N} \sum_{i=1}^N \ell(h(x_i), y_i)\,.
$$
Assuming that the samples are distributed as $(x,y) \sim \nu$ where $\nu$ is a probability distribution on $X \times Y$, we define the generalization (true) loss by 
$$
r(h) = \mathbb{E}_{\nu}[\ell(f(x),y)]\,.
$$
For some randomized learning algorithm $\mathcal{A}$, the empirical and generalization loss are given by
$$
r_S(\mathcal{A}) = \mathcal{E}_{h \sim \mathcal{A}}[r_s(h)]\,; \qquad r(\mathcal{A}) = \mathcal{E}_{h \sim \mathcal{A}}[r(h)]\,.
$$
The PAC-Bayes theorem gives a probabilistic upper bound on the generalization loss $r(\mathcal{A})$ of a randomized learning algorithm $\mathcal{A}$ in terms of the empirical loss $r_S(\mathcal{A})$. Fix a prior distribution $\mathcal{P}$ on the hypothesis set $\mathcal{H}$. The Kullback-Leibler divergence between $\mathcal{A}$ and $\mathcal{P}$ is defined as $KL(\mathcal{A} \| \mathcal{P}) = \int \log \frac{\mathcal{A}(h)}{P(h)} \mathcal{A}(h) dh \in [0, \infty]$. The Bernoulli KL-divergence is given by $kl(a||p) = a \log \frac{a}{p} + (1-a) \log \frac{1-a}{1-p} $  for $a,p \in [0,1]$. We define the inverse Bernoulli KL-divergence $kl^{-1}$ by 
$$
kl^{-1}(a,\varepsilon) = \sup\{ p \in [0,1] : kl(a,p) \leq \varepsilon \}\,.
$$
\begin{manualthm}{\ref{thm:pac-bayes_theorem}}[PAC-Bayesian theorem]
For any loss function $\ell$ that is $[0,1]$ valued, for any distribution $\nu$, for any $N \in \mathbb{N}$, for any prior $P$, and any $\delta \in (0,1]$, with probability at least $1-\delta$ over the sample $S$, we have 
$$
\forall \mathcal{A}, \hspace{0.25cm} r(\mathcal{A}) \leq  kl^{-1}\left(r_S(\mathcal{A}), \frac{KL(\mathcal{A}\|P) + \log \frac{2 \sqrt{N}}{\delta}}{N}\right).
$$
\end{manualthm}
The PAC-Bayesian theorem gives can also be stated as
$$ kl(r_S(\mathcal{A}), r(\mathcal{A})) \leq  \frac{KL(\mathcal{A}\|P) + \log \frac{2 \sqrt{N}}{\delta}}{N}\,.
$$
The KL-divergence term $KL(\mathcal{A}\|P)$ plays a major role as it controls the generalization gap, i.e. the difference (in terms of Bernoulli KL-divergence) between the empirical loss and the generalization loss. In our setting, we consider an ordinary GP regression with prior $P(f) = \mathcal{GP}(f | 0, Q(x,x'))$. Under the standard assumption that the outputs $y_N = (y_i)_{i \in [1:N]}$ are noisy versions of $f_N = (f(x_i))_{ i \in [1:N]}$ with $y_N | f_N \sim \mathcal{N}(y_N | f_N, \sigma^2 I)$, the Bayesian posterior $\mathcal{A}$ is also a GP and is given by 
\begin{equation}
\begin{split}
\mathcal{A}(f) = \mathcal{GP}(f | Q_N(x)(Q_{NN} + \sigma^2 I)^{-1} y_N, Q(x,x') - Q_N(x)(Q_{NN} + \sigma^2 I)^{-1})Q_N(x')^T)\,,
\end{split}
\end{equation}
where $Q_N(x) = (Q(x,x_i))_{i \in [1:N]}$ and $Q_{NN} = (Q(x_i, x_j))_{1\leq i,j \leq N}$. In this setting, we have the following result
\begin{manualprop}{\ref{prop:pac-bayes_bound}}[Stability of PAC-Bayes bound]
Let $Q_L$ be the kernel of a ResNet. Let $P_L$ be a GP with kernel $Q_L$ and $\mathcal{A}_L$ be the corresponding Bayesian posterior for some fixed noise level $\sigma>0$. Then, in a fixed setting (fixed sample size N), the following results hold:
\begin{enumerate}
    \item With a standard ResNet, we have 
    $$
    KL(\mathcal{A}_L \| P_L) \gtrsim L\,.
    $$
    \item With a Stable ResNet, we have 
    $$
    KL(\mathcal{A}_L \| P_L) = \mathcal{O}_L(1)\,.
    $$
\end{enumerate}
\end{manualprop}
\begin{proof}
The proof relies on the simple observation that $P_L(f | f_N) = \mathcal{A}_L(f | f_N)$. This yields
\begin{equation}
\begin{split}
KL(\mathcal{A}_L \| P_L) &= KL(\mathcal{A}_L(f_N) \mathcal{A}_L(f | f_N) \| P_L(f_N) P_L(f | f_N)) \\
&= KL(\mathcal{A}_L(f_N) \| P_L(f_N)) \\
&= \frac{1}{2} \log(\det (Q_{L,NN} + \sigma^2 I)) - \frac{N}{2} \log(\sigma^2) - \frac{1}{2} \Tr (Q_{L,NN}(Q_{L,NN} + \sigma^2I)^{-1})\\
&+ \frac{1}{2} y_N^T (Q_{L,NN} + \sigma^2 I)^{-1} Q_{L,NN} (Q_{L,NN} + \sigma^2 I)^{-1} y_N\,,
\end{split}
\end{equation}
where $Q_{L,NN} = (Q_{L}(x_i,x_j))_{1\leq i, j \leq N}$.

Since $Q_{L,NN}$ is symmetric and strictly positive definite, it is straightforward that the largest eigenvalue of $Q_{L,NN}(Q_{L,NN} + \sigma^2I)^{-1})$ is smaller than $1$. This yields   
$$\Tr (Q_{L,NN}(Q_{L,NN} + \sigma^2I)^{-1}) \leq N$$ and $$y_N^T (Q_{L,NN} + \sigma^2 I)^{-1} Q_{L,NN} (Q_{L,NN} + \sigma^2 I)^{-1} y_N \leq \sigma^{-2} \|y_N\|_2\,.$$
Both quantities are bounded independently from $L$ and the scaling factors $(\lambda_{k,L})_{k\in[2:L]}$. \\
Now let us analyse the first term $\frac{1}{2} \log(\det (Q_{L,NN} + \sigma^2 I))$. Let $\mu_{L,0} \geq \mu_{L,1} \geq \dots \geq \mu_{L,N}$ be the eigenvalues of $Q_{L,NN}$. For a simplification purpose, we assume the inputs belong to the unit sphere $\Sd$. The proof extends to any compact set.\\
Let us study the behaviour of the first term for both cases.

\textit{Case 1.} Assume we have a standard ResNet architecture. On the unit sphere $\mathbb{S}^{d-1}$, we have that $Q_L(x,x') \geq q_L C_L(x,x')$, where $q_L = (1 + \frac{\sigma_2}{2})^{L} \delta$ with $\delta = (\sigma_b^2 + \frac{\sigma_w^2}{d}) / (1 + \frac{\sigma_w^2}{2})$. Using Lemma \ref{lemma:infnite_depth_standard_resnet}, we know that $\lim_{L \rightarrow \infty} \hat{\mu}_{L,0} =  \hat{\mu}_{\infty,0} \in (0,\infty)$ and for all $k\geq 1$, $\lim_{L \rightarrow \infty} \hat{\mu}_{L,k} = 0$. This yields
    \begin{equation*}
    \begin{split}
        \log(\det (Q_{L,NN} + \sigma^2 I)) &\geq \sum_{k=1}^N \log(q_L \hat{\mu}_{L,k} + \sigma^2)\\
        &\geq \log(q_L \hat{\mu}_{L,0} + \sigma^2) + (N-1)\log(\sigma^2)\\
        &\gtrsim L\log(1+\frac{\sigma_w^2}{2})\,,
    \end{split}
    \end{equation*}
    where the last inequality holds for sufficiently large $L$.
    
\textit{Case 2.} In the case of Stable ResNet, we know that as $L \rightarrow \infty$, the kernel $Q_L$ converges to a strictly positive definite kernel $Q_\infty$, therefore the first term $\log(\det (Q_{L,NN} + \sigma^2 I))$ remains bounded as $L \rightarrow \infty$, which concludes the proof.
\end{proof}
\section{NNGP correlation kernel without bias as a modified NNGP kernel}\label{app:corr_as_nngp}
Unscaled ResNets suffer from the exploding variance problem, which needs to be avoided in order to isolate the disadvantages of inexpressivity in their NNGP kernel. In order to do so, we use the NNGP correlation kernel $C$ instead of NNGP covariance kernel $Q$, noting that Lemma \ref{lemma:corr_formula} provides a simple recursion formula for $C$ if $\sigma_b=0$, at depth $l\leq L$:

\begin{equation}\label{equation:corr_repeated}
C_l(x,x') = \frac{1}{1+\alpha_{l.L}} C_{l-1}(x,x') + \frac{\alpha_{l,L}}{1 + \alpha_{l,L}} \hat{f}(C_{l-1}(x,x') )\,,
\end{equation}
where $\alpha_{l,L} = \frac{\lambda_{l,L}^2 \sigma_w^2}{2}$ and $\hat{f}$ defined in \eqref{equation:f_hat}.
In order to combine this with open-source packages \citep{neuraltangents2020, jax2018github} designed for NNGP calculation, we note that \eqref{equation:corr_repeated} can be viewed as the NNGP kernel of the following modified ResNet layer, using the same notation as in \eqref{equation:scaled_Resnet}:
\begin{align}\label{equation:corr_Resnet}
\begin{split}
&y_l(x) = \sqrt{1 - \hat{\alpha}_{l,L}} y_{l-1}(x) + \sqrt{\hat{\alpha}_{l,L}}\, \mathcal{F}((W_l,B_l), y_{l-1})\,, \quad l \in [1:L]\,,
\end{split}
\end{align}
with $\hat{\alpha}_{l,L}=\frac{\alpha_{l,L}}{1+\alpha_{l,L}}$
\section{Experimental details and additional results}\label{app: experimental details}
\subsection{NNGP results}
For our Vanilla ResNet NNGP results, we preprocess all training, validation and test data by first centering the training set and then normalizing all images to lie on the pixel dimension sphere. For our Wide ResNet NNGP results we normalise all data so that the training set is centered and has channel-wise unit variance. We use Kaiming \citep{he2} initialisation throughout, with $\swq=2$ and $\sbq=0$. Vanilla ResNets have the same structure as type \eqref{equation:scaled_Resnet} in Table \ref{tab:vrn_krr} and we use the same WRN kernel architecture as \citep{lee_wide_nn_ntk} in Table \ref{table: wrn_krr} but omit the final average pooling step, which is known to improve kernel performance but dramatically increase computational costs \citep{novak2018bayesian, lee2020finite}. Throughout this work, where there are residual blocks with multiple layers, we calculate our scaling factors for uniform and decreasing scaled Stable ResNets by the number of residual connections. For example, a WRN-202 has only 99 residual connections, so we set $\lambda_{l,L}^{-1}=\sqrt{99}$ for the uniform scaling factors.
We tune the noise variance $\sigma^2$, which is akin to the regularisation parameter in kernel ridge regression. To do so, we compute validation accuracy on a validation set of size 5000, selecting the best $\sigma^2=\lambda \times \text{Trace}(Q_{NN})/N$ from a logarithmic scale of $\lambda=[0.001, 0.01, 0.1]$, where $N$ is the training set size and $Q_{NN}$ is the $N\times N$ training set Gram matrix for NNGP $Q$. 
\subsection{Trained ResNet results}
For all our trained ResNet experiments we use a similar setup to the open-source code for \citep{Wang2020Picking} in PyTorch \citep{paszke2019pytorch}. We repeat each experiment 3 times and report the best test accuracy and error intervals. 
All ResNets are initialised with Kaiming initialisation \citep{he2} and like \citep{Wang2020Picking} we adopt ResNets architectures where we double the number of filters in each convolutional layer. For experiments with BatchNorm, on CIFAR-10/100 we use batch size 64 across all depths and on TinyImageNet we used batch size 128 for depths 32 \& 50, and batch size 100 for depth 104 in order to allow the model to fit onto a single 11GB VRAM GPU. We use SGD with momentum parameter $0.9$ and weight decay parameter $10^{-4}$ throughout.

We also present results for ResNets trained without BatchNorm \citep{ioffe2015batch}. BatchNorm is a normalization layer commonly used with modern ResNets that is known to improve performance and allows deeper ResNets to be trained, though the precise reasons for this are not well understood. Several recent works \citep{de2020batch, zhang2019fixup} have studied the possibility of removing the need for BatchNorm layers, by introducing trainable uniform scalings to the residual connection to stabilise variance at initialisation \& gradients, demonstrating promising results. Note, our work additionally introduces decreasing scaling and also uses the infinite-width NNGP/NTK connection to assess the theoretical advantages of scaled Stable ResNets in the limit of infinite depth. 

Moreover, our focus is not towards the possibility of removing BatchNorm and we show in Table \ref{table: NN results} that our scalings can improve BatchNorm ResNets. However, we also present results without BatchNorm in Table \ref{table: NN results no batchnorm}, where again we see that our scaled stable ResNets improve performance compared to their unscaled counterparts: for example both Decreasing and Uniform scaling outperform the unscaled ResNet by over 3\% test accuracy on CIFAR-100 with ResNet-104.

For ResNets trained without BatchNorm, for a fair comparison we tuned the initial learning rate on a small logarithmic scale, using batch size 128.
\begin{table}[ht]
\caption{Test accurracies (\%) of trained deep ResNets \textbf{without BatchNorm} of various scalings and depths on CIFAR-10 (C-10), CIFAR-100 (C-100).}\label{table: NN results no batchnorm}
\setlength\tabcolsep{5pt}
\centering
\begin{tabular}{lllll}
\toprule
Dataset & Depth &               Scaled (D) &              Scaled (U) &              Unscaled \\
\midrule
C-10 & 32  &  ${92.64}_{\pm 0.19}$ &     ${92.78}_{\pm 0.18}$ &  ${92.11}_{\pm 0.17}$ \\
& 50  &  ${92.33}_{\pm 0.05}$ &  $\bm{92.72}_{\pm 0.12}$ &  ${92.10}_{\pm 0.17}$ \\
& 104 &  ${92.81}_{\pm 0.09}$ &  $\bm{93.28}_{\pm 0.17}$ &  ${92.70}_{\pm 0.08}$ \\
\midrule

C-100 & ResNet32  &  $\bm{67.73}_{\pm 0.42}$ &     ${67.06}_{\pm 0.38}$ &  ${65.37}_{\pm 0.32}$ \\
& ResNet50  &  $\bm{69.38}_{\pm 0.20}$ &     ${68.76}_{\pm 0.18}$ &  ${66.02}_{\pm 0.41}$ \\
& ResNet104 &     ${70.60}_{\pm 0.52}$ &  $\bm{70.95}_{\pm 0.13}$ &  ${67.41}_{\pm 0.41}$ \\
\bottomrule
\end{tabular}
\end{table}
\section{Some results on the Sphere $\Sd$}\label{app:sphere}
On the sphere $\Sd$, the kernel $Q_2$ is analytic as a result of lemma \ref{lemma:analytic_decomp_f_hat}. Moreover, the coefficient of the analytic decomposition are all positive.
\begin{lemma}[Analytic decomposition of 2 layer ReLU ResNet]\label{lemma:analytic_decom_relu_3layers}
For all $(x,x') \in \Sd$, $Q_2(x,x') = g(x \cdot x')$ where $g(z) = \sum_{i\geq 0} a_i z^i$ and $a_i>0$ for all $i\geq 0$. 
\end{lemma}
\begin{proof}
Let $x,x' \in \Sd$. We have 
$$
Q_0(x,x') = \sigma_b^2 + \frac{\sigma_w^2}{d} x \cdot x'\,.
$$
As a result, for all $x,x'$, $Q_0(x,x) = Q_0(x',x') = \sigma_b^2 + \frac{\sigma_w^2}{d}$. The diagonal term of the kernel is the same for all $x \in \Sd$. We note $\beta_l = Q_l(x,x)$ and $z = x \cdot x'$. Using this observation, we have that
\begin{align*}
Q_1(x,x') &= Q_0(x,x') + \lambda_1^2 (\sigma_b^2 + \frac{\sigma_2}{2} \hat{f}(C_0(x,x')) \beta_0)\,.
\end{align*}
It can be easily deduced from lemma \ref{lemma:analytic_decomp_f_hat} that there exist $\{b_i\}_{i \geq 0}$ such that
$$
C_1(x,x') = b_0 + b_1 z + \sum_{i \geq 0} b_{2i} z^{2i}\,,
$$
where $b_0,b_1,b_{2i} > 0$.\\
Following the same approach, we have that
\begin{align*}
Q_2(x,x') &= Q_1(x,x') + \lambda_2 (\sigma_b^2 + \frac{\sigma_w^2}{2} f(C_1(x,x')) \beta_2)
\end{align*}
and 
\begin{align*}
\hat{f}(C_1(x,x')) &= a_0 + a_1 C_1(x,x') + \sum_{i\geq 1} a_{2i} (C_1(x,x'))^{2i}\,.
\end{align*}
Having the terms of orders 0 and 1 in $C_1(x,x')$ ensures having a positive coefficient for all terms $z^i$ for $i\geq 1$, which concludes the proof.
\end{proof}

The previous result can be easily extended to general $L \geq 2$. We have that
$$
Q_L(x,x') = g_L(x\cdot x')\,,
$$
where $g_L : [-1,1] \rightarrow \mathbb{R}$ is a continuous function. Kernels that can be written in this form are known as the dot-product kernels (or zonal kernels on the unit sphere). In our setting, we have a stronger property; we prove in the next result we show a that the kernel $Q_L$ is analytic on the sphere $\mathbb{S}^{d-1}$ in the sense that the function $g_L$ is analytic on $[-1,1]$.
\begin{prop}[$Q_L$ is analytic]\label{prop:analytic_decomposition_Q_L}
Let $L\geq 2$, there exists $(\alpha_{L,i})_{i\geq 0}$ such that for all $x,x' \in \mathbb{S}^{d-1}$
$$
Q_L(x,x') = \sum_{i\geq 0} \alpha_{L,i} (x \cdot x')^i\,.
$$
Moreover, $(\alpha_{L+1, i})_{i\geq 0}$ can be expressed in terms of $(\alpha_{L, i})_{i\geq 0}$
\begin{equation}
    \alpha_{L+1,i} = \alpha_{L,i} + \lambda_{L+1,L+1} \times  \gamma_{L,i}\,,
\end{equation}
with
\begin{equation*}
    \gamma_{L,i} =
    \begin{cases*}
      \sigma_b^2 + \beta_L \frac{\sigma_w^2}{2}  \sum\limits_{m\geq0} \frac{a_m}{\beta_L^m} \alpha_{L,0}^m   & if $i = 0$\,; \\
      \beta_L \frac{\sigma_w^2}{2}  \sum\limits_{m\geq0} \frac{a_m}{\beta_L^m} \sum\limits_{k_1+...+k_m=i} \prod\limits_{j=1}^m \alpha_{L,k_j}& if $i \geq 1$\,.
    \end{cases*}
\end{equation*}
where $\beta_L = Q_L(x,x) = Q_L(x',x') = \sum_{i\geq0} \alpha_{L,i}$ and $(a_m)_{m\geq0}$ is such that $a_0,a_1 >0$ and $a_{2i}>0$ and $a_{2i+1} = 0$ for all $i\geq 1$. \\
As a result, for all $L\geq 2, i\geq0, \alpha_{L,i}>0$.
\end{prop}
\begin{proof}

The result is true for $L=2$ by lemma \ref{lemma:analytic_decom_relu_3layers}. Let us prove the result for all $L\geq3$ by induction.\\
Let $ L\geq 3$, $x,x' \in \Sd$, $z = x\cdot x'$ and $\beta_l = Q_l(x,x) =Q_l(x',x')$. Assume the result is true for $L$ and let us prove it for $L+1$. We have that
$$
Q_{L+1}(x,y) = Q_{L}(x,y) + \lambda_{L+1,L+1}^2 (\sigma_b^2 + \frac{\sigma_w^2}{2} f(C_{l}(x,y)) \beta_l)\,.
$$
Knowing that $C_l(x,y) = \frac{1}{\beta_l} Q_l(x,y)$, we have that 
\begin{align*}
f(C_l(x,y)) &= \sum_{m\geq 0} \frac{a_m}{\beta_l^m} C_l(x,y)^m \\
&= \sum_{m\geq 0} \frac{a_m}{\beta_l^m} (\sum_{i\geq 0} \alpha_{l,i} z^i)^m \\
&= \sum_{m\geq 0} \frac{a_m}{\beta_l^m} \sum_{i\geq 0} \sum_{k_1+...+k_m=i} \prod_{j=1}^m \alpha_{l,k_j} z^i\\
&= \sum_{i\geq 0} \Big[\sum_{m\geq 0}\frac{a_m}{\beta_l^m}  \sum_{k_1+...+k_m=i} \prod_{j=1}^m \alpha_{l,k_j}\Big] z^i\,,
\end{align*}
which gives the recursive formulas for the coefficients of the analytic decomposition. Observe that the coefficients are non-decreasing wrt $L$. Using lemma \ref{lemma:analytic_decom_relu_3layers} we conclude that $\alpha_{L,i } > 0$.
\end{proof}
For depth $L\geq 2$, proposition \ref{prop:analytic_decomposition_Q_L} shows that all coefficient $(\alpha_{L,i})_{i\geq0}$ are (strictly) positive. It turns out that this is a sufficient condition for the kernel $Q_L$ to be strictly positive definite. We state this in the next proposition. The result can be seen as a consequence of Lemma \ref{lem10} and Lemma \ref{density_sphere}. However we will give here a more direct proof.
\begin{prop}[$Q_L$ is strictly p.d. for $L\geq 2$]\label{prop:strictly_pd_sphere}
Let $Q$ be an analytic kernel on the unit sphere $\mathbb{S}^{d-1}$, i.e. there exist a sequence of real numbers $(\alpha_i)_{i\geq 0}$ such that for all $x,x' \in \mathbb{S}^{d-1}$
$$
Q(x,x') = \sum_{i\geq 0 } \alpha_i (x \cdot x')^i
$$
Assume $\alpha_i > 0$ for all $i \in \mathbb{N}$. Then, $Q$ is strictly positive definite.\\
As a result, for all $L \geq 2$, $T_\nu(Q_L)$ is strictly positive definite, i.e. for any non-zero function $\f \in L^2(\mathbb{S}^{d-1}l,\nu)$
$$
\langle T_\nu(Q_L) \f, \f \rangle > 0
$$
$\nu$ is the standard uniform measure on the sphere $\mathbb{S}^{d-1}$.
\end{prop}
\begin{proof}
Let $Q$ be an analytic kernel on the unit sphere $\mathbb{S}^{d-1}$, that is there exists a sequence of real numbers $(\alpha_i)_{i\geq 0}$ such that for all $x,x' \in \mathbb{S}^{d-1}$
$$
Q(x,x') = \sum_{i\geq 0 } \alpha_i (x \cdot x')^i\,,
$$
and assume $\alpha_i > 0$ for all $i \in \mathbb{N}$. The map $(x,x')\mapsto x\cdot x'$ is trivially a kernel in the sense of Definition \ref{def:kernel}. For all $i \geq 0$, $(x,x')\mapsto (x\cdot x')^i$ is a kernel as well.\footnote{See footnote \ref{footnote_schur}.} It follows that $T_\nu(Q)$ is non-negative definite, as a converging sum of non-negative operators. Let us prove that it is strictly positive definite.\\
Let $\f \in L_2(\mathbb{S}^{d-1},\nu)$ such that $\langle T_\nu(Q)\, \f, \f \rangle = 0$. Since $\alpha_i > 0$ for all $i$, we have that for all $i \geq 0$
$$
\int \int (x \cdot x')^i \f(x) \f(x') \,\dd\nu(x)\dd\nu(x') = 0\,,
$$
recalling that $\nu$ is the uniform measure on the sphere $\mathbb{S}^{d-1}$. This yields
\begin{equation}\label{equation:zero_product_with_polynomials}
\int \int P(x \cdot x') \f(x) \f(x') \,\dd\nu(x)\dd\nu(x') = 0
\end{equation}
for any polynomial function $P$. \\
Since $\f$ is a function on the sphere $\mathbb{S}^{d-1}$, it can be decomposed in the Spherical Harmonics orthonormal basis $(Y_{k,j})_{k,j}$ (see e.g. \citep{macrobert}) as
$$
\forall x \in\mathbb{S}^{d-1},\hspace{0.25cm} \f(x) = \sum_{k\geq 0} \sum_{j=1}^{N(d,k)} b_{k,j} Y_{k,j}(x)
$$
where $b_{k,j} = \int_{\mathbb{S}^{d-1}} \f(w) Y_{k,j}(w) \,\dd\nu(w)$.\\
In particular, equation \eqref{equation:zero_product_with_polynomials} is true for the Associated Legendre Polynomials $P_k$. Knowing that $N(d,k) P_k(x\cdot x') = \sum_{j=1}^{N(d,k)} Y_{k,j}(x)Y_{k,j}(x')$,  \eqref{equation:zero_product_with_polynomials} yields
$$
\int \int \sum_{j=1}^{N(d,k)} Y_{k,j}(x)Y_{k,j}(x') \f(x) \f(x')\, \dd\nu(x)\dd\nu(x') = 0
$$
for all $k \geq 0$. Therefore,
$$
\sum_{j=1}^{N(d,k)} b_{k,j}^2 = 0
$$
for all $k \geq 0$. We conclude that $\f = 0$.
\end{proof}

By Mercer's theorem \citep{paulsen2016RKHS}, the kernel $Q_L$ can be decomposed in an orthonormal basis of $L^2(\mathbb{S}^{d-1})$. It turns out that this orthonormal basis is the so-called Spherical Harmonics of $\mathbb{S}^{d-1}$. This is a corollary of the next lemma, which is a classical result \citep{yang2019finegrained}.
\begin{lemma}[Spectral decomposition on $\Sd$]\label{lemma:spherical_decomposition}
Let $Q$ be a zonal kernel on $\Sd$, that is $Q(x,x') = p(x\cdot x')$ for a continuous function $p:[-1,1]\to\mathbb{R}$. Then, there is a sequence $\{\mu_k\geq 0\}_{k\in\mathbb N}$ such that for all $x,x'\in\Sd$
\begin{align*}
Q(x,x') = \sum_{k\geq 0 } \mu_{k} \sum_{j=1}^{N(d,k)} Y_{k,j}(x) Y_{k,j}(x')\,,
\end{align*}
where $\{Y_{k,j}\}_{k\geq0, j\in [1:N(d,k)]}$ are spherical harmonics of  $\mathbb{S}^{d-1}$ and $N(d,k)$ is the number of harmonics of order $k$. With respect to the standard spherical measure $\nu$ on $\Sd$, the spherical harmonics form an orthonormal basis of $L^2(\Sd,\nu)$ and $T_\nu(Q)$ is diagonal on this basis.
\end{lemma}
\begin{proof}
We start by giving a brief review of the theory of Spherical Harmonics (\citep{macrobert}). For some $k \geq 1$, let $(Y_{k,j})_{1\leq j \leq N(d,k)}$ be the set of Spherical Harmonics of degree $k$. We have  $N(d,k) = \frac{2k + d - 2}{k} {k + d-3 \choose d-2}$.

The set of functions $(Y_{k,j})_{k\geq1, j \in [1:N(d,k)]}$ form an orthonormal basis of $L^2(\mathbb{S}^{d-1},\nu)$, where $\nu$ is the uniform measure on $\Sd$.

For some function $p$, the Hecke-Funk formula reads
$$
\int_{\mathbb{S}^{d-1}} p(\langle x, w \rangle)Y_{k,j}(w) \,\dd\nu(w) = \frac{\Omega_{d-1}}{\Omega_d} Y_{k,j}(x) \int_{-1}^1 p(t) P^d_k(t) (1-t^2)^{(d-3)/2} dt
$$
where $\Omega_d$ is the volume of the unit sphere $\mathbb{S}^{d-1}$, and $P^d_{k}$ is the multi-dimensional Legendre polynomials given explicitly by Rodrigues' formula
$$
P^d_k(t) = \left(-\frac{1}{2}\right)^{k} \frac{\Gamma(\frac{d-1}{2})}{\Gamma(k + \frac{d-1}{2})} (1-t^2)^{\frac{3-d}{2}} \frac{\dd^k}{\dd t^k} (1-t^2)^{k + \frac{d-3}{2}}\,.
$$
$(P^d_k)_{k\geq0}$ form an orthogonal basis of $L^2([-1,1], (1-t^2)^{\frac{d-3}{2}}dt)$, i.e. 
$$
\langle P^d_k, P^d_{k'} \rangle_{L^2([-1,1], (1-t^2)^{\frac{d-3}{2}}dt)} = \delta_{k,k'} \,,
$$
where $\delta_{ij}$ is the Kronecker symbol.

Using the Heck-Funk formula, we prove that $Q$ can be decomposed on the Spherical Harmonics basis. Indeed, for any $x,x' \in \mathbb{S}^{d-1}$, the decomposition on the spherical harmonics basis yields 
$$
Q(x,x') = \sum_{k\geq 0} \sum_{j=1}^{N(d,k)} \left[\int_{\mathbb{S}^{d-1}}p(\langle w, x' \rangle)Y_{k,j}(w)\dd\nu(w)\right] Y_{k,j}(x)\,.
$$
Using the Hecke-Funk formula yields
$$
Q(x,x') = \sum_{k\geq0} \sum_{j=1}^{N(d,k)}\left[ \frac{\Omega_{d-1}}{\Omega_d} \int_{-1}^1 p(t) P^d_k(t) (1-t^2)^{(d-3)/2} dt\right] Y_{k,j}(x)Y_{k,j}(x')\,.
$$
We conclude that 
$$
Q(x,x') = \sum_{k \geq 0} \mu_k \sum_{j=1}^{N(d,k)} Y_{k,j}(x)Y_{k,j}(x')\,.
$$
where $\mu_k = \frac{\Omega_{d-1}}{\Omega_d} \int_{-1}^1 p(t) P^d_k(t) (1-t^2)^{(d-3)/2} dt$. We also have that $\mu_k \geq 0$ since $Q$ is non-negative by definition.\\
The last statement, follows from the spectral theory of compact self-adjoint operators and the orthonormality of the spherical harmonics (see the appendix of \citep{yang2019finegrained} for details).
\end{proof}
\begin{corollary}[Spectral decomposition of $Q_L$]\label{cor:spectral_decomposition_Q_L}
For $L\geq1$, there exist $(\mu_{L,k})_{k\geq0}$ such that $\mu_{L,k} > 0$ for all $k\geq 0$, and for all $x,x' \in \mathbb{S}^{d-1}$ we have
$$
Q_L(x,x') = \sum_{k\geq 0 } \mu_{L,k} \sum_{j=1}^{N(d,k)} Y_{k,j}(x) Y_{k,j}(x')\,,
$$
where $(Y_{k,j})_{k\geq0, j\in [1:N(d,k)]}$ are spherical harmonics of  $\mathbb{S}^{d-1}$ and $N(d,k)$ is the number of harmonics of order $k$.
\end{corollary}
Corollary \ref{cor:spectral_decomposition_Q_L} shows that for any depth $L$, the Spherical Harmonics are the eigenfunctions of the kernel $Q_L$. The fact that $\mu_{L,k} > 0$ is a direct result of Proposition \ref{prop:strictly_pd_sphere}. Leveraging this result, we can prove a stronger result, which is the universality of the kernel $Q_L$. 
\begin{prop}[Universality on $\Sd$]\label{prop:universality_on_sphere_appendix}
For all $L \geq 2$, $Q_L$ is universal on $\Sd$ for $d\geq 2$.
\end{prop}
\begin{proof}
The result is a consequence of Lemma \ref{lemma:Sd_spd=>univ} and Proposition \ref{prop:strictly_pd_sphere}. An alternative proof is the following.\\
It is a classical result that the set Spherical Harmonics form an orthonormal basis on $L^2(\Sd,\nu)$.
Leveraging the result from corollary \ref{cor:spectral_decomposition_Q_L}, it is straightforward that any continuous function in $L^2(\Sd,\nu)$ can be approximated by a function of the form $\sum_{i} Q_L(x_i, .)$ which belongs to the RKHS of $Q_L$. Therefore, $Q_L$ is universal on $\Sd$.\\
Note that we have not made the assumption that $\sigma_b >0$.
\end{proof}

\newpage
\bibliographystyle{plain}
\bibliography{sample}
\end{document}


%

%

\onecolumn
\aistatstitle{Stable ResNet : 
Appendix}

\section{Residual Neural Networks and Gaussian processes}\label{app:section:ResNets_and_GPs}
Consider a standard ResNet architecture of depth $L\geq2$
\begin{equation}\tag{\ref{equation:Resnet_dynamics}}
\begin{aligned}
y^1(x) &= W^1x + B^1 \\
y^l(x) &= y^{l-1}(x) + \mathcal{F}((W^l,B^l), y^{l-1}), \quad \mbox{for } l \in [2:L],
\end{aligned}
\end{equation}
where $x \in \mathbb{R}^d$ is an input, $y^l(x)$ is the vector of pre-activations, $W^l$ and $B^l$ are respectively the weights and bias of the $l^{th}$ layer, and $\mathcal{F}$ is a mapping that defines the nature of the layer. In general, the mapping $\mathcal{F}$ consists of a successive applications of simple activation functions. In this work, for the sake of simplicity, we consider Fully Connected blocks with ReLU activation function
$$
\mathcal{F}((W,B), x) = W\phi(x) + B
$$
Hereafter, $N_l$ denotes the number of neurons in the $l^{th}$ layer, $\phi$ the activation function and $[m:n]:=\{m,m+1, ..., n\}$ for  $m\leq n$. The weights and bias are initialized with $W^l\overset{\text{iid}}{\sim}\mathcal{N}(0,\sigma_w^2/N_{l-1})$, and $B^l\overset{\text{iid}}{\sim}\mathcal{N}(0,\sigma_b^2)$ where $\mathcal{N}(\mu, \sigma^{2})$ denotes the normal distribution of mean $\mu$ and variance $\sigma^{2}$. 

In \cite{yang2017meanfield}, authors showed that wide deep ResNets might suffer from gradient exploding during backpropagation. The following lemma

Recent results by \cite{hayou_pruning} suggest that scaling the residual blocks with $L^{-1/2}$ might have some beneficial properties on model pruning at initialization. This is a result of the stabilization effect of scaling on the gradient.

More generally, we introduce the residual architecture
\begin{equation}\tag{\ref{equation:scaled_Resnet}}
\begin{aligned}
y^1(x) &= \mathcal{F}(W^1, x), \\
y^l(x) &= y^{l-1}(x) + \lambda_{l,L} \times \mathcal{F}(W^l, y^{l-1}), \quad \mbox{for } l \in [2:L],
\end{aligned}
\end{equation}
where $(\lambda_{k,L})_{k\in[2:L]}$ is a sequence of scaling factors. We assume hereafter that there exists $\lambda_{\max} \in (0,\infty)$ such that for all $L\geq 2$ and $k \in [2:L]$, we have that $\lambda_{k,L} \in (0,\lambda_{\max})$.\\

Recall that for all $x,x'\in \mathbb R^d$
\begin{align*}
Q_l(x,x') = Q_{l-1}(x,x') + \lambda_{l,L}^2 \Psi_{l-1}(x,x')
\end{align*}
where $\Psi_{l-1}(x,x') = \sigma^2_b + \sigma^2_w \mathbb{E}[\phi(y^{l-1}_1(x))\phi(y^{l-1}_1(x'))]$

For the ReLU activation function $\phi(x) = \max(0,x)$, the recurrence relation can be written more explicitly as in \cite{daniely2016deeper}.
Let $C_l$ be the correlation kernel, defined as 
\begin{align*}
C_l(x,x') = \frac{Q_l(x,x')}{\sqrt{Q_l(x,x)Q_l(x,x')}}
\end{align*}
and let $f:[-1,1]\to\mathbb{R}$ be given by 
\begin{align}\label{deff}
f:\gamma\mapsto \frac{1}{\pi}(\sqrt{1-\gamma^2}-\gamma\arccos \gamma)\,.
\end{align}
For the remainder of this appendix, we define the function 
\begin{equation}\label{equation:f_hat}
\hat{f}(\gamma) =\gamma +  f(\gamma) = \frac{1}{\pi} (\gamma \arcsin(\gamma) + \sqrt{1-\gamma^2}) + \frac{1}{2}\gamma
\end{equation}

\begin{lemma}[Kernel Diagonal elements]\label{lemma:diagonal_elements}
Consider a ResNet of the form \ref{equation:scaled_Resnet} and let $x \in \mathbb{R}^d$. We have that for all $l \in [2:L]$,
$$
Q_{l}(x,x) = \prod_{k=2}^{l}(1 + \frac{\sigma_w^2 \lambda_{k,L}^2}{2}) (Q_1(x,x) + \frac{2 \sigma_b^2}{\sigma_w^2}) - \frac{2 \sigma_b^2}{\sigma_w^2}
$$
\end{lemma}
\begin{proof}
We know that 
$$
Q_{l}(x,x) = Q_{l-1}(x,x) + \lambda_{l,L}^2 (\sigma_b^2 + \frac{\sigma_w^2}{2} \hat{f}(1))
$$
where $\hat{f}$ is given by \ref{equation:f_hat}. It is straightforward that $\hat{f}(1) = 1$. This yields
$$
Q_{l}(x,x) + \frac{2 \sigma_b^2}{ \sigma_w^2} =  (1+ \lambda_{l,L}^2 \frac{\sigma_w^2}{2})(Q_{l-1}(x,x) + \frac{2 \sigma_b^2}{ \sigma_w^2})
$$
we conclude by telescopic product.
\end{proof}

\begin{lemma}[Corollary of Theorem D.1. in \cite{yang_tensor3_2020}]\label{lemma:gradient_independence}
Consider a ResNet of the form \ref{equation:scaled_Resnet} with weights $W$. In the limit of infinite width, we can assume that $W^T$ used in back-propagation is independent from $W$ used for forward propagation, for the calculation of Gradient Covariance and NTK.
\end{lemma}

\begin{manualprop}{\ref{proposition:stable_gradient}}[Stable Gradient]
Consider a ResNet of type \ref{equation:scaled_Resnet}, and let $\mathcal{L}_y(x):= \ell(y^L_1(x), y)$ for some $(x,y) \in \mathbb{R}^d \times \mathbb{R}$, where $\ell : (z,y) \mapsto \ell(z,y)$ is a loss function satisfying $\sup_{K_1\times K_2}\left|\frac{\partial \ell(z,y)}{\partial z}\right| < \infty$ for any two compact subsets $K_1, K_2 \subset \mathbb{R}$. Let $(\sigma_b, \sigma_w) \in \mathbb{R}^+ \times (\mathbb{R}^+)^*$. Then, under the approximation of infinite width, for any compact subsets $K \subset \mathbb{R}^d$, $K' \subset \mathbb{R}$, there exists a constant $C>0$ such that 
$$
\sup_{(x,y) \in K\times K'}\sup\limits_{l \in [1: L]}\mathbb{E}\left[\left|\frac{\partial \mathcal{L}_y(x)}{\partial W^l_{11}}\right|^2\right] \leq C \hspace{0.05cm} \exp{\left(\frac{\sigma_w^2}{2} \sum_{k=2}^L \lambda_{k,L}^2\right)}
$$
Moreover, if there exists $\lambda_{\min}>0$ such that for all $L\geq 2$ and $k\in [2:L]$ we have $\lambda_{k,L}\geq\lambda_{\min}$, then, for all $(x,y) \in \mathbb{R}^d\{0\} \times \mathbb{R}$ such that $\left|\frac{\partial \ell(z,y)}{\partial z}\right| \neq 0$, there exists $\kappa>0$ such that , for all $l \in [2:L]$,
$$
\mathbb{E}\left[\left|\frac{\partial \mathcal{L}_y(x)}{\partial W^l_{11}}\right|^2\right] \geq \kappa (1 + \frac{\lambda^2_{\min} \sigma_w^2}{2})^L
$$
\end{manualprop}

\begin{proof}
Let $(x,y) \in \mathbb{R}^d \times \mathbb{R}$ and $\Bar{q}^l(x) = \mathbb{E}\left[\left|\frac{\partial \mathcal{L}_y(x)}{\partial y^l_{1}}\right|^2\right]$. Using lemma \ref{lemma:gradient_backprop}, we have that 
$$
\Bar{q}^l(x) = (1 + \frac{\sigma_w^2 \lambda_{l+1,L}^2}{2}) \Bar{q}^{l+1}(x)
$$
this yields
$$
\Bar{q}^l(x) = \prod_{k=l+1}^L (1 + \frac{\sigma_w^2 \lambda_{k,L}^2}{2})
 \Bar{q}^L(x)$$ 
 
Moreover, using lemma \ref{lemma:gradient_independence}, we have that $\mathbb{E}\left[\left|\frac{\partial \mathcal{L}_y(x)}{\partial W^l_{11}}\right|^2\right] = \lambda_{l,L}^2 \Bar{q}^l(x) \mathbb{E}[\phi(y^{l-1}_1(x))^2]$. We have $\mathbb{E}[\phi(y^{l-1}_1(x))^2] = \frac{1}{2} Q_{l-1}(x,x)$. From lemma \ref{lemma:diagonal_elements} we know that 
\begin{equation*}
Q_{l-1}(x,x) = \prod_{k=2}^{l-1}(1 + \frac{\sigma_w^2 \lambda_{k,L}^2}{2}) (Q_1(x,x) + \frac{2 \sigma_b^2}{\sigma_w^2}) - \frac{2 \sigma_b^2}{\sigma_w^2}\\
\leq \prod_{k=2}^{l-1}(1 + \frac{\sigma_w^2 \lambda_{k,L}^2}{2}) (Q_1(x,x) + \frac{2 \sigma_b^2}{\sigma_w^2})
\end{equation*}
This yields
$$
\mathbb{E}\left[\left|\frac{\partial \mathcal{L}_y(x)}{\partial W^l_{11}}\right|^2\right] \leq \frac{2}{\sigma_w^2} \prod_{k=2}^L(1 + \frac{\sigma_w^2 \lambda_{k,L}^2}{2}) (\frac{1}{2}Q_1(x,x) + \frac{ \sigma_b^2}{\sigma_w^2}) \Bar{q}^L(x)
$$
It is straightforward that $\prod_{k=2}^l(1 + \frac{\sigma_w^2 \lambda_{k,L}^2}{2}) \leq  \exp{\left(\frac{\sigma_w^2}{2} \sum_{k=2}^L \lambda_{k,L}^2\right)}$. 
Let $K \subset \mathbb{R}^d$, $K' \subset \mathbb{R}$ be two compact subsets. Using the condition on the loss function $\ell$, we have that
$$
\mathbb{E}\left[\left|\frac{\partial \mathcal{L}_y(x)}{\partial W^l_{11}}\right|^2\right] \leq C \exp{\left(\frac{\sigma_w^2}{2} \sum_{k=2}^L \lambda_{k,L}^2\right)}
$$
where $C = \frac{2}{\sigma_w^2} \sup_{(x,y) \in K\times K'}\Bar{q}^L(x) \times (\sup_{x\in K} Q_1(x,x) + \frac{2 \sigma_b^2}{\sigma_w^2})$. We conclude by taking the supremum.\\

Let $(x,y) \in \mathbb{R}^d\{0\} \times \mathbb{R}$ such that $\left|\frac{\partial \ell(z,y)}{\partial z}\right| \neq 0$. We have that 

\begin{equation*}
\begin{split}
\mathbb{E}\left[\left|\frac{\partial \mathcal{L}_y(x)}{\partial W^l_{11}}\right|^2\right] &\geq \frac{1}{2} \frac{\lambda_{l,L}^2}{1 + \frac{\sigma_w^2}{2}\lambda_{l,L}^2}   \prod_{k=2}^L(1 + \frac{\sigma_w^2 \lambda_{k,L}^2}{2}) Q_1(x,x) \Bar{q}^L(x)\\
& \geq \kappa (1 + \frac{\sigma_w^2 \lambda_{\min}^2}{2})^L
\end{split}
\end{equation*}
where $\kappa = \frac{1}{2} \frac{\lambda_{\min}^2}{(1 + \frac{\sigma_w^2}{2}\lambda_{\max}^2)(1 + \frac{\sigma_w^2}{2}\lambda_{\min}^2)} Q_1(x,x) \Bar{q}^L(x) > 0$.
\end{proof}

Using lemma \ref{lemma:gradient_independence}, we can derive simple recursive formulas for the second moment of the gradient as well as for the Neural Tangent Kernel (NTK). This was previously done in \cite{samuel} for feedforward neural networks, we prove a similar result for ResNet in the next lemma.
\begin{lemma}[Gradient Second moment]\label{lemma:gradient_backprop}
In the limit of infinite width, using the same notation as in proposition \ref{proposition:stable_gradient}, we have that 
$$
\Bar{q}^l(x) = (1 + \frac{\sigma_w^2 \lambda_{l+1,L}^2}{2}) \Bar{q}^{l+1}(x)
$$
\end{lemma}
\begin{proof}
It is straighforward that 
$$
\frac{\partial \mathcal{L}_y(x)}{\partial y^l_i} = \frac{\partial \mathcal{L}_y(x)}{\partial y^{l+1}_i} + \lambda_{l+1,L} \sum_{j} \frac{\partial \mathcal{L}_y(x)}{\partial y^{l+1}_j} W^{l+1}_{ji} \phi'(y^l_i)
$$
using lemma \ref{lemma:gradient_independence} and using CLT theorem, we have that 
$$
\Bar{q}^l(x) = \Bar{q}^{l+1}(x) + \lambda_{l+1,L}^2 \Bar{q}^{l+1}(x) \sigma_w^2 \mathbb{E}[\phi'(y^l_i(x))^2]
$$
we conclude using $\mathbb{E}[\phi'(y^l_i(x))^2] = \mathbb{P}(\mathcal{N}(0,1) > 0) = \frac{1}{2}$.
\end{proof}

\begin{definition}[Stable ResNet]
a ResNet of type \ref{equation:scaled_Resnet} is called a Stable ResNet if and only if $\lim\limits_{L \rightarrow \infty}\sum\limits_{k=1}^L \lambda_{k,L}^2 < \infty$.
\end{definition}

\subsection{Some general results}
\begin{lemma}[O'Donnell (2014)]
Let $F(\phi)(c) = \mathbb{E}[\phi(X)\phi(Y), (X,Y) \sim \mathcal{N}(0, \begin{pmatrix}
1 & c \\
c & 1
\end{pmatrix})]$. Then for all $\phi \in L^2(\mathcal{N}(0,1))$, there exists $a_0,a_1, ... \geq 0$ such that $F(\phi)(c) = \sum_{i\geq0} a_i c^i$ for all $c \in [-1,1]$. 
\end{lemma}

\begin{lemma}[Analytic decomposition of ReLU kernel]\label{lemma:analytic_decomp_f_hat}
let $f$ be the ReLU kernel defined by $\hat{f}(c) = 2 \mathbb{E}[\phi(Z_1) \phi(c Z_1 + \sqrt{1 - c^2} Z_2)] = c + f(c)$ where $Z_1, Z_2$ are iid standard Gaussian variables. Let $a_0,a_1, ... \geq 0$ be the coefficients of the analytic decomposition of $\hat{f}$, i.e. $\hat{f}(c) = \sum_{i\geq0} a_i c^i$. Then, we have $a_0 = 1/\pi, a_1 =1/2$, and for all $i\geq 1$, $a_{2i} >0$ and $a_{2i+1}=0$.

\end{lemma}

\begin{proof}
We have that for all $c\in [-1,1]$
$$
\hat{f}(c) = \frac{1}{\pi} c \hspace{0.1cm}\text{arcsin}(c) + \frac{1}{\pi} \sqrt{1-c^2} + \frac{1}{2}c
$$
therefore, $a_0 = \frac{1}{\pi}$. Moreover, we have that for all $c \in (-1,1)$
$$
\hat{f}'(c) = \frac{1}{\pi} \arcsin{c} + \frac{1}{2}
$$
this yields $a_1 = f'(0) = \frac{1}{2}$.
$$
\hat{f}^{(3)}(c) = \frac{c}{\pi (1 - c^2)^{3/2}}
$$
since $\hat{f}^{(3)}$ is an odd function, then for all $i \geq 1, a_{2i+1} = 0$. Now let us prove that for all $k\geq 1$, there exist $b_{k,0},b_{k,1},..., b_{k,k-1} > 0$ such that for all $c \in (-1,1)$
$$
\hat{f}^{(2k)}(c) = \frac{1}{\pi} \sum_{m=0}^{k-1} b_{k,m} c^{2m} (1 - c^2)^{-k-m+1/2}
$$
We prove this by induction. For $k=1$, we have that 
$$
\hat{f}^{(2)}(c) = \frac{1}{\pi} (1 - c^2)^{-1/2}
$$
the result is true for $k=1$. Assume the result is true for some $k\geq 1$, let us prove it for $k+1$. It is easy to see that
\begin{equation}
    b_{k+1,m} =
    \begin{cases*}
      2(2k-1) b_{k,0} + 2b_{k,1} & if $m = 0$ \\
      2(4k^2-1) b_{k,0} + 5(2k+1) b_{k,1} + 12 b_{k,2} & if $m = 1$ \\
      2(m+1)(2m+1) b_{k,m+1} + (4m+1)(2k+2m-1) b_{k,m} \\
      + (2k+2m-3)(2k+2m-1) b_{k,m-1} & if $m \in \{2,3,...,k-1\}$ \\
      (4k-3)(4k-1) b_{k,k-1} & if $m = k$ \\
    \end{cases*}
  \end{equation}

The induction is straightforward.\\

In particular, we have shown that $a_{2i} = \frac{\hat{f}^{(2i)}(0)}{(2i)!} = \frac{b_{i,0}}{(2i)!}> 0$.
\end{proof}
\subsection{Results on the Sphere $\Sd$}

\begin{lemma}[Analytic decomposition of 3 layer ReLU ResNet]\label{lemma:analytic_decom_relu_3layers}
For all $(x,x') \in \Sd$, $Q_3(x,x') = g(x \cdot x')$ where $g(z) = \sum_{i\geq 0} a_i z^i$ and $a_i>0$ for all $i\geq 0$. 
\end{lemma}
\begin{proof}
Let $x,y \in \Sd$. We have 
$$
Q_1(x,x') = \sigma_b^2 + \frac{\sigma_w^2}{d} x \cdot x'
$$
as a result, for all $Q_1(x,x) = Q_1(x',x') = \sigma_b^2 + \frac{\sigma_w^2}{d}$. The diagonal term of the kernel is the same for all $x \in \Sd$. We note $\beta_l = Q_l(x,x)$ and $z = x \cdot x'$. Using this observation, we have that
\begin{align*}
Q_2(x,x') &= Q_1(x,x') + \lambda_2^2 (\sigma_b^2 + \frac{\sigma_2}{2} \hat{f}(C_1(x,x')) \beta_1)
\end{align*}
It can be easily deduced from lemma \ref{lemma:analytic_decomp_f_hat} that there exist $(b_i)_{i \geq 0}$ such that
$$
C_2(x,x') = b_0 + b_1 z + \sum_{i \geq i} b_{2i} z^{2i}
$$
where $b_0,b_1,b_{2i} > 0$.\\
Following the same approach, we have that
\begin{align*}
Q_3(x,x') &= Q_2(x,x') + \lambda_3 (\sigma_b^2 + \frac{\sigma_w^2}{2} f(C_2(x,x')) \beta_2)
\end{align*}
and 
\begin{align*}
\hat{f}(C_2(x,x')) &= a_0 + a_1 C_2(x,x') + \sum_{i\geq 1} a_{2i} (C_2(x,x'))^{2i}
\end{align*}
Having the terms of orders 0 and 1 in $C_2(x,x')$ ensures having a positive coefficient for all terms $z^i$ for $i\geq 1$, which concludes the proof.
\end{proof}

Let $x,x' \in \mathbb{S}^{d-1} := \{x \in \mathbb{R}^d, \|x\|=1\}$. It is clear that the kernel $Q_L$ can be written as 
$$
Q_L(x,x') = g_L(x\cdot x')
$$
where $g_L : [-1,1] \rightarrow \mathbb{R}$ is a continuous function. Kernels that can be written in this form are known as the dot-product kernels (or zonal kernels on the unit sphere). In our setting, we have a stronger property; we prove in the next result we show a that the kernel $Q_L$ is analytic on the sphere $\mathbb{S}^{d-1}$ in the sense that the function $g_L$ is analytic on $[-1,1]$.
\begin{prop}[$Q_L$ is analytic]\label{prop:analytic_decomposition_Q_L}
Let $L\geq 3$, there exists $(\alpha_{L,i})_{i\geq 0}$ such that for all $x,x' \in \mathbb{S}^{d-1}$
$$
Q_L(x,x') = \sum_{i\geq 0} \alpha_{L,i} (x \cdot x')^i
$$
Moreover, $(\alpha_{L+1, i})_{i\geq 0}$ can be expressed in terms of $(\alpha_{L, i})_{i\geq 0}$
\begin{equation}
    \alpha_{L+1,i} = \alpha_{L,i} + \lambda_{L+1,L+1} \times  \gamma_{L,i}
\end{equation}
whith
\begin{equation*}
    \gamma_{L,i} =
    \begin{cases*}
      \sigma_b^2 + \beta_L \frac{\sigma_w^2}{2}  \sum\limits_{m\geq0} \frac{a_m}{\beta_L^m} \alpha_{L,0}^m   & if $i = 0$ \\
      \beta_L \frac{\sigma_w^2}{2}  \sum\limits_{m\geq0} \frac{a_m}{\beta_L^m} \sum\limits_{k_1+...+k_m=i} \prod\limits_{j=1}^m \alpha_{L,k_j}& if $i \geq 1$ \\
    \end{cases*}
\end{equation*}
where $\beta_L = Q_L(x,x) = Q_L(x',x') = \sum_{i\geq0} \alpha_{L,i}$ and $(a_m)_{m\geq0}$ is such that $a_0,a_1 >0$ and $a_{2i}>0$ and $a_{2i+1} = 0$ for all $i\geq 1$. \\
As a result, for all $L\geq 3, i\geq0, \alpha_{L,i}>0$.
\end{prop}
\begin{proof}

The result is true for $L=3$ by lemma \ref{lemma:analytic_decom_relu_3layers}. Let us prove the result for all $L\geq3$ by induction.\\
Let $ L\geq 3$, $x,x' \in \Sd$, $z = x\cdot x'$ and $\beta_l = Q_l(x,x) =Q_l(x',x')$. Assume the result is true for $L$ and let us prove it for $L+1$. We have that
$$
Q_{L+1}(x,y) = Q_{L}(x,y) + \lambda_{L+1,L+1}^2 (\sigma_b^2 + \frac{\sigma_w^2}{2} f(C_{l}(x,y)) \beta_l)
$$
Knowing that $C_l(x,y) = \frac{1}{\beta_l} Q_l(x,y)$, we have that 
\begin{align*}
f(C_l(x,y)) &= \sum_{m\geq 0} \frac{a_m}{\beta_l^m} C_l(x,y)^m \\
&= \sum_{m\geq 0} \frac{a_m}{\beta_l^m} (\sum_{i\geq 0} \alpha_{l,i} z^i)^m \\
&= \sum_{m\geq 0} \frac{a_m}{\beta_l^m} \sum_{i\geq 0} \sum_{k_1+...+k_m=i} \prod_{j=1}^m \alpha_{l,k_j} z^i\\
&= \sum_{i\geq 0} \Big[\sum_{m\geq 0}\frac{a_m}{\beta_l^m}  \sum_{k_1+...+k_m=i} \prod_{j=1}^m \alpha_{l,k_j}\Big] z^i\\
\end{align*}
which gives the recursive formulas for the coefficients of the analytic decomposition. Observe that the coefficients are non-decreasing w.r.t $L$. Using lemma \ref{lemma:analytic_decom_relu_3layers} we conclude that $\alpha_{L,i } > 0$.
\end{proof}
For depth $L\geq 3$, proposition \ref{prop:analytic_decomposition_Q_L} shows that all coefficient $(\alpha_{L,i})_{i\geq0}$ are (strictly) positive. It turns out that this is a sufficient condition for the kernel $Q_L$ to be strictly positive definite. We show this in the next proposition.

\begin{prop}[$Q_L$ is strictly p.d. for $L\geq 3$]
Let $Q$ be an analytic kernel on the unit sphere $\mathbb{S}^{d-1}$, i.e. there exist a sequence of real numbers $(\alpha_i)_{i\geq 0}$ such that for all $x,x' \in \mathbb{S}^{d-1}$
$$
Q(x,x') = \sum_{i\geq 0 } \alpha_i (x \cdot x')^i
$$
Assume $\alpha_i > 0$ for all $i \in \mathbb{N}$. Then, $Q$ is strictly positive definite.\\
As a result, for all $L \geq 3$, $Q_L$ is strictly positive definite, i.e. for any non-zero function $\f \in L^2(\mathbb{S}^{d-1})$
$$
\langle Q_L \f, \f \rangle > 0
$$
where the scalar product is defined with respect to the uniform measure on the sphere $\mathbb{S}^{d-1}$.
\end{prop}

\begin{proof}
Let $Q$ be an analytic kernel on the unit sphere $\mathbb{S}^{d-1}$,  i.e. there exist a sequence of real numbers $(\alpha_i)_{i\geq 0}$ such that for all $x,x' \in \mathbb{S}^{d-1}$
$$
Q(x,x') = \sum_{i\geq 0 } \alpha_i (x \cdot x')^i
$$
and assume $\alpha_i > 0$ for all $i \in \mathbb{N}$. The linear kernel $(x,x')\mapsto x\cdot x'$ is trivially positive semi-definite. By a standard extension of the Schur product theorem to integral operators \cite{horn2012matrix} we have that the kernel $(x,x')\mapsto (x\cdot x')^{i}$ is positive semi-definite for all $i \geq 0$. It follows that $Q$ is positive semi-definite as a sum of p.s.d kernels. Let us prove that it is strictly positive definite.\\
Let $\phi \in L_2(\mathbb{S}^{d-1})$ such that $\langle Q \f, \f \rangle = 0$. Since $\alpha_i > 0$ for all $i$, we have that for all $i \geq 0$
$$
\int \int (x \cdot x')^i \f(x) \f(x') d\nu_{d-1}(x)d\nu_{d-1}(x') = 0
$$
where $\nu_{d-1}$ is the uniform measure on the sphere $\mathbb{S}^{d-1}$. This yields
\begin{equation}\label{equation:zero_product_with_polynomials}
\int \int P(x \cdot x') \f(x) \f(x') d\nu_{d-1}(x)d\nu_{d-1}(x') = 0
\end{equation}
for any polynomial function $P$. \\
Since $\f$ is a function on the sphere $\mathbb{S}^{d-1}$, it can be decomposed in the Spherical Harmonics orthonormal basis $(Y_{k,j})_{k,j}$ \CIT as
$$
\forall x \in\mathbb{S}^{d-1},\hspace{0.25cm} \f(x) = \sum_{k\geq 0} \sum_{j=1}^{N(d,k)} b_{k,j} Y_{k,j}(x)
$$
where $b_{k,j} = \int_{\mathbb{S}^{d-1}} \f(w) Y_{k,j}(w) d\nu_{d-1}(w)$.\\
Equation \ref{equation:zero_product_with_polynomials} is particularly true for the Associated Legendre Polynomials $P_k$ \CIT. Knowing that $N(d,k) P_k(x\cdot x') = \sum_{j=1}^{N(d,k)} Y_{k,j}(x)Y_{k,j}(x')$, equation \ref{equation:zero_product_with_polynomials} yields
$$
\int \int \sum_{j=1}^{N(d,k)} Y_{k,j}(x)Y_{k,j}(x') \f(x) \f(x') d\nu_{d-1}(x)d\nu_{d-1}(x') = 0
$$
for all $k \geq 0$. Therefore,
$$
\sum_{j=1}^{N(d,k)} b_{k,j}^2 = 0
$$
for all $k \geq 0$. We conclude that $\f = 0$.
\end{proof}

The kernel $Q_L$ is symmetric, continuous and strictly positive definite. Therefore, it is a Mercer kernel, i.e. it can be decomposed in an orthonormal basis of $L^2(\mathbb{S}^{d-1})$. It turns out that this orthonormal basis is the so-called Spherical Harmonics of $\mathbb{S}^{d-1}$. This is a corollary of the next lemma, which is a classical result \cite{yang2019finegrained}.
\begin{lemma}[Spectral decomposition on $\Sd$]\label{lemma:spherical_decomposition}
Let $Q$ be an integral operator such that its integral kernel is in the form $Q(x,x') = p(x\cdot x')$, for some continuous function $p:[-1,1]\to\mathbb{R}$. Then we can find coefficients $\alpha$'s such that for all $x,x'\in\Sd$
$$
Q(x,x') = \sum_{k\geq 0 } \alpha_{k} \sum_{j=1}^{N(d,k)} Y_{k,j}(x) Y_{k,j}(x')
$$
where $(Y_{k,j})_{k\geq0, j\in [1:N(d,k)]}$ are spherical harmonics of  $\mathbb{S}^{d-1}$ and $N(d,k)$ is the number of harmonics of order $k$.
\end{lemma}
\begin{corollary}[Spectral decomposition of $Q_L$]\label{cor:spectral_decomposition_Q_L}
For $L\geq1$, there exist $(\mu_{L,k})_{k\geq0}$ such that $\mu_{L,k} > 0$ for all $k\geq 0$, and for all $x,x' \in \mathbb{S}^{d-1}$ we have
$$
Q_L(x,x') = \sum_{k\geq 0 } \mu_{L,k} \sum_{j=1}^{N(d,k)} Y_{k,j}(x) Y_{k,j}(x')
$$
where $(Y_{k,j})_{k\geq0, j\in [1:N(d,k)]}$ are spherical harmonics of  $\mathbb{S}^{d-1}$ and $N(d,k)$ is the number of harmonics of order $k$.
\end{corollary}
Corollary \ref{cor:spectral_decomposition_Q_L} shows that for any depth $L$, the Spherical Harmonics are the eigenfunctions of the kernel $Q_L$. Leveraging this result, we can prove a stronger result, which is the universality of the kernel $Q_L$. Let us first give a formal definition of universal kernels (\cite{Steinwart}).
\begin{definition}[Universal Kernel]\label{univ_kern}
Let $K$ be a compact subset of $\mathbb{R}^d$, $Q$ a strictly positive definite kernel on $\mathbb{R}^d$, and $H(K)$ the Reproducing Kernel Hilbert Space (RKHS) generated by the kernel $Q$. We say that $Q$ is universal on $K$ if for any $\epsilon>0$ and continuous function $g$ on $K$, there exists $f \in H(K)$ such that $\|f-g\|_{\infty} <\epsilon$.
\end{definition}
Universal kernels are characterized by the property that their associated RKHS is dense (w.r.t the uniform norm $\|.\|_{\infty}$) in the space of continuous functions on the compact set $K$. This is in particular crucial for Kernel regression and Gaussian Process inference \footnote{The closure of the set of functions described by the mean function of the posterior of a GP regression is exactly the RKHS of the kernel of the GP prior}. It turns out that for $L \geq 3$, $Q_L$ is universal.
\begin{prop}
For $L \geq 3$, $Q_L$ is universal on the sphere $\mathbb{S}^{d-1}$.
\end{prop}

\section{Stable ResNet with uniform scaling}\label{app:section:uniform_scaling}
Henceforth, let $K\in\R^d$ be compact. If $\sigma_b=0$, assume that $0\notin K$.\\
Fix any non-zero finite Borel measure $\mu$ on $K$. Any kernel $Q$ on $K$ induces a non-negative compact integral operator on $L^2(K,\mu)$. We will denote this induced kernel as $T_\mu(Q)$. \TODO{State the existence and properties of $T_\mu$ as a lemma.}\\
We state here a characterization of universal kernels \cite{JMLR:v12:sriperumbudur11a}. 
\begin{lemma}\label{univ=Lp_spd}
Let $Q:K^2\to\mathbb\R$ be a kernel, where $K\subset\R^d$ is compact. $Q$ is a universal kernel if and only if $T_\mu(Q)$ is strictly positive definite for all finite Borel measure $\mu$, i.e., $\langle T_\mu(Q)\,\f,\f\rangle > 0$ for all non-zero $\f\in L^2(K,\mu)$.
\end{lemma}
\subsection{Continuous formulation}
\begin{lemma}\label{ex}
For any $x,x'$ in $K$, the solution of \eqref{recc} is unique and well defined for all $t\in[0,1]$. The maps $(x,x')\mapsto q_t(x,x')$ and $(x,x')\mapsto c_t(x,x')$ are Lipschitz continuous on $K^2$ and $c_t$ takes values in $[-1,1]$. Moreover, both $q_t$ and $c_t$ are kernels in the sense of Definition \ref{def:kernel}.
\end{lemma}
\begin{proof}
For the diagonal terms $q_t(x,x)$, \eqref{recc} reduces to $\dot q_t ={\sigma_b}^2 + \frac{{\sigma_w}^2}{2}\,q_t$, whose solution is 
\begin{align*}
q_t(x,x) = e^{\frac{{\sigma_w}^2}{2}\,t}\,q_0(x,x)+\frac{2{\sigma_b}^2}{{\sigma_w}^2}\left(e^{\frac{{\sigma_w}^2}{2}\,t}-1\right)=e^{\frac{{\sigma_w}^2}{2}\,t}\,(\sbq + \swq\,\|x\|^2)+\frac{2{\sigma_b}^2}{{\sigma_w}^2}\left(e^{\frac{{\sigma_w}^2}{2}\,t}-1\right)\,.
\end{align*}
This allows to write a time dependent ODE for $c_t(x,x')$ as 
\begin{align}\label{eqc}
\begin{split}
&\dot c_t(x,x')= {\sigma_b}^2\left(\Gt(x,x') - \At(x,x')\,c_t(x,x')\right) + \frac{\swq}{2}\,f(c_t(x,x'))\,,\\
&c_0(x,x') = \frac{\sbq + \swq\,x\cdot x'}{\sqrt{(\sbq + \swq\,\|x\|^2)(\sbq + \swq\,\|x'\|^2)}}\,.
\end{split}
\end{align}
where $f$ is defined in $\eqref{deff}$ and
\begin{align*}
\At(x,x') = \frac{1}{2}\left(\frac{1}{q_t(x,x)}+\frac{1}{q_t(x',x')}\right)\,;\qquad\Gt(x,x') = \sqrt{\frac{1}{q_t(x,x)\,q_t(x',x')}}\,.
\end{align*}
Fix $z=(x,x')\in K^2$ and let $\gamma_0 = c_0(z)\in[-1,1]$. Consider $\bar f:\mathbb{R}\to \mathbb{R}$, an arbitrary Lipschitz extension of $f$ to the whole $\mathbb{R}$ and define $H:[0,\infty)\times \mathbb{R}\to \mathbb{R}$ as
\begin{align*}
H(t,\gamma) = \sbq(\Gt(z) - \At(z)\,\gamma) + \frac{\swq}{2}\,\bar f(\gamma)\,. 
\end{align*}
$H$ is Lipschitz continuous in $\gamma$ and $C^\infty$ in $t$, so there exists $\tau>0$ such that the Cauchy problem
\begin{align*}
&\dot \gamma (t) = H(t,\gamma(t))\,;\\
&\gamma(0) = \gamma_0
\end{align*}
has a unique $C^1$ solution defined for  $t\in[0,\tau)$. \\
Noticing that 
\begin{align*}
\Gt(x,x') -\At(x,x') = -\frac{1}{2}\left(\frac{1}{q_t(x,x)}-\frac{1}{q_t(x',x')}\right)^2\leq 0\,,
\end{align*}
we get that for all $t_1$ such that $\gamma(t_1)=1$ we have $\dot\gamma(t_1)\leq 0$, since $f(1)=0$, and for all $t_{-1}$ such that $\gamma(t_{-1})=-1$ we have $\dot\gamma(t_{-1}) = \sbq(\Gt(x,x')+\At(x,x'))+\tfrac{\swq}{2}>0$. As a consequence $\gamma(t)\in[-1,1]$ for all $t\in[0,\tau)$ and we can take $\tau=\infty$.\\
In particular we get that \eqref{eqc} has a unique solution $t\mapsto c_t(z)$, defined for $t\in[0,1]$ and bounded in $[-1,1]$. \\
As a consequence, \eqref{recc} has a unique and well defined solution for all $t\geq 0$. \\

Now notice that $z\mapsto c_0(z)$ is Lipschitz on $K^2$. Let's denote as $L_0$ a Lipschitz constant for $c_0$.\\
Since both $\Gt$ and $\At$ are $C^1$, we can find real constants $L_G$, $L_A$ and $M_A$ such that for all $z,z'$ elements of  $K^2$
\begin{align*}
&|\Gt(z)-\Gt(z')|\leq L_G\,\|z-z'\|\,;\\
&|\At(z)-\At(z')|\leq L_A\,\|z-z'\|\,;\\
&|\At(z)|\leq M_A\,.
\end{align*}
Let $L_f$ be a Lipschitz constant for $f$. Using the fact that $|c_t|\leq 1$, we can write
\begin{align*}
|\dot c_t(z) - \dot c_t(z')| \leq L_1\,\|z-z'\|+L_2\,|c_t(z)-c_t(z')|\,,
\end{align*}
where $L_1 = \sbq(L_G+L_A)$ and $L_2 = \sbq M_A+\frac{\swq}{2}\,L_f$.\\
Now fix $z$ and $z'$ and consider $\Delta(t) = c_t(z)-c_t(z')$. We have 
\begin{align*}
&|\dot\Delta(t)| \leq L_1\,\|z-z'\| + L_2\,|\Delta(t)|\,;\\
&|\Delta(0)| \leq L_0\,\|z-z'\| \,.
\end{align*} 
So $|\Delta(t)|\leq \left(\frac{L_1}{L_2}\,\left(e^{L_2\,t}-1\right)+L_0\,e^{L_2\,t}\right)\|z-z'\|$, meaning that $c_t$ (and so $q_t$) is Lipschitz on $L^2$.\\

$(x,x')\mapsto q_t(x,x')$ being continuous, it defines a compact integral operator $T(q_t)$ on $L^2(K)$ \cite{lang2012real}. Since $q_t$ is real and symmetric under the swap of $x$ and $x'$, the operator is self-adjoint. The same holds true for $c_t$.\\ 

The fact that  $T(q_t)$ is a non-negative operator can be seen as a corollary of Lemma \ref{convcont}. Indeed all $T(Q_{l|L})$ is a non-negative definite operator, since it's induced by a kernel. Hence, for each $t\in[0,1]$ it is enough to find a sequence $(l_n,L_n)$ such that $L_n\to\infty$ and $(l_n-1)/(L_n-1)\to t$. By Lemma \ref{convcont} $T(Q_{l_n|L_n})\to T(q_t)$ in the $L^\infty$ norm, and hence in $L^2$, as we are on a compact. Since the subspace of non-negative definite operators in $L^2$  is closed wrt the $L^2$ operator norm, we conclude.\\
Once established that $T(q_t)$ is non-negative definite, it follows immediately that $T(c_t)$ is non-negative as well. We can thus conclude that both $q_t$ and $c_t$ are kernels, in the sense of Definition \ref{def:kernel}.
\end{proof}
\begin{corollary}\label{corex}
The maps $t\mapsto T_\mu(q_t)$ and $t\mapsto T_\mu(c_t)$, defined on $[0,1]$, are continuous and twice differentiable with respect to the operator norm in $L^2(K,\mu)$. Moreover, $\tfrac{\dd}{\dd t}T(q_t) = T(\dot q_t)$, $\tfrac{\dd}{\dd t}T(c_t) = T(\dot c_t)$, $\tfrac{\dd^2}{\dd t^2}T(q_t) = T(\ddot q_t)$ and $\tfrac{\dd^2}{\dd t^2}T(c_t) = T(\ddot c_t)$.
\end{corollary}
\begin{proof}
Consider the map $(t,z)\mapsto q_t(z)$, defined on $[0,1]\times K^2$, which is continuous wrt $z$ and $C^2$ wrt $t$, as it can be easily checked. Since $K^2$ and $[0,1]$ are compact sets, it follows that for any $t$
\begin{align*}
\lim_{s\to t} \sup_{z\in I^2}\left|\frac{q_{s}(z)-q_t(z)}{s-t} -\dot q_t (z)\right|= \sup_{z\in I^2}\lim_{s\to t}\left|\frac{q_{s}(z)-q_t(z)}{s-t}-\dot q_t(z)\right| = 0\,.
\end{align*}
Hence $\lim_{s\to t}\frac{q_s-q_t}{t-s}=\dot q_t$ uniformly on $K^2$, and hence $\lim_{s\to t}\frac{T(q_s)-T(q_t)}{t-s}=T(\dot q_t)$ in the $L^2(K,\mu)$ norm for operators, since $K$ is compact. \\
The proof for the second derivative works in the same way, using the fact that $(t,z)\mapsto q_t(z)$ is continuous in $z$ and $C^1$ in $t$.\\
As a consequence of the above results, $t\mapsto T(q_t)$ is continuous and twice differentiable, with $\tfrac{\dd}{\dd t}T(q_t) = T(\dot q_t)$ and $\tfrac{\dd^2}{\dd t^2}T(q_t) = T(\ddot q_t)$.\\ The proof for $T(c_t)$ is analogous.
\end{proof}
\begin{manuallemma}{\ref{convcont}}
Denoting $Q_{l|L}$ the covariance kernel of the layer $l$ in a net of depth $L$ and letting $q_t$ being the solution of \eqref{recc}
\begin{align*}
    \lim_{L\to\infty}\sup_{l\in[1:L]}\sup_{z\in K^2}|Q_{l|L}(z)-q_{t(l,L)}(z)| = 0\,,
\end{align*}
where $t(l,L) = (l-1)/(L-1)$.
\end{manuallemma}
\begin{proof}
We'll show that the relation holds for $c_T$, and so for $q_t$. Let $F$, defined on $[0,1]\times K^2$, be such that $\dot c_t(z) = F(z,t,c_t(z))$. Define
\begin{align*}
\tau(h) = \sup_{t,z}\left| \frac{c_{t+h}(z)-c_t(z)}{h}-F(z,t,c_t(z))\right|\,.
\end{align*}
Since $t$ and $z$ takes values on compact sets, by uniform continuity, fixed $h$ we can write, for $h\to 0$
\begin{align*}
\sup_t\sup_{s\in [t,t+h]} \left| F(z,s,c_s(z)) - F(z,t,c_t(z))\right|=o(h)\,.
\end{align*}
Hence, since $\tau$ can be rewritten as $\tau(h) = \frac{1}{h}\sup_{t,z}\left| \int_t^{t+h}(F(z,s,c_s(z)) - F(z,t,c_t(z))\,\dd s\right|$, it's clear that $\tau(h)\to 0$ for $h\to 0$. \\
Now, from the explicit form of $F$, as in \eqref{eqc}, we deduce that there exists a constant $M$ such that for all $z$, all $t$ and all pairs $(\gamma,\gamma')\in[-1,1]^2$
\begin{align*}
|F(z,t,\gamma)-F(z,t,\gamma')|\leq M\|\gamma-\gamma'\|\,.
\end{align*}
Under this uniform Lipschitz condition, the following is a standard result \cite{atkinson2011numerical} \TODO{Write proof explicitly}
\begin{align}
\sup_{l\in[0:L]}\sup_{z\in K^2}|Q_{l|L}(x,x')-q_{t(l,L)}(x,x')| \leq \tau(1/L) \frac{e^M-1}{M}\,.\phantom\qedhere\tag*{$\qed$}
\end{align}
\end{proof}
\subsection*{Proof of Theorem \ref{thm:expr}}
\begin{lemma}\label{propf}
The function $f:[-1,1]\to\mathbb{R}$, defined in \eqref{deff}, is an analytic function on $(-1,1)$, whose expansion $f(\gamma) = \sum_{n\in\mathbb{N}}\al_n\,\gamma^n$
converges absolutely on $[-1,1]$. Moreover, $\al_n>0$ for all even $n\in\mathbb{N}$, $\al_1=-1/2$ and $\al_n=0$ for all odd $n\geq 3$.\\
Let $g:[-1,1]\to\mathbb{R}$ be defined as $g(\gamma) = f(\gamma)f'(\gamma)$. $g$ is analytic on $(-1,1)$ and its expansion $g(\gamma) = \sum_{n\in\mathbb{N}}\be_n\,\gamma^n$ converges absolutely on $[-1,1]$. Moreover, for all odd $n\in\mathbb N$ the coefficient $\be_n$ is strictly positive.
\end{lemma}
\begin{proof}
The claims for $f$ are not hard to prove and well known, see for instance \cite{daniely2016deeper}. As for $g$, the analyticity of $f$ implies the one of $f'$, and it's easy to check the convergence on $[-1,1]$. Moreover, since all the odd Taylor coefficients of $f'$ are striclty positive, as the even coefficients of $f$ are. It follows that $\beta_n>0$ for all odd $n$.
\end{proof}
\begin{lemma}\label{schur}
Let $C$ be a kernel on $K$, such that $|C(z)|\leq 1$ for all $z\in K$. Consider a non-negative real sequence $\{\al_n\}_{n\in\mathbb{N}}$, and assume that
\begin{align*}
g(\gamma) = \sum_{k=0}^\infty \al_k\, \gamma^k
\end{align*} 
converges uniformly on $[-1,1]$. Then $T_\mu(g(C))$ is a non-negative definite compact operator, and in particular $g(C)$ is a kernel.
\end{lemma}
\begin{proof}
Clearly $g(C)$ is continuous and symmetric (as composition of continuous and symmetric functions). Moreover, since the Taylor expansion of $g$ converges uniformly on $[-1,1]$ and $|C(z)|\leq 1$ for all $z\in K$, we have that $T_\mu(g(C)) = \sum_{k\in\mathbb N}\al_k\,T_\mu(C^k)$, the sum converging wrt the operator norm.\\
By a standard extension of the Schur product theorem to integral operators \cite{horn2012matrix}, the product of two kernels is still a kernel.\TODO{Check the reference for a generic $\mu$}\\
As a consequence, it is easy to prove by induction that $T_\mu(C^k)$ is non-negative definite for all $k$. Hence $T_\mu(g(C))$ is the converging limit of a sum of compact non-negative definite operator and we conclude.
\end{proof}
\begin{prop}\label{prop:univ_K}
Let $K\subset\R^d$ be compact. Assume $\sigma_b>0$ and let $\tilde f:\gamma\mapsto \tfrac{\gamma}{2}+f(\gamma)$ be defined on $[-1,1]$. Then the kernel $\tilde f(c_0)$, defined point-wise as $\tilde f(c_0)(x,x') = \tilde f(c_0(x,x'))$, is universal on $K$.
\end{prop}
\begin{proof}
First notice that $c_0(x,x') = \tfrac{1+\zeta\,x\cdot x'}{\sqrt{(1+\zeta\,\|x\|^2)(1+\zeta\,\|x'\|^2)}}$, where $\zeta = \swq/\sbq$. For $n\in\mathbb N$, define the kernel $p_n: (x,x')\mapsto c_0(x,x')^{2n}$, with the convention that $p_0\equiv 1$. From Lemma \ref{propf} we can write 
\begin{align*}
\tilde f(c_0) = \sum_{n\in\mathbb N}\al_n\,p_n\,,
\end{align*}
the sum converging wrt uniformly on $K^2$ with $\al_n>0$ for all $n\in\mathbb{N}$. By Lemma \ref{schur}, $\tilde f(c_0)$ is a kernel.\\
Now, for each $n$, we have
\begin{align*}
p_n(x,x') = \frac{1}{(1+\zeta\,\|x\|^2)^n(1+\zeta\,\|x'\|^2)^n}\sum_{k=0}^{2n}\omega_{k,n}\,(x\cdot x')^k \,,
\end{align*}
where the coefficients $\omega_{k,n}$'s are all strictly positive, explicitly $\omega_{k,n} = \zeta^k\binom{k}{n}$. \\
Expanding the inner product $x\cdot x'$, we can express $p_n$ in the form 
\begin{align*}
p_n(x,x') = \sum_{J\in\mathcal J_n}\be_{J,n}\,A_{J,n}(x)A_{J,n}(x')\,,
\end{align*}
where $\mathcal J_n =\{(j_1\dots j_d)\in\mathbb N^d: \sum_{i=1}^dj_i \in[0:2n]\}$, all the coefficients $\be_{J,n}$'s are strictly positive and the $A_{J_n}$'s are defined as
\begin{align*}
A_{J,n}(x) = \frac{{x_1}^{j_1}\dots {x_d}^{j_d}}{(1+\zeta\,\|x\|^2)^n}\,.
\end{align*}
Hence we can write $\tilde f(c_0)$ as
\begin{align}\label{feature_map}
\tilde f(c_0)(x,x') = \sum_{n\in\mathbb N}\sum_{J\in\mathcal J_n} \al_n\be_{J,n} A_{J,n}(x) A_{J,n}(x')\,.
\end{align}
For any $n,n'\in\mathbb N$, $J\in\mathcal J_n$, $J'\in\mathcal J_{n'}$, it's clear that $A_{J,n}A_{J',n'} = A_{J'',n+n'}$, where $J''$ is some element in $\mathcal J_{n+n'}$. As a consequence, the linear span of the family $\{A_{J,n}\}_{n\in\mathbb N, J\in \mathcal J_n}$ is an algebra $\mathcal A$ (which is actually a subalgebra of $C(K)$ since all the $A_{J,n}$'s are continuous). Moreover $A_{(0\dots0),0}\equiv 1$, so that $\mathcal A$ contains a constant, and it is straightforward to check that $\mathcal A$ separates points. Then, from Stone-Weierstrass theorem, $\mathcal A$ is dense in $C(K)$. \\
Define a bijection $\iota:\mathbb{N}\to\{(n,J):n\in\mathbb N, J\in\mathcal J_n\}$ and let $\Phi_n = \sqrt{\al_n\be_{\iota(n)}}A_{\iota(n)}$. For all $x\in K$, we have that $\Phi(x) = \{\Phi_n(x)\}_{n\in\mathbb{N}}\in \ell^2$, since $p_n(x,x)<\infty$. We conclude that $\Phi$ is a feature map for $\tilde f(c_0)$ and the density of the linear span of $\{\Phi_n\}_{n\in\mathbb N}$ allows to conclude that the kernel is universal on $K$, in the sense of Definition \ref{univ_kern}, \cite{micchelli}.
\end{proof}
\begin{prop}\label{prop:t_phi}
Fix any finite Borel measure $\mu$ and assume that $\sigma_b>0$. Given any non-zero $\f\in L^2(K,\mu)$, there exists a $t_\f \in (0,1]$ such that $\langle T_\mu(q_t)\, \f,\f\rangle > 0$, for all $t\in (0,t_\f)$.
\end{prop}
\begin{proof}
From Corollary \ref{corex}, we can expand $T_\mu(q_t)$ around $t=0$ as 
\begin{align*}
    T_\mu(q_t) = T_\mu(q_0) +  t\, T_\mu(\dot q_0) + o(t) = t \,T_\mu\left(\sbq + \frac{\swq}{2} q_0\right) + T_\mu((c_0 + t f(c_0))R_0) + o(t) \,,
\end{align*}
the $o(t)$ being wrt the operator norm, where we've defined the kernel $R_0$ via $R_0(x,x') =\tfrac{\swq}{2}\sqrt{(1+\zeta \|x\|^2)(1+\zeta \|x'\|^2)}$.\\
Since $T_\mu(q_0)$ is non-negative, for any $\f\in L^2(I)$, we have 
\begin{align*}
    \langle T_\mu(q_t)\,\f,\f\rangle \geq \langle T_\mu((c_0 + t f(c_0))R_0)\,\f,\f\rangle + o(t) =\left(1-\frac{t}{2}\right)\langle T_\mu(c_0)\,\psi,\psi\rangle + t\,\langle T_\mu(f(c_0)))\,\psi,\psi\rangle + o(t)\,,
\end{align*}
where $\psi(x) = \sigma_w\sqrt{(1+\zeta \|x\|^2)/2}\,\f(x)$. We conclude by the strictly positivity of $\tilde f(c_0)$ on $L^2(K,\mu)$, thanks to Proposition \ref{prop:univ_K} and Lemma \ref{univ=Lp_spd}. 
\end{proof}
\begin{prop}\label{prop:dotq>=0}
For any finite Borel measure $\mu$, for any $t\in[0,1]$, the operator $T_\mu(\dot q_t)$ on $L^2(K,\mu)$ is non-negative definite. In particular, for all $\f\in L^2(K,\mu)$ we have
\begin{align*}
    \tfrac{\dd}{\dd t}\langle T_\mu(q_t)\,\f,\f\rangle \geq 0\,.
\end{align*}
\end{prop}
\begin{proof}

Fix $\mu$ and $\f\in L^2(K,\mu)$. From \eqref{recc} we can write
\begin{align*}
T_\mu(\dot q_t) = T_\mu\left(\sbq +\frac{\swq}{2}\,q_t +\frac{\swq}{2}\,\frac{f(c_t)}{c_t} q_t\right)\,.
\end{align*}

By Lemma \ref{ex}, $T_\mu(q_t)$ is non-negative definite, so we can write
\begin{align*}
\langle T_\mu(\dot q_t)\,\f,\f\rangle &= \sbq |\langle 1,\f\rangle|^2 + \frac{\swq}{2}\left\langle T_\mu\left(\frac{c_t + f(c_t)}{c_t} \,q_t\right)\f,\f\right\rangle\\
&\geq\frac{\swq}{2}\left\langle T_\mu\left(\tilde f(c_t)\,\frac{q_t}{c_t}\right)\f,\f\right\rangle\\
&=\frac{\sbq}{2}\langle T_\mu(\tilde f(c_t))\,\psi,\psi\rangle\,,
\end{align*}
where $\tilde f:\gamma\mapsto \tfrac{\gamma}{2}+f(\gamma)$, for $\gamma\in [-1,1]$, and $\psi(x) = \sqrt{q_t(x,x)}\,\f(x)$. By Lemma \ref{propf}, the Taylor expansion of $\tilde f$ around $0$ converges uniformly on $[-1,1]$, and all its coefficients are non-negative. We conclude by Lemma \ref{schur} that $T_\mu(\dot q_t)$ is non-negative definite.\\
Finally, to prove the inequality, it is enough to recall that $\tfrac{\dd}{\dd t}T_\mu(q_t) = T_\mu(\dot q_t)$ by Corollary \ref{corex}, the derivative $\tfrac{\dd}{\dd t}$ being wrt the operator norm on $L^2(K,\mu)$. 
\end{proof}
\begin{manualthm}{\ref{thm:expr}}
Let $K$ a compact set of $\R^d$ and assume $\sigma_b>0$. For any $t\in (0,1]$, the kernel $q_t$ on $K$, solution of \eqref{recc}, is universal.
\end{manualthm}
\begin{proof}
By Lemma \ref{univ=Lp_spd}, it suffices to show that for any finite Borel measure $\mu$ on $K$, $T_\mu(q_t)$ is strictly positive definite for all $t\in(0,1]$. Fix any nonzero $\f\in L^2(K,\mu)$, define the map $F$ on $[0,1]$ by $F(t) = \langle T_\mu(q_t)\,\f,\f\rangle$. For any fixed $t\in(0,1]$, by Proposition \ref{prop:t_phi} we can find $s\in(0,t)$ such that $F(s)>0$. Since $F$ is non decreasing by Proposition \ref{prop:dotq>=0}, we get that $F_t>0$. Hence $T_\mu(q_t)$ is strictly positive definite.
\end{proof}
\subsection*{Proof of Proposition \ref{prop:uniform_Sd}}
\begin{lemma}\label{lem10}
Let $\{A_n\}_{n\in\mathbb{N}}$ be a family of compact non-negative operators on a separable Hilbert space $\mathcal{H}$. Let $R_n$ be the range of $A_n$ and assume that $V = \Span(\bigcup_{n\in\mathbb N} R_n)$ is dense in $\mathcal{H}$. Let $\{\al_n\}_{n\in\mathbb{N}}$ be a strictly positive sequence such that the sum
\begin{align*}
A = \sum_{n\in\mathbb{N}} \alpha_n\,A_n
\end{align*}
converges in the operator norm. Then $A$ is a compact strictly positive definite operator. 
\end{lemma}
\begin{proof}
$A$ is the convergent limit of a sum of compact self-adjoint operators and hence it is compact and self-adjoint. Now, fix an arbitrary nonzero $h\in\mathcal{H}$. To show that $A$ is strictly positive it is enough to prove that $\langle A\,h,h\rangle>0$.\\
Denote by $V_N$ the linear span of $\bigcup_{n\in[0:N]}R_n$. Since $V_{N}\subseteq V_{N+1}$ for all $N$, and  
$\bigcup_{N\in\mathbb{N}}V_N= V$ is dense in $H$, there exists a sequence $\{h_N\}_{N\in\mathbb N}$ converging to $h$ and such that $h_N\in V_N$ for all $N$. \\
Now let's show that there must exist $n^\star\in\mathbb N$ such that $A_{n^\star}\,h\neq 0$. Since $\lim_{N\to\infty}\langle h,h_N\rangle = \langle h,h\rangle>0$, there must be a $N^\star$ such that $\langle h,h_{N^\star}\rangle>0$ and so there exists $n^\star\in[0:N^\star]$ and $h_{n^\star}\in V_{n^\star}$ such that $\langle h, h_{n^\star}\rangle\neq 0$. In particular, $h$ is not orthogonal to $R_{n^\star}$ and can not lie in the nullspace of $A_{n^\star}$, using the fact that $A_{n^\star}$ is compact and self-adjoint and so its range and its nullspace are orthogonal \cite{lang2012real}. \\
Using the spectral decomposition of non-negative compact operators, it's straightforward that $A_{n^\star}\,h \neq 0$ implies that $\langle A_{n^\star}\,h,h\rangle > 0$. Now, since $A_n$ is non-negative and $\al_n>0$ for all $n$, we have
\begin{align*}
    \langle A\,h,h\rangle = \sum_{n\in\mathbb N}\al_n\langle A_n\,h,h\rangle \geq \al_{n^\star} \langle A_{n^\star}\,h,h\rangle > 0\,,
\end{align*}
and so we conclude.
\end{proof}
\begin{lemma}\label{density_sphere}
For all $n\in\mathbb{N}$, consider the kernel $p_n$ on $\Sd$, defined by $p_n(x,x') = (x\cdot x')^n$, and let $T(p_n)$ be the induced integral operator on $L^2(\Sd)$. Denoting as $R_n$ the range of $T(p_n)$, the subspace $V = \Span\left(\bigcup_{n\in\mathbb N}R_n\right)$ is dense in $L^2(\Sd)$.\\
Moreover, letting $V' = \Span\left(\bigcup_{n\in\mathbb N}R_{2n}\right)$ and $V'' = \Span\left(\bigcup_{n\in\mathbb N}R_{2n+1}\right)$, we have $L^2(\Sd) = \overline{V'}\oplus\overline{V''}$, the overline denoting the closure in $L^2(\Sd)$. 
\end{lemma}
\begin{proof}
To prove that $V$, first notice that for each spherical harmonic $Y$, we can find an operator in the form $T(P(x\cdot x'))$ which have $Y$ in its range. Since the range of such an operator is trivially contained in $V$, it follows that $V$ contains all the spherical harmonic, and so it's dense in $L^2(\Sd)$. \\
Now, note that for any even $n$ and odd $n'$ we have 
\begin{align*}
    \int_{\Sd}(x\cdot z)^n(z\cdot x')^{n'}\dd z = 0\,,
\end{align*}
by an elementary symmetry argument, since it is the integral on the sphere of a homogeneous polynomial of odd degree $n+n'$ in the components $z_i$'s of $z$.\\
It follows that $V'$ and $V''$ are orthogonal. Since their union $V$ is dense, we conclude that $L^2(\Sd)=\overline{V'}\oplus\overline{V''}$.
\end{proof}
\begin{corollary}\label{op_dec}
With the notations of Lemma \ref{density_sphere}, assume that a sequence $\{\al_{n\in\mathbb N}\}$ is such that 
$A = \sum_{n\in\mathbb{N}}\al_n\,T(p_n)$
converges wrt the operator norm on $L^2(\Sd)$. Then $A = A'+A''$, where $A':\overline{V'}\to\overline{V'}$ and $A'':\overline{V''}\to\overline{V''}$. Such a decomposition is unique and 
\begin{align*}
    &A' = \sum_{n\in\mathbb{N}}\al_{2n}\,T(p_{2n})\,;
    &A'' = \sum_{n\in\mathbb N}\al_{2n+1}\,T(p_{2n+1})\,,
\end{align*}
both sums converging wrt the operator norm.
\end{corollary}
\begin{proof}
It's clear that $A=A'+A''$, when both $A'$ and $A''$ are defind on the whole $L^2(\Sd)$.\\
Consider any $\f\in L^2(\Sd)$. We have $A'\f \in \overline{V'}$, since $T(p_{2n})\f\in\overline{V'}$ for all $n$. Analogously, we can show that $A''\f\in\overline{V''}$. To conclude that we can consider the restrictions of $A'$ and $A''$ to $\overline{V'}$ and $\overline{V''}$ respectively, it is enough to recall that for compact self adjoint operators the nullspace is the orthogonal of the closure of the range \cite{lang2012real}, so that the nullspace of $A'$ contains $\overline{V''}$ and the nullspace of $A''$ contains $\overline{V'}$. 
\end{proof}
\begin{prop}\label{prop:t_phi_Sd}
Given any non-zero $\f\in L^2(\Sd)$, there exists a $t_\f \in (0,1]$ such that $\langle T(q_t)\, \f,\f\rangle > 0$, for all $t\in (0,t_\f)$. 
\end{prop}
\begin{proof}
The case $\sigma_b>0$ has been already established in Proposition \ref{prop:t_phi}, hence supposed that $\sigma_b=0$.\\
First recall \eqref{eqc}
\begin{align}\label{cdt}
     \dot c_t = \frac{\swq}{2}\,f(c_t)\,.
\end{align}
Deriving once more we have
\begin{align}\label{cddt}
    \ddot c_t = g(c_t)\,,
\end{align}
where $g=ff'$ as in Lemma \ref{propf}.\\
Define the kernels $p_n$'s, and the subspaces $V'$ and $V''$ of $L^2(\Sd)$, as in Lemma \ref{density_sphere}. 
By \eqref{cdt} and \eqref{cddt} we can write
\begin{align*}
    c_t = c_0 + t\,\dot c_0 + \tfrac{t^2}{2}\,\ddot c_0 + o(t^2) = c_0 + t\,f(c_0) + \tfrac{t^2}{2}\,g(c_0) + o(t^2)\,.
\end{align*}
Since $\sigma_b = 0$, we have that $c_0(x,x') = x\cdot x'$, so that $c_0=p_1$. \\
From Lemma \ref{propf}, $T(\dot c_0)=\sum_{n\in\mathbb N}\al_n\,T(p_n)$ and $T(\ddot c_0)=\sum_{n\in\mathbb N}\be_n\,T(p_n)$, both sums converging in the operator norm. Moreover, $\al_n>0$ for all even $n$ and $\al_n=0$ for all odd $n\geq 3$, whilst $\be_n>0$ for all odd $n$.\\
In particular, by Corollary \ref{op_dec} and Lemma \ref{lem10}, we deduce that the restriction of $T(\dot c_0)|_{\overline{V'}}:\overline{V'}\to\overline{V'}$ is well defined and strictly positive, and the same holds true for the restriction $T(\ddot c_0)|_{\overline{V''}}:\overline{V''}\to\overline{V''}$. \\
Now fix a non-zero $\f\in L^2(\Sd)$. By Lemma \ref{density_sphere}, we can write $\f = \f'+\f''$, with $\f'\in \overline{V'}$, $\f''\in \overline{V''}$ uniquely determined. \\
First, suppose that $\f'\neq 0$. Using Corollary \ref{corex} and recalling that $c_0=p_1$, we get
\begin{align*}
    \langle T(c_t)\,\f,\f\rangle = t\langle T(\dot c_0)|_{\overline{V'}}\,\f',\f'\rangle+
    \langle (1+t\,\al_1)T(p_1)\,\f'',\f''\rangle+o(t)>0\,,
\end{align*}
for $t$ small enough.\\
On the other hand, for $\f'=0$, we have $\f=\f''$ and so
\begin{align*}
    \langle T(c_t)\,\f,\f\rangle = \langle (1+t\,\al_1)T(p_1)\,\f'',\f''\rangle + \tfrac{t^2}{2}\,\langle T(\ddot c_0)|_{\overline{V''}}\,\f'',\f''\rangle+o(t^2)>0
\end{align*}
for $t$ small enough.\\
So there is a $t_\f$ such that, for $t\in(0,t_\f)$, $\langle T(c_t)\,\f,\f\rangle>0$. It follows immediately that the same property is true for $T(q_t)$.
\end{proof}
\begin{lemma}\label{lemma:Sd_spd=>univ}
Let $Q$ be a kernel on $\Sd$. Then $Q$ is universal on $\Sd$ if and only if $T(Q)$ is strictly positive definite on $L^2(\Sd)$.
\end{lemma}
\begin{proof}
If $Q$ is universal, $T(Q)$ is strictly positive definite by Lemma \ref{univ=Lp_spd}. On the other hand, if $T(Q)$ is strictly positive definite, by Proposition \ref{proposition:spectral_decomposition_on_sphere} its range contains all the spherical harmonics. Since the RKHS generated by $Q$ lies in the range of $T(Q)$ \cite{paulsen2016RKHS}, it contains the linear span of the spherical harmonics, which is dense in $C(\Sd)$ \cite{kounchev2001multivariate}. Hence $Q$ is universal.
\end{proof}
\begin{manualprop}{\ref{prop:uniform_Sd}}
For any $t\in (0,1]$, the integral operator $q_t$ on $L^2(K)$, solution of \eqref{recc} with $\sigma_b=0$, is universal on $\Sd$ (with $d\geq 2$).
\end{manualprop}
\begin{proof}
Proceeding as in the proof of Theorem \ref{thm:expr}, using Proposition \ref{prop:dotq>=0} and Proposition \ref{prop:t_phi_Sd} we can show that $T(q_t)$ is strictly posititive definite on $L^2(\Sd)$ for all $t\in(0,1]$. We conclude by Lemma \ref{lemma:Sd_spd=>univ} that $q_t$ is universal on $\Sd$.
\end{proof}

\section{Stable Neural Tangent Kernel}
As before, let $K\subset\R^d$ by a compact set.\\
With the uniform scaling, for arbitrary $x,x'\in K$, the continuous version of \eqref{recurrence_NTK} reads
\begin{align}\label{eq:ntk_cont}
&\dot \Theta_t(x,x') = \dot q_t(x,x') + \tfrac{\swq}{2} (1+f'(c_t(x,x')))\,\Theta_t(x,x')\,;\\
&\Theta_0 = q_0\,,
\end{align}
where $f':\gamma\mapsto -\tfrac{1}{\pi}\arccos\gamma$ is the first derivative of $f$, defined in \eqref{deff}.
\begin{lemma}\label{ex_NTK}
For any $x,x'$ in $K$, the solution $t\mapsto\Theta_t$ of \eqref{eq:ntk_cont} is unique and well defined for all $t\in[0,1]$. Moreover, the map $(x,x')\mapsto \Theta_t(x,x')$ is a kernel in the sense of Definition \ref{def:kernel} for all $t\in[0,1]$.
\end{lemma}
\begin{proof}
The existence and the solution is clear, since it is a homogeneous first order Cauchy problem, with continuous coefficients. We can write explicictly the solution as 
\begin{align*}
    \Theta_t = 
\end{align*}
\end{proof}

\section{Stable ResNet with decreasing scaling}\label{app:section:decreasing_scaling}

\begin{prop}[Uniform Convergence of the Kernel]
Consider a Stable ResNet with a decreasing scaling, i.e. the sequence $(\lambda_l)_{l\geq1}$ is such that $\sum_l \lambda_l^2 < \infty$. Then for all $(\sigma_b,\sigma_w)\in \mathbb{R}^+ \times (\mathbb{R}^+)^*$, there exists a kernel $Q_{\infty}$ on $\mathbb{R}^d$ such that for any compact set $C \subset \mathbb{R}^d$ we have 
$$
\sup_{x,x' \in C^2} |Q_{L}(x,x') - Q_{\infty}(x,x')| = \Theta\big(\sum_{k \geq L} \lambda_k^2\big)
$$
\end{prop}

\begin{proof}
Let $x,x' \in \mathbb{R}^d$. The kernel $Q_l$ is given recursively by the formula
\begin{equation*}
Q_l(x,x') =  Q_{l-1}(x,x') + \lambda_l^2 \sigma_b^2 + \frac{\sigma_w^2 \lambda_l^2}{2} \hat{f}(C_{l-1}(x,x')) \sqrt{Q_{l-1}(x,x)}\sqrt{Q_{l-1}(x',x')}
\end{equation*}
where $\hat{f}(t) = 2 \mathbb{E}[\phi'(Z_1)\phi'(t Z_1 + \sqrt{1-t^2} Z_2)] = t + f(t)$ and $Z_1,Z_2$ are iid standard Gaussian variables. In particular, we have
$$
Q_l(x,x) = \lambda_l^2 \sigma_b^2 + (1 + \frac{\sigma_w^2 \lambda_l^2}{2}) Q_{l-1}(x,x)
$$
observe that 
$$Q_l(x,x) + \frac{2\sigma_b^2}{\sigma_w^2} = (1 + \frac{\sigma_w^2 \lambda_l^2}{2}) (Q_{l-1}(x,x) + \frac{2\sigma_b^2}{\sigma_w^2})$$
therefore, we can assume without loss of generality that $\sigma_b=0$. This yields
$$
C_l(x,x') = \frac{1}{1+\frac{\lambda_l \sigma_w^2}{2}}C_{l-1}(x,x') + \frac{\frac{\sigma_w^2 \lambda_l^2}{2}}{1+\frac{\lambda_l \sigma_w^2}{2}} f(C^{l-1}(x,x'))
$$

Letting $\alpha_l = \frac{\sigma_w^2 \lambda_l^2}{2}$ and $C_l:=C_l(x,x')$, we have that 
$$
C_l = \frac{1}{1+\alpha_l} C_{l-1} + \frac{\alpha_l}{1+\alpha_l} f(C_{l-1})
$$
Since $f$ is non decreasing, $C^l$ is non-decreasing and has a limit $C_{\infty}(x,x')\leq1$. \\

Now let us prove that the convergence of $C_l$ to $C_{\infty}$ happens uniformly with a rate $\sum_{k\geq l} \lambda_l^2$. Using the recursive formula of $C_l$, and knowing that we have that 
$$
C_{\infty} - C_l = \frac{1}{1 + \alpha_l} (C_{\infty} - C_{l-1}) + \frac{\alpha_l}{1 + \alpha_l} (C_{\infty} - f(C_{l-1}))
$$
letting $\delta_l = C_{\infty} - C_l$, it is easy to see that, uniformly in $x,x' \in \mathbb{R}^d$, we have that
$$
\delta_l = \delta_{l-1} + \alpha_l + o(\alpha_l)
$$
Therefore, using the fact that $C_l \leq C_\infty$, we have
$$
\sup_{(x,x') \in \mathbb{R}^d} |C_l(x,x') - C_{\infty}(x,x')| \sim \sum_{k\geq l} \alpha_k
$$
Moreover, we know that 
$$
Q_l(x,x) = Q_1(x,x) \prod_{k=1}^l (1 + \alpha_k)
$$
So that for any compact set $C \subset \mathbb{R}^d$ 
$$
\sup_{x \in C} |Q_{l}(x,x)-Q_{\infty}(x,x)| \sim \sum_{k\geq l} \alpha_k
$$
Moreover, since $C_\infty(x,x') \geq C_l(x,x')$ and $Q_\infty(x,x) \geq Q_l(x,x)$ for all $x \in \mathbb{R}^d$, we conclude using the fact that 
\begin{equation*}
\begin{split}
    Q_{\infty}(x,x') - Q_{l}(x,x') &= \sqrt{Q_\infty(x,x)Q_\infty(x',x')}( C_\infty(x,x')-C_l(x,x'))\\
    &+ C_l(x,x') (\sqrt{Q_\infty(x,x)Q_\infty(x',x')} - \sqrt{Q_l(x,x)Q_l(x',x')})
\end{split}
\end{equation*}
and that 
\begin{equation*}
Q_{\infty}(x,x') - Q_{l}(x,x') \geq \sqrt{Q_l(x,x)Q_l(x',x')}( C_\infty(x,x')-C_l(x,x'))
\end{equation*}

\end{proof}

\section{Stable NTK}\label{app:section:NTK}

\section{A PAC-Bayes Generalization result}\label{app:section_PAC-Bayes}
In this section, we study the PAC-Bayes upper bound of a GP with kernel $Q_L$. We consider a dataset $S$ with $N$ iid training examples $\{(x_i,y_i) \in X\times Y, i\in[1:N]\}$, and a hypothesis space $\mathcal{H}$ from which we want to learn an optimal hypothesis according to some bounded loss function $\ell : Y\times Y \rightarrow [0,1]$. The empirical loss of a hypothesis $h \in \mathcal{H}$ is given by
$$
r_S(h) = \frac{1}{N} \sum_{i=1}^N \ell(h(x_i), y_i)
$$
Assuming that the samples are distributed as $(x,y) \sim \nu$ where $\nu$ is a probability distribution on $X \times Y$, we define the generalization (true) loss by 
$$
r(h) = \mathbb{E}_{\nu}[\ell(f(x),y)]
$$
For some randomized learning algorithm $\mathcal{A}$, the empirical and generalization loss are given by
$$
r_S(\mathcal{A}) = \mathcal{E}_{h \sim \mathcal{A}}[r_s(h)], \quad r(\mathcal{A}) = \mathcal{E}_{h \sim \mathcal{A}}[r(h)]
$$
The PAC-Bayes theorem gives a probabilistic upper bound on the generalization loss $r(\mathcal{A})$ of a randomized learning algorithm $\mathcal{A}$ in terms of the empirical loss $r_S(\mathcal{A})$. Fix a prior distribution $\mathcal{P}$ on the hypothesis set $\mathcal{H}$. The Kullback-Leibler divergence between $\mathcal{A}$ and $\mathcal{P}$ is defined as $KL(\mathcal{A} \| \mathcal{P}) = \int \log \frac{\mathcal{A}(h)}{P(h)} \mathcal{A}(h) dh \in [0, \infty]$. The Bernoulli KL-divergence is given by $kl(a||p) = a \log \frac{a}{p} + (1-a) \log \frac{1-a}{1-p} $  for $a,p \in [0,1]$. We define the inverse Bernoulli KL-divergence $kl^{-1}$ by 
$$
kl^{-1}(a,\epsilon) = \sup\{ p \in [0,1] : kl(a,p) \leq \epsilon \}
$$
\begin{thm}[PAC-Bayesian theorem \CIT]
For any loss function $\ell$ that is $[0,1]$ valued, for any distribution $\nu$, for any $N \in \mathbb{N}$, for any prior $P$, and any $\delta \in (0,1]$, with probability at least $1-\delta$ over the sample $S$, we have 
$$
\forall \mathcal{A}, \hspace{0.25cm} r(\mathcal{A}) \leq  kl^{-1}\left(r_S(\mathcal{A}), \frac{KL(\mathcal{A}\|P) + \log \frac{2 \sqrt{N}}{\delta}}{N}\right)
$$
\end{thm}
The PAC-Bayesian theorem gives can also be stated as
$$ kl(r_S(\mathcal{A}), r(\mathcal{A})) \leq  \frac{KL(\mathcal{A}\|P) + \log \frac{2 \sqrt{N}}{\delta}}{N}
$$
The KL-divergence term $KL(\mathcal{A}\|P)$ plays a major role as it controls the generalization gap, i.e. the difference (in terms of Bernoulli KL-divergence) between the empirical loss and the generalization loss. In our setting, we consider an ordinary GP regression with prior $P(f) = \mathcal{GP}(f | 0, Q(x,x'))$. Under the standard assumption that the outputs $y_N = (y_i)_{i \in [1:N]}$ are noisy versions of $f_N = (f(x_i))_{ i \in [1:N]}$ with $y_N | f_N \sim \mathcal{N}(y_N | f_N, \sigma^2 I)$, the Bayesian posterior $\mathcal{A}$ is also a GP and is given by 
\begin{equation}
\begin{split}
\mathcal{A}(f) = \mathcal{GP}(f | Q_N(x)(Q_{NN} + \sigma^2 I)^{-1} y_N, Q(x,x') - Q_N(x)(Q_{NN} + \sigma^2 I)^{-1})Q_N(x')^T)
\end{split}
\end{equation}
where $Q_N(x) = (Q(x,x_i))_{i \in [1:N]}$ and $Q_{NN} = (Q(x_i, x_j))_{1\leq i,j \leq N}$. In this setting, we have the following result
\begin{prop}[Stability of PAC-Bayes bound]
Let $Q_L$ be the kernel of a ResNet. Let $P_L$ be a GP with kernel $Q_L$ and $\mathcal{A}_L$ be the corresponding Bayesian posterior for some fixed noise level $\sigma>0$. Then, in a fixed setting (fixed sample size N), the following results hold
\begin{enumerate}
    \item With a standard ResNet, we have 
    $$
    KL(\mathcal{A}_L \| P_L) \gtrsim L
    $$
    \item With a Stable ResNet, we have 
    $$
    KL(\mathcal{A}_L \| P_L) = \mathcal{O}_L(1)
    $$
\end{enumerate}
\end{prop}
\begin{proof}
The proof relies on the simple observation that $P_L(f | f_N) = \mathcal{A}_L(f | f_N)$. This yields
\begin{equation}
\begin{split}
KL(\mathcal{A}_L \| P_L) &= KL(\mathcal{A}_L(f_N) \mathcal{A}_L(f | f_N) \| P_L(f_N) P_L(f | f_N)) \\
&= KL(\mathcal{A}_L(f_N) \| P_L(f_N)) \\
&= \frac{1}{2} \log(\det (Q_{L,NN} + \sigma^2 I)) - \frac{N}{2} \log(\sigma^2) - \frac{1}{2} \Tr (Q_{L,NN}(Q_{L,NN} + \sigma^2I)^{-1})\\
&+ \frac{1}{2} y_N^T (Q_{L,NN} + \sigma^2 I)^{-1} Q_{L,NN} (Q_{L,NN} + \sigma^2 I)^{-1} y_N
\end{split}
\end{equation}
where $Q_{L,NN} = (Q_{L}(x_i,x_j))_{1\leq i, j \leq N}$.\\

Since $Q_{L,NN}$ is symmetric and strictly positive definite, it is straightforward that the largest eigenvalue of $Q_{L,NN}(Q_{L,NN} + \sigma^2I)^{-1})$ is smaller than $1$. This yields   
$$\Tr (Q_{L,NN}(Q_{L,NN} + \sigma^2I)^{-1}) \leq N$$ and $$y_N^T (Q_{L,NN} + \sigma^2 I)^{-1} Q_{L,NN} (Q_{L,NN} + \sigma^2 I)^{-1} y_N \leq \sigma^{-2} \|y_N\|_2$$
Both quantities are bounded independently from $L$ and the scaling factors $(\lambda_{k,L})_{k\in[2:L]}$. \\
Now let us analyse the first term $\frac{1}{2} \log(\det (Q_{L,NN} + \sigma^2 I))$. Let $\mu_{L,0} \geq \mu_{L,1} \geq \dots \geq \mu_{L,N}$ be the eigenvalues of $Q_{L,NN}$. Now, we study the behaviour of the first term depending on the cases
\begin{enumerate}
    \item Assume we have a standard ResNet architecture. On the unit sphere $\mathbb{S}^{d-1}$, we have that $Q_L(x,x') \geq q_L C_L(x,x')$, where $q_L = (1 + \frac{\sigma_2}{2})^{L} \delta$ with $\delta = (\sigma_b^2 + \frac{\sigma_w^2}{d}) / (1 + \frac{\sigma_w^2}{2})$. Using \CIT, we know that $\lim_{L \rightarrow \infty} \hat{\mu}_{L,0} =  \hat{\mu}_{\infty,0} \in (0,\infty)$ and for all $k\geq 1$, $\lim_{L \rightarrow \infty} \hat{\mu}_{L,k} = 0$. This yields
    \begin{equation*}
    \begin{split}
        \log(\det (Q_{L,NN} + \sigma^2 I)) &\geq \sum_{k=1}^N \log(q_L \hat{\mu}_{L,k} + \sigma^2)\\
        &\geq \log(q_L \hat{\mu}_{L,0} + \sigma^2) + (N-1)\log(\sigma^2)\\
        &\gtrsim L\log(1+\frac{\sigma_w^2}{2})
    \end{split}
    \end{equation*}
    where the last inequality holds for sufficiently large $L$.
    \item In the case of Stable ResNet, we know that as $L \rightarrow \infty$, the kernel $Q_L$ converges to a strictly positive definite kernel $Q_\infty$, therefore the first term $\log(\det (Q_{L,NN} + \sigma^2 I))$ remains bounded as $L \rightarrow \infty$, which concludes the proof.
\end{enumerate}
\end{proof}

\section{Experimental details and additional results}\label{app:section:experimental details}
\subsection{NNGP/NTK results}
For our NNGP/NTK results, we preprocess all training, validation and test data using the same affine transform such that the training data has zero mean and channel-wise unit variance. We use the same WRN kernel architecture as \cite{lee_wide_nn_ntk} but omit the final average pooling step, which is known improve performance but dramatically increase computational costs \cite{novak2018bayesian, lee2020finite, anonymous2021exploring}. As ResNets always live on the EOC \TODO{sort this out}, we keep the initialisation weight variance at 2 and the initialisation bias variance at 0.01, but we tune the noise variance $\sigma^2$, which is akin to the regularisation parameter in kernel ridge regression. To do so, we compute validation accuracy on a validation set of size 5000, selecting the best $\sigma^2=\lambda \times \text{Trace}(K_{NN})/N$ from a logarithmic scale of $\lambda=[0.001, 0.01, 0.1]$, where $N$ is the training set size and $K_{NN}$ is the $N\times N$ training set Gram matrix for NNGP or NTK kernel $K$. 
\subsection{Trained ResNet results}
For all our trained ResNet experiments we use a similar setup to \cite{Wang2020Picking}
All ResNets are initialised with Kaiming initialisation \cite{he2} and we adopt ResNets architectures where we double the number of filters in each convolutional layer. For CIFAR-10/100 we use batch size 64 across all depths. For TinyImageNet we used batch size 128 for depths 32 \& 50, and batch size 100 for depth 104 in order to allow the model to fit onto a single 11GB VRAM GPU.

For ResNets trained without BatchNorm, for a fair comparison we tuned the initial learning rate on a logarithmic scale.

\newpage
\bibliographystyle{plain}
\bibliography{sample}


%

%

\renewcommand\theequation{A\arabic{equation}}
\renewcommand{\thelemma}{A\arabic{lemma}}
\renewcommand{\theprop}{A\arabic{prop}}
\renewcommand{\thecorollary}{A\arabic{corollary}}
\renewcommand{\thesection}{A\arabic{section}}
\renewcommand{\thedefinition}{A\arabic{definition}}
\renewcommand{\thetheorem}{A\arabic{theorem}}
\setcounter{equation}{0}
\setcounter{lemma}{0}
\setcounter{prop}{0}
\setcounter{theorem}{0}
\setcounter{corollary}{0}
\setcounter{definition}{0}
\setcounter{section}{-1}
\onecolumn
\aistatstitle{Stable ResNet: Appendix\footnote{Note for the AISTATS submission: While working on this Appendix, some slight modifications to the main text have been made.\\
Definition \ref{def:kernel} is now closer to the classical definition of kernel. Lemma \ref{prop:rkhs_hierarchy} and Lemma \ref{lemma:infnite_depth_standard_resnet} have been slighly modified. However non of the main results of the paper has been changed. All the statements are recalled (in their updated version) in this Appendix.\\
As a final note, some reference of the main text to the Appendix may refer now to the wrong label. In particular Lemma A2 is now Lemma \ref{lemma:corr_formula} and Lemma A18 is now Lemma \ref{lemma:stable_ntk_recursion}.}}
\section{Mathematical preliminaries}\label{app:setup_notations}
We will make use of functional analysis results on the theory of Hilbert space. We refer to \cite{lang2012real} for a comprehesive introduction to the topic. We precise here that, even when not explicitly stated, all Hilbert spaces considered in the present work are real, and all linear operator are bounded. \\
We will make use of the spectral theory for compact self-adjoint operators. We refer again to \cite{lang2012real} for a detailed discussion.

We will now introduce some concepts from the theory of kernels and RKHSs. \\
Consider a compact $K\subset\R^d$. A function $Q:K\to \R$ is said to be symmetric if for all $x,x'\in K$ we have $Q(x,x')=Q(x',x)$. Let us restate the definition of kernel.
\begin{manualdefinition}{\ref{def:kernel}}[Kernel]
A kernel $Q$ on $K$ is a symmetric continuous function $K^2\to\mathbb R$ such that, for all $n\in\mathbb N$, for any finite subset $\{x_1\dots x_n\}\subset K$, the matrix $\{Q(x_i,x_j)\}_{i,j}$ is non-negative definite.
\end{manualdefinition}
We state here a characterisation of kernels, which is an extension of Lemma
\ref{lemma:T(Q)}. Despite being a classical result (see the discussion about Mercer kernels in \cite{paulsen2016RKHS}), we will give a proof, for the sake of completeness.
\begin{lemma}\label{T_mu(Q)}[Extension of Lemma \ref{lemma:T(Q)}]
Let $Q:K^2\to\R$ be a continuous symmetric function. Then, given any finite Borel measure $\mu$ on $K$, we can define the integral operator $T_\mu(Q)$ on $L^2(K,\mu)$, via
\begin{align*}
    T_\mu(Q)\,\f(x) = \int_K T(x,x')\,\f(x')\,\dd\mu(x')\,,
\end{align*}
for any $\f\in L^2(K,\mu)$. The operator $T_\mu(Q)$ is a bounded compact self-adjoint definite operator.\\
Moreover, $Q$ is a kernel if and only if $T_\mu(Q)$ is non-negative definite for all finite Borel measures $\mu$ on $K$.
\end{lemma}
\begin{proof}
Let $Q:K^2\to\R$ be a continuous symmetric function. Then $T_\mu(Q)$ is a well defined bounded compact self-adjoint operator \cite{lang2012real}.\\
Let us assume that $Q$ is a kernel. By Mercer's theorem \cite{paulsen2016RKHS}, we can find continuous functions $\{Y_k\}_{k\in\mathbb N}$ such that for all $x,x'\in K$
\begin{align*}
    Q(x,x') = \sum_{k=0}^\infty Y_k(x)Y_k(x')
\end{align*}
and the convergence is uniform on $K^2$.\\
The continuity of the $Y_k$'s implies that they can be seen as elements of $L^2(K,\mu)$. Moreover, the uniform convergence, along with the fact that $\mu(K)<\infty$, implies the convergence of the sum wrt the $L^2(K,\mu)$ operator norm. In particular $T_\mu(K)$ is a limit of non-negative definite operators and hence non-negative definite.\\
Now, assume that, for all finite Borel $\mu$, $T_\mu(Q)$ is non-negative definite. Chosen a finite set $\{x_1\dots x_n\}\subset K$, in particular we have that $\mu =\sum_{i=1}^n\delta_{x_i}$ is a finite Borel measure (where $\delta_x$ is the Dirac measure on $x\in K$). Hence $T_\mu(Q)$ is the matrix $\{Q(x_i,x_j)\}_{i,j}$. We conclude that $Q$ is a kernel.
\end{proof}
We will now give a definition of the Reproducing Kernel Hilbert Space associated to a kernel. We refer to \cite{paulsen2016RKHS} for a general and comprehensive introduction to the topic.
\begin{definition}[RKHS]
Given a kernel $Q$ on $K$, we can associate to it a real Hilbert space $\mathcal H_Q$, with the following properties:
\begin{itemize}
\item The elements of $\Hh_Q$ are functions $K\to\mathbb R$.
\item Denoting as $\langle\cdot,\cdot\rangle_Q$ the inner product of $\Hh_Q$, for each $x\in K$, there exists a element $k_x\in \Hh_Q$ such that $h(x)=\langle h,k_x\rangle_Q$, for all $h\in\Hh_Q$.
\item For all $x,x'\in K$, $\langle k_x,k_{x'}\rangle_Q = Q(x,x')$.
\end{itemize}
Such a Hilbert space exists for each kernel $Q$ and it is unique up to isomorphism, \cite{paulsen2016RKHS}. $\Hh_Q$ is called the Reproducing Kernel Hilbert Space (RKHS) of $Q$.
\end{definition}
In general, it is not easy to explicitly the RKHS associated to a kernel $Q$. However, we can say that it contains the linear span of $\{x\mapsto Q(x,x')\}_{x'\in K}$. Actually, this linear span is a dense subset of $\Hh_Q$, wrt the norm of $\Hh_Q$ \cite{paulsen2016RKHS}. 

A kernel on $K$ is said to be universal if its RKHS is dense in the space of continuous functions $C(K)$, wrt the uniform norm.
\begin{manualdefinition}{\ref{univ_kern}}[Universal Kernel]
Let $Q$ be a kernel on $K$, and $\mathcal{H}_Q(K)$ its RKHS. We say that $Q$ is universal on $K$ if for any $\varepsilon>0$ and any continuous function $g$ on $K$, there exists $h \in \mathcal{H}_Q(K)$ such that $\|h-g\|_{\infty} <\varepsilon$.
\end{manualdefinition}
We can now state a characterization of universal kernels, from \cite{JMLR:v12:sriperumbudur11a}.
\begin{lemma}\label{univ=Lp_spd}
Let $Q:K^2\to\mathbb\R$ be a kernel, where $K\subset\R^d$ is compact. $Q$ is a universal kernel if and only if $T_\mu(Q)$ is strictly positive definite for all finite Borel measures $\mu$ on $K$, i.e., $\langle T_\mu(Q)\,\f,\f\rangle > 0$ for all non-zero $\f\in L^2(K,\mu)$.
\end{lemma}

As a final note, hereafter we often omit the explicit reference to the measure $\mu$, that is we will speak of the operator $T(Q)$ on $L^2(K)$. Unless otherwise stated, this notation implies the choice of an arbitrary finite Borel measure $\mu$ on the compact $K$.
\section{Residual Neural Networks and Gaussian processes}\label{app:section:ResNets_and_GPs}
Consider a standard ResNet architecture with $L+1$ layers, labelled with $l\in[0:L]$, of dimensions $\{N_l\}_{l\in[0:L]}$.
\begin{equation}\tag{\ref{equation:Resnet_dynamics}}
\begin{aligned}
y_0(x) &= W_0\,x + B_0\,; \\
y_l(x) &= y_{l-1}(x) + \mathcal{F}((W_l,B_l), y_{l-1}) \quad \mbox{for } l \in [1:L]\,,
\end{aligned}
\end{equation}
where $x \in \mathbb{R}^d$ is an input, $y^l(x)$ is the vector of pre-activations, $W^l$ and $B^l$ are respectively the weights and bias of the $l^{th}$ layer, and $\mathcal{F}$ is a mapping that defines the nature of the layer. In general, the mapping $\mathcal{F}$ consists of successive applications of simple activation functions. In this work, for the sake of simplicity, we consider Fully Connected blocks with ReLU activation function $\phi:x\mapsto \max(0,x)$
$$
\mathcal{F}((W,B), x) = W\phi(x) + B\,.
$$
Hereafter, $N_l$ denotes the number of neurons in the $l^{th}$ layer, $\phi$ the activation function and $[m:n]:=\{m,m+1, ..., n\}$ for  $m\leq n$. The components of weights and bias are respectively initialized with $W_l^{ij}\overset{\text{iid}}{\sim}\mathcal{N}(0,\sigma_w^2/N_{l-1})$, and $B_l^i\overset{\text{iid}}{\sim}\mathcal{N}(0,\sigma_b^2)$ where $\mathcal{N}(\mu, \sigma^{2})$ denotes the normal distribution of mean $\mu$ and variance $\sigma^{2}$. 

In \cite{yang2017meanfield}, authors showed that wide deep ResNets might suffer from gradient exploding during backpropagation.

Recent results by \cite{hayou_pruning} suggest that scaling the residual blocks with $L^{-1/2}$ might have some beneficial properties on model pruning at initialization. This is a result of the stabilization effect of scaling on the gradient.

More generally, we introduce the residual architecture:
\begin{align}\tag{\ref{equation:scaled_Resnet}}
\begin{split}
&y_0(x) = W_0\,x+B_0\,; \\
&y_l(x) = y_{l-1}(x) + \lambda_{l,L}\, \mathcal{F}((W_l,B_l), y_{l-1})\,, \quad l \in [1:L]\,,
\end{split}
\end{align}
where $(\lambda_{k,L})_{k\in[1:L]}$ is a sequence of scaling factors. We assume hereafter that there exists $\lambda_{\max} \in (0,\infty)$ such that for all $L\geq 1$ and $k \in [1:L]$, we have that $\lambda_{k,L} \in (0,\lambda_{\max}]$.

\subsection{Recurrence for the covariance kernel}
Recall that in the limit of infinite width, each layer of a ResNet can be seen a centred Gaussian Process. For the layer $l$ we define the covariance kernel $Q_l$ as
$Q_l(x,x') = \E[y_l^1(x)y_l^1(x')]$
for $x,x'\in \mathbb R^d$.

By a standard approach, introduced by \cite{samuel} for feedforward neural networks, and easily generalizable for ResNets \cite{yang2019tensor_i, hayou_ntk}, it is possible to evaluate the covariance kernels layer by layer, recursively. More precisely, consider a ResNet of form \eqref{equation:scaled_Resnet}. Assume that $y_{l-1}^i$ is a Gaussian process for all $i$. Let $x,x' \in \mathbb{R}^d$. We have that
\begin{equation*}
\begin{aligned}
Q_l(x,x') &= \mathbb{E}[y_l^1(x)y_l^1(x')]\\
&= \mathbb{E}[y_{l-1}^1(x)y_{l-1}^1(x')] + \sum_{j=1}^{N_{l-1}} \mathbb{E}[(W_l^{1j})^2 \phi(y^j_{l-1}(x))\phi(y^j_{l-1}(x'))] + \mathbb{E}[(B_{l}^1)^2] +  \mathbb{E}[B_{l}^1 (y_{l-1}^1(x) + y_{l-1}^1(x'))]\\
&+ \mathbb{E}\left[\sum_{j=1}^{N_{l-1}} W_l^{1j} (y_{l-1}^1(x) \phi(y_{l-1}^1(x')) + y_{l-1}^1(x') \phi(y_{l-1}^1(x)))\right].
\end{aligned}
\end{equation*}
Some terms vanish because $\mathbb{E}[W_l^{1j}] = \mathbb{E}[B_l^{j}] = 0$. Let $Z_j = \frac{\sqrt{N_{l-1}}}{\sigma_w} W_l^{1j}$. The second term can be written as 
$$
\mathbb{E}\left[\frac{\sigma_w^2}{N_{l-1}} \sum_j (Z_j)^2 \phi(y^j_{l-1}(x))\phi(y^j_{l-1}(x'))\right] \rightarrow \sigma_w^2 \,\mathbb{E}[\phi(y^1_{l-1}(x))\phi(y^1_{l-1}(x'))]\,,
$$
where we have used the Central Limit Theorem. Therefore, we have 
\begin{align}\label{eq:recurrence_resnet}
Q_l(x,x') = Q_{l-1}(x,x') + \lambda_{l,L}^2 \Psi_{l-1}(x,x')\,,
\end{align}
where $\Psi_{l-1}(x,x') = \sigma^2_b + \sigma^2_w \mathbb{E}[\phi(y_{l-1}^1(x))\phi(y_{l-1}^1(x'))]$.

For the ReLU activation function $\phi(x) = \max(0,x)$, the recurrence relation can be written more explicitly, since we can give a simple expression for the expectation $\E[\phi(y_{l-1}^1(x))\phi(y_{l-1}^1(x'))]$, \cite{daniely2016deeper}.
Let $C_l$ be the correlation kernel, defined as 
\begin{align*}
C_l(x,x') = \frac{Q_l(x,x')}{\sqrt{Q_l(x,x)Q_l(x,x')}}
\end{align*}
and let $f:[-1,1]\to\mathbb{R}$ be given by 
\begin{align}\tag{\ref{deff}}
f:\gamma\mapsto \frac{1}{\pi}(\sqrt{1-\gamma^2}-\gamma\arccos \gamma)\,.
\end{align}
Then we have $\E[\phi(y_{l-1}^1(x))\phi(y_{l-1}^1(x'))]= \tfrac{1}{2}\left(1+\tfrac{f(C_{l-1})}{C_{l-1}}\right)Q_{l-1}$
and so we find the recurrence relation \eqref{rec}
\begin{align*}\tag{\ref{rec}}
\begin{split}
&Q_{l} = Q_{l-1} + \lambda_{l,L}^2\left[{\sigma_b}^2 +\tfrac{{\sigma_w}^2}{2}\left(1+\tfrac{f(C_{l-1})}{C_{l-1}}\right)Q_{l-1}\right]\,;\\
&Q_0(x,x') =  \sbq+\swq\,\tfrac{x\cdot x'}{d}\,.
\end{split}
\end{align*}
For the remainder of this appendix, we define the function 
\begin{equation}\label{equation:f_hat}
\hat{f}(\gamma) =\gamma +  f(\gamma) = \frac{1}{\pi} \left(\gamma \arcsin(\gamma) + \sqrt{1-\gamma^2}\right) + \frac{1}{2}\gamma\,.
\end{equation}
For all $l$, the diagonal terms of $Q_l$ have closed-form expressions. We show this in the next lemma.
\begin{lemma}[Diagonal elements of the covariance]\label{lemma:diagonal_elements}
Consider a ResNet of the form \eqref{equation:scaled_Resnet} and let $x \in \mathbb{R}^d$. We have that for all $l \in [1:L]$,
$$
Q_{l}(x,x) = - \frac{2 \sigma_b^2}{\sigma_w^2}+\prod_{k=1}^{l}\left(1 + \frac{\sigma_w^2 \lambda_{k,L}^2}{2}\right)\left(Q_0(x,x) + \frac{2 \sigma_b^2}{\sigma_w^2}\right) \,.
$$
\end{lemma}
\begin{proof}
We know that 
$$
Q_{l}(x,x) = Q_{l-1}(x,x) + \lambda_{l,L}^2 \left(\sigma_b^2 + \frac{\sigma_w^2}{2} \hat{f}(1)\right)\,,
$$
where $\hat{f}$ is given by \eqref{equation:f_hat}. It is straightforward that $\hat{f}(1) = 1$. This yields
$$
Q_{l}(x,x) + \frac{2 \sigma_b^2}{ \sigma_w^2} =  \left(1+ \lambda_{l,L}^2 \frac{\sigma_w^2}{2}\right)\left(Q_{l-1}(x,x) + \frac{2 \sigma_b^2}{ \sigma_w^2}\right)\,.
$$
we conclude by telescopic product.
\end{proof}
As a corollary of the previous result, it is easy to show that for a Standard ResNet the diagonal terms explode with depth, which is Lemma \ref{lemma:exploding} in the main paper.
\begin{manuallemma}{\ref{lemma:exploding}}[Exploding kernel with standard ResNet]
Consider a ResNet of type \eqref{equation:Resnet_dynamics}. Then, for all $x\in \mathbb{R}^d$, 
$$Q_L(x,x) \geq \left(1+\tfrac{\swq}{2}\right)^L \left(\sigma_b^2 \left(1 + \tfrac{2}{\swq}\right) + \tfrac{\swq}{d} \|x\|^2\right).$$
\end{manuallemma}
\begin{proof}
The statement trivially follows from Lemma \ref{lemma:diagonal_elements}, using that $Q_0(x,x) = \sbq +\tfrac{\swq}{d}\|x\|^2$ and the fact that for a Standard ResNet \eqref{equation:Resnet_dynamics}, all the coefficients $\lambda_{l,L}$'s are equal to $1$.
\end{proof}
In the case of a ResNet with no bias, the correlation kernel follows a simple recursive formula described in the next lemma.
\begin{lemma}[Correlation formula with zero bias]\label{lemma:corr_formula}
For a ResNet of the form \eqref{equation:scaled_Resnet} with $\sigma_b=0$, we have that for all $x,x' \in \mathbb{R}^d$ and $l\leq L$:
$$
C_l(x,x') = \frac{1}{1+\alpha_{l,L}} C_{l-1}(x,x') + \frac{\alpha_{l,L}}{1 + \alpha_{l,L}} \hat{f}(C_{l-1}(x,x') )\,,
$$
where $\alpha_{l,L} = \frac{\lambda_{l,L}^2 \sigma_w^2}{2}$.
\end{lemma}
\begin{proof}
This is direct result of the covariance recursion formula \eqref{rec}.
\end{proof}

\subsection{Proof of Proposition \ref{proposition:stable_gradient}}
We use the following result from \cite{yang_tensor3_2020} in order to derive closed form expressions for the second moment of the gradients.
\begin{lemma}[Corollary of Theorem D.1. in \cite{yang_tensor3_2020}]\label{lemma:gradient_independence}
Consider a ResNet of the form \eqref{equation:scaled_Resnet} with weights $W$. In the limit of infinite width, we can assume that $W^T$ used in back-propagation is independent from $W$ used for forward propagation, for the calculation of Gradient Covariance and NTK.
\end{lemma}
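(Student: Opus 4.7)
The plan is to derive Lemma \ref{lemma:gradient_independence} directly from the Master Theorem of the Tensor Programs framework (Theorem D.1 in \cite{yang_tensor3_2020}), so essentially the work is to verify that a ResNet of type \eqref{equation:scaled_Resnet}, together with its back-propagation pass, can be cast as a valid tensor program of the appropriate type (a NETSOR$\top$ program in Yang's terminology), and that the quantities of interest (the second moments defining the Gradient Covariance and the NTK) are among those covered by the Master Theorem.

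First I would describe the forward pass of \eqref{equation:scaled_Resnet} as a sequence of tensor-program operations: each pre-activation vector $y_l$ is obtained from $y_{l-1}$ by an entrywise nonlinearity $\phi$ (ReLU), a MatMul with the Gaussian matrix $W_l$ (whose entries have the NTK-parametrization variance $\sigma_w^2/N_{l-1}$), and additions (both with the bias $B_l$ and with the skip connection $y_{l-1}$, both of which are allowed as LinComb operations in NETSOR$\top$). The initial vectors $y_0^i$ are Gaussian by construction and thus form valid input vectors to the program. Scaling by $\lambda_{l,L}$ is simply a LinComb with a deterministic coefficient and does not affect the structure. Next, I would describe the backward pass: the partial derivatives $\partial \mathcal L_y / \partial y_l$ computed by the chain rule involve multiplications by $W_l^\top$ and entrywise multiplications by $\phi'(y_l)$ (the Heaviside step). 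These operations also fit the NETSOR$\top$ framework (MatMul by transposes of the program's Gaussian matrices, together with coordinatewise nonlinearities). Crucially, since ReLU and its a.e. derivative are pseudo-Lipschitz (or at least ``controlled'' in the sense required by Yang), the nonlinearity regularity assumption of the Master Theorem is satisfied.

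Second, I would apply Theorem D.1 of \cite{yang_tensor3_2020}. The Master Theorem states that for any collection of vectors produced by a NETSOR$\top$ program, the empirical joint distribution of their coordinates converges almost surely, as the width goes to infinity, to a Gaussian distribution whose covariance is given by a recursive rule. A key feature of that rule is that whenever a Gaussian matrix $W$ enters the program once through MatMul and once through MatMul by $W^\top$, the covariance is computed \emph{as if} the two occurrences used independent copies of $W$; this is precisely the Gradient Independence Assumption. Consequently, any quantity that is a continuous function of second moments of coordinates of program vectors, specifically the Gradient Covariance $\mathbb E[\partial \mathcal L / \partial W_l^{11}\cdot \partial \mathcal L / \partial W_l^{11}]$ and the NTK entries in \eqref{recurrence_NTK}, can be computed under the fiction that $W_l$ appearing in back-propagation is an independent copy of $W_l$ appearing in the forward pass.

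Finally, I would note that the only quantities used in this paper's NTK/gradient analysis are second moments (variances and covariances of coordinates of forward/backward vectors), which are exactly the observables controlled by the Master Theorem; this is why the statement is restricted to ``Gradient Covariance and NTK''. The main obstacle will be the bookkeeping needed to verify the hypotheses of Theorem D.1: checking that the NTK-parametrization and the bounded scaling factors $\lambda_{l,L}\in(0,\lambda_{\max}]$ do not disturb the variance scaling required by NETSOR$\top$, and that the use of the non-differentiable ReLU (whose derivative is the Heaviside step) still falls within the class of controlled nonlinearities for which the Master Theorem has been proven; both points, however, are explicitly addressed in \cite{yang_tensor3_2020}, so the proof ultimately reduces to a careful translation of ResNet forward/backward dynamics into the tensor-program language.
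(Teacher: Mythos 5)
Your proposal is correct and takes essentially the same route as the paper: the paper offers no separate proof, stating the lemma as a direct corollary of Theorem D.1 in \cite{yang_tensor3_2020}, and your argument is simply a more detailed sketch of that same reduction (expressing the ResNet forward and backward passes as a tensor program and invoking the Master Theorem, under which $W$ and $W^\top$ may be treated as independent for computing the gradient covariance and the NTK).
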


Next we re-state and prove Proposition \ref{proposition:stable_gradient}.
\begin{manualprop}{\ref{proposition:stable_gradient}}[Stable Gradient]
Consider a ResNet of type \eqref{equation:scaled_Resnet}, and let $\mathcal{L}_y(x):= \ell(y_L^1(x), y)$
for some $(x,y) \in \mathbb{R}^d \times \mathbb{R}$, where $\ell : (z,y) \mapsto \ell(z,y)$ is a loss function satisfying $\sup_{K_1\times K_2}\left|\frac{\partial \ell(z,y)}{\partial z}\right| < \infty$, for all compacts $K_1, K_2 \subset \mathbb{R}$. Then, in the limit  of infinite width, for any compacts $K \subset \mathbb{R}^d$, $K' \subset \mathbb{R}$, there exists a constant $C>0$ such that for all $(x,y)\in K\times K'$
$$\sup\limits_{l \in [0: L]}\mathbb{E}\!\left[\left|\tfrac{\partial \mathcal{L}_y(x)}{\partial W_l^{11}}\right|^2\right] \leq C \exp{\left(\tfrac{\sigma_w^2}{2}\textstyle \sum_{l=1}^L \lambda_{l,L}^2\right)}\,.$$
Moreover, if there exists $\lambda_{\min}>0$ such that for all $L\geq 1$ and $l\in [1:L]$ we have $\lambda_{l,L}\geq\lambda_{\min}$, then, for all $(x,y) \in (\mathbb{R}^d\setminus\{0\}) \times \mathbb{R}$ such that $\left|\frac{\partial \ell(z,y)}{\partial z}\right| \neq 0$, there exists $\kappa>0$ such that for all $l \in [1:L]$
\begin{align*}
\mathbb{E}\left[\left|\tfrac{\partial \mathcal{L}_y(x)}{\partial W_l^{11}}\right|^2\right] \geq \kappa \left(1 + \tfrac{\lambda^2_{\min} \sigma_w^2}{2}\right)^L\,. 
\end{align*}
\end{manualprop}

\begin{proof}
Let $(x,y) \in \mathbb{R}^d \times \mathbb{R}$ and $\Bar{q}^l(x,y) = \mathbb{E}\left[\left|\frac{\partial \mathcal{L}_y(x)}{\partial y_l^{1}}\right|^2\right]$. Using Lemma \ref{lemma:gradient_backprop}, we have that 
$$
\Bar{q}^l(x,y) = \left(1 + \frac{\sigma_w^2 \lambda_{l+1,L}^2}{2}\right) \Bar{q}^{l+1}(x,y)\,.
$$
This yields
$$
\Bar{q}^l(x,y) = \prod_{k=l+1}^L \left(1 + \frac{\sigma_w^2 \lambda_{k,L}^2}{2}\right)
 \Bar{q}^l(x,y)\,.$$  
 
Moreover, using Lemma \ref{lemma:gradient_independence}, we have that $\mathbb{E}\left[\left|\frac{\partial \mathcal{L}_y(x)}{\partial W_l^{11}}\right|^2\right] = \lambda_{l,L}^2 \Bar{q}^l(x,y) \mathbb{E}[\phi(y_{l-1}^1(x))^2]$. We have $\mathbb{E}[\phi(y_{l-1}^1(x))^2] = \frac{1}{2} Q_{l-1}(x,x)$. From Lemma~\ref{lemma:diagonal_elements} we know that 
\begin{equation*}
Q_{l-1}(x,x) =  - \frac{2 \sigma_b^2}{\sigma_w^2}+\prod_{k=1}^{l-1}\left(1 + \frac{\sigma_w^2 \lambda_{k,L}^2}{2}\right) \left(Q_0(x,x) + \frac{2 \sigma_b^2}{\sigma_w^2}\right)
\leq \prod_{k=1}^{l-1}\left(1 + \frac{\sigma_w^2 \lambda_{k,L}^2}{2}\right)\left(Q_0(x,x) + \frac{2 \sigma_b^2}{\sigma_w^2}\right)\m,
\end{equation*}
This yields
$$
\mathbb{E}\left[\left|\frac{\partial \mathcal{L}_y(x)}{\partial W^l_{11}}\right|^2\right] \leq \frac{2}{\sigma_w^2} \prod_{k=1}^L\left(1 + \frac{\sigma_w^2 \lambda_{k,L}^2}{2}\right) \left(\frac{1}{2}Q_0(x,x) + \frac{ \sigma_b^2}{\sigma_w^2}\right) \Bar{q}^l(x,y)\,.
$$
It is straightforward that $\prod_{k=1}^L\left(1 + \frac{\sigma_w^2 \lambda_{k,L}^2}{2}\right) \leq  \exp{\left(\frac{\sigma_w^2}{2} \sum_{k=1}^L \lambda_{k,L}^2\right)}$. 
Let $K \subset \mathbb{R}^d$, $K' \subset \mathbb{R}$ be two compact subsets. Using the condition on the loss function $\ell$, we have that
$$
\mathbb{E}\left[\left|\frac{\partial \mathcal{L}_y(x)}{\partial W^l_{11}}\right|^2\right] \leq C \exp{\left(\frac{\sigma_w^2}{2} \sum_{k=1}^L \lambda_{k,L}^2\right)},
$$
where $C = \frac{2}{\sigma_w^2} \left(\sup_{(x,y) \in K\times K'}\Bar{q}^l(x,y)\right) \left(\sup_{x\in K} Q_0(x,x) + \frac{2 \sigma_b^2}{\sigma_w^2}\right)$. We conclude by taking the supremum over $l$ and $x,y$.\\

Let $(x,y) \in (\mathbb{R}^d\setminus\{0\}) \times \mathbb{R}$ such that $\left|\frac{\partial \ell(z,y)}{\partial z}\right| \neq 0$. We have that 

\begin{equation*}
\begin{split}
\mathbb{E}\left[\left|\frac{\partial \mathcal{L}_y(x)}{\partial W^l_{11}}\right|^2\right] &\geq \frac{1}{2} \frac{\lambda_{l,L}^2}{1 + \frac{\sigma_w^2}{2}\lambda_{l,L}^2}   \prod_{k=2}^L\left(1 + \frac{\sigma_w^2 \lambda_{k,L}^2}{2}\right) Q_1(x,x) \Bar{q}^l(x,y)\\
& \geq \kappa \left(1 + \frac{\sigma_w^2 \lambda_{\min}^2}{2}\right)^L\,,
\end{split}
\end{equation*}
where $\kappa = \frac{1}{2} \frac{\lambda_{\min}^2}{\left(1 + \frac{\sigma_w^2}{2}\lambda_{\max}^2\right)\left(1 + \frac{\sigma_w^2}{2}\lambda_{\min}^2\right)} Q_1(x,x)\, \Bar{q}^l(x,y) > 0$.
\end{proof}

Using Lemma~\ref{lemma:gradient_independence}, we can derive simple recursive formulas for the second moment of the gradient as well as for the Neural Tangent Kernel (NTK). This was previously done in \cite{samuel} for feedforward neural networks, we prove a similar result for ResNet in the next lemma.
\begin{lemma}[Gradient Second moment]\label{lemma:gradient_backprop}
In the limit of infinite width, using the same notation as in proposition \ref{proposition:stable_gradient}, we have that 
$$
\Bar{q}^l(x,y) = \left(1 + \frac{\sigma_w^2 \lambda_{l+1,L}^2}{2}\right)\, \Bar{q}^{l+1}(x,y)\,.
$$
\end{lemma}
\begin{proof}
It is straighforward that 
$$
\frac{\partial \mathcal{L}_y(x)}{\partial y_l^i} = \frac{\partial \mathcal{L}_y(x)}{\partial y_{l+1}^i} + \lambda_{l+1,L} \sum_{j} \frac{\partial \mathcal{L}_y(x)}{\partial y_{l+1}^j} W_{l+1}^{ji} \phi'(y_l^i)\,.
$$
Using lemma \ref{lemma:gradient_independence} and the Central Limit Theorem, we have that 
$$
\Bar{q}^l(x,y) = \Bar{q}^{l+1}(x,y) + \lambda_{l+1,L}^2 \Bar{q}^{l+1}(x,y) \sigma_w^2 \mathbb{E}[\phi'(y_l^i(x))^2]\,.
$$
We conclude using $\mathbb{E}[\phi'(y^l_i(x))^2] = \mathbb{P}(\mathcal{N}(0,1) > 0) = \frac{1}{2}$.
\end{proof}

Before moving to the next proofs, recall the definition of Stable ResNet.

\begin{manualdefinition}{\ref{def:stable_resnet}}[Stable ResNet]
A ResNet of type \eqref{equation:scaled_Resnet} is called a Stable ResNet if and only if $\lim\limits_{L \rightarrow \infty}\sum\limits_{k=1}^L \lambda_{k,L}^2 < \infty$.
\end{manualdefinition}

\subsection{Some general results: $Q_l$ and $C_l$ are kernels}
Fix a compact $K\subset \R^d$. If $\sigma_b=0$, then assume that $0\notin K$. We will now show that, for all layers $l$, the covariance function $Q_l$ is a kernel in the sense of Definition \ref{def:kernel}.\\
The symmetric property of $Q_l$ is clear by definition as the covariance of a Gaussian Process. Let us now discuss the regularity of $Q_l$ as a function on $K^2$.

The next result shows that any function $ F(\phi):\gamma\mapsto\mathbb{E}[\phi(X)\phi(Y), (X,Y) \sim \mathcal{N}(0, \begin{psmallmatrix}
1 & \gamma \\
\gamma & 1
\end{psmallmatrix})]$ is analytic on the segment $[-1,1]$.
\begin{lemma}[O'Donnell (2014)]\label{odonnell}
Let $F(\phi)(\gamma) = \mathbb{E}[\phi(X)\phi(Y), (X,Y) \sim \mathcal{N}(0, \begin{psmallmatrix}
1 & \gamma \\
\gamma & 1
\end{psmallmatrix})]$. Then for all $\phi \in L^2(\mathcal{N}(0,1))$, there exists a non negative sequence $\{a_n\}_{n\in\mathbb N}$ such that $F(\phi)(\gamma) = \sum_{i\in\mathbb N} a_i \gamma^i$ for all $\gamma \in [-1,1]$. 
\end{lemma}
Leveraging the previous result, the function $f$ defined in \eqref{deff} is analytic. We clarify this in the next lemma.
\begin{lemma}[Analytic property of $f$]\label{lemma:analytic_decomp_f_hat}
The function $f:[-1,1]\to\mathbb{R}$, defined in \eqref{deff}, is an analytic function on $(-1,1)$, whose expansion $f(\gamma) = \sum_{n\in\mathbb{N}}\al_n\,\gamma^n$
converges absolutely on $[-1,1]$. Moreover, $\al_n>0$ for all even $n\in\mathbb{N}$, $\al_1=-1/2$ and $\al_n=0$ for all odd $n\geq 3$.
\end{lemma}
\begin{proof}
With the notations of Lemma \ref{odonnell}, when $\phi$ is the ReLU activation function we have that $F(\phi) = \hat f$, defined in \eqref{equation:f_hat}. Hence, by Lemma \ref{odonnell}, we know that $\hat f$ is analytic on $(-1,1)$ and its expansion around $0$ converges on $[-1,1]$. In particular this will be true for $f$ as well. \\
For $\gamma\in[-1,1]$, let us write $\hat f(\gamma) = \sum_{n\in\mathbb{N}}a_n\gamma^n$.
Recalling the explicit form of $\hat f$, that is
$$
\hat{f}(\gamma) = \frac{1}{\pi} \gamma \,\text{arcsin}(\gamma) + \frac{1}{\pi} \sqrt{1-\gamma^2} + \frac{1}{2}\gamma\,,
$$
we get $a_0 = \frac{1}{\pi}$. Moreover, we have that for all $\gamma \in (-1,1)$
$$
\hat{f}'(\gamma) = \frac{1}{\pi} \arcsin{\gamma} + \frac{1}{2}\,.
$$
This yields $a_1 = \hat f'(0) = \frac{1}{2}$. Then, noticing that
$$
\hat{f}^{(3)}(\gamma) = \frac{\gamma}{\pi (1 - \gamma^2)^{3/2}}
$$
is an odd function, we get that for all $i \geq 1, a_{2i+1} = 0$. Now let us prove that for all $k\geq 1$, there exist $b_{k,0},b_{k,1},..., b_{k,k-1} > 0$ such that, for all $\gamma \in (-1,1)$,
$$
\hat{f}^{(2k)}(\gamma) = \frac{1}{\pi} \sum_{m=0}^{k-1} b_{k,m} \gamma^{2m} (1 - \gamma^2)^{-k-m+1/2}\,.
$$
We prove this by induction. For $k=1$, we have that 
$$
\hat{f}^{(2)}(\gamma) = \frac{1}{\pi} (1 - \gamma^2)^{-1/2}\,,
$$
so that our claim holds. Assume now that it is true for some $k\geq 1$, let us prove it for $k+1$. It is easy to see that
\begin{equation}
    b_{k+1,m} =
    \begin{cases*}
      2(2k-1) b_{k,0} + 2b_{k,1} & if $m = 0$; \\
      2(4k^2-1) b_{k,0} + 5(2k+1) b_{k,1} + 12 b_{k,2} & if $m = 1$; \\
      2(m+1)(2m+1) b_{k,m+1} + (4m+1)(2k+2m-1) b_{k,m} \\
      \quad +\, (2k+2m-3)(2k+2m-1) b_{k,m-1} & if $m \in \{2,3,...,k-1\}$; \\
      (4k-3)(4k-1) b_{k,k-1} & if $m = k$. \\
    \end{cases*}
  \end{equation}

The induction is straightforward.
In particular, we have shown that $a_{2i} = \frac{\hat{f}^{(2i)}(0)}{(2i)!} = \frac{b_{i,0}}{(2i)!}> 0$.\\
The conclusion for the coefficients $\al$'s of the expansion of $f$ is then trivial.
\end{proof}

Using Lemma \ref{lemma:analytic_decomp_f_hat}, it will not be hard to show that $Q_l$ is continuous. The non-negativity of $T(Q_l)$ can be seen as a consequence of the definition of $Q_l$ as the covariance of a Gaussian Process. However, we will give a direct proof of it, so that we can state here a general result which we will need later on. 
\begin{lemma}\label{schur}
Let $C$ be a kernel on $K$, such that $|C(z)|\leq 1$ for all $z\in K$. Consider a non-negative real sequence $\{\al_n\}_{n\in\mathbb{N}}$, and assume that
\begin{align*}
g(\gamma) = \sum_{k=0}^\infty \al_k\, \gamma^k
\end{align*} 
converges uniformly on $[-1,1]$. Then, for all finite Borel measure $\mu$ on $K$, $T_\mu(g(C))$ is a non-negative definite compact operator, and in particular $g(C)$ is a kernel.
\end{lemma}
\begin{proof}
Fix a finite Borel measure $\mu$ on $K$ and notice that $g(C)$ is continuous and symmetric (as uniform limit of continuous and symmetric functions). Moreover, since the Taylor expansion of $g$ around $0$ converges uniformly on $[-1,1]$, and since $|C(z)|\leq 1$ for all $z\in K$, we have that $T_\mu(g(C)) = \sum_{k\in\mathbb N}\al_k\,T_\mu(C^k)$, the sum converging wrt the operator norm on $L^2(K,\mu)$.\\
As a consequence of the Schur product theorem\footnote{\label{footnote_schur}Given two matrices $M_1$ and $M_2$, define they're Schur product as the matrix $M = M_1\circ M_2$, whose elements are $M^{ij} = {M_1}^{ij}{M_2}^{ij}$. If $M_1$ and $M_2$ are non-negative definite, then $M$ is non-negative definite.}, the product of two kernels is still a kernel.\\
As a consequence, it is easy to prove by induction that $T_\mu(C^k)$ is non-negative definite for all $k$. Hence $T_\mu(g(C))$ is the converging limit of a sum of compact non-negative definite operator. We conclude by Lemma \ref{T_mu(Q)}.
\end{proof}
\begin{lemma}\label{lemma:Ql_is_kernel}
For both Standard and Stable ResNet architectures, for any layer $l$, the covariance function $Q_l$ and the correlation function $C_l$ are kernels on $K$, in the sense of Definition \ref{def:kernel}.
\end{lemma}
\begin{proof}
It is straightforward to prove that $Q_0$ is a kernel. Now let us show that if $Q_l$ is a kernel for some $l$, then $C_l$ is a kernel. Since $Q_l$ is symmetric and so $C_l$ is. Moreover, the diagonal elements of $Q_l$ are continuous by Lemma \ref{lemma:diagonal_elements} and do not vanish (since if $\sigma_b=0$ we are assuming that $0\notin K$). Hence $C_l$ is continuous. It is then trivial to show that the non-negative definiteness of $T(Q_l)$ implies that $T(C_l)$ is non-negative definite, and so $C_l$ is a kernel if $Q_l$ is.\\
Now we proceed by induction. Suppose that $Q_{l-1}$ and $C_{l-1}$ are kernels and recall the recursion \eqref{rec}, taking the coefficient $\lambda$ to be $1$ in the case of a Standard ResNet. Notice that it can be rewritten as 
\begin{align*}
&Q_{l} = Q_{l-1} + \lambda_{l}^2\left({\sigma_b}^2 +\tfrac{{\sigma_w}^2}{2}\hat f(C_{l-1})\,R_{l-1}\right)\,,
\end{align*}
where we have omitted the dependence on $L$ for $\lambda$, we have defined $R_{l-1}(x,x') = Q_{l-1}(x,x)Q_{l-1}(x',x')$ and $\hat f$ is defined in \eqref{equation:f_hat}. Clearly $R_{l-1}$ is a kernel. By Lemma \ref{lemma:analytic_decomp_f_hat} and Lemma \ref{schur} we have that $\hat f(C_l)$ is a kernel. Using the property that sums and products of kernels are kernels (the sum is trivial, cf Footnote \ref{footnote_schur} for the product), we conclude that $Q_l$, and so $C_l$, is a kernel on $K$.
\end{proof}

\subsection{Proof of Proposition \ref{prop:rkhs_hierarchy}}
As always, consider an arbitrary compact set $K\in\R^d$. Assume that $0\notin K$ if $\sigma_b=0$. Recall from Appendix \ref{app:setup_notations} that with the notation $\mathcal H_Q(K)$ we refer to the RKHS generated by a kernel $Q$ on $K$. We will now prove Proposition \ref{prop:rkhs_hierarchy}.

\begin{manualprop}{\ref{prop:rkhs_hierarchy}}
$\mathcal{H}_{Q_l}(K) \subseteq \mathcal{H}_{Q_{l+1}}(K)$ for all $l \in [0:L-1]$.\\
\end{manualprop}
\begin{proof}

We have already shown that $T(Q_l)-T(Q_{l-1})$ is non-negative definite in the proof of Lemma \ref{lemma:Ql_is_kernel}. We conclude by using the RKHS hierarchy result (see for instance \cite{paulsen2016RKHS} or page 354 in \cite{aronszajin_rkhs_hierarchy}).
\end{proof}

\subsection{Proof of Lemma \ref{univ=>expr}}
We present here the proof of Lemma \ref{univ=>expr}. We have already recalled the Definition \ref{univ_kern} of universal kernel in Appendix \ref{app:setup_notations}. For convenience of the reader, we restate here the definition of expressive GP.\\
Let $K$ be a compact in $\R^d$. 
\begin{manualdefinition}{\ref{def:exprGP}}[Expressive GP]
A Gaussian Process on $K$ is said to be expressive on $L^2(K)$ if, denoted by $\psi$ a random realisation, for all $\f\in L^2(K)$, for all $\varepsilon>0$,
\begin{align*}
     \mathbb{P}(\|\psi-\f\|_2\leq \varepsilon)>0\,.
 \end{align*}
\end{manualdefinition}
\begin{manuallemma}{\ref{univ=>expr}}
A universal kernel $Q$ on $K$ induces an expressive GP on $L^2(K)$. 
\end{manuallemma}
\begin{proof}
First, notice that if $Q$ is universal then $T(Q)$ is strictly positive definite \cite{JMLR:v12:sriperumbudur11a} and so all its eigenvalues are strictly positive.\\
Recall the spectral theorem for compact self-adjoint operators: there is a orthonormal basis of $L^2(K)$ made of the eigenfunctions $\{\psi_n\}_{n\in\mathbb{N}}$ of $T(Q)$. Denoting by $\mu_n>0$ the eigenvalue of $T(Q)$ relatively to $\psi_n$, since $T(Q)$ is compact we have the equality (Karhunen - Lo\`eve decomposition \cite{grenander1950})
\begin{align*}
\psi = \sum_{k=0}^\infty Z_k\sqrt{\mu_k}\,\psi_k\sim\mathcal{GP}(0,Q)\,,
\end{align*}
where $\{Z_k\}_{k\in\mathbb{N}}$ is a family of iid normal random variables, and the series is convergent uniformly on $K$ and in $L^2$ for the stochastic part \cite{paulsen2016RKHS}, that is $\lim_{N\to\infty}\sup_{x\in K}\E[(\psi(x)-\sum_{k=0}^N Z_k\sqrt{\mu_k}\,\psi_k(x))^2]=0$ uniformly for $x\in K$. In particular, we get that $\lim_{N\to\infty}\E[\|\psi-\sum_{k=0}^NZ_k\sqrt{\mu_k}\psi_k\|_2^2]=0$. As consequence, for all $\f\in L^2(K)$, we have that $\|\sum_{k=0}^N Z_k\sqrt{\mu_k}\,\psi_k - \f\|_2^2$ converges in squared mean to $\|\psi - \f\|_2^2$, for $N\to \infty$.\\
Now, let $\f = \sum_{k=0}^N a_k\,\psi_k$ for some finite $N$ and some real coefficients $\{a_0\dots a_N\}$. We have (with convergence in squared mean)
\begin{align*}
\|\psi-\f\|_2^2 = \sum_{k=0}^N\left(Z_k\sqrt{\mu_k}-a_k\right)^2 + \sum_{k=N+1}^\infty\mu_k\,Z_k^2\,.
\end{align*}
For $k\in[0:N]$, we can define the interval $I_k = \left[\tfrac{a_k}{\sqrt{\mu_k}}-\tfrac{\varepsilon}{\sqrt{2(N+1)\mu_k}},\tfrac{a_k}{\sqrt{\mu_k}}+\tfrac{\varepsilon}{\sqrt{2(N+1)\mu_k}}\right]$, so that, for all $z\in I_k$ we have $(z\sqrt{\mu_k}-a_k)^2\leq\tfrac{\varepsilon^2}{2(N+1)}$. Since all these intervals are non empty, we get
\begin{align*}
\mathbb{P}\left(\sum_{k=0}^N \left(Z_k\sqrt{\mu_k}-a_k\right)^2\leq\frac{ \varepsilon^2}{2}\right)\geq \prod_{k=0}^N \mathbb{P}(Z_k\in I_k) > 0\,.
\end{align*}
On the other hand, we have that 
\begin{align*}
\delta_N = \mathbb{E}\left[\sum_{k=N+1}^\infty\mu_k\,Z_k^2\right] = \sum_{k=N+1}^\infty\mu_k\,.
\end{align*}
By Mercer's theorem \cite{paulsen2016RKHS}, $T(Q)$ is trace class and hence $\delta_N\to 0$ for diverging $N$. By Markov's inequality
\begin{align*}
\mathbb{P}\left(\sum_{k=N+1}^\infty \mu_kZ_k^2\geq \frac{\varepsilon^2}{2} \right)\leq \frac{2\delta_N}{\varepsilon^2}
\end{align*}
and we can conclude that $\mathbb{P}(\|\psi-\f\|_2\leq \varepsilon) > 0$ for $N$ large enough. \\

For a general $\f = \sum_{k=0}^\infty a_k\psi_k$, let $\f_N = \sum_{k=0}^N a_k\psi_k$. Since $\{\psi_k\}_{k\in\mathbb N}$ is a basis of $L^2(K)$, fixed $\varepsilon>0$, it is always possible to find a $N$ such that $\|\f-\f_N\|_2 \leq \varepsilon/2$ and $\mathbb{P}(\|\f_N-\psi\|_2\leq \varepsilon/2)>0$, and so we conclude.
\end{proof}

\subsection{Proof of Proposition \ref{prop:universality_on_a_compact}}
In order to prove Proposition \ref{prop:universality_on_a_compact} we first need a preliminary result, which will be at the core of the proof of Theorem \ref{thm:expr} as well.
\begin{prop}\label{prop:univ_K}
Let $K\subset\R^d$ be compact. Assume $\sigma_b>0$ and let $\tilde f:\gamma\mapsto \tfrac{\gamma}{2}+f(\gamma)$ be defined on $[-1,1]$. Then the kernel $\tilde f(c_0)$, defined point-wise as $\tilde f(c_0)(x,x') = \tilde f(c_0(x,x'))$, is universal on $K$.
\end{prop}
\begin{proof}
First notice that $c_0(x,x') = \tfrac{1+\zeta\,x\cdot x'}{\sqrt{(1+\zeta\,\|x\|^2)(1+\zeta\,\|x'\|^2)}}$, where $\zeta = \swq/\sbq$. For $n\in\mathbb N$, define $p_n: (x,x')\mapsto c_0(x,x')^{2n}$, with the convention that $p_0\equiv 1$. It is easy to verify that $c_0$ is kernel. As a consequence, $p_n$ is a kernel for all $n$, since it is a product of kernels.\footnote{See footnote \ref{footnote_schur}.} From Lemma \ref{lemma:analytic_decomp_f_hat}, we can write 
\begin{align*}
\tilde f(c_0) = \sum_{n\in\mathbb N}\al_n\,p_n\,,
\end{align*}
the sum converging uniformly on $K^2$, with $\al_n>0$ for all $n\in\mathbb{N}$. By Lemma \ref{schur}, $\tilde f(c_0)$ is a kernel.\\
Now, for each $n$, we have
\begin{align*}
p_n(x,x') = \frac{1}{(1+\zeta\,\|x\|^2)^n(1+\zeta\,\|x'\|^2)^n}\sum_{k=0}^{2n}\omega_{k,n}\,(x\cdot x')^k \,,
\end{align*}
where the coefficients $\omega_{k,n}$'s are all strictly positive, explicitly $\omega_{k,n} = \zeta^k\binom{n}{k}$. \\
Expanding the inner product $x\cdot x'$, we can express $p_n$ in the form 
\begin{align*}
p_n(x,x') = \sum_{J\in\mathcal J_n}\be_{J,n}\,A_{J,n}(x)A_{J,n}(x')\,,
\end{align*}
where $\mathcal J_n =\{(j_1\dots j_d)\in\mathbb N^d: \sum_{i=1}^dj_i \in[0:2n]\}$, all the coefficients $\be_{J,n}$'s are strictly positive and the $A_{J,n}$'s are defined as
\begin{align*}
A_{J,n}(x) = \frac{{x_1}^{j_1}\dots {x_d}^{j_d}}{(1+\zeta\,\|x\|^2)^n}\,.
\end{align*}
Hence we can write $\tilde f(c_0)$ as
\begin{align}\label{feature_map}
\tilde f(c_0)(x,x') = \sum_{n\in\mathbb N}\sum_{J\in\mathcal J_n} \al_n\be_{J,n} A_{J,n}(x) A_{J,n}(x')\,.
\end{align}
For any $n,n'\in\mathbb N$, $J\in\mathcal J_n$, $J'\in\mathcal J_{n'}$, it is clear that $A_{J,n}A_{J',n'} = A_{J'',n+n'}$, where $J''$ is some element in $\mathcal J_{n+n'}$. As a consequence, the linear span of the family $\{A_{J,n}\}_{n\in\mathbb N, J\in \mathcal J_n}$ is an algebra $\mathcal A$ (which is actually a subalgebra of $C(K)$ since all the $A_{J,n}$'s are continuous). Moreover $A_{(0\dots0),0}\equiv 1$, so that $\mathcal A$ contains a constant, and it is straightforward to check that $\mathcal A$ separates points, that is for all distinct $x,x'\in K$ there exists $a\in\mathcal A$ such that $a(x)\neq a(x')$. Then, from Stone-Weierstrass theorem \cite{lang2012real}, $\mathcal A$ is dense in $C(K)$ wrt the uniform norm. \\
Define a bijection $\iota:\mathbb{N}\to\{(n,J):n\in\mathbb N, J\in\mathcal J_n\}$ and let $\Phi_n = \sqrt{\al_n\be_{\iota(n)}}A_{\iota(n)}$. For all $x\in K$, we have that $\Phi(x) = \{\Phi_n(x)\}_{n\in\mathbb{N}}\in \ell^2$, since $p_n(x,x)<\infty$. We conclude that $\Phi$ is a feature map for $\tilde f(c_0)$, and the density of the linear span of $\{\Phi_n\}_{n\in\mathbb N}$ allows to claim that the kernel is universal on $K$, in the sense of Definition \ref{univ_kern} (cf Theorem 7 in \cite{micchelli}).
\end{proof}
Let $K\subset\R^d$ be an arbitrary compact set. We are now ready to prove Proposition \ref{prop:universality_on_a_compact}.
\begin{manualprop}{\ref{prop:universality_on_a_compact}}
If $\sigma_b > 0$, then $Q_2$ is universal on $K$. From Proposition \ref{prop:rkhs_hierarchy}, $Q_L$ is universal for all $L\geq 2$.
\end{manualprop}
\begin{proof}
Assume $\sigma_b>0$ and let $K \subset \mathbb{R}^d$ be a compact set. With the notation of Proposition \ref{prop:univ_K}, we have that $$ Q_1 = Q_0 + \lambda_{1,L}^2 \left(\sigma_b^2 + \frac{\sigma_w^2}{2} \left(\frac{1}{2} + \frac{\Tilde{f}(C_0)}{C_0}\right) Q_0 \right). $$
By proposition \ref{prop:univ_K}, we know that the kernel $\Tilde{f}(C_0)$ given by $\Tilde{f}(C_0)(x,x') = \Tilde{f}(C_0(x,x'))$ is universal on $K$. Let us prove that $\frac{\Tilde{f}(C_0)}{C_0} Q_0$ is universal. Let $\varepsilon>0$ and $\f \in C(K)$, the space of continuous functions on $K$. Define $\frac{\f}{Q_0}(x) = \frac{\f(x)}{Q_0(x,x)}$. By the universality of $\Tilde{f}(C_0)$, there exists $g \in \mathcal{H}_{\Tilde{f}(C_0)}(K)$ such that 
$$
\left\|g - \frac{\f}{\sqrt{Q_0}}\right\|_\infty \leq \epsilon\,.
$$
with $g$ can be written as a finite linear combination of the functions $\{\hat{f}(C_0)(x,.)\}_{x \in K}$. This yields
$$
\left\|g \sqrt{Q_0} - \f\right\|_\infty \leq \epsilon \kappa\,,
$$
where $g \sqrt{Q_0} (x) = g(x) \sqrt{Q_0(x,x)}$ and $\kappa = \sup_{x \in K} \sqrt{Q_0(x,x)}$. It is straightforward that $g \sqrt{Q_0} \in \mathcal{H}_{\frac{\Tilde{f}(C_0)}{C_0} Q_0}(K)$,\footnote{This is trivial for a function $g$ that can be written as a finite sum of functions of the form $\alpha_i \tilde{f}(C_0)(x_i, .)$, and this would be enough since these functions are dense in $C(K)$ as shown in the proof of Proposition \ref{prop:univ_K}. More generally, given two kernels $Q$ and $Q'$, if $h\in \Hh_Q$ and $h'\in\Hh_{Q'}$, then $hh'\in \Hh_{QQ'}$, cf Theorem 5.16 in \cite{paulsen2016RKHS}.} Therefore, $\frac{\Tilde{f}(C_0)}{C_0} Q_0$ is universal. Since $Q_0$ is non-negative, we have that $Q_1$ is universal by an RKHS hierarchy argument similar to Proposition \ref{prop:rkhs_hierarchy}. Using Proposition \ref{prop:rkhs_hierarchy}, we conclude that $Q_L$ is universal on $K$.
\end{proof}

\subsection{Proof of Proposition \ref{prop:universality_on_sphere}}
\begin{manualprop}{\ref{prop:universality_on_sphere}}
Assume $\sigma_b = 0$. Then for all $L \geq 2$, $Q_L$ is universal on $\Sd$ for $d\geq 2$.
\end{manualprop}
\begin{proof}
See the proof of Proposition \ref{prop:universality_on_sphere_appendix} in Appendix \ref{app:sphere}.
\end{proof}

\subsection{Proof of Proposition \ref{proposition:spectral_decomposition_on_sphere}}
Proposition \ref{proposition:spectral_decomposition_on_sphere} is a well known classical result (see for instance Appendix H in \cite{yang2019finegrained} and the references therein. For completeness we give a proof in Appendix \ref{app:sphere}.
\begin{manualprop}{\ref{proposition:spectral_decomposition_on_sphere}}[Spectral decomposition on $\Sd$]
Let $Q$ be a zonal kernel on $\Sd$, that is $Q(x,x') = p(x\cdot x')$ for a continuous function $p:[-1,1]\to\mathbb{R}$. Then, there is a sequence $\{\mu_k\geq 0\}_{k\in\mathbb N}$ such that for all $x,x'\in\Sd$
\begin{align*}
Q(x,x') = \sum_{k\geq 0 } \mu_{k} \sum_{j=1}^{N(d,k)} Y_{k,j}(x) Y_{k,j}(x')\,,
\end{align*}
where $\{Y_{k,j}\}_{k\geq0, j\in [1:N(d,k)]}$ are spherical harmonics of  $\mathbb{S}^{d-1}$ and $N(d,k)$ is the number of harmonics of order $k$. With respect to the standard spherical measure, the spherical harmonics form an orthonormal basis of $L^2(\Sd)$ and $T(Q)$ is diagonal on this basis.
\end{manualprop}
\begin{proof}
See the proof of Lemma \ref{lemma:spherical_decomposition} in Appendix \ref{app:sphere}.
\end{proof}
\subsection{Proof of Lemma \ref{lemma:infnite_depth_standard_resnet}}
\begin{manuallemma}{\ref{lemma:infnite_depth_standard_resnet}}
Consider a standard ResNet of type \eqref{equation:Resnet_dynamics} and let $K \subset \mathbb{R}^d \setminus \{0\}$ be a compact set. We have that
$$\lim_{L \rightarrow \infty}\sup_{x,x' \in K}\left| 1 - C_L(x,x')\right| = 0\,.$$
Moreover, if $\sigma_b=0$, then,
$$\sup_{x,x' \in K}\left| 1 - C_L(x,x')\right| = \mathcal{O}(L^{-2})\,.$$
Therefore, $\mathcal{H}_{C_\infty}(K)$ is the space of constant functions.
\end{manuallemma}
\begin{proof}
This result was proven in \cite{hayou} in the case of no bias. It was also proven for a slightly different ResNet architecture in \cite{yang2017meanfield}.\\
Consider a ResNet of type \eqref{equation:Resnet_dynamics} and let $K \subset \mathbb{R}^d \setminus \{0\}$ be a compact set. We have that for all $x,x' \in K$
$$
Q_L(x,x') = Q_{L-1}(x,x') + \sigma_b^2 + \frac{\sigma_w^2}{2} \hat{f}(C_{L-1}(x,x')) \sqrt{Q_{L-1}(x,x) Q_{L-1}(x',x')}\,.
$$
Since $\hat{f}(x) \geq x$, $C_L$ is non-decreasing wrt $L$ and converges to the unique fixed point of $\hat{f}$ which is $1$. This convergence is uniform in $x,x'$, i.e. $\lim_{L \rightarrow \infty}\sup_{x,x' \in K} 1 - C_L(x,x') = 0 $.\\
Re-writing the recursion yields
$$
C_L(x,x') = \delta_L \frac{1}{1+\alpha} C_{L-1}(x,x') + \zeta_L + \delta_L \frac{\alpha}{1 + \alpha} \hat{f}(C_{L-1}(x,x'))\,,
$$
where $\alpha = \frac{\sigma_w^2}{2}$, $\delta_l = \left(1 + \frac{\sigma_b^2}{(1 + \alpha)Q_{L-1}(x,x)} \right)^{-1/2} \left(1 + \frac{\sigma_b^2}{(1 + \alpha)Q_{L-1}(x,x)} \right)^{-1/2}$ and $\zeta_L = \sigma_b^2 (Q_L(x,x) Q_L(x',x'))^{-1/2}$. \\
Using Lemma~ \ref{lemma:diagonal_elements}, and the boundedness of $C_L$, a simple Taylor expansion yields
\begin{align*}
C_L(x,x') &= \frac{1}{1+\alpha} C_{L-1}(x,x') + \frac{\alpha}{1 + \alpha} \hat{f}(C_{L-1}(x,x')) + g_L(x,x')\\
&= C_{L-1}(x,x') + \frac{\alpha}{1 + \alpha} f(C_{L-1}(x,x')) + g_L(x,x')\,,
\end{align*}
 where the expansion is uniform on $x,x' \in K$, and $f(x) = \hat{f}(x) - x$, and $g_L = \mathcal{O}(e^{-\beta L})$ for some $\beta>0$.\\
The previous dynamical system can be decomposed in two parts, a first part without the term $\mathcal{O}(e^{-\beta L})$ which is the homogeneous system, i.e.\! the system without bias, and the term $\mathcal{O}(e^{-\beta L})$ which is the contribution of the bias in the dynamical system. \\
Assume $\sigma_b=0$, then the term $g_L$ vanishes. Moreover, a Taylor expansion of $\hat{f}$ near 1 yields
$$
f(x) = s (1- x)^{3/2} + \mathcal{O}((1-x)^{5/2})\,.
$$
Therefore, uniformly in $x,x' \in K$, we have that
$$
C_L(x,x') = C_{L-1}(x,x') + \frac{s \alpha}{1 + \alpha} (1- C_{L-1}(x,x'))^{3/2} + \mathcal{O}((1-C_{L-1}(x,x'))^{5/2})\,.
$$
Letting $\gamma_L = 1 - C_L$, a simple Taylor expansion leads to 
$$
\gamma_L^{-1/2} = \gamma_{L-1}^{-1/2} + \frac{s \alpha}{2(1 + \alpha) } + \mathcal{O}(\gamma_{L-1})\,.
$$
Therefore, $\gamma_L \sim \kappa L^{-2}$ where $\kappa = \frac{4(1+\alpha)^2}{s^2 \alpha^2}$. This equivalence is uniform in $x,x' \in K$.

It is likely that the rate $\mathcal{O}(L^{-2})$ holds without assuming $\sigma_b=0$. However, the analysis in this requires unnecessarily complicated details. 
\end{proof}
\section{Stable ResNet with uniform scaling}\label{app:section:uniform_scaling}
In this section we detail the proofs for the uniform scaling of a Scaled ResNet, that is $\lambda_{l,L}=1/\sqrt{L}$. \\
When not otherwise specified, $K$ is a generic compact of $\R^d$. We assume that $0\notin K$ if $\sigma_b = 0$.

\subsection{Continuous formulation}
We provide the results of existence, uniqueness and regularity of the solution of \eqref{recc} in Lemma \ref{ex}. Corollary \ref{corex} shows that the differential problem can be restated in the operator space. Eventually we give a proof of Lemma \ref{convcont}, assuring uniform convergence to the continuous limit.

We recall that by continuous formulation we mean a rescaling of the layer index $l$, which becomes a continuous index $t$, spanning the interval $[0,1]$, as the depth diverges, that is $L\to\infty$.\\
More precisely, for all $L\geq 1$ and all $l\in[0:L]$, we can define $t(l,L) = l/L$.\\
Consider a sequence $\{l_n,L_n\}_{n\in\mathbb N}$ (where, for all $n$, $L_n\geq 1$ and $l\in[0:L_n]$), such that $L_n$ diverges but $l_n/L_n$ converges to a finite $t=\lim_{n\to\infty}t(l_n,L_n)$. We will show in this section (Lemma \ref{convcont}) that the kernels $Q_{l_n|L_n}$ (covariance kernel of the layer $l_n$ in a net with $L_n+1$ layers) converge uniformly to a kernel, $q_t$, on $K$. \\
Moreover we can define a differential problem for the mapping $t\mapsto q_t$, with $q\in[0,1]$, that is
\begin{align*}
\tag{\ref{recc}}
\begin{split}
&\dot q_t(x,x') = {\sigma_b}^2 + \tfrac{{\sigma_w}^2}{2}\left(1+\tfrac{f(c_t(x,x'))}{c_t(x,x')}\right)q_t(x,x')\,,\\
&q_0(x,x') = \sbq + \swq\,\tfrac{x\cdot x'}{d}\,,\\
&c_t(x,x')=\tfrac{q_t(x,x')}{\sqrt{q_t(x,x)q_t(x',x')}}\,.
\end{split}
\end{align*}

\begin{lemma}[Existence and uniqueness]\label{ex}
For any $x,x'$ in $K$, the solution of \eqref{recc} is unique and well defined for all $t\in[0,1]$. The maps $(x,x')\mapsto q_t(x,x')$ and $(x,x')\mapsto c_t(x,x')$ are Lipschitz continuous on $K^2$ and $c_t$ takes values in $[-1,1]$. Moreover, both $q_t$ and $c_t$ are kernels in the sense of Definition \ref{def:kernel}.
\end{lemma}
\begin{proof}
First notice that from \eqref{recc} we can find, with few algebraic manipulations, an explicit recurrence relation for the correlation $C_l$, defined in \eqref{defc}. For any $x,x'\in K$ we have
\begin{align}\label{recurrence_C}
    \begin{split}
    &C_{l+1}(x,x') =  A_{l+1}(x,x')\,C_l(x,x')+\tfrac{\swq}{2L}\left(1+\tfrac{\swq}{2L}\right)^{-1}A_{l+1}(x,x')\,f(c_l(x,x'))+\tfrac{1}{L}\tfrac{\sbq}{\sqrt{Q_l(x,x)Q_l(x',x')}}\,;\\
    &A_l(x,x') = \sqrt{\left(1-\tfrac{1}{L}\tfrac{\sbq}{Q_l(x,x)}\right)\left(1-\tfrac{1}{L}\tfrac{\sbq}{Q_l(x',x')}\right)}\,.
    \end{split}
\end{align}
We can find a Cauchy problem for the correlation directly from \eqref{recc} or by noting that $A_l(x,x')=1-\tfrac{\sbq}{2L}\left(\tfrac{1}{Q_l(x,x)}+\tfrac{1}{Q_l(x',x')}\right) + o(1/L)$, for $L\to\infty$. With both approaches, we have
\begin{align}\label{eqc}
\begin{split}
&\dot c_t(x,x')= {\sigma_b}^2\left(\Gt(x,x') - \At(x,x')\,c_t(x,x')\right) + \frac{\swq}{2}\,f(c_t(x,x'))\,,\\
&c_0(x,x') = \frac{\sbq + \swq\,x\cdot x'}{\sqrt{(\sbq + \swq\,\|x\|^2)(\sbq + \swq\,\|x'\|^2)}}\,,
\end{split}
\end{align}
where $f$ is defined in $\eqref{deff}$ and
\begin{align*}
\At(x,x') = \frac{1}{2}\left(\frac{1}{q_t(x,x)}+\frac{1}{q_t(x',x')}\right)\,;\qquad\Gt(x,x') = \sqrt{\frac{1}{q_t(x,x)\,q_t(x',x')}}\,.
\end{align*}
Note that for the diagonal terms $q_t(x,x)$, \eqref{recc} reduces to $\dot q_t ={\sigma_b}^2 + \frac{{\sigma_w}^2}{2}\,q_t$, whose solution is 
\begin{align*}
q_t(x,x) = e^{\frac{{\sigma_w}^2}{2}\,t}\,q_0(x,x)+\frac{2{\sigma_b}^2}{{\sigma_w}^2}\left(e^{\frac{{\sigma_w}^2}{2}\,t}-1\right)=e^{\frac{{\sigma_w}^2}{2}\,t}\,(\sbq + \swq\,\|x\|^2)+\frac{2{\sigma_b}^2}{{\sigma_w}^2}\left(e^{\frac{{\sigma_w}^2}{2}\,t}-1\right)\,.
\end{align*}
Now, fix $z=(x,x')\in K^2$ and let $\gamma_0 = c_0(z)\in[-1,1]$. Consider $\bar f:\mathbb{R}\to \mathbb{R}$, an arbitrary Lipschitz extension of $f$ to the whole $\mathbb{R}$ and define $H:[0,\infty)\times \mathbb{R}\to \mathbb{R}$ as
\begin{align*}
H(t,\gamma) = \sbq(\Gt(z) - \At(z)\,\gamma) + \frac{\swq}{2}\,\bar f(\gamma)\,. 
\end{align*}
$H$ is Lipschitz continuous in $\gamma$ and $C^\infty$ in $t$, so there exists $\tau>0$ such that the Cauchy problem
\begin{align*}
&\dot \gamma (t) = H(t,\gamma(t))\,;\\
&\gamma(0) = \gamma_0
\end{align*}
has a unique $C^1$ solution defined for  $t\in[0,\tau)$. \\
Noticing that 
\begin{align*}
\Gt(x,x') -\At(x,x') = -\frac{1}{2}\left(\frac{1}{q_t(x,x)}-\frac{1}{q_t(x',x')}\right)^2\leq 0\,,
\end{align*}
we get that for all $t_1$ such that $\gamma(t_1)=1$ we have $\dot\gamma(t_1)\leq 0$, since $f(1)=0$, and for all $t_{-1}$ such that $\gamma(t_{-1})=-1$ we have $\dot\gamma(t_{-1}) = \sbq(\Gt(x,x')+\At(x,x'))+\tfrac{\swq}{2}>0$. As a consequence $\gamma(t)\in[-1,1]$ for all $t\in[0,\tau)$ and we can take $\tau=\infty$.\\
In particular we get that \eqref{eqc} has a unique solution $t\mapsto c_t(z)$, defined for $t\in[0,1]$ and bounded in $[-1,1]$. \\
As a consequence, \eqref{recc} has a unique and well defined solution for all $t\geq 0$. \\

Now notice that $z\mapsto c_0(z)$ is Lipschitz on $K^2$. let us denote as $L_0$ a Lipschitz constant for $c_0$.\\
Since both $\Gt$ and $\At$ are $C^1$, we can find real constants $L_G$, $L_A$ and $M_A$ such that for all $z,z'$ elements of  $K^2$
\begin{align*}
&|\Gt(z)-\Gt(z')|\leq L_G\,\|z-z'\|\,;\\
&|\At(z)-\At(z')|\leq L_A\,\|z-z'\|\,;\\
&|\At(z)|\leq M_A\,.
\end{align*}
Let $L_f$ be a Lipschitz constant for $f$. Using the fact that $|c_t|\leq 1$, we can write
\begin{align*}
|\dot c_t(z) - \dot c_t(z')| \leq L_1\,\|z-z'\|+L_2\,|c_t(z)-c_t(z')|\,,
\end{align*}
where $L_1 = \sbq(L_G+L_A)$ and $L_2 = \sbq M_A+\frac{\swq}{2}\,L_f$.\\
Now fix $z$ and $z'$ and consider $\Delta(t) = c_t(z)-c_t(z')$. We have 
\begin{align*}
&|\dot\Delta(t)| \leq L_1\,\|z-z'\| + L_2\,|\Delta(t)|\,;\\
&|\Delta(0)| \leq L_0\,\|z-z'\| \,.
\end{align*} 
So $|\Delta(t)|\leq \left(\frac{L_1}{L_2}\,\left(e^{L_2\,t}-1\right)+L_0\,e^{L_2\,t}\right)\|z-z'\|$, meaning that $c_t$ (and so $q_t$) is Lipschitz on $L^2$.\\

Since the mapping $(x,x')\mapsto q_t(x,x')$ is continuous, it defines a compact integral operator $T(q_t)$ on $L^2(K)$ \cite{lang2012real}. Since $q_t$ is real and symmetric under the swap of $x$ and $x'$, the operator is self-adjoint. The same holds true for $c_t$. 

The fact that  $T(q_t)$ is a non-negative operator can be seen as a corollary of Lemma \ref{convcont}. Indeed all $T(Q_{l|L})$ is a non-negative definite operator, since it is induced by a kernel. Hence, for each $t\in[0,1]$ it is enough to find a sequence $\{l_n,L_n\}_{n\in\mathbb N}$ (where $L_n\geq 1$ is an integer and $l_n\in [0:L_n]$) such that $L_n\to\infty$ and $l_n/L_n\to t$. By Lemma \ref{convcont}, $T(Q_{l_n|L_n})\to T(q_t)$ in the $L^\infty$ norm, and hence in $L^2$, as we are on a compact set. By Lemma \ref{lemma:Ql_is_kernel}, for all $n\in\mathbb N$ we have that $T(Q_{l_n|L_n})$ is non-negative definite. Since the subspace of non-negative definite operators in $L^2$  is closed wrt the $L^2$ operator norm, we conclude.\\
Once we have established that $T(q_t)$ is non-negative definite, it follows immediately that $T(c_t)$ is non-negative as well. Since these results hold for any arbitrary finite Borel measure $\mu$ on $K$, we can thus conclude by Lemma \ref{T_mu(Q)} that both $q_t$ and $c_t$ are kernels, in the sense of Definition \ref{def:kernel}.
\end{proof}
\begin{corollary}\label{corex}
The maps $t\mapsto T(q_t)$ and $t\mapsto T(c_t)$, defined on $[0,1]$, are continuous and twice differentiable with respect to the operator norm in $L^2(K)$. Moreover, $\tfrac{\dd}{\dd t}T(q_t) = T(\dot q_t)$, $\tfrac{\dd}{\dd t}T(c_t) = T(\dot c_t)$, $\tfrac{\dd^2}{\dd t^2}T(q_t) = T(\ddot q_t)$ and $\tfrac{\dd^2}{\dd t^2}T(c_t) = T(\ddot c_t)$.
\end{corollary}
\begin{proof}
Consider the map $(t,z)\mapsto q_t(z)$, defined on $[0,1]\times K^2$, which is continuous wrt $z$ and $C^2$ wrt $t$, as it can be easily checked. Since $K^2$ and $[0,1]$ are compact sets, it follows that for any $t$
\begin{align*}
\lim_{s\to t} \sup_{z\in I^2}\left|\frac{q_{s}(z)-q_t(z)}{s-t} -\dot q_t (z)\right|= \sup_{z\in I^2}\lim_{s\to t}\left|\frac{q_{s}(z)-q_t(z)}{s-t}-\dot q_t(z)\right| = 0\,.
\end{align*}
Hence $\lim_{s\to t}\frac{q_s-q_t}{t-s}=\dot q_t$ uniformly on $K^2$, and hence $\lim_{s\to t}\frac{T(q_s)-T(q_t)}{t-s}=T(\dot q_t)$ in the $L^2(K,\mu)$ norm for operators, since $K$ is compact. \\
The proof for the second derivative works in the same way, using the fact that $(t,z)\mapsto q_t(z)$ is continuous in $z$ and $C^1$ in $t$.\\
As a consequence of the above results, $t\mapsto T(q_t)$ is continuous and twice differentiable, with $\tfrac{\dd}{\dd t}T(q_t) = T(\dot q_t)$ and $\tfrac{\dd^2}{\dd t^2}T(q_t) = T(\ddot q_t)$.\\ The proof for $T(c_t)$ is analogous.
\end{proof}
\begin{manuallemma}{\ref{convcont}}[Convergence to the continuous limit]
Let $Q_{l|L}$ be the covariance kernel of the layer $l$ in a net of $L+1$ layers $[0:L]$, and $q_t$ be the solution of \eqref{recc}, then
\begin{align*}
    \lim_{L\to\infty}\sup_{l\in[0:L]}\sup_{(x,x')\in K^2}|Q_{l|L}(x,x')-q_{t=l/L}(x,x')| = 0\,.
\end{align*}
\end{manuallemma}
\begin{proof}
We will show that the relation holds for $c_t$, and hence for $q_t$.\\
Let $H$, defined on $[0,1]\times K^2$, be such that $\dot c_t(z) = H(z,t,c_t(z))$. Explicitly, with the same notations as in \eqref{eqc}, we have 
\begin{align*}
    H(z,t,\gamma) = \sbq(\Gt(z) - \At(z)\,\gamma) + \frac{\swq}{2}\, f(\gamma)\,.
\end{align*}
Define
\begin{align*}
\tau(h) = \sup_{t,z}\left| \frac{c_{t+h}(z)-c_t(z)}{h}-H(z,t,c_t(z))\right|\,.
\end{align*}
Since $t$ and $z$ takes values on compact sets, by uniform continuity, fixed $h$ we can write, for $h\to 0$
\begin{align*}
\sup_t\sup_{s\in [t,t+h]} \left| H(z,s,c_s(z)) - H(z,t,c_t(z))\right|=o(h)\,.
\end{align*}
Hence, since $\tau$ can be rewritten as $\tau(h) = \frac{1}{h}\sup_{t,z}\left| \int_t^{t+h}(H(z,s,c_s(z)) - H(z,t,c_t(z))\,\dd s\right|$, it is clear that $\tau(h)\to 0$ for $h\to 0$. \\
Now, for any integer $L\geq 1$, let $\tilde H_L:K^2\times[0:L-1]\times[-1,1]$ be given by
\begin{align*}
\tilde {H}_L(z,l,\gamma) = (A_{l+1|L}(x,x')-1)\,L\,\gamma+\tfrac{\swq}{2}\left(1+\tfrac{\swq}{2L}\right)^{-1}A_{l+1|L}(x,x')\,f(c_l(x,x'))+\tfrac{\sbq}{\sqrt{Q_{l|L}(x,x)Q_{l|L}(x',x')}}\,,
\end{align*}
where,
\begin{align*}
    A_{l|L}(x,x') = \sqrt{\left(1-\tfrac{1}{L}\tfrac{\sbq}{Q_{l|L}(x,x)}\right)\left(1-\tfrac{1}{L}\tfrac{\sbq}{Q_{l|L}(x',x')}\right)}\,.
\end{align*}
It is clear from \eqref{recurrence_C} that $\tilde H_L$ has been defined so that  $C_{l+1|L}(z)-C_{l|L}(z)=\tfrac{1}{L}\tilde H_L(z,l,\gamma)$, for all $L\in[0:L-1]$ and all $z\in K^2$. Using the explicit form of the diagonal terms of $Q$ and $q$, it can be easily shown that, for $L\to\infty$, \begin{align*}
    &\sup_{(x,x')\in K^2}\sup_{l\in[0:L-1]}A_{l+1|L}(x,x')=1+\tfrac{\sbq}{L}\mathcal{A}_{t=l/L}(x,x')+O(1/L^2)\,;\\
    &\sup_{(x,x')\in K^2}\sup_{l\in[0:L]}\tfrac{\sbq}{\sqrt{Q_{l|L}(x,x)Q_{l|L}(x',x')}} = \mathcal{G}_{t=l/L}(x,x')+O(1/L^2)\,,
\end{align*}
where $\mathcal{A}_t$ and $\mathcal{G}_t$ are defined as in \eqref{eqc}. As a consequence, we can find a constant $M_1>0$ and an integer $L_\star>0$ such that, for all $\gamma\in[-1,1]$, for all $z\in K^2$, for all $L\geq L^\star$
\begin{align}\label{eq1:proof_conv}
    |\tilde H_L(z,l,\gamma)-H(z,l/L,\gamma)|\leq \frac{M_1}{L}\,.
\end{align}
Moreover, there exists a constant $M_2>0$ such that for all $z\in K^2$, all $t\in[0,1]$ and all pairs $(\gamma,\gamma')\in[-1,1]^2$
\begin{align}\label{eq2:proof_conv}
|H(z,t,\gamma)-H(z,t,\gamma')|\leq M_2\|\gamma-\gamma'\|\,.
\end{align}
Thanks to the two above uniform inequalities, we will now show that, for $L\geq L_\star$,
\begin{align}\label{goal}
\sup_{l\in[0:L]}\sup_{z\in K^2}|C_{l|L}(x,x')-c_{t(l,L)}(x,x')| \leq \tilde\tau(1/L) \frac{e^{M_2}-1}{M_2}\,,
\end{align}
where $\tilde\tau:h\mapsto \tau(h) + M_1h$.\\
To do so, fix $L\geq L_\star$ and define $\Delta_{l}=\sup_{z\in K^2}|C_{l|L}(x,x')-c_{t(l,L)}(x,x')|$. Using the definition of $\tau$, \eqref{eq1:proof_conv} and \eqref{eq2:proof_conv} we get
\begin{align*}
    |\Delta_{l+1}|\leq\left(1+\tfrac{M_2}{L}\right)|\Delta_l|+\tfrac{1}{L}\tau(1/L) + \tfrac{M_1}{L} = \left(1+\tfrac{M_2}{L}\right)|\Delta_l|+\tfrac{1}{L}\tilde\tau(1/L)\,.
\end{align*}
At this point, using the fact that $\Delta_0=0$, it is easy to show by induction that 
\begin{align*}
    \Delta_l\leq \tilde\tau(1/L)\,\frac{\left(1+\tfrac{M_2}{L}\right)^l-1}{M_2}\,,
\end{align*}
and so \eqref{goal} follows.\\
Finally, the uniform convergence of $C$ to $c$ implies the one of $Q$ to $q$ and so we conclude.
\end{proof}
\subsection{Universality of the covariance kernel}
We will now prove the results of universality of Theorem \ref{thm:expr} and Proposition \ref{prop:uniform_Sd}.
\subsection*{Proof of Theorem \ref{thm:expr}}
The idea is to prove that for any finite Borel measure $\mu$ on $K$, the operator $T_\mu(q_t)$ is strictly positive definite if $t>0$, and then use the characterization of universal kernels given in Lemma \ref{univ=Lp_spd}.\\
To prove the strict positive definiteness, we will proceed in two steps. First we show in Proposition \ref{prop:t_phi} that for all non-zero $\f\in L^2(K,\mu)$, $\langle T_\mu(q_t)\,\f,\f\rangle>0$ for $t$ small enough. Then we use Proposition \ref{prop:dotq>=0}, which shows that $\tfrac{\dd}{\dd t}T_\mu(q_t)$ is non-negative definite.
\begin{prop}\label{prop:t_phi}
Fix any finite Borel measure $\mu$ on $K$, and assume that $\sigma_b>0$. Given any non-zero $\f\in L^2(K,\mu)$, there exists a $t_\f \in (0,1]$ such that $\langle T_\mu(q_t)\, \f,\f\rangle > 0$, for all $t\in (0,t_\f)$.
\end{prop}
\begin{proof}
From Corollary \ref{corex}, we can expand $T_\mu(q_t)$ around $t=0$ as 
\begin{align*}
    T_\mu(q_t) = T_\mu(q_0) +  t\, T_\mu(\dot q_0) + o(t) = t \,T_\mu\left(\sbq + \frac{\swq}{2} q_0\right) + T_\mu((c_0 + t f(c_0))R_0) + o(t) \,,
\end{align*}
the $o(t)$ being wrt the operator norm, where we have defined the kernel $R_0$ via $R_0(x,x') =\tfrac{\swq}{2}\sqrt{(1+\zeta \|x\|^2)(1+\zeta \|x'\|^2)}$.\\
Since $T_\mu(q_0)$ is non-negative, for any $\f\in L^2(I)$, we have 
\begin{align*}
    \langle T_\mu(q_t)\,\f,\f\rangle \geq \langle T_\mu((c_0 + t f(c_0))R_0)\,\f,\f\rangle + o(t) =\left(1-\frac{t}{2}\right)\langle T_\mu(c_0)\,\psi,\psi\rangle + t\,\langle T_\mu(f(c_0)))\,\psi,\psi\rangle + o(t)\,,
\end{align*}
where $\psi(x) = \sigma_w\sqrt{(1+\zeta \|x\|^2)/2}\,\f(x)$. We conclude by the strict positivity of $\tilde f(c_0)$ on $L^2(K,\mu)$, thanks to Proposition \ref{prop:univ_K} and Lemma \ref{univ=Lp_spd}. 
\end{proof}
\begin{prop}\label{prop:dotq>=0}
For any finite Borel measure $\mu$ on $K$, for any $t\in[0,1]$, the operator $T_\mu(\dot q_t)$ on $L^2(K,\mu)$ is non-negative definite. In particular, for all $\f\in L^2(K,\mu)$ we have
\begin{align*}
    \tfrac{\dd}{\dd t}\langle T_\mu(q_t)\,\f,\f\rangle \geq 0\,.
\end{align*}
\end{prop}
\begin{proof}

Fix $\mu$ and $\f\in L^2(K,\mu)$. From \eqref{recc} we can write
\begin{align*}
T_\mu(\dot q_t) = T_\mu\left(\sbq +\frac{\swq}{2}\,q_t +\frac{\swq}{2}\,\frac{f(c_t)}{c_t} q_t\right).
\end{align*}

By Lemma \ref{ex}, $T_\mu(q_t)$ is non-negative definite, so we can write
\begin{align*}
\langle T_\mu(\dot q_t)\,\f,\f\rangle &= \sbq |\langle 1,\f\rangle|^2 + \frac{\swq}{2}\left\langle T_\mu\left(\frac{c_t + f(c_t)}{c_t} \,q_t\right)\f,\f\right\rangle\\
&\geq\frac{\swq}{2}\left\langle T_\mu\left(\tilde f(c_t)\,\frac{q_t}{c_t}\right)\f,\f\right\rangle\\
&=\frac{\sbq}{2}\langle T_\mu(\tilde f(c_t))\,\psi,\psi\rangle\,,
\end{align*}
where $\tilde f:\gamma\mapsto \tfrac{\gamma}{2}+f(\gamma)$, for $\gamma\in [-1,1]$, and $\psi(x) = \sqrt{q_t(x,x)}\,\f(x)$. By Lemma \ref{lemma:analytic_decomp_f_hat}, the Taylor expansion of $\tilde f$ around $0$ converges uniformly on $[-1,1]$, and all its coefficients are non-negative. We conclude by Lemma \ref{schur} that $T_\mu(\dot q_t)$ is non-negative definite.\\
Finally, to prove the inequality, it is enough to recall that $\tfrac{\dd}{\dd t}T_\mu(q_t) = T_\mu(\dot q_t)$ by Corollary \ref{corex}, the derivative $\tfrac{\dd}{\dd t}$ being wrt the operator norm on $L^2(K,\mu)$. 
\end{proof}
\begin{manualthm}{\ref{thm:expr}}
[Universality of $q_t$]
Let $K \subset \R^d$ be compact and assume $\sigma_b>0$. For any $t\in (0,1]$, the solution  $q_t$ of \eqref{recc} is a universal kernel on $K$.
\end{manualthm}
\begin{proof}
By Lemma \ref{univ=Lp_spd}, it suffices to show that for any finite Borel measure $\mu$ on $K$, $T_\mu(q_t)$ is strictly positive definite for all $t\in(0,1]$. Fix any nonzero $\f\in L^2(K,\mu)$, define the map $F$ on $[0,1]$ by $F(t) = \langle T_\mu(q_t)\,\f,\f\rangle$. For any fixed $t\in(0,1]$, by Proposition \ref{prop:t_phi} we can find $s\in(0,t)$ such that $F(s)>0$. Since $F$ is non decreasing by Proposition \ref{prop:dotq>=0}, we get that $F_t>0$. Hence $T_\mu(q_t)$ is strictly positive definite.
\end{proof}
\subsection*{Proof of Proposition \ref{prop:uniform_Sd}}
The proof of Proposition \ref{prop:uniform_Sd} is quite similar to the one of Theorem \ref{thm:expr}. \\
Using Lemma \ref{lemma:Sd_spd=>univ} instead of Lemma \ref{univ=Lp_spd}, we will not need to consider a generic finite Borel measure $\mu$ on $\Sd$, but it will be enough to show that $T_{\nu}(q_t)$ is a striclty positive operator on $L^2(\Sd,\nu)$, where $\nu$ is the standard unifrom spherical measure on $\Sd$.\\
Since $\sigma_b=0$, we will not be able to use Proposition \ref{prop:univ_K}. We will hence state some preliminary results. 
\begin{lemma}\label{lem10}
Let $\{A_n\}_{n\in\mathbb{N}}$ be a family of compact non-negative operators on a separable Hilbert space $\mathcal{H}$. Let $R_n$ be the range of $A_n$ and assume that $V = \Span(\bigcup_{n\in\mathbb N} R_n)$ is dense in $\mathcal{H}$. Let $\{\al_n\}_{n\in\mathbb{N}}$ be a strictly positive sequence such that the sum
\begin{align*}
A = \sum_{n\in\mathbb{N}} \alpha_n\,A_n
\end{align*}
converges in the operator norm. Then $A$ is a compact strictly positive definite operator. 
\end{lemma}
\begin{proof}
$A$ is the convergent limit of a sum of compact self-adjoint operators and hence it is compact and self-adjoint. Now, fix an arbitrary nonzero $h\in\mathcal{H}$. To show that $A$ is strictly positive it is enough to prove that $\langle A\,h,h\rangle>0$.\\
Denote by $V_N$ the linear span of $\bigcup_{n\in[0:N]}R_n$. Since $V_{N}\subseteq V_{N+1}$ for all $N$, and  
$\bigcup_{N\in\mathbb{N}}V_N= V$ is dense in $H$, there exists a sequence $\{h_N\}_{N\in\mathbb N}$ converging to $h$ and such that $h_N\in V_N$ for all $N$. \\
Now let us show that there must exist $n^\star\in\mathbb N$ such that $A_{n^\star}\,h\neq 0$. Since $\lim_{N\to\infty}\langle h,h_N\rangle = \langle h,h\rangle>0$, there must be a $N^\star$ such that $\langle h,h_{N^\star}\rangle>0$ and so there exists $n^\star\in[0:N^\star]$ and $h_{n^\star}\in V_{n^\star}$ such that $\langle h, h_{n^\star}\rangle\neq 0$. In particular, $h$ is not orthogonal to $R_{n^\star}$ and can not lie in the nullspace of $A_{n^\star}$, using the fact that $A_{n^\star}$ is compact and self-adjoint and so its range and its nullspace are orthogonal \cite{lang2012real}. \\
Using the spectral decomposition of non-negative compact operators, it is straightforward that $A_{n^\star}\,h \neq 0$ implies that $\langle A_{n^\star}\,h,h\rangle > 0$. Now, since $A_n$ is non-negative and $\al_n>0$ for all $n$, we have
\begin{align*}
    \langle A\,h,h\rangle = \sum_{n\in\mathbb N}\al_n\langle A_n\,h,h\rangle \geq \al_{n^\star} \langle A_{n^\star}\,h,h\rangle > 0\,,
\end{align*}
and so we conclude.
\end{proof}
\begin{lemma}\label{density_sphere}
For all $n\in\mathbb{N}$, consider the kernel $p_n$ on $\Sd$, defined by $p_n(x,x') = (x\cdot x')^n$, and let $T_{\nu}(p_n)$ be the induced integral operator on $L^2(\Sd,\nu)$. Denoting as $R_n$ the range of $T_\nu(p_n)$, the subspace $V = \Span\left(\bigcup_{n\in\mathbb N}R_n\right)$ is dense in $L^2(\Sd,{\nu})$.\\
Moreover, letting $V' = \Span\left(\bigcup_{n\in\mathbb N}R_{2n}\right)$ and $V'' = \Span\left(\bigcup_{n\in\mathbb N}R_{2n+1}\right)$, we have $L^2(\Sd,{\nu}) = \overline{V'}\oplus\overline{V''}$, the overline denoting the closure in $L^2(\Sd,{\nu})$. 
\end{lemma}
\begin{proof}
To prove that $V$, first notice that for each spherical harmonic $Y$, we can find an operator in the form $T_\nu(P(x\cdot x'))$, for a polynomial $P$, which have $Y$ in its range. Since the range of such an operator is trivially contained in $V$, it follows that $V$ contains all the spherical harmonic, and so it is dense in $L^2(\Sd,\nu)$. \\
Now, note that for any even $n$ and odd $n'$ we have 
\begin{align*}
    \int_{\Sd}(x\cdot z)^n(z\cdot x')^{n'}\dd \nu(z) = 0\,,
\end{align*}
by an elementary symmetry argument, since it is the integral on the sphere of a homogeneous polynomial of odd degree $n+n'$ in the components $z_i$'s of $z$.\\
It follows that $V'$ and $V''$ are orthogonal. Since their union $V$ is dense, we conclude that $L^2(\Sd,\nu)=\overline{V'}\oplus\overline{V''}$.
\end{proof}
\begin{corollary}\label{op_dec}
With the notations of Lemma \ref{density_sphere}, assume that a sequence $\{\al_{n\in\mathbb N}\}$ is such that 
$A = \sum_{n\in\mathbb{N}}\al_n\,T_\nu(p_n)$
converges wrt the operator norm on $L^2(\Sd,\nu)$. Then $A = A'+A''$, where $A':\overline{V'}\to\overline{V'}$ and $A'':\overline{V''}\to\overline{V''}$. Such a decomposition is unique and 
\begin{align*}
    &A' = \sum_{n\in\mathbb{N}}\al_{2n}\,T_\nu(p_{2n})\,;
    &A'' = \sum_{n\in\mathbb N}\al_{2n+1}\,T_\nu(p_{2n+1})\,,
\end{align*}
both sums converging wrt the operator norm.
\end{corollary}
\begin{proof}
It is clear that $A=A'+A''$, when both $A'$ and $A''$ are defind on the whole $L^2(\Sd,\nu)$.\\
Consider any $\f\in L^2(\Sd,\nu)$. We have $A'\f \in \overline{V'}$, since $T_\nu(p_{2n})\f\in\overline{V'}$ for all $n$. Analogously, we can show that $A''\f\in\overline{V''}$. To conclude that we can consider the restrictions of $A'$ and $A''$ to $\overline{V'}$ and $\overline{V''}$ respectively, it is enough to recall that for compact self adjoint operators the nullspace is the orthogonal of the closure of the range \cite{lang2012real}, so that the nullspace of $A'$ contains $\overline{V''}$ and the nullspace of $A''$ contains $\overline{V'}$. 
\end{proof}
\begin{lemma}\label{propf}
The function $f:[-1,1]\to\mathbb{R}$, defined in \eqref{deff}, is an analytic function on $(-1,1)$, whose expansion $f(\gamma) = \sum_{n\in\mathbb{N}}\al_n\,\gamma^n$
converges absolutely on $[-1,1]$. Moreover, $\al_n>0$ for all even $n\in\mathbb{N}$, $\al_1=-1/2$ and $\al_n=0$ for all odd $n\geq 3$.\\
Let $g:[-1,1]\to\mathbb{R}$ be defined as $g(\gamma) = f(\gamma)f'(\gamma)$. $g$ is analytic on $(-1,1)$ and its expansion $g(\gamma) = \sum_{n\in\mathbb{N}}\be_n\,\gamma^n$ converges absolutely on $[-1,1]$. Moreover, for all odd $n\in\mathbb N$ the coefficient $\be_n$ is strictly positive.
\end{lemma}
\begin{proof}
The claims for $f$ have been already proven in Lemma \ref{lemma:analytic_decomp_f_hat}. As for $g$, the analyticity of $f$ implies the one of $f'$, and it is easy to check the convergence on $[-1,1]$. Moreover, all the odd Taylor coefficients of $f'$ are striclty positive, as the even coefficients of $f$ are. It follows that $\beta_n>0$ for all odd $n$.
\end{proof}
\begin{prop}\label{prop:t_phi_Sd}
Given any non-zero $\f\in L^2(\Sd,\nu)$, there exists a $t_\f \in (0,1]$ such that $\langle T_\nu(q_t)\, \f,\f\rangle > 0$, for all $t\in (0,t_\f)$. 
\end{prop}
\begin{proof}
The case $\sigma_b>0$ has been already established in Proposition \ref{prop:t_phi}, hence suppose that $\sigma_b=0$.\\
First recall \eqref{eqc}
\begin{align}\label{cdt}
     \dot c_t = \frac{\swq}{2}\,f(c_t)\,.
\end{align}
Deriving once more we have
\begin{align}\label{cddt}
    \ddot c_t = g(c_t)\,,
\end{align}
where $g=ff'$ as in Lemma \ref{propf}.\\
Define the kernels $p_n$'s, and the subspaces $V'$ and $V''$ of $L^2(\Sd,\nu)$, as in Lemma \ref{density_sphere}. 
By \eqref{cdt} and \eqref{cddt} we can write
\begin{align*}
    c_t = c_0 + t\,\dot c_0 + \tfrac{t^2}{2}\,\ddot c_0 + o(t^2) = c_0 + t\,f(c_0) + \tfrac{t^2}{2}\,g(c_0) + o(t^2)\,.
\end{align*}
Since $\sigma_b = 0$, we have that $c_0(x,x') = x\cdot x'$, so that $c_0=p_1$. \\
From Lemma \ref{propf}, $T_\nu(\dot c_0)=\sum_{n\in\mathbb N}\al_n\,T_\nu(p_n)$ and $T_\nu(\ddot c_0)=\sum_{n\in\mathbb N}\be_n\,T_\nu(p_n)$, both sums converging in the operator norm. Moreover, $\al_n>0$ for all even $n$ and $\al_n=0$ for all odd $n\geq 3$, whilst $\be_n>0$ for all odd $n$.\\
In particular, by Corollary \ref{op_dec} and Lemma \ref{lem10}, we deduce that the restriction of $T_\nu(\dot c_0)|_{\overline{V'}}:\overline{V'}\to\overline{V'}$ is well defined and strictly positive, and the same holds true for the restriction $T_\nu(\ddot c_0)|_{\overline{V''}}:\overline{V''}\to\overline{V''}$. \\
Now fix a non-zero $\f\in L^2(\Sd,\nu)$. By Lemma \ref{density_sphere}, we can write $\f = \f'+\f''$, with $\f'\in \overline{V'}$, $\f''\in \overline{V''}$ uniquely determined. \\
First, suppose that $\f'\neq 0$. Using Corollary \ref{corex} and recalling that $c_0=p_1$, we get
\begin{align*}
    \langle T_\nu(c_t)\,\f,\f\rangle = t\langle T_\nu(\dot c_0)|_{\overline{V'}}\,\f',\f'\rangle+
    \langle (1+t\,\al_1)T_\nu(p_1)\,\f'',\f''\rangle+o(t)>0\,,
\end{align*}
for $t$ small enough.\\
On the other hand, for $\f'=0$, we have $\f=\f''$ and so
\begin{align*}
    \langle T_\nu(c_t)\,\f,\f\rangle = \langle (1+t\,\al_1)T_\nu(p_1)\,\f'',\f''\rangle + \tfrac{t^2}{2}\,\langle T_\nu(\ddot c_0)|_{\overline{V''}}\,\f'',\f''\rangle+o(t^2)>0
\end{align*}
for $t$ small enough.\\
So there is a $t_\f$ such that, for $t\in(0,t_\f)$, $\langle T_\nu(c_t)\,\f,\f\rangle>0$. It follows immediately that the same property is true for $T_\nu(q_t)$.
\end{proof}
\begin{lemma}\label{lemma:Sd_spd=>univ}
Let $Q$ be a kernel on $\Sd$. Then $Q$ is universal on $\Sd$ if and only if $T_\nu(Q)$ is strictly positive definite on $L^2(\Sd,\nu)$.
\end{lemma}
\begin{proof}
If $Q$ is universal, $T_\nu(Q)$ is strictly positive definite by Lemma \ref{univ=Lp_spd}. On the other hand, if $T_\nu(Q)$ is strictly positive definite, by Proposition \ref{proposition:spectral_decomposition_on_sphere} its range contains all the spherical harmonics. Since the RKHS generated by $Q$ contains the range of $T_\nu(Q)$ (Proposition 11.17 in \cite{paulsen2016RKHS}), it contains the linear span of the spherical harmonics, which is dense in $C(\Sd)$ \cite{kounchev2001multivariate}. Hence $Q$ is universal.
\end{proof}
\begin{manualprop}{\ref{prop:uniform_Sd}}
[Universality on $\Sd$]
For any $t\in (0,1]$, the covariance kernel $q_t$, solution of \eqref{recc} with $\sigma_b=0$, is universal on $\Sd$, with $d\geq 2$.
\end{manualprop}
\begin{proof}
Proceeding as in the proof of Theorem \ref{thm:expr}, using Proposition \ref{prop:dotq>=0} and Proposition \ref{prop:t_phi_Sd} we can show that $T_\nu(q_t)$ is strictly posititive definite on $L^2(\Sd,\nu)$ for all $t\in(0,1]$. We conclude by Lemma \ref{lemma:Sd_spd=>univ} that $q_t$ is universal on $\Sd$.
\end{proof}

\section{Stable ResNet with decreasing scaling}\label{app:section:decreasing_scaling}
\subsection{Proof of Proposition \ref{prop:uniform_convergence_decreasing_scaling}}
\begin{manualprop}{\ref{prop:uniform_convergence_decreasing_scaling}}[Uniform Convergence of the Kernel]
Consider a Stable ResNet with a decreasing scaling, i.e. the sequence $\{\lambda_l\}_{l\geq1}$ is such that $\sum_l \lambda_l^2 < \infty$. Then for all $(\sigma_b,\sigma_w)\in \mathbb{R}^+ \times (\mathbb{R}^+)^*$, there exists a kernel $Q_{\infty}$ on $\mathbb{R}^d$ such that for any compact set $K \subset \mathbb{R}^d$, 
\begin{align*}
    \sup_{x,x' \in K} |Q_{L}(x,x') - Q_{\infty}(x,x')| = \Theta\big(\textstyle\sum_{k \geq L} \lambda_k^2\big)\,.
\end{align*}
\end{manualprop}

\begin{proof}
Let $x,x' \in \mathbb{R}^d$. The kernel $Q_l$ is given recursively by the formula
\begin{equation*}
Q_l(x,x') =  Q_{l-1}(x,x') + \lambda_l^2 \sigma_b^2 + \frac{\sigma_w^2 \lambda_l^2}{2} \hat{f}(C_{l-1}(x,x')) \sqrt{Q_{l-1}(x,x)}\sqrt{Q_{l-1}(x',x')}\,,
\end{equation*}
where $\hat{f}(t) = 2 \mathbb{E}[\phi'(Z_1)\phi'(t Z_1 + \sqrt{1-t^2} Z_2)] = t + f(t)$ and $Z_1,Z_2$ are iid standard Gaussian variables. In particular, we have
$$
Q_l(x,x) = \lambda_l^2 \sigma_b^2 + (1 + \frac{\sigma_w^2 \lambda_l^2}{2}) Q_{l-1}(x,x)\,.
$$
which brings
$$Q_l(x,x) + \frac{2\sigma_b^2}{\sigma_w^2} = (1 + \frac{\sigma_w^2 \lambda_l^2}{2}) (Q_{l-1}(x,x) + \frac{2\sigma_b^2}{\sigma_w^2})\,,$$
Therefore, we can assume without loss of generality that $\sigma_b=0$. This yields
$$
C_l(x,x') = \frac{1}{1+\frac{\lambda_l \sigma_w^2}{2}}C_{l-1}(x,x') + \frac{\frac{\sigma_w^2 \lambda_l^2}{2}}{1+\frac{\lambda_l \sigma_w^2}{2}} \hat{f}(C^{l-1}(x,x'))\,.
$$
Letting $\alpha_l = \frac{\sigma_w^2 \lambda_l^2}{2}$ and $C_l:=C_l(x,x')$, we have that 
$$
C_l = \frac{1}{1+\alpha_l} C_{l-1} + \frac{\alpha_l}{1+\alpha_l} \hat{f}(C_{l-1})\,.
$$
Since $\hat{f}$ is non decreasing, $C^l$ is non-decreasing and has a limit $C_{\infty}(x,x')\leq1$.

Now let us prove that the convergence of $C_l$ to $C_{\infty}$ happens uniformly with a rate $\sum_{k\geq l} \lambda_l^2$. Using the recursive formula of $C_l$, and knowing that we have that 
$$
C_{\infty} - C_l = \frac{1}{1 + \alpha_l} (C_{\infty} - C_{l-1}) + \frac{\alpha_l}{1 + \alpha_l} (C_{\infty} - f(C_{l-1}))\,.
$$
Letting $\delta_l = C_{\infty} - C_l$, it is easy to see that, uniformly in $x,x' \in \mathbb{R}^d$, we have that
$$
\delta_l = \delta_{l-1} + \alpha_l + o(\alpha_l)\,.
$$
Therefore, using the fact that $C_l \leq C_\infty$, we have
$$
\sup_{(x,x') \in \mathbb{R}^d} |C_l(x,x') - C_{\infty}(x,x')| = \mathcal{O}\left(\sum_{k\geq l} \alpha_k\,\right).
$$
Moreover, we know that 
$$
Q_l(x,x) = Q_0(x,x) \prod_{k=1}^l (1 + \alpha_k)\,,
$$
so that for any compact set $K \subset \mathbb{R}^d$ 
$$
\sup_{x \in K} |Q_{l}(x,x)-Q_{\infty}(x,x)| \sim \sum_{k\geq l} \alpha_k\,.
$$
Moreover, since $C_\infty(x,x') \geq C_l(x,x')$ and $Q_\infty(x,x) \geq Q_l(x,x)$ for all $x \in \mathbb{R}^d$, we can use the fact that 
\begin{equation*}
\begin{split}
    Q_{\infty}(x,x') - Q_{l}(x,x') &= \sqrt{Q_\infty(x,x)Q_\infty(x',x')}( C_\infty(x,x')-C_l(x,x'))\\
    &+ C_l(x,x') (\sqrt{Q_\infty(x,x)Q_\infty(x',x')} - \sqrt{Q_l(x,x)Q_l(x',x')})
\end{split}
\end{equation*}
and hence conclude.
\end{proof}

\subsection{Proof of Corollary \ref{cor:universality_decreasing_scaling}}

\begin{manualcor}{\ref{cor:universality_decreasing_scaling}}
The following statements hold\\
$\bullet$~Let $K$ be a compact set of $\mathbb{R}^d$ and assume $\sigma_b>0$. Then, $Q_\infty$ is universal on $K$.\\
$\bullet$~Assume $\sigma_b=0$. Then $Q_\infty$ is universal on $\Sd$.
\end{manualcor}

\begin{proof}
Corollary \ref{cor:universality_decreasing_scaling} is a direct result of Propositions \ref{prop:universality_on_a_compact}, \ref{prop:universality_on_sphere} and \ref{prop:rkhs_hierarchy}. Indeed, for any compact $K \subset \mathbb{R}^d$, $\mathcal{H}_{Q_L}(K) \subset \mathcal{H}_{Q_\infty}(K)$ for all $L \geq 0$. Therefore, the universality of $Q_L$ for some finite $L$ is sufficient to conclude that $Q_\infty$ is universal.
\end{proof}

\section{Neural Tangent Kernel}\label{app:section:NTK}
Throughout this section, we will consider ResNets with NTK parameterization \cite{jacot}. This simply means that all the components of the biases and the weights will be initialized as iid standard normal random variables. In order to compensate this change of parameterization, the propagation through the network needs to be slightly modified. Hence \eqref{equation:scaled_Resnet} will be replaced by
\begin{align}\label{eq:NTK_param}
\begin{split}
    &y_0(x) = \tfrac{\sigma_w}{\sqrt{d}}\,W_0\,x+\sigma_b\,B_0\,;\\
    &y_l(x) = y_{l-1}(x)+\lambda_{l,L}\,\tfrac{\sigma_w}{\sqrt{N_{l-1}}}\,W_l+\sigma_b\,B_l\,.
\end{split}
\end{align}
However, it is strightforward to verify that the recurrence \eqref{rec} for the covariance kernels keeps unchanged.\\
Clearly, the dynamics of a standard ResNet with NTK parameterization can be recovered from \eqref{eq:NTK_param} by setting $\lambda_{l+1,L}=1$ for all $l,L$.

The Neural Tangent Kernel, introduced by \cite{jacot}, is defined as
\begin{align*}
    \tilde\Theta^{ij}_L^(x,x') = \nabla_{\text{par}}\, y^i_L(x)\cdot\nabla_{\text{par}}\, y^j_L(x')\,,
\end{align*}
where $\nabla_{\text{par}}$ denotes the gradient wrt the parameters of the network. \\
The NTK of a Stable ResNet can be evaluated recursively.
We will now prove the recurrence formula \eqref{recurrence_NTK}. The following result was proven in Lemma 3 in \cite{hayou_ntk} for the case of a standard ResNet without bias. We extend it to ResNet with bias.
\begin{lemma}[Recurrence relatino for the NTK]
For a Stable ResNet, the NTK can be evaluated recursively, layer by layer, as
\begin{align}\tag{\ref{recurrence_NTK}}
\Theta_0=Q_0\,;\qquad\Theta_{l+1} = \Theta_l + \lambda_{l+1,L}^2\left(\Psi_l + \Psi'_l\,\Theta_l\right)\,,
\end{align}
where $\Psi_l(x,x') = {\sbq} +  {\swq} \E[\phi(y^l_1(x))\phi(y^l_1(x'))]$ and $\Psi'_l(x,x') =\swq \E[\phi'(y^l_1(x))\phi'(y^l_1(x'))]$.
\end{lemma}
\begin{proof}
The first result is the same as in the FFNN case \cite{jacot}, since we assume there is no residual connections between the first layer and the input. Let $x,x' \in \mathbb{R}^d$. We have 
    $$
    \Theta_0(x,x') = \sum_{j=0}^d \frac{\partial y^1_0(x)}{\partial w^{1j}_0}\frac{\partial y^1_0(x)}{\partial w^{1j}_0}   +  \frac{\partial y^1_0(x)}{\partial b^1_0}\frac{\partial y^1_0(x)}{\partial b^1_0} = \frac{\sigma_w^2}{d} x\cdot x' + \sigma_b^2\,.
    $$
We prove the second result by induction. The proof is similar to the one of ResNet in \cite{hayou_ntk}. Let $\theta_k = (W_k,B_k)$. For $l\geq 1$ and $i \in [1:N_{l+1}]$
\begin{align*}
    \partial_{\theta_{0:l}} y_{l+1}^i(x) &= \partial_{\theta_{0:l}} y_{l}^i(x)  + \lambda_{l+1,L} \frac{\sigma_w}{\sqrt{N_{l}}} \sum_{j=1}^{N_{l}} W_{l+1}^{ij} \phi'(y_{l}^{j}(x)) \partial_{\theta_{1:l}} y_{l}^j(x)\,.
\end{align*}
Therefore, we obtain
\begin{align*}
    (\partial_{\theta_{0:l}} y_{l+1}^i(x))(\partial_{\theta_{0:l}} y_{l+1}^i(x'))^t &= (\partial_{\theta_{0:l}} y_l^i(x))(\partial_{\theta_{0:l}} y_l^i(x'))^t  \\
    &+ \lambda_{l+1,L}^2\frac{\sigma_w^2}{N_l} \sum_{j, j'}^{N_{l}} W_{l+1}^{ij} W_{l+1}^{ij'}\phi'(y_l^{j}(x)) \phi'(y_l^{j'}(x'))\partial_{\theta_{0:l}} y_l^j(x) (\partial_{\theta_{0:l}} y_l^{j'}(x'))^t +I\,,
\end{align*}
where 
$$I = \lambda_{l+1,L}\frac{\sigma_w}{\sqrt{N_l}} \sum_{j=1}^{N_l} W_{l+1}^{ij} (\phi'(y_l^{j}(x)) \partial_{\theta_{0:l}} y_l^i(x) (\partial_{\theta_{0:l}} y_l^j(x'))^t +\phi'(y_l^j(x')) \partial_{\theta_{0:l}} y_l^j(x) (\partial_{\theta_{0:l}} y_l^i(x'))^t )\,.$$
We prove the result by induction. Assume the result is true for layers $1,2, ..., l$ and let us prove it for $l+1$. Using the induction hypothesis, as $N_1, N_2, ..., N_{l-1} \rightarrow \infty$ recursively, we have that 
\begin{align*}
    & (\partial_{\theta_{0:l}} y_{l+1}^i(x))(\partial_{\theta_{0:l}} y_{l+1}^i(x'))^t  + \lambda_{l+1,L}^2 \frac{\sigma_w^2}{N_l} \sum_{j, j'}^{N_{l}} W_{l+1}^{ij} W_{l+1}^{ij'}\phi'(y_l^{j}(x)) \phi'(y_l^{j'}(x'))\partial_{\theta_{0:l}} y_l^j(x) (\partial_{\theta_{0:l}} y_l^{j'}(x'))^t
    +I \\
    &\rightarrow \Theta_l(x,x')  + \lambda_{l+1,L}^2 \frac{\sigma_w^2}{N_l} \sum_{j}^{N_{l}} (W_{l+1}^{ij})^2 \phi'(y_l^{j}(x))\phi'(y_l^j(x')) \Theta_l(x,x') 
    +I'\,,
\end{align*}
where $I' = \frac{\sigma_w^2}{N_l} W_{l+1}^{ii} (\phi'(y_l^{i}(x)) + \phi'(y_l^i(x')))\, \Theta_l(x,x').$\\

 As $N_l \rightarrow \infty$,  we have that $I' \rightarrow 0$. Using the law of large numbers, as $N_l \rightarrow \infty$ 
 $$
\frac{\sigma_w^2}{N_l} \sum_{j}^{N_{l}} (W_{l+1}^{ij})^2 \phi'(y_l^{j}(x)) \phi'(y_l^j(x')) \Theta_l(x,x') \rightarrow \Psi'_l \Theta_l(x,x').
 $$
Moreover, we have that 
\begin{align*}
(&\partial_{W_{l+1}} y_{l+1}^i(x))(\partial_{W_{l+1}} y_{l+1}^i(x'))^t +  (\partial_{B_{l+1}} y_{l+1}^i(x))(\partial_{B_{l+1}} y_{l+1}^i(x'))^t\\ &= \frac{\sigma_w^2}{N_l} \sum_{j} \phi(y_l^j(x)) \phi(y_l^j(x')) + \sigma_b^2
\underset{N_l \rightarrow \infty}{\rightarrow} \sigma_w^2 \mathbb{E}[\phi(y_{l}^1(x))\phi(y_{l}^1(x'))] + \sigma_b^2 = \Psi_l\,,
\end{align*}
and so we conclude.
\end{proof}
As a corollary of the above result, using the results in \cite{daniely2016deeper} for the ReLU activation function, we can express the recursion more explicitly. We have
\begin{align*}
    \Psi_l = \sbq + \tfrac{\swq}{2}\left(1+\tfrac{f(C_l)}{C_l}\right)Q_l\,;\qquad\Psi'_l = \tfrac{\swq}{2}(1+f'(C_l))\,,
\end{align*}
where $f$ is defined in \eqref{deff} and $f':\gamma\mapsto -\tfrac{1}{\pi}\arccos\gamma$ is the first derivative of $f$. So we can write
\begin{align}\label{eq:recurrence_relu_ntk}
    \Theta_{l+1} = \Theta_l + \lambda_{l+1,L}^2\left(\sbq + \tfrac{\swq}{2}\left(1+\tfrac{f(C_l)}{C_l}\right)Q_l+\tfrac{\swq}{2}(1+f'(C_l))\,\Theta_l\right).
\end{align}
We can now easily check that the NTK is a kernel in the sense of Definition \ref{def:kernel}.
\begin{lemma}[$\Theta_l$ is a kernel]\label{NTK_is_kernel}
For all layer $l$, $\Theta_L$ is a kernel in the sense of definition \eqref{def:kernel}.
\end{lemma}
\begin{proof}
It's clear that $\Theta_0=Q_0$ is a kernel. Now fix any layer $l$. We have already proved in Lemma \ref{lemma:Ql_is_kernel} that $\left(1+\tfrac{f(C_l)}{C_l}\right)Q_l$ is a kernel. With a similar argument, noting that $1+f'$ can be expressed as a power series with only non negative coefficients on $[-1,1]$, we conclude by Lemma \ref{schur} that $1+f'(C_l)$ is a kernel. Using the usual argument that sums and product of kernels are kernels, we conclude by induction that $\Theta_l$ is a kernel. 
\end{proof}
As a final remark, note that from \eqref{eq:recurrence_resnet}, we have that $\lambda_{l,L}^2\Psi_l = Q_{l+1}-Q_l$. Hence we can rewrite \eqref{eq:recurrence_relu_ntk} as 
\begin{align}\label{deltathetadeltaq}
    \Theta_{l+1} - \Theta_l  = Q_{l+1}-Q_l+ \lambda_{l+1,L}^2\tfrac{\swq}{2}(1+f'(C_l))\,\Theta_l\,.
\end{align}
Since $1+f'$ is non negative on $[-1,1]$, it is easy to show by induction that $\Theta_l\geq Q_l$, point-wise, for all $l$. This is done explicitly in the next Lemma, which is a Corollary of Lemma \ref{lemma:exploding} and show the divergence of the NTK for a Standard ResNet.
\begin{lemma}[Exploding NTK]
Consider a ResNet of form \eqref{equation:Resnet_dynamics}. For all $x \in \mathbb{R}^d$,
\begin{align}
\Theta_{L}(x,x) \geq \left(1+\tfrac{\sigma_w^2}{2}\right)^{L} \left(Q_0(x,x)+\tfrac{2\sbq}{\swq}\right)\,.
\end{align}
\end{lemma}
\begin{proof}
By Lemma \ref{lemma:exploding}, it suffices to show that $\Theta_L(x,x)\geq Q_L(x,x)$. \\
Recall \eqref{eq:recurrence_relu_ntk}, noticing that $1+f'\geq 0$ on $[-1,1]$, $\left(1+\tfrac{f(C_l)}{C_l}\right)Q_l\geq 0$ and that $\Theta_0 = Q_0\geq 0$, by an easy induction we have that $\Theta_l\geq 0$ for all $l$. As a consequence, from \eqref{deltathetadeltaq}, we get that $\Theta_{l+1}-\Theta_l\geq Q_{l+1}-Q_l$. Hence, again with a straightforward induction we have that $\Theta_l\geq Q_l$ for all $l$ and the the whole $K^2$. In particular $\Theta_L(x,x)\geq Q_L(x,x)$ for all $x\in K$. 
\end{proof}

\begin{lemma}[Normalized NTK recursion]\label{lemma:stable_ntk_recursion}
Consider a ResNet of type \eqref{equation:Resnet_dynamics} without bias, and let $\alpha = \frac{\sigma_w^2}{2}$. The NTK recursion formula can be written in terms of normalized NTK $\kappa^l(x,x') = \Theta_l(x, x') / (1 + \alpha)^{l-1}$
$$
\kappa_l(x,x') = \left(\frac{1 + \alpha \hat{f}'(C_{l-1}(x,x'))}{1 + \alpha}\right) \kappa_{l-1}(x,x')  + \alpha \hat{f}(C_{l-1}(x,x')) \sqrt{Q_0(x,x)Q_0(x',x')}\,,
$$
where $\hat{f}$ is given by \eqref{equation:f_hat}, $\hat{f}(t) = \frac{1}{\pi} (t\,\arcsin{t} +\sqrt{1-t^2}) + \frac{1}{2}t$.
\end{lemma}

\begin{proof}
Let $x,x' \in \mathbb{R}^d$. For a ResNet of type \eqref{equation:Resnet_dynamics}, we have that 
$$
\Theta_{l} = \Theta_{l-1} + \left(\Psi_{l-1} + \Psi'_{l-1}\,\Theta_{l-1}\right)\,,
$$
where $\Psi_{l-1} = \alpha Q_{l-1}(x,x')$ and $\Psi_{l-1}' = \alpha \hat{f}'(C_{l-1})$. Using the recursive formula for the diagonal elements, we have that $\Psi_{l-1} = \alpha (1+\alpha)^{l-1} \hat{f}(C_{l-1}(x,x')) \sqrt{Q_0(x,x)Q_0(x',x')}$. We conclude by dividing both sides by $(1+\alpha)^{l-1}$.
\end{proof}

\subsection{Proof of Proposition \ref{prop:decreasing_ntk_universal}}

\begin{manualprop}{\ref{prop:decreasing_ntk_universal}}
Fix a compact $K\subset\R^d$ ($0\notin K$ if $\sigma_b=0$) and consider a Stable ResNet with decreasing scaling. Then $\Theta_L$ converges uniformly over $K^2$ to a kernel $\Theta_\infty$. Moreover $\Theta_\infty$ is universal on $K$ if $\sigma_b>0$. If $K=\Sd$, then the universality holds for $\sigma_b=0$.
\end{manualprop}

\begin{proof}
Let $K\subset\R^d$ ($0\notin K$ if $\sigma_b=0$) be a compact. From \eqref{eq:recurrence_relu_ntk}, with a decreasing scaling, we have that
\begin{equation*}
\begin{aligned}
\Theta_{l} &= \Theta_{l-1} + \lambda_{l}^2\left(\Psi_{l-1} + \Psi'_l\,\Theta_{l-1}\right)\\
&= \left(1 + \lambda_l^2\frac{\sigma_w^2}{2} f'(C_{l-1})\right)\Theta_{l-1} + \lambda_{l}^2\Psi_{l-1}\,.
\end{aligned}
\end{equation*}
Therefore, the NTK can be expressed exclusively in terms of the covariance kernels $(Q_k)_{ k\in [0:l-1]}$, more precisely we have that 
$$
\Theta_l = \prod_{k=1}^l \left(1 + \lambda_k^2 \frac{\sigma_w^2}{2} f'(C_{k-1})\right)  Q_0 + \sum_{k=1}^l \lambda_l^2 \prod_{j=k}^l \left(1 + \lambda_j^2 \frac{\sigma_w^2}{2} f'(C_{j-1})\right) \Psi_{k-1}\,.
$$
It is straightforward that $\Theta_l$ converges pointwise to a limiting kernel $\Theta_\infty$. Let us prove that the convergence is uniform over $K$. By observing that $|f'| \leq 1$, we have that for all $x,x' \in K$
\begin{equation*}
\begin{aligned}
\left| \Theta_\infty(x,x') - \Theta_l(x,x') \right| &\leq \prod_{k=1}^l \left(1 + \lambda_k^2 \frac{\sigma_w^2}{2}\right) \left| \prod_{k=l+1}^\infty \left(1 + \lambda_k^2 \frac{\sigma_w^2}{2}\right) - 1\right| Q_0(x,x') \\
&+ \sum_{k=l+1}^\infty \lambda_k^2 \prod_{j=k}^l \left(1 + \lambda_j^2 \frac{\sigma_w^2}{2}\right) \Psi_{k-1}(x,x')\\
&\leq \kappa \sum_{k=l+1}^\infty \lambda_k^2\,.
\end{aligned}
\end{equation*}
where $\kappa$ is a constant that depends on the compact $K$. This proves the uniform convergence with a rate of $\mathcal{O}\left(\sum_{k=l+1}^\infty \lambda_k^2\right)$. As a consequence, being a uniform limit of kernels, $\Theta_\infty$ is a kernel.

Proceeding as in the proof of Lemma \ref{NTK_is_kernel}, it's easy to prove by induction that for all $l$, $\Theta_l-Q_l$ is a kernel. In particular,
$$
T(\Theta_l) \succeq T(Q_l)\,,
$$
where $\succeq$ is in the operator sense, that is $T(\Theta_l)-T(Q_l)$ is non-negative definite. This yields 
$$
T(\Theta_\infty) \succeq T(Q_\infty)\,.
$$
Therefore $\Theta_\infty$ inherits the universality of $Q_\infty$ naturally by the RKHS hierarchy \cite{paulsen2016RKHS}. We conclude that $\Theta_\infty$ is universal (for both cases).
\end{proof}

For the rest of this section, let $K\subset\R^d$ by a compact set. If $\sigma_b=0$, assume that $0\notin K$.

With the uniform scaling, for arbitrary $x,x'\in K$, the continuous version of \eqref{recurrence_NTK} reads
\begin{align}\label{eq:ntk_cont}
\begin{split}
&\dot \theta_t(x,x') = \dot q_t(x,x') + \tfrac{\swq}{2} (1+f'(c_t(x,x')))\,\theta_t(x,x')\,;\\
&\theta_0 = q_0\,,
\end{split}
\end{align}
where $f':\gamma\mapsto -\tfrac{1}{\pi}\arccos\gamma$ is the first derivative of $f$, defined in \eqref{deff}.
\begin{lemma}\label{ex_NTK}
For any $x,x'$ in $K$, the solution $t\mapsto\Theta_t$ of \eqref{eq:ntk_cont} is unique and well defined for all $t\in[0,1]$. Moreover, the map $(x,x')\mapsto \Theta_t(x,x')$ is a kernel in the sense of Definition \ref{def:kernel} for all $t\in[0,1]$.\\
We have the $L^2(K)$ convergence of the discrete model to the continuous one:
\begin{align*}
    \lim_{L\to\infty}\sup_{l\in[0:L]}\left\|T(\Theta_{l|L})-T(\theta_{t=l/L})\right\|_2=0\,.
\end{align*}
\end{lemma}
\begin{proof}
The existence and the uniqueness are clear, since it is a homogeneous first order Cauchy problem, with continuous coefficients. We can write explicitly the solution as 
\begin{align}\label{eq:theta_implicit}
    \theta_t = e^{G_t}\left(q_0+\int_0^t\dot q_s\,e^{-G_s}\,\dd s\right)\,,
\end{align}
where $G_t(z) = \tfrac{\swq}{2}\,\int_0^t(1+f'(c_s(z)))\,\dd s$ for $z\in K^2$. It becomes then clear that $z\mapsto\Theta_t(z)$ is a continuous and symmetric function on $K^2$.\\
it is easy to check that the uniform convergence of $C$ and $Q$ to $c$ and $q$ implies that for all $z\in K$, $\lim_{L\to\infty}\sup_{l\in[0:L]}|\Theta_{l|L}(z)-\theta_{l/L}(z)|=0$. As consequence, by dominated convergence, 
\begin{align*}
    \lim_{L\to\infty}\sup_{l\in[0:L]}\left\|T(\Theta_{l|L})-T(\theta_{t=l/L})\right\|_2=0\,.
\end{align*}
Hence, $T(\theta_t)$ is the limit of a sequence of non-negative definite operators and hence it is non-negative definite, so that $\theta_t$ is a kernel on $K$ for all $t\in [0,1]$.
\end{proof}

\begin{manualprop}{\ref{prop:uniform_ntk_universal}}
Let $K\subset\R^d$ and fix $t\in(0,1]$. If $\sigma_b>0$, then $\theta_t$ is universal on $K$. The same holds true if $\sigma_b=0$ and $K=\Sd$.
\end{manualprop}
\begin{proof}
Fix $t\in(0,1]$. The solution of \eqref{eq:ntk_cont} can be written as $\theta_t = q_t + r_t$, where
\begin{align*}
    r_t = \tfrac{\swq}{2}\,\int_0^t (1+f'(c_s))\,\theta_s\,\dd s\,.
\end{align*}
Now, let us show that $r_t$. First, by Lemma \ref{propf} it is easy to check that $1+f'$ is analytic on $(-1,1)$ and its Taylor expansion around $0$ converges on $[-1,1]$. Moreover all the Taylor coefficients are non negative. Hence, Lemma \ref{schur} shows that $(1+f'(c_s))$ is a kernel for all $s\in[0,s)$. It follows that $(1+f'(c_s))\,\theta_s$ is a kernel.\footnote{See footnote \ref{footnote_schur}.} Now, $(1+f'(c_s)\,\theta_s$ is continuous and symmetric on $Z^2$, and it is easy to check from \eqref{eq:theta_implicit} that it is uniformly bounded for $s\in[0,t)$. It follows that $r_t$ is continuous and symmetric. Now, fix an arbitrary finite Borel measure $\mu$ on $K$. We have to show that $T_\mu(r_t)$ is non-negative definite, so that we can conclude by Lemma \ref{T_mu(Q)}. Fixed $\f\in L^2(K,\mu)$, by simple standard arguments we have
\begin{align*}
    \langle T_\mu(r_t)\,\f,\f\rangle = \int_0^t\langle T_\mu((1+f'(c_s))\,\theta_s)\,\f,\f\rangle\,\dd s\geq 0
\end{align*}
and so $r_t$ is a kernel.\\
Now, given two kernels $Q$ and $R$, it is a classical result that $Q+R$ is a kernel and its RKHS contains the RKHS of $Q$ and $R$, \cite{paulsen2016RKHS}. We conclude that the RKHS of $\theta_t$ contains the RKHS of $q_t$. Since $q_t$ is universal, $\theta_t$ is universal.
\end{proof}

\section{A PAC-Bayes Generalization result}\label{app:section_PAC-Bayes}
In this section, we study the PAC-Bayes upper bound of a GP with kernel $Q_L$. We consider a dataset $S$ with $N$ iid training examples $\{(x_i,y_i) \in X\times Y, i\in[1:N]\}$, and a hypothesis space $\mathcal{H}$ from which we want to learn an optimal hypothesis according to some bounded loss function $\ell : Y\times Y \rightarrow [0,1]$. The empirical loss of a hypothesis $h \in \mathcal{H}$ is given by
$$
r_S(h) = \frac{1}{N} \sum_{i=1}^N \ell(h(x_i), y_i)\,.
$$
Assuming that the samples are distributed as $(x,y) \sim \nu$ where $\nu$ is a probability distribution on $X \times Y$, we define the generalization (true) loss by 
$$
r(h) = \mathbb{E}_{\nu}[\ell(f(x),y)]\,.
$$
For some randomized learning algorithm $\mathcal{A}$, the empirical and generalization loss are given by
$$
r_S(\mathcal{A}) = \mathcal{E}_{h \sim \mathcal{A}}[r_s(h)]\,; \qquad r(\mathcal{A}) = \mathcal{E}_{h \sim \mathcal{A}}[r(h)]\,.
$$
The PAC-Bayes theorem gives a probabilistic upper bound on the generalization loss $r(\mathcal{A})$ of a randomized learning algorithm $\mathcal{A}$ in terms of the empirical loss $r_S(\mathcal{A})$. Fix a prior distribution $\mathcal{P}$ on the hypothesis set $\mathcal{H}$. The Kullback-Leibler divergence between $\mathcal{A}$ and $\mathcal{P}$ is defined as $KL(\mathcal{A} \| \mathcal{P}) = \int \log \frac{\mathcal{A}(h)}{P(h)} \mathcal{A}(h) dh \in [0, \infty]$. The Bernoulli KL-divergence is given by $kl(a||p) = a \log \frac{a}{p} + (1-a) \log \frac{1-a}{1-p} $  for $a,p \in [0,1]$. We define the inverse Bernoulli KL-divergence $kl^{-1}$ by 
$$
kl^{-1}(a,\varepsilon) = \sup\{ p \in [0,1] : kl(a,p) \leq \varepsilon \}\,.
$$
\begin{manualthm}{\ref{thm:pac-bayes_theorem}}[PAC-Bayesian theorem]
For any loss function $\ell$ that is $[0,1]$ valued, for any distribution $\nu$, for any $N \in \mathbb{N}$, for any prior $P$, and any $\delta \in (0,1]$, with probability at least $1-\delta$ over the sample $S$, we have 
$$
\forall \mathcal{A}, \hspace{0.25cm} r(\mathcal{A}) \leq  kl^{-1}\left(r_S(\mathcal{A}), \frac{KL(\mathcal{A}\|P) + \log \frac{2 \sqrt{N}}{\delta}}{N}\right).
$$
\end{manualthm}
The PAC-Bayesian theorem gives can also be stated as
$$ kl(r_S(\mathcal{A}), r(\mathcal{A})) \leq  \frac{KL(\mathcal{A}\|P) + \log \frac{2 \sqrt{N}}{\delta}}{N}\,.
$$
The KL-divergence term $KL(\mathcal{A}\|P)$ plays a major role as it controls the generalization gap, i.e. the difference (in terms of Bernoulli KL-divergence) between the empirical loss and the generalization loss. In our setting, we consider an ordinary GP regression with prior $P(f) = \mathcal{GP}(f | 0, Q(x,x'))$. Under the standard assumption that the outputs $y_N = (y_i)_{i \in [1:N]}$ are noisy versions of $f_N = (f(x_i))_{ i \in [1:N]}$ with $y_N | f_N \sim \mathcal{N}(y_N | f_N, \sigma^2 I)$, the Bayesian posterior $\mathcal{A}$ is also a GP and is given by 
\begin{equation}
\begin{split}
\mathcal{A}(f) = \mathcal{GP}(f | Q_N(x)(Q_{NN} + \sigma^2 I)^{-1} y_N, Q(x,x') - Q_N(x)(Q_{NN} + \sigma^2 I)^{-1})Q_N(x')^T)\,,
\end{split}
\end{equation}
where $Q_N(x) = (Q(x,x_i))_{i \in [1:N]}$ and $Q_{NN} = (Q(x_i, x_j))_{1\leq i,j \leq N}$. In this setting, we have the following result
\begin{manualprop}{\ref{prop:pac-bayes_bound}}[Stability of PAC-Bayes bound]
Let $Q_L$ be the kernel of a ResNet. Let $P_L$ be a GP with kernel $Q_L$ and $\mathcal{A}_L$ be the corresponding Bayesian posterior for some fixed noise level $\sigma>0$. Then, in a fixed setting (fixed sample size N), the following results hold:
\begin{enumerate}
    \item With a standard ResNet, we have 
    $$
    KL(\mathcal{A}_L \| P_L) \gtrsim L\,.
    $$
    \item With a Stable ResNet, we have 
    $$
    KL(\mathcal{A}_L \| P_L) = \mathcal{O}_L(1)\,.
    $$
\end{enumerate}
\end{manualprop}
\begin{proof}
The proof relies on the simple observation that $P_L(f | f_N) = \mathcal{A}_L(f | f_N)$. This yields
\begin{equation}
\begin{split}
KL(\mathcal{A}_L \| P_L) &= KL(\mathcal{A}_L(f_N) \mathcal{A}_L(f | f_N) \| P_L(f_N) P_L(f | f_N)) \\
&= KL(\mathcal{A}_L(f_N) \| P_L(f_N)) \\
&= \frac{1}{2} \log(\det (Q_{L,NN} + \sigma^2 I)) - \frac{N}{2} \log(\sigma^2) - \frac{1}{2} \Tr (Q_{L,NN}(Q_{L,NN} + \sigma^2I)^{-1})\\
&+ \frac{1}{2} y_N^T (Q_{L,NN} + \sigma^2 I)^{-1} Q_{L,NN} (Q_{L,NN} + \sigma^2 I)^{-1} y_N\,,
\end{split}
\end{equation}
where $Q_{L,NN} = (Q_{L}(x_i,x_j))_{1\leq i, j \leq N}$.

Since $Q_{L,NN}$ is symmetric and strictly positive definite, it is straightforward that the largest eigenvalue of $Q_{L,NN}(Q_{L,NN} + \sigma^2I)^{-1})$ is smaller than $1$. This yields   
$$\Tr (Q_{L,NN}(Q_{L,NN} + \sigma^2I)^{-1}) \leq N$$ and $$y_N^T (Q_{L,NN} + \sigma^2 I)^{-1} Q_{L,NN} (Q_{L,NN} + \sigma^2 I)^{-1} y_N \leq \sigma^{-2} \|y_N\|_2\,.$$
Both quantities are bounded independently from $L$ and the scaling factors $(\lambda_{k,L})_{k\in[2:L]}$. \\
Now let us analyse the first term $\frac{1}{2} \log(\det (Q_{L,NN} + \sigma^2 I))$. Let $\mu_{L,0} \geq \mu_{L,1} \geq \dots \geq \mu_{L,N}$ be the eigenvalues of $Q_{L,NN}$. For a simplification purpose, we assume the inputs belong to the unit sphere $\Sd$. The proof extends to any compact set.\\
Let us study the behaviour of the first term for both cases.

\textit{Case 1.} Assume we have a standard ResNet architecture. On the unit sphere $\mathbb{S}^{d-1}$, we have that $Q_L(x,x') \geq q_L C_L(x,x')$, where $q_L = (1 + \frac{\sigma_2}{2})^{L} \delta$ with $\delta = (\sigma_b^2 + \frac{\sigma_w^2}{d}) / (1 + \frac{\sigma_w^2}{2})$. Using Lemma \ref{lemma:infnite_depth_standard_resnet}, we know that $\lim_{L \rightarrow \infty} \hat{\mu}_{L,0} =  \hat{\mu}_{\infty,0} \in (0,\infty)$ and for all $k\geq 1$, $\lim_{L \rightarrow \infty} \hat{\mu}_{L,k} = 0$. This yields
    \begin{equation*}
    \begin{split}
        \log(\det (Q_{L,NN} + \sigma^2 I)) &\geq \sum_{k=1}^N \log(q_L \hat{\mu}_{L,k} + \sigma^2)\\
        &\geq \log(q_L \hat{\mu}_{L,0} + \sigma^2) + (N-1)\log(\sigma^2)\\
        &\gtrsim L\log(1+\frac{\sigma_w^2}{2})\,,
    \end{split}
    \end{equation*}
    where the last inequality holds for sufficiently large $L$.
    
\textit{Case 2.} In the case of Stable ResNet, we know that as $L \rightarrow \infty$, the kernel $Q_L$ converges to a strictly positive definite kernel $Q_\infty$, therefore the first term $\log(\det (Q_{L,NN} + \sigma^2 I))$ remains bounded as $L \rightarrow \infty$, which concludes the proof.
\end{proof}
\section{NNGP correlation kernel without bias as a modified NNGP kernel}\label{app:corr_as_nngp}
Unscaled ResNets suffer from the exploding variance problem, which needs to be avoided in order to isolate the disadvantages of inexpressivity in their NNGP kernel. In order to do so, we use the NNGP correlation kernel $C$ instead of NNGP covariance kernel $Q$, noting that Lemma \ref{lemma:corr_formula} provides a simple recursion formula for $C$ if $\sigma_b=0$, at depth $l\leq L$:

\begin{equation}\label{equation:corr_repeated}
C_l(x,x') = \frac{1}{1+\alpha_{l.L}} C_{l-1}(x,x') + \frac{\alpha_{l,L}}{1 + \alpha_{l,L}} \hat{f}(C_{l-1}(x,x') )\,,
\end{equation}
where $\alpha_{l,L} = \frac{\lambda_{l,L}^2 \sigma_w^2}{2}$ and $\hat{f}$ defined in \eqref{equation:f_hat}.
In order to combine this with open-source packages \cite{neuraltangents2020, jax2018github} designed for NNGP calculation, we note that \eqref{equation:corr_repeated} can be viewed as the NNGP kernel of the following modified ResNet layer, using the same notation as in \eqref{equation:scaled_Resnet}:
\begin{align}\label{equation:corr_Resnet}
\begin{split}
&y_l(x) = \sqrt{1 - \hat{\alpha}_{l,L}} y_{l-1}(x) + \sqrt{\hat{\alpha}_{l,L}}\, \mathcal{F}((W_l,B_l), y_{l-1})\,, \quad l \in [1:L]\,,
\end{split}
\end{align}
with $\hat{\alpha}_{l,L}=\frac{\alpha_{l,L}}{1+\alpha_{l,L}}$
\section{Experimental details and additional results}\label{app: experimental details}
\subsection{NNGP results}
For our Vanilla ResNet NNGP results, we preprocess all training, validation and test data by first centering the training set and then normalizing all images to lie on the pixel dimension sphere. For our Wide ResNet NNGP results we normalise all data so that the training set is centered and has channel-wise unit variance. We use Kaiming \cite{he2} initialisation throughout, with $\swq=2$ and $\sbq=0$. Vanilla ResNets have the same structure as type \eqref{equation:scaled_Resnet} in Table \ref{tab:vrn_krr} and we use the same WRN kernel architecture as \cite{lee_wide_nn_ntk} in Table \ref{table: wrn_krr} but omit the final average pooling step, which is known to improve kernel performance but dramatically increase computational costs \cite{novak2018bayesian, lee2020finite}. Throughout this work, where there are residual blocks with multiple layers, we calculate our scaling factors for uniform and decreasing scaled Stable ResNets by the number of residual connections. For example, a WRN-202 has only 99 residual connections, so we set $\lambda_{l,L}^{-1}=\sqrt{99}$ for the uniform scaling factors.
We tune the noise variance $\sigma^2$, which is akin to the regularisation parameter in kernel ridge regression. To do so, we compute validation accuracy on a validation set of size 5000, selecting the best $\sigma^2=\lambda \times \text{Trace}(Q_{NN})/N$ from a logarithmic scale of $\lambda=[0.001, 0.01, 0.1]$, where $N$ is the training set size and $Q_{NN}$ is the $N\times N$ training set Gram matrix for NNGP $Q$. 
\subsection{Trained ResNet results}
For all our trained ResNet experiments we use a similar setup to the open-source code for \cite{Wang2020Picking} in PyTorch \cite{paszke2019pytorch}. We repeat each experiment 3 times and report the best test accuracy and error intervals. 
All ResNets are initialised with Kaiming initialisation \cite{he2} and like \cite{Wang2020Picking} we adopt ResNets architectures where we double the number of filters in each convolutional layer. For experiments with BatchNorm, on CIFAR-10/100 we use batch size 64 across all depths and on TinyImageNet we used batch size 128 for depths 32 \& 50, and batch size 100 for depth 104 in order to allow the model to fit onto a single 11GB VRAM GPU. We use SGD with momentum parameter $0.9$ and weight decay parameter $10^{-4}$ throughout.

We also present results for ResNets trained without BatchNorm \cite{ioffe2015batch}. BatchNorm is a normalization layer commonly used with modern ResNets that is known to improve performance and allows deeper ResNets to be trained, though the precise reasons for this are not well understood. Several recent works \cite{de2020batch, zhang2019fixup} have studied the possibility of removing the need for BatchNorm layers, by introducing trainable uniform scalings to the residual connection to stabilise variance at initialisation \& gradients, demonstrating promising results. Note, our work additionally introduces decreasing scaling and also uses the infinite-width NNGP/NTK connection to assess the theoretical advantages of scaled Stable ResNets in the limit of infinite depth. 

Moreover, our focus is not towards the possibility of removing BatchNorm and we show in Table \ref{table: NN results} that our scalings can improve BatchNorm ResNets. However, we also present results without BatchNorm in Table \ref{table: NN results no batchnorm}, where again we see that our scaled stable ResNets improve performance compared to their unscaled counterparts: for example both Decreasing and Uniform scaling outperform the unscaled ResNet by over 3\% test accuracy on CIFAR-100 with ResNet-104.

For ResNets trained without BatchNorm, for a fair comparison we tuned the initial learning rate on a small logarithmic scale, using batch size 128.
\begin{table}[ht]
\caption{Test accurracies (\%) of trained deep ResNets \textbf{without BatchNorm} of various scalings and depths on CIFAR-10 (C-10), CIFAR-100 (C-100).}\label{table: NN results no batchnorm}
\setlength\tabcolsep{5pt}
\centering
\begin{tabular}{lllll}
\toprule
Dataset & Depth &               Scaled (D) &              Scaled (U) &              Unscaled \\
\midrule
C-10 & 32  &  ${92.64}_{\pm 0.19}$ &     ${92.78}_{\pm 0.18}$ &  ${92.11}_{\pm 0.17}$ \\
& 50  &  ${92.33}_{\pm 0.05}$ &  $\bm{92.72}_{\pm 0.12}$ &  ${92.10}_{\pm 0.17}$ \\
& 104 &  ${92.81}_{\pm 0.09}$ &  $\bm{93.28}_{\pm 0.17}$ &  ${92.70}_{\pm 0.08}$ \\
\midrule

C-100 & ResNet32  &  $\bm{67.73}_{\pm 0.42}$ &     ${67.06}_{\pm 0.38}$ &  ${65.37}_{\pm 0.32}$ \\
& ResNet50  &  $\bm{69.38}_{\pm 0.20}$ &     ${68.76}_{\pm 0.18}$ &  ${66.02}_{\pm 0.41}$ \\
& ResNet104 &     ${70.60}_{\pm 0.52}$ &  $\bm{70.95}_{\pm 0.13}$ &  ${67.41}_{\pm 0.41}$ \\
\bottomrule
\end{tabular}
\end{table}
\section{Some results on the Sphere $\Sd$}\label{app:sphere}
On the sphere $\Sd$, the kernel $Q_2$ is analytic as a result of lemma \ref{lemma:analytic_decomp_f_hat}. Moreover, the coefficient of the analytic decomposition are all positive.
\begin{lemma}[Analytic decomposition of 2 layer ReLU ResNet]\label{lemma:analytic_decom_relu_3layers}
For all $(x,x') \in \Sd$, $Q_2(x,x') = g(x \cdot x')$ where $g(z) = \sum_{i\geq 0} a_i z^i$ and $a_i>0$ for all $i\geq 0$. 
\end{lemma}
\begin{proof}
Let $x,x' \in \Sd$. We have 
$$
Q_0(x,x') = \sigma_b^2 + \frac{\sigma_w^2}{d} x \cdot x'\,.
$$
As a result, for all $x,x'$, $Q_0(x,x) = Q_0(x',x') = \sigma_b^2 + \frac{\sigma_w^2}{d}$. The diagonal term of the kernel is the same for all $x \in \Sd$. We note $\beta_l = Q_l(x,x)$ and $z = x \cdot x'$. Using this observation, we have that
\begin{align*}
Q_1(x,x') &= Q_0(x,x') + \lambda_1^2 (\sigma_b^2 + \frac{\sigma_2}{2} \hat{f}(C_0(x,x')) \beta_0)\,.
\end{align*}
It can be easily deduced from lemma \ref{lemma:analytic_decomp_f_hat} that there exist $\{b_i\}_{i \geq 0}$ such that
$$
C_1(x,x') = b_0 + b_1 z + \sum_{i \geq 0} b_{2i} z^{2i}\,,
$$
where $b_0,b_1,b_{2i} > 0$.\\
Following the same approach, we have that
\begin{align*}
Q_2(x,x') &= Q_1(x,x') + \lambda_2 (\sigma_b^2 + \frac{\sigma_w^2}{2} f(C_1(x,x')) \beta_2)
\end{align*}
and 
\begin{align*}
\hat{f}(C_1(x,x')) &= a_0 + a_1 C_1(x,x') + \sum_{i\geq 1} a_{2i} (C_1(x,x'))^{2i}\,.
\end{align*}
Having the terms of orders 0 and 1 in $C_1(x,x')$ ensures having a positive coefficient for all terms $z^i$ for $i\geq 1$, which concludes the proof.
\end{proof}

The previous result can be easily extended to general $L \geq 2$. We have that
$$
Q_L(x,x') = g_L(x\cdot x')\,,
$$
where $g_L : [-1,1] \rightarrow \mathbb{R}$ is a continuous function. Kernels that can be written in this form are known as the dot-product kernels (or zonal kernels on the unit sphere). In our setting, we have a stronger property; we prove in the next result we show a that the kernel $Q_L$ is analytic on the sphere $\mathbb{S}^{d-1}$ in the sense that the function $g_L$ is analytic on $[-1,1]$.
\begin{prop}[$Q_L$ is analytic]\label{prop:analytic_decomposition_Q_L}
Let $L\geq 2$, there exists $(\alpha_{L,i})_{i\geq 0}$ such that for all $x,x' \in \mathbb{S}^{d-1}$
$$
Q_L(x,x') = \sum_{i\geq 0} \alpha_{L,i} (x \cdot x')^i\,.
$$
Moreover, $(\alpha_{L+1, i})_{i\geq 0}$ can be expressed in terms of $(\alpha_{L, i})_{i\geq 0}$
\begin{equation}
    \alpha_{L+1,i} = \alpha_{L,i} + \lambda_{L+1,L+1} \times  \gamma_{L,i}\,,
\end{equation}
with
\begin{equation*}
    \gamma_{L,i} =
    \begin{cases*}
      \sigma_b^2 + \beta_L \frac{\sigma_w^2}{2}  \sum\limits_{m\geq0} \frac{a_m}{\beta_L^m} \alpha_{L,0}^m   & if $i = 0$\,; \\
      \beta_L \frac{\sigma_w^2}{2}  \sum\limits_{m\geq0} \frac{a_m}{\beta_L^m} \sum\limits_{k_1+...+k_m=i} \prod\limits_{j=1}^m \alpha_{L,k_j}& if $i \geq 1$\,.
    \end{cases*}
\end{equation*}
where $\beta_L = Q_L(x,x) = Q_L(x',x') = \sum_{i\geq0} \alpha_{L,i}$ and $(a_m)_{m\geq0}$ is such that $a_0,a_1 >0$ and $a_{2i}>0$ and $a_{2i+1} = 0$ for all $i\geq 1$. \\
As a result, for all $L\geq 2, i\geq0, \alpha_{L,i}>0$.
\end{prop}
\begin{proof}

The result is true for $L=2$ by lemma \ref{lemma:analytic_decom_relu_3layers}. Let us prove the result for all $L\geq3$ by induction.\\
Let $ L\geq 3$, $x,x' \in \Sd$, $z = x\cdot x'$ and $\beta_l = Q_l(x,x) =Q_l(x',x')$. Assume the result is true for $L$ and let us prove it for $L+1$. We have that
$$
Q_{L+1}(x,y) = Q_{L}(x,y) + \lambda_{L+1,L+1}^2 (\sigma_b^2 + \frac{\sigma_w^2}{2} f(C_{l}(x,y)) \beta_l)\,.
$$
Knowing that $C_l(x,y) = \frac{1}{\beta_l} Q_l(x,y)$, we have that 
\begin{align*}
f(C_l(x,y)) &= \sum_{m\geq 0} \frac{a_m}{\beta_l^m} C_l(x,y)^m \\
&= \sum_{m\geq 0} \frac{a_m}{\beta_l^m} (\sum_{i\geq 0} \alpha_{l,i} z^i)^m \\
&= \sum_{m\geq 0} \frac{a_m}{\beta_l^m} \sum_{i\geq 0} \sum_{k_1+...+k_m=i} \prod_{j=1}^m \alpha_{l,k_j} z^i\\
&= \sum_{i\geq 0} \Big[\sum_{m\geq 0}\frac{a_m}{\beta_l^m}  \sum_{k_1+...+k_m=i} \prod_{j=1}^m \alpha_{l,k_j}\Big] z^i\,,
\end{align*}
which gives the recursive formulas for the coefficients of the analytic decomposition. Observe that the coefficients are non-decreasing wrt $L$. Using lemma \ref{lemma:analytic_decom_relu_3layers} we conclude that $\alpha_{L,i } > 0$.
\end{proof}
For depth $L\geq 2$, proposition \ref{prop:analytic_decomposition_Q_L} shows that all coefficient $(\alpha_{L,i})_{i\geq0}$ are (strictly) positive. It turns out that this is a sufficient condition for the kernel $Q_L$ to be strictly positive definite. We state this in the next proposition. The result can be seen as a consequence of Lemma \ref{lem10} and Lemma \ref{density_sphere}. However we will give here a more direct proof.
\begin{prop}[$Q_L$ is strictly p.d. for $L\geq 2$]\label{prop:strictly_pd_sphere}
Let $Q$ be an analytic kernel on the unit sphere $\mathbb{S}^{d-1}$, i.e. there exist a sequence of real numbers $(\alpha_i)_{i\geq 0}$ such that for all $x,x' \in \mathbb{S}^{d-1}$
$$
Q(x,x') = \sum_{i\geq 0 } \alpha_i (x \cdot x')^i
$$
Assume $\alpha_i > 0$ for all $i \in \mathbb{N}$. Then, $Q$ is strictly positive definite.\\
As a result, for all $L \geq 2$, $T_\nu(Q_L)$ is strictly positive definite, i.e. for any non-zero function $\f \in L^2(\mathbb{S}^{d-1}l,\nu)$
$$
\langle T_\nu(Q_L) \f, \f \rangle > 0
$$
$\nu$ is the standard uniform measure on the sphere $\mathbb{S}^{d-1}$.
\end{prop}
\begin{proof}
Let $Q$ be an analytic kernel on the unit sphere $\mathbb{S}^{d-1}$, that is there exists a sequence of real numbers $(\alpha_i)_{i\geq 0}$ such that for all $x,x' \in \mathbb{S}^{d-1}$
$$
Q(x,x') = \sum_{i\geq 0 } \alpha_i (x \cdot x')^i\,,
$$
and assume $\alpha_i > 0$ for all $i \in \mathbb{N}$. The map $(x,x')\mapsto x\cdot x'$ is trivially a kernel in the sense of Definition \ref{def:kernel}. For all $i \geq 0$, $(x,x')\mapsto (x\cdot x')^i$ is a kernel as well.\footnote{See footnote \ref{footnote_schur}.} It follows that $T_\nu(Q)$ is non-negative definite, as a converging sum of non-negative operators. Let us prove that it is strictly positive definite.\\
Let $\f \in L_2(\mathbb{S}^{d-1},\nu)$ such that $\langle T_\nu(Q)\, \f, \f \rangle = 0$. Since $\alpha_i > 0$ for all $i$, we have that for all $i \geq 0$
$$
\int \int (x \cdot x')^i \f(x) \f(x') \,\dd\nu(x)\dd\nu(x') = 0\,,
$$
recalling that $\nu$ is the uniform measure on the sphere $\mathbb{S}^{d-1}$. This yields
\begin{equation}\label{equation:zero_product_with_polynomials}
\int \int P(x \cdot x') \f(x) \f(x') \,\dd\nu(x)\dd\nu(x') = 0
\end{equation}
for any polynomial function $P$. \\
Since $\f$ is a function on the sphere $\mathbb{S}^{d-1}$, it can be decomposed in the Spherical Harmonics orthonormal basis $(Y_{k,j})_{k,j}$ (see e.g. \cite{macrobert}) as
$$
\forall x \in\mathbb{S}^{d-1},\hspace{0.25cm} \f(x) = \sum_{k\geq 0} \sum_{j=1}^{N(d,k)} b_{k,j} Y_{k,j}(x)
$$
where $b_{k,j} = \int_{\mathbb{S}^{d-1}} \f(w) Y_{k,j}(w) \,\dd\nu(w)$.\\
In particular, equation \eqref{equation:zero_product_with_polynomials} is true for the Associated Legendre Polynomials $P_k$. Knowing that $N(d,k) P_k(x\cdot x') = \sum_{j=1}^{N(d,k)} Y_{k,j}(x)Y_{k,j}(x')$,  \eqref{equation:zero_product_with_polynomials} yields
$$
\int \int \sum_{j=1}^{N(d,k)} Y_{k,j}(x)Y_{k,j}(x') \f(x) \f(x')\, \dd\nu(x)\dd\nu(x') = 0
$$
for all $k \geq 0$. Therefore,
$$
\sum_{j=1}^{N(d,k)} b_{k,j}^2 = 0
$$
for all $k \geq 0$. We conclude that $\f = 0$.
\end{proof}

By Mercer's theorem \cite{paulsen2016RKHS}, the kernel $Q_L$ can be decomposed in an orthonormal basis of $L^2(\mathbb{S}^{d-1})$. It turns out that this orthonormal basis is the so-called Spherical Harmonics of $\mathbb{S}^{d-1}$. This is a corollary of the next lemma, which is a classical result \cite{yang2019finegrained}.
\begin{lemma}[Spectral decomposition on $\Sd$]\label{lemma:spherical_decomposition}
Let $Q$ be a zonal kernel on $\Sd$, that is $Q(x,x') = p(x\cdot x')$ for a continuous function $p:[-1,1]\to\mathbb{R}$. Then, there is a sequence $\{\mu_k\geq 0\}_{k\in\mathbb N}$ such that for all $x,x'\in\Sd$
\begin{align*}
Q(x,x') = \sum_{k\geq 0 } \mu_{k} \sum_{j=1}^{N(d,k)} Y_{k,j}(x) Y_{k,j}(x')\,,
\end{align*}
where $\{Y_{k,j}\}_{k\geq0, j\in [1:N(d,k)]}$ are spherical harmonics of  $\mathbb{S}^{d-1}$ and $N(d,k)$ is the number of harmonics of order $k$. With respect to the standard spherical measure $\nu$ on $\Sd$, the spherical harmonics form an orthonormal basis of $L^2(\Sd,\nu)$ and $T_\nu(Q)$ is diagonal on this basis.
\end{lemma}
\begin{proof}
We start by giving a brief review of the theory of Spherical Harmonics (\cite{macrobert}). For some $k \geq 1$, let $(Y_{k,j})_{1\leq j \leq N(d,k)}$ be the set of Spherical Harmonics of degree $k$. We have  $N(d,k) = \frac{2k + d - 2}{k} {k + d-3 \choose d-2}$.

The set of functions $(Y_{k,j})_{k\geq1, j \in [1:N(d,k)]}$ form an orthonormal basis of $L^2(\mathbb{S}^{d-1},\nu)$, where $\nu$ is the uniform measure on $\Sd$.

For some function $p$, the Hecke-Funk formula reads
$$
\int_{\mathbb{S}^{d-1}} p(\langle x, w \rangle)Y_{k,j}(w) \,\dd\nu(w) = \frac{\Omega_{d-1}}{\Omega_d} Y_{k,j}(x) \int_{-1}^1 p(t) P^d_k(t) (1-t^2)^{(d-3)/2} dt
$$
where $\Omega_d$ is the volume of the unit sphere $\mathbb{S}^{d-1}$, and $P^d_{k}$ is the multi-dimensional Legendre polynomials given explicitly by Rodrigues' formula
$$
P^d_k(t) = \left(-\frac{1}{2}\right)^{k} \frac{\Gamma(\frac{d-1}{2})}{\Gamma(k + \frac{d-1}{2})} (1-t^2)^{\frac{3-d}{2}} \frac{\dd^k}{\dd t^k} (1-t^2)^{k + \frac{d-3}{2}}\,.
$$
$(P^d_k)_{k\geq0}$ form an orthogonal basis of $L^2([-1,1], (1-t^2)^{\frac{d-3}{2}}dt)$, i.e. 
$$
\langle P^d_k, P^d_{k'} \rangle_{L^2([-1,1], (1-t^2)^{\frac{d-3}{2}}dt)} = \delta_{k,k'} \,,
$$
where $\delta_{ij}$ is the Kronecker symbol.

Using the Heck-Funk formula, we prove that $Q$ can be decomposed on the Spherical Harmonics basis. Indeed, for any $x,x' \in \mathbb{S}^{d-1}$, the decomposition on the spherical harmonics basis yields 
$$
Q(x,x') = \sum_{k\geq 0} \sum_{j=1}^{N(d,k)} \left[\int_{\mathbb{S}^{d-1}}p(\langle w, x' \rangle)Y_{k,j}(w)\dd\nu(w)\right] Y_{k,j}(x)\,.
$$
Using the Hecke-Funk formula yields
$$
Q(x,x') = \sum_{k\geq0} \sum_{j=1}^{N(d,k)}\left[ \frac{\Omega_{d-1}}{\Omega_d} \int_{-1}^1 p(t) P^d_k(t) (1-t^2)^{(d-3)/2} dt\right] Y_{k,j}(x)Y_{k,j}(x')\,.
$$
We conclude that 
$$
Q(x,x') = \sum_{k \geq 0} \mu_k \sum_{j=1}^{N(d,k)} Y_{k,j}(x)Y_{k,j}(x')\,.
$$
where $\mu_k = \frac{\Omega_{d-1}}{\Omega_d} \int_{-1}^1 p(t) P^d_k(t) (1-t^2)^{(d-3)/2} dt$. We also have that $\mu_k \geq 0$ since $Q$ is non-negative by definition.\\
The last statement, follows from the spectral theory of compact self-adjoint operators and the orthonormality of the spherical harmonics (see the appendix of \cite{yang2019finegrained} for details).
\end{proof}
\begin{corollary}[Spectral decomposition of $Q_L$]\label{cor:spectral_decomposition_Q_L}
For $L\geq1$, there exist $(\mu_{L,k})_{k\geq0}$ such that $\mu_{L,k} > 0$ for all $k\geq 0$, and for all $x,x' \in \mathbb{S}^{d-1}$ we have
$$
Q_L(x,x') = \sum_{k\geq 0 } \mu_{L,k} \sum_{j=1}^{N(d,k)} Y_{k,j}(x) Y_{k,j}(x')\,,
$$
where $(Y_{k,j})_{k\geq0, j\in [1:N(d,k)]}$ are spherical harmonics of  $\mathbb{S}^{d-1}$ and $N(d,k)$ is the number of harmonics of order $k$.
\end{corollary}
Corollary \ref{cor:spectral_decomposition_Q_L} shows that for any depth $L$, the Spherical Harmonics are the eigenfunctions of the kernel $Q_L$. The fact that $\mu_{L,k} > 0$ is a direct result of Proposition \ref{prop:strictly_pd_sphere}. Leveraging this result, we can prove a stronger result, which is the universality of the kernel $Q_L$. 
\begin{prop}[Universality on $\Sd$]\label{prop:universality_on_sphere_appendix}
For all $L \geq 2$, $Q_L$ is universal on $\Sd$ for $d\geq 2$.
\end{prop}
\begin{proof}
The result is a consequence of Lemma \ref{lemma:Sd_spd=>univ} and Proposition \ref{prop:strictly_pd_sphere}. An alternative proof is the following.\\
It is a classical result that the set Spherical Harmonics form an orthonormal basis on $L^2(\Sd,\nu)$.
Leveraging the result from corollary \ref{cor:spectral_decomposition_Q_L}, it is straightforward that any continuous function in $L^2(\Sd,\nu)$ can be approximated by a function of the form $\sum_{i} Q_L(x_i, .)$ which belongs to the RKHS of $Q_L$. Therefore, $Q_L$ is universal on $\Sd$.\\
Note that we have not made the assumption that $\sigma_b >0$.
\end{proof}

\newpage
\bibliographystyle{plain}
\bibliography{sample}